\RequirePackage{etex}
\documentclass[11pt]{article}
\usepackage[top=3cm,bottom=3cm,left=2.5cm,right=2.5cm]{geometry}
\usepackage[round]{natbib}

\usepackage[normalem]{ulem}
\usepackage[none]{hyphenat}
\usepackage{breakcites}
\usepackage[utf8]{inputenc} %
\usepackage[T1]{fontenc}    %
\usepackage[colorlinks=True,citecolor=blue,urlcolor=blue,pagebackref=true,backref=true,linkcolor=blue]{hyperref}
\renewcommand*{\backrefalt}[4]{%
    \ifcase #1 \footnotesize{(Not cited.)}%
    \or        \footnotesize{(Cited on page~#2.)}%
    \else      \footnotesize{(Cited on pages~#2.)}%
    \fi}
\usepackage{booktabs}       %
\usepackage{amsfonts}       %
\usepackage{nicefrac}       %
\usepackage{microtype}      %
\usepackage{xcolor}         %
\usepackage{caption}
\usepackage{subcaption}
\usepackage{float}
\usepackage{tikz}
\usetikzlibrary{positioning}
\usetikzlibrary{arrows.meta}
\usepackage{amsmath,amsthm}
\usepackage{amssymb}
\usepackage{mathtools}
\usepackage{soul}
\usepackage{bm}
\usepackage{enumitem}
\usepackage{multirow}
\usepackage{wrapfig}
\usepackage{scalerel}
\allowdisplaybreaks
\usepackage{dsfont}
\usepackage[noend]{algpseudocode}
\usepackage{xcolor}
\usepackage{thmtools,thm-restate}
\usepackage{cleveref}  
\usepackage{autonum}
\usepackage{tocloft} %
\setlength\cftparskip{-2pt}

\usepackage[linesnumbered,ruled,vlined]{algorithm2e}

\SetCommentSty{mycommfont}
\SetKwInput{KwInput}{Input}                %
\SetKwInput{KwOutput}{Output}  
\SetKwInput{KwInitialization}{Initialization}

\newtheorem{definition}{Definition}

\newtheorem{assumption}{Assumption}
\newtheorem{theorem}{Theorem}
\newtheorem{corollary}{Corollary}
\newtheorem{lemma}{Lemma}

\newtheorem{proposition}{Proposition}

\newtheorem{example}{Example}

\usepackage[noblocks]{authblk}
\usepackage{graphicx}
\usepackage{comment}
\setcounter{secnumdepth}{3}  %
\setcounter{tocdepth}{2}

\DeclareMathAlphabet{\mathbsf}{OT1}{cmss}{bx}{n}%
\DeclareMathAlphabet{\mathssf}{OT1}{cmss}{m}{sl}%

\DeclareMathOperator*{\argmin}{arg\,min}

\newcommand{\rv}[1]{\mathssf{#1}}   %
\newcommand{\rvx}{\mathssf{x}}	%
\newcommand{\rvv}{\mathssf{v}}	%
\newcommand{\rvu}{\mathssf{u}}	%
\newcommand{\rvy}{\mathssf{y}}	%
\newcommand{\rvz}{\mathssf{z}}	%
\newcommand{\rva}{\mathssf{a}}	%

\newcommand{\rvb}[1]{\mathbsf{#1}} %
\newcommand{\rvbx}{\mathbsf{x}} %
\newcommand{\rvbv}{\mathbsf{v}} %
\newcommand{\crvbv}{\wbar{\rvbv}} %
\newcommand{\rvbu}{\mathbsf{u}} %
\newcommand{\rvbw}{\mathbsf{w}} %
\newcommand{\rvby}{\mathbsf{y}} %
\newcommand{\rvbz}{\mathbsf{z}} %
\newcommand{\rvba}{\mathbsf{a}} %
\newcommand{\trvbx}{\wtil{\rvbx}} %
\newcommand{\tx}{\wtil{x}} %
\newcommand{\cx}{\wbar{x}} %
\newcommand{\cv}{\wbar{v}} %
\newcommand{\tilp}{\wtil{p}}
\newcommand{\barp}{\wbar{p}}

\newcommand{\svb}[1]{\boldsymbol{#1}} %
\newcommand{\svbx}{\boldsymbol{x}} %
\newcommand{\svbv}{\boldsymbol{v}} %
\newcommand{\csvbv}{\wbar{\svbv}} %
\newcommand{\svbw}{\boldsymbol{w}} %
\newcommand{\svbu}{\boldsymbol{u}} %
\newcommand{\svby}{\boldsymbol{y}} %
\newcommand{\svbz}{\boldsymbol{z}} %
\newcommand{\svba}{\boldsymbol{a}} %
\newcommand{\tsvbx}{\wtil{\boldsymbol{x}}} %
\newcommand{\TrueParameterMatrix}{\Theta^{\star}} %
\newcommand{\ExtendedTrueParameterMatrix}{\underline{\Theta}^{\star}} %

\newcommand{\ParameterMatrix}{\Theta} %
\newcommand{\tParameterMatrix}{\wtil{\Theta}} %
\newcommand{\bParameterMatrix}{\wbar{\Theta}} %
\newcommand{\ExtendedParameterMatrix}{\underline{\Theta}} %
\newcommand{\tExtendedParameterMatrix}{\wtil{\underline{\Theta}}} %
\newcommand{\EstimatedParameterMatrix}{{\what{\Theta}}} %
\newcommand{\ExtendedEstimatedParameterMatrix}{\what{\underline{\Theta}}} %

\newcommand{\ParameterRowt}[1][t]{\Theta_{#1}} %
\newcommand{\tParameterRowt}[1][t]{\wtil{\Theta}_{#1}} %
\newcommand{\tParameterRowttt}[1][t]{\wtil{\Theta}_{#1, -#1}} %
\newcommand{\ParameterRowttt}[1][t]{\Theta_{#1,-#1}} %
\newcommand{\TrueParameterRowt}[1][t]{\Theta_{#1}^{\star}} %
\newcommand{\TrueParameterRowttt}[1][t]{\Theta_{#1,-#1}^{\star}} %
\newcommand{\TrueParameterRowtttTop}[1][t]{\Theta_{#1,-#1}^{\star\top}} %
\newcommand{\EstimatedParameterRowt}[1][t]{\what{\Theta}_{#1}} %
\newcommand{\EstimatedParameterRowttt}[1][t]{\what{\Theta}_{#1,-#1}}
\newcommand{\ExtendedParameterRowT}[1][t]{\underline{\Theta}_{#1}} %
\newcommand{\tExtendedParameterRowT}[1][t]{\wtil{\underline{\Theta}}_{#1}} %
\newcommand{\ExtendedTrueParameterRowT}[1][t]{\underline{\Theta}_{#1}^{\star}} %
\newcommand{\ExtendedEstimatedParameterRowt}[1][t]{\what{\underline{\Theta}}_{t}} %

\newcommand{\TrueParameterTU}[1][tu]{\Theta_{#1}^{\star}} %
\newcommand{\EstimatedParameterTU}[1][tu]{\what{\Theta}_{#1}} %
\newcommand{\tParameterTU}[1][tu]{\wtil{\Theta}_{#1}} %
\newcommand{\ParameterTU}[1][tu]{\Theta_{#1}} %

\newcommand{\ExternalField}{\theta} %
\newcommand{\ExternalFieldt}[1][t]{\theta_#1} %
\newcommand{\ExternalFieldI}[1][i]{\theta^{(#1)}} %
\newcommand{\ExternalFieldtI}[1][i]{\theta_{t}^{(#1)}} %
\newcommand{\ExternalFieldIt}[1][t]{\theta_{#1}^{(i)}} %

\newcommand{\TrueExternalField}{\theta^{\star}} %
\newcommand{\TrueExternalFieldt}[1][t]{\theta^{\star}_#1} %
\newcommand{\TrueExternalFieldI}[1][i]{\theta^{\star(#1)}} %
\newcommand{\TrueExternalFieldtI}[1][i]{\theta^{\star(#1)}_t} %

\newcommand{\tExternalFieldI}[1][i]{\wtil{\theta}^{(#1)}} %
\newcommand{\tExternalFieldtI}[1][i]{\wtil{\theta}_{t}^{(#1)}} %

\newcommand{\bExternalField}{\wbar{\theta}} %
\newcommand{\EstimatedExternalFieldI}[1][i]{\what{\theta}^{(#1)}} %
\newcommand{\ParameterSet}{{\Lambda}} %

\newcommand{\xmax}{x_{\max}} %

\newcommand{\TrueJointDist}{f_{\rvbx|\rvbz}\bigparenth{\svbx| \svbz; \TrueExternalField(\svbz), \TrueParameterMatrix}} %
\newcommand{\TrueJointDistfun}[1][]{f_{\rvbx|\rvbz}\bigparenth{\cdot| \svbz^{#1}; \TrueExternalField(\svbz^{#1}), \TrueParameterMatrix}} %
\newcommand{\JointDist}{f_{\rvbx|\rvbz}\bigparenth{\svbx| \svbz; \ExternalField(\svbz), \ParameterMatrix}}
\newcommand{\JointDistfun}{f_{\rvbx|\rvbz}\bigparenth{\cdot| \svbz; \ExternalField(\svbz), \ParameterMatrix}}
\newcommand{\TrueJointDistfunT}{f_{\rvbx|\rvbz}\bigparenth{\cdot| \svbz^{(i)}; \TrueExternalField(\svbz^{(i)}), \TrueParameterMatrix}} %

\newcommand{\TrueConditionalDistIt}{f_{\rvx_t|\rvbx_{-t}, \rvbz}\bigparenth{x_t^{(i)} | \svbx_{-t}^{(i)}, \svbz^{(i)};  \TrueExternalFieldt(\svbz^{(i)}), \TrueParameterRowt}}
\newcommand{\ConditionalDistT}{f_{\rvx_t | \rvbx_{-t}, \rvbz}(x_t| \svbx_{-t}, \svbz; \ExternalFieldt(\svbz), \ParameterRowt)} %

\newcommand{\ConditioningParameter}[1][tv]{\Upsilon_{#1}} %

\newcommand{\ConditioningField}{\upsilon}
\newcommand{\ConditioningFieldU}[1][t]{\upsilon_{#1}}

\newcommand{\ConditioningMatrix}{\Upsilon}

\newcommand{\cX}{\mathcal{X}} %
\newcommand{\cA}{\mathcal{A}} %
\newcommand{\cZ}{\mathcal{Z}} %
\newcommand{\cR}{\mathcal{R}} %
\newcommand{\cE}{\mathcal{E}} %
\newcommand{\cG}{\mathcal{G}} %
\newcommand{\cN}{\mathcal{N}} %
\newcommand{\cC}{\mathcal{C}} %
\newcommand{\cV}{\mathcal{V}} %
\newcommand{\cU}{\mathcal{U}} %
\newcommand{\cY}{\mathcal{Y}} %
\newcommand{\ball}{\mathcal{B}} %
\newcommand{\metric}{\mathcal{M}} %
\newcommand{\loss}{\mathcal{L}} %
\newcommand{\tmetric}{\wtil{\mathcal{M}}} %

\newcommand{\sumu}[1][u]{\sum_{#1 \in [p]}}
\newcommand{\sump}[1][t]{\sum_{#1 \in [p]}}
\newcommand{\sumset}[1][t]{\sum_{#1 \in \set}}
\newcommand{\sumn}[1][i]{\sum_{#1 \in [n]}}
\newcommand{\maxp}[1][t]{\max_{#1 \in [p]}}

\newcommand{\uOm}{\underline{\Omega}}
\newcommand{\uOmT}[1][t]{\underline{\Omega}_{#1}}
\newcommand{\DeltatIp}{\normalbrackets{\Delta_t^{(i)}}\tp}
\newcommand{\DeltatI}{\Delta_t^{(i)}}

\newcommand{\DeltaItu}[1][u]{\Delta_{t#1}^{(i)}}

\newcommand{\Om}{\Omega}
\newcommand{\Omt}{\Omega_t}
\newcommand{\Omttt}[1][t]{\Omega_{#1,-#1}} %
\newcommand{\Omtu}[1][u]{\Omega_{t#1}}
\newcommand{\om}{{\omega}}
\newcommand{\omt}[1][t]{\omega_{#1}}
\newcommand{\omI}[1][i]{\omega^{(#1)}}
\newcommand{\omtI}[1][i]{\omega_t^{(#1)}}
\newcommand{\omIt}[1][t]{\omega_{#1}^{(i)}}

\newcommand{\directionalGradientFull}{\partial_{\uOm}\loss(\ExtendedParameterMatrix)}
\newcommand{\directionalGradient}{\partial_{\uOmT}\loss_t(\ExtendedParameterRowT)}
\newcommand{\tdirectionalGradient}{\partial_{\uOmT}\loss_t(\tExtendedParameterRowT)}
\newcommand{\directionalGradientTrue}{\partial_{\uOmT}\loss_t(\ExtendedTrueParameterRowT)}
\newcommand{\directionalHessian}{\partial^2_{\uOmT^2} \loss_t(\ExtendedParameterRowT)}
\newcommand{\tdirectionalHessian}{\partial^2_{\uOmT^2} \loss_t(\tExtendedParameterRowT)}

\newcommand{\directionalGradientExternalField}{\partial_{\omI}(\loss^{(i)}(\ExternalFieldI))}
\newcommand{\tdirectionalGradientExternalField}{\partial_{\omI}(\loss^{(i)}(\tExternalFieldI))}
\newcommand{\directionalGradientExternalFieldTrue}{\partial_{\omI}(\loss^{(i)}(\TrueExternalFieldI))}
\newcommand{\directionalHessianExternalField}{\partial^2_{\normalbrackets{\omI}^2} \loss^{(i)}(\ExternalFieldI)}
\newcommand{\tdirectionalHessianExternalField}{\partial^2_{\normalbrackets{\omI}^2} \loss^{(i)}(\tExternalFieldI)}
\newcommand{\Probability}{\mathbb{P}}
\newcommand{\Expectation}{\mathbb{E}}
\newcommand{\Entropy}{h}
\newcommand{\Variance}{\mathbb{V}\text{ar}}
\newcommand{\Covariance}{\mathbb{C}\text{ov}}
\newcommand{\Reals}{\mathbb{R}} %
\newcommand{\real}{\Reals} %
\newcommand{\Indicator}{\mathds{1}}

\newcommand{\aGM}{\alpha}
\newcommand{\bGM}{\beta}
\newcommand{\dGM}{\tau}
\newcommand{\eGM}{\zeta}

\newcommand{\tSGM}[1][]{\dGM_{#1}\text{-}\mrm{\textsc{Sgm}}}

\newcommand{\sets}[1][\numindsets]{S_1, \cdots, S_{#1}} %
\newcommand{\barsets}[1][\wbar{\numindsets}]{\wbar{S}_1, \cdots, \wbar{S}_{#1}} %
\newcommand{\setU}[1][u]{S_{#1}} %
\newcommand{\barsetU}[1][u]{\wbar{S}_{#1}} %
\newcommand{\set}{S} %
\newcommand{\setC}{S^C} %

\newcommand{\cone}[1][]{C_{1,\tau}^{#1}}
\newcommand{\ctwo}[1][]{C_{2,\tau}^{#1}}
\newcommand{\cthree}[1][]{C_{3,\tau}^{#1}}
\newcommand{\cfour}[1][]{C_{4,\tau}^{#1}}
\newcommand{\cfive}[1][]{C_{5,\tau}^{#1}}

\newcommand{\numindsets}{L}

\newcommand{\KLD}[2]{\mathsf{KL}\left( #1\, \middle\Vert #2 \right)}
\newcommand{\Ent}[2]{\mathsf{Ent}_{#1}\left( #2 \right)}
\newcommand{\TV}[2]{\lVert #1 \!-\! #2 \rVert_{\mathsf{TV}}}
\newcommand{\tnabla}{\wtil{\nabla}}
\newcommand{\bom}{\wbar{\om}} %
\newcommand{\bpsi}{\wbar{\psi}} %

\newcommand{\LSI}[2]{\mathrm{LSI}_{#1}(#2)}

\newcommand{\EstimatedPhi}{\what{\Phi}}
\newcommand{\TruePhi}[1][i]{\Phi^{\star(#1)}} %
\newcommand{\Truephi}[1][i]{\gamma^{(#1)}} %

\newcommand{\ratio}{\gamma}
\newcommand{\radius}{\eta}

\newcommand{\diag}{\mathrm{diag}(\ParameterMatrix)}
\newcommand{\crx}{\wbar{\rvbx}} %
\newcommand{\mapsfrom}{\mathrel{\reflectbox{\ensuremath{\mapsto}}}}
\newcommand{\vsep}{\vspace{-2mm}}

\newcommand{\sless}[1]{\stackrel{#1}{\leq}}
\newcommand{\sgreat}[1]{\stackrel{#1}{\geq}}
\newcommand{\sequal}[1]{\stackrel{#1}{=}}

\newcommand{\normalbrackets}[1]{[ #1 ]}

\newcommand{\bigbrackets}[1]{\big[ #1 \big]}
\newcommand{\Bigbrackets}[1]{\Big[ #1 \Big]}
\newcommand{\biggbrackets}[1]{\bigg[ #1 \bigg]}
\newcommand{\normalparenth}[1]{( #1 )}
\newcommand{\parenth}[1]{\left( #1 \right)}
\newcommand{\bigparenth}[1]{\big( #1 \big)}
\newcommand{\Bigparenth}[1]{\Big( #1 \Big)}
\newcommand{\biggparenth}[1]{\bigg( #1 \bigg)}
\newcommand{\normalbraces}[1]{\{ #1  \}}
\newcommand{\braces}[1]{\left\{ #1 \right \}}
\newcommand{\sbraces}[1]{\{ #1 \}}
\newcommand{\bigbraces}[1]{\big\{ #1 \big \}}
\newcommand{\Bigbraces}[1]{\Big\{ #1 \Big \}}

\newcommand{\normalabs}[1]{| #1  |}
\newcommand{\bigabs}[1]{\big| #1 \big|}
\newcommand{\Bigabs}[1]{\Big| #1 \Big|}
\newcommand{\biggabs}[1]{\bigg| #1 \bigg|}

\newcommand{\ceils}[1]{\left\lceil #1 \right \rceil}

\newcommand{\tp}{^\top}
\newcommand{\inv}{^{-1}}

\newcommand{\qtext}[1]{\quad\text{#1}\quad} 
\newcommand{\stext}[1]{\ \text{#1}\ }

\def\defeq{\triangleq} %
\newcommand{\defn}{\defeq}

\def\norm#1{\left\|{#1}\right\|} %
 \def\snorm#1{\|{#1}\|} %
\newcommand{\zeronorm}[1]{\norm{#1}_0} %
\newcommand{\szeronorm}[1]{\snorm{#1}_0} %
\newcommand{\spnorm}[1]{\snorm{#1}_p} %
\newcommand{\sonenorm}[1]{\snorm{#1}_1} %
\newcommand{\twonorm}[1]{\norm{#1}_2} %
\newcommand{\stwonorm}[1]{\snorm{#1}_2} %
\newcommand{\infnorm}[1]{\norm{#1}_{\infty}} %
\newcommand{\sinfnorm}[1]{\snorm{#1}_{\infty}} %

\newcommand{\bmatnorm}[1]{\left|\!\left|\!\left| #1 \right|\!\right|\!\right|}
\newcommand{\matnorm}[1]{|\!|\!| #1 | \! | \!|}
\newcommand{\maxmatnorm}[1]{\matnorm{#1}_{\max}}
\newcommand{\onematnorm}[1]{\matnorm{#1}_{1}}
\newcommand{\infmatnorm}[1]{\matnorm{#1}_{\infty}}
\newcommand{\opnorm}[1]{\matnorm{#1}_{\mathrm{op}}} %
\newcommand{\bopnorm}[1]{\bmatnorm{#1}_{\mathrm{op}}} %
\newcommand{\fronorm}[1]{\matnorm{#1}_{\mrm{F}}} %

\newcommand{\moment}[2]{\norm{#1}_{L_{#2}}} %
\newcommand{\smoment}[2]{\snorm{#1}_{L_{#2}}}

\def\what#1{\widehat{#1}}

\def\mbf#1{\mathbf{#1}}
\def\mbb#1{\mathbb{#1}}

\def\mrm#1{\mathrm{#1}}

\def\tbf#1{\textbf{#1}}

\def\til#1{\widetilde{#1}}
\def\wtil#1{\til{#1}}
\def\wbar#1{\overline{#1}}

\def\balign#1\ealign{\begin{align}#1\end{align}}
\def\baligns#1\ealigns{\begin{align*}#1\end{align*}}
\def\balignat#1\ealign{\begin{alignat}#1\end{alignat}}
\def\balignats#1\ealigns{\begin{alignat*}#1\end{alignat*}}
\def\bitemize#1\eitemize{\begin{itemize}#1\end{itemize}}
\def\benumerate#1\eenumerate{\begin{enumerate}#1\end{enumerate}}

\newenvironment{talign*}
 {\csname align*\endcsname}
 {\endalign}
\newenvironment{talign}
 {\csname align\endcsname}
 {\endalign}

\def\balignst#1\ealignst{\begin{talign*}#1\end{talign*}}
\def\balignt#1\ealignt{\begin{talign}#1\end{talign}}

\let\svthefootnote\thefootnote
\newcommand\freefootnote[1]{%
	\let\thefootnote\relax%
	\footnotetext{#1}%
	\let\thefootnote\svthefootnote%
}

\title{On counterfactual inference with unobserved confounding\freefootnote{This work was supported, in part, by NSF under Grant No. DMS-2023528 as part of the Foundations of Data Science Institute (FODSI), the MIT-IBM Watson AI Lab under Agreement No. W1771646, MIT-IBM projects on Time Series and Causal Inference as well as project with DSO National Laboratory.}}

\author[1]{Abhin Shah}
\author[2]{Raaz Dwivedi}
\author[1]{Devavrat Shah}
\author[1]{Gregory W. Wornell}

\affil[1]{Massachusetts Institute of Technology}
\affil[2]{Cornell Tech}

\graphicspath{{../figures/},{figs/}}

 \crefname{appendix}{Appendix.}{Appendices.}
\crefname{equation}{}{}
\crefname{lemma}{Lemma.}{Lemmas.}
\crefname{theorem}{Theorem.}{Theorems.}
\crefname{Corollary}{Corollary.}{Corollaries.}
\crefname{Claim}{Claim.}{Claims.}
\crefname{algorithm}{Algorithm.}{Algorithms.}
\crefname{example}{Example.}{Examples.}

\crefname{section}{Section.}{Sections.}
\crefname{table}{Table.}{Tables.}
\crefname{remark}{Remark.}{Remarks.}
\crefname{algorithm}{Algorithm}{Algorithms.}
\crefname{definition}{Definition.}{Definitions.}
\crefname{Proposition}{Proposition.}{Propositions.}
\crefname{myremark}{Remark.}{Remarks.}
\crefname{mylemma}{Lemma.}{Lemmas.}
\crefname{mydefinition}{Definition.}{Definitions.}
\crefname{myproposition}{Proposition.}{Propositions.}
\crefname{mycorollary}{Corollary.}{Corollaries.}
\crefname{myclaim}{Claim}{Claims.}
\crefname{myassumption}{Assumption.}{Assumptions.}
\crefname{figure}{Figure.}{Figures.}
\crefname{enumi}{}{}
\crefname{name}{}{} %

\date{}
\begin{document}

\begin{center}

  {\bf{\LARGE{On counterfactual inference with unobserved confounding}}}

\vspace*{.2in}

{\large{
\begin{tabular}{cccc}
Abhin Shah$^1$ & Raaz Dwivedi$^{2}$ & Devavrat Shah$^1$ & Gregory W. Wornell$^1$ 
\end{tabular}
\begin{tabular}{c}
  \tt{abhin@mit.edu}, \tt{dwivedi@cornell.edu}, \tt{\{devavrat, gww\}@mit.edu}
\end{tabular}
}}

\vspace*{.2in}

\begin{tabular}{c}
$^1$MIT and $^2$Cornell Tech
\end{tabular}

\today

\end{center}
\vspace*{.2in}

\begin{abstract}
Given an observational study with $n$ independent but heterogeneous units, our goal is to learn the counterfactual distribution for each unit using only one $p$-dimensional sample per unit containing covariates, interventions, and outcomes. Specifically, we allow for unobserved confounding that introduces statistical biases between interventions and outcomes as well as exacerbates the heterogeneity across units. Modeling the conditional distribution of the outcomes as an exponential family, we reduce learning the unit-level counterfactual distributions to learning $n$ exponential family distributions with heterogeneous parameters and only one sample per distribution. We introduce a convex objective that pools all $n$ samples to jointly learn all $n$ parameter vectors, and provide a unit-wise mean squared error bound that scales linearly with the metric entropy of the parameter space. For example, when the parameters are $s$-sparse linear combination of $k$ known vectors, the error is $O(s\log k/p)$.  En route, we derive sufficient conditions for compactly supported distributions to satisfy the logarithmic Sobolev inequality. As an application of the framework, our results enable consistent imputation of sparsely missing covariates. 
\end{abstract}

\addtocontents{toc}{\protect\setcounter{tocdepth}{0}}
\section{Introduction}
\label{section_introduction}
We are interested in the problem of unit-level counterfactual inference owing to the increasing importance of personalized decision-making in many domains.
As a motivating example, consider an observational dataset corresponding to an interaction between a recommender system and a user over time. At each time, the user was exposed to a product based on observed demographic factors as well as factors that are not observed in the dataset, e.g., user's energy level (i.e., whether they're feeling energetic or tired). Additionally, at each time, the user's engagement level, which could have sequentially depended on the prior interaction in addition to the ongoing interaction, was also recorded. Also, the system could have sequentially adapted its recommendation. Given such data of many heterogeneous users (e.g., a movie recommender system for a streaming media
platform), we want to infer each user's average engagement level if it were exposed to a different sequence of products while the observed and the unobserved factors remain unchanged. This task is challenging since: (a) the {unobserved} factors could give rise to spurious associations, (b) the users could be {heterogeneous} in that they may have different responses to same sequence of products, and (c) each user provides a {single} interaction trajectory.
More generally, to address problems of this kind, we consider an observational setting where a unit undergoes multiple interventions (or treatments) denoted by $\rvba$. We denote the outcomes of interest by $\rvby$, and allow the interventions $\rvba$ and the outcomes $\rvby$ to be confounded by observed covariates $\rvbv$ as well as unobserved covariates $\rvbz$. The graphical structure shown in \cref{fig_graphical_models}(a) captures these interactions and is at the heart of our problem. 
In the recommender system example above, {a unit corresponds to a user}, $\rvba$ corresponds to the products recommended, $\rvby$ corresponds to the engagement levels, $\rvbv$ corresponds to the observed demographic factors, and $\rvbz$ corresponds to the unobserved energy levels (see \cref{fig_graphical_models}(b)). We consider $n$ heterogeneous and independent units indexed by $i \in [n] \defn \{1,\cdots,n\}$, and assume access to one observation per unit with $(\svbv^{(i)}$, $\svba^{(i)}$, $\svby^{(i)})$ denoting the realizations of $(\rvbv$, $\rvba$, $\rvby)$ for unit $i$. 
\begin{figure}[t]
    \centering
    \begin{tabular}{cc}
    \includegraphics[width=0.25\linewidth,clip]{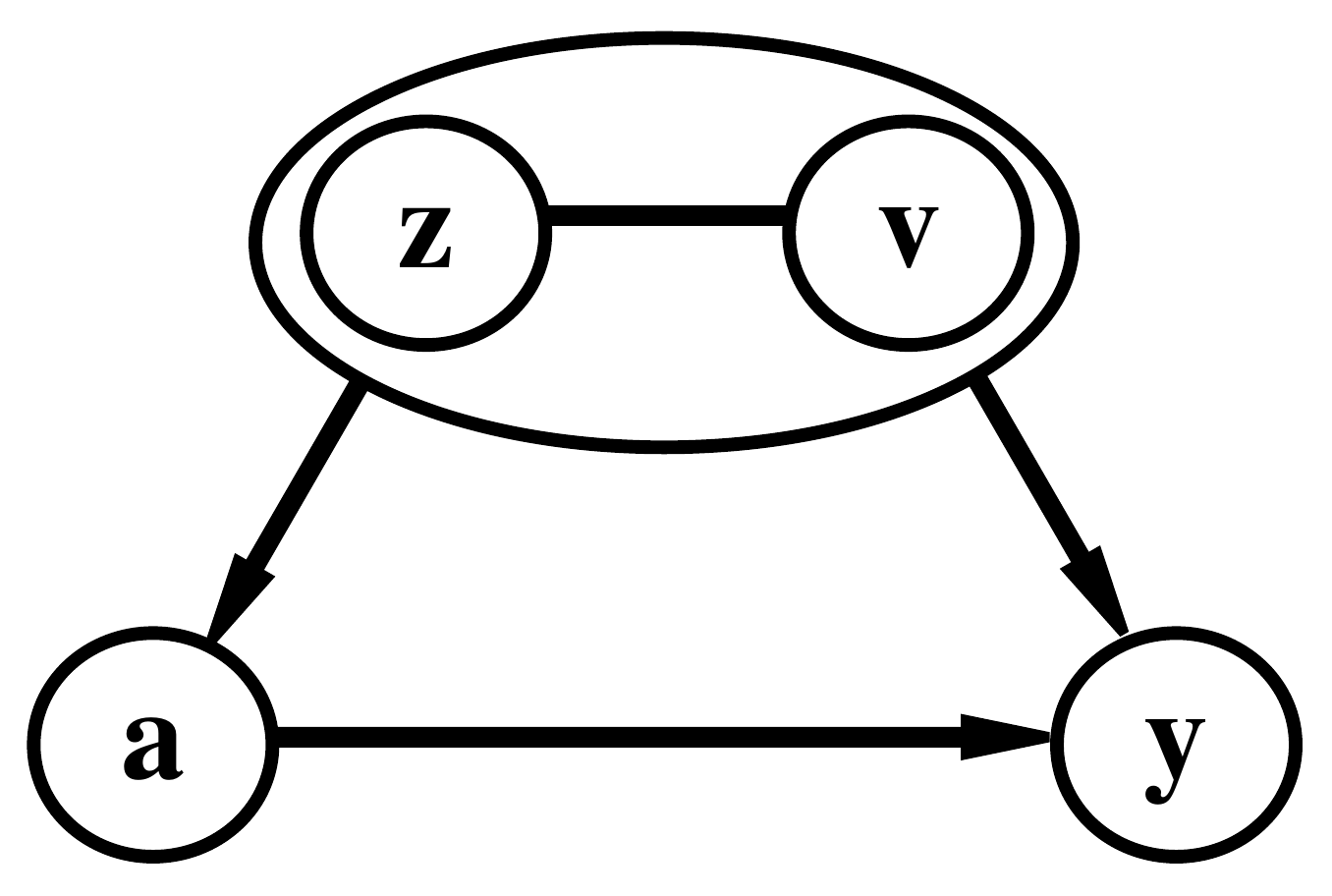}
    &
    \includegraphics[height=0.25\linewidth,clip]{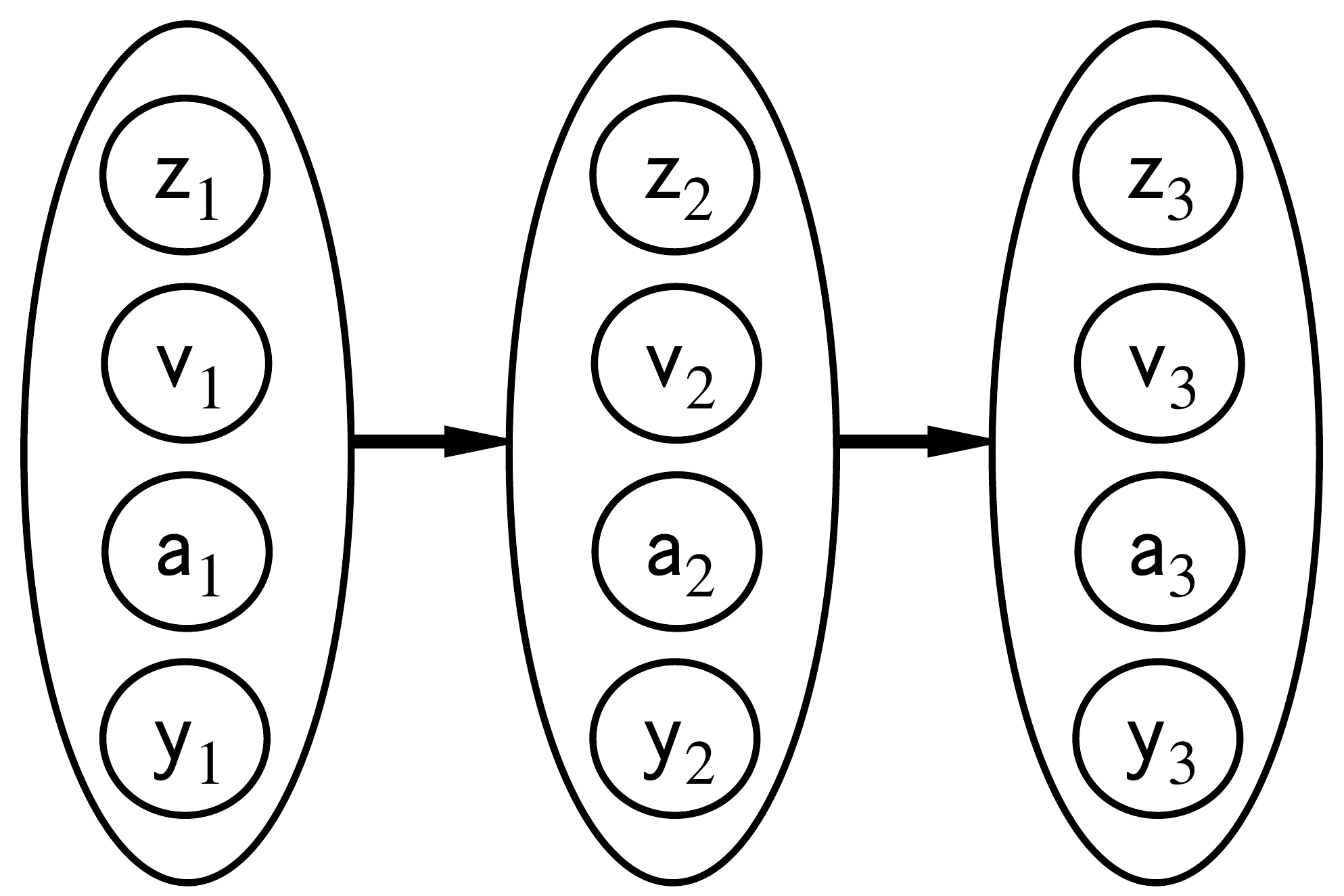} ~~
    \includegraphics[height=0.25\linewidth,clip]{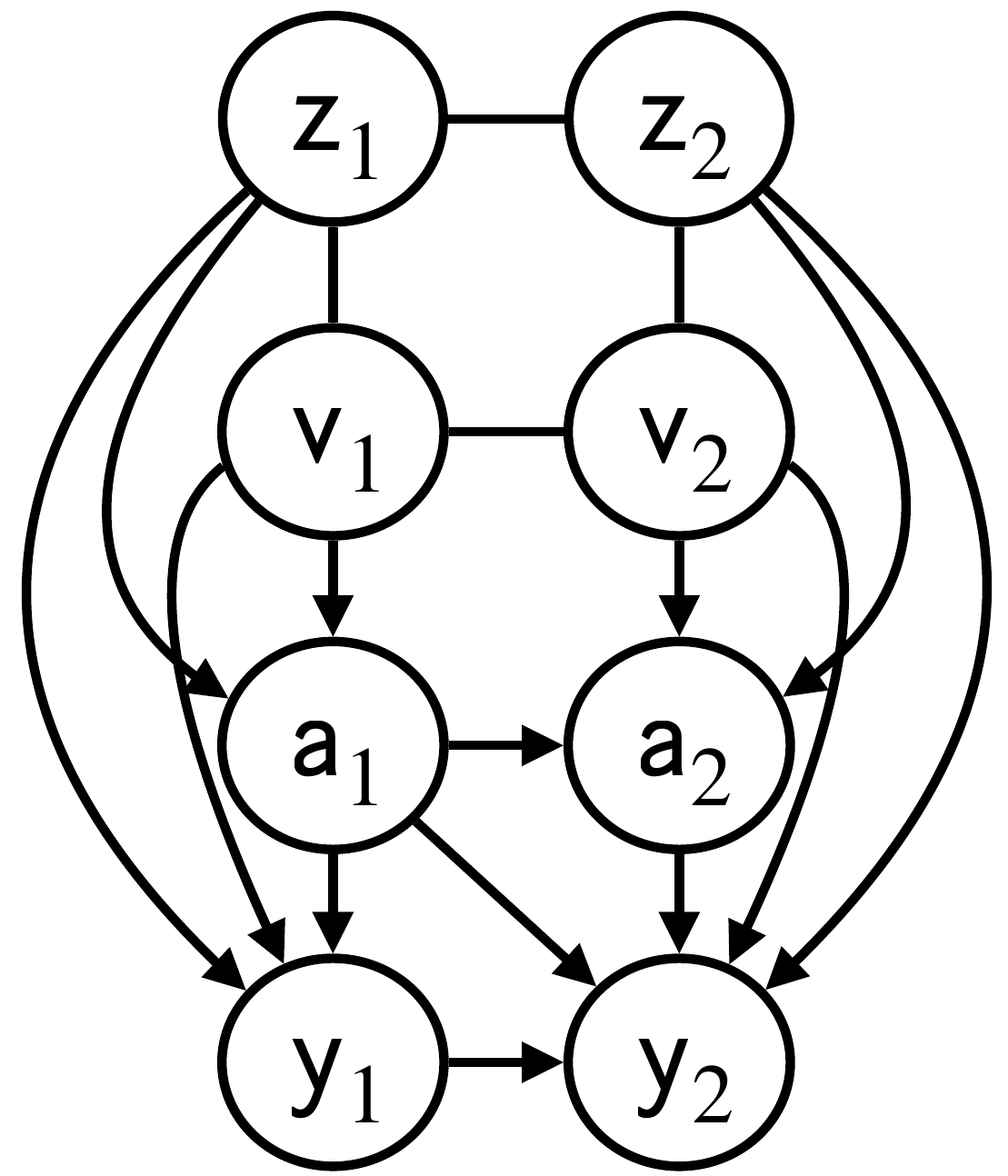}
    \\
    (a) A generic model for our setting \label{a}
    &
    (b) A graphical model for sequential recommender system 
    \end{tabular}
    \caption{Graphical models covered by our methodology. 
    Directed arrows denote causation and undirected arrows denote association. 
    Thin arrows denote low-level causal links and thick arrows denote high-level causal links, i.e., aggregated thin arrows. 
    Our methodology does not assume knowledge of  low-level causal links and is applicable to any graphical model with high-level causal links between variables as in panel \tbf{(a)}. Panel \tbf{(b)} presents an example of a sequential recommender system (consistent with the model in panel (a)) interacting with a user at 3 time points where {$\rvz_t$,} $\rvv_t$, $\rva_t$, and $\rvy_t$ denote the user's {unobserved energy levels,} observed demographic factors, the product exposed to the user, and the user's engagement level, respectively, at time $t$.
    The left subplot 
    illustrates the high-level dependency between the variables while the right subplot expands on it for time $1$ and $2$.
    {
    }}
    \label{fig_graphical_models}
\end{figure}
We operate within the Neyman-Rubin potential outcomes framework \citep{Neyman1923, Rubin1974} and denote the potential outcome of unit $i \in [n]$ under interventions $\svba$ by $\svby^{(i)}(\svba)$. Given the realizations $\braces{(\svbv^{(i)}, \svba^{(i)}, \svby^{(i)})}_{i=1}^{n}$, our goal is to answer counterfactual questions for these $n$ units. For example, what would the potential outcomes $\svby^{(i)}(\wtil{\svba}^{(i)})$ for interventions $\wtil{\svba}^{(i)} \neq \svba^{(i)}$ be, while the observed and unobserved covariates remain unchanged?  
Under the graphical model in \cref{fig_graphical_models}(a) and the stable unit treatment value assumption (SUTVA), i.e., the potential outcomes of unit $i$ are not affected by the interventions at other units, learning unit-level counterfactual distributions is equivalent to
learning unit-level conditional distributions
\begin{align}
    \braces{f_{\rvby | \rvba, \rvbz, \rvbv}(\rvby=\cdot | \rvba= \cdot, \svbz^{(i)}, \svbv^{(i)})}_{i=1}^{n}. \label{eq_set_conditional_distributions}
\end{align}
Here, the $i$-th distribution represents the conditional distribution for the outcomes $\rvby$ as a function of the interventions $\rvba$, while keeping the observed covariates $\rvbv$ and the unobserved covariates $\rvbz$ fixed at the corresponding realizations for unit $i$, i.e., $\svbv^{(i)}$ and $\svbz^{(i)}$, respectively.

\newcommand{\ranvarvec}{\rvbw}
\newcommand{\ranvarmat}{\newcommand{\varmat}{\begin{bmatrix} 
\rvbz, \rvbv, \rvba, \rvby^\tp
\end{bmatrix}}}
\newcommand{\varvec}{\svbw}
\newcommand{\varmat}{\begin{bmatrix} 
\svbz, \svbv, \svba, \svby
    \end{bmatrix}}

Such questions cannot be answered without structural assumptions 
due to two key challenges: (a) unobserved confounding and 
(b) single observation per unit.
First, the unobserved covariates $\rvbz$ introduce spurious statistical dependence between interventions and outcomes, termed unobserved confounding, which results in biased estimates. 
Second, we only observe one realization, namely the outcomes $\svby^{(i)}(\svba^{(i)})$ under the interventions $\svba^{(i)}$, that is consistent with the unit-level conditional distribution $f_{\rvby | \rvba, \rvbz, \rvbv}(\svby | \svba, \svbz^{(i)}, \svbv^{(i)})$. As a result, we need to learn $n$ heterogeneous conditional distributions while having access to only one sample from each of them. %

In this work, we model the conditional distribution of the outcomes of interest conditioned on the unobserved covariates, the observed covariates, the intervention as an exponential family distribution motivated by the principle of maximum entropy.\footnote{Exponential family distributions are the maximum entropy distributions given linear constraints on distributions such as specifying the moments (see \cite{jaynes1957information}).}
With this model structure, we show that both the aforementioned challenges can be tackled. 
In particular, we show that the $n$ unit-level conditional distributions in \eqref{eq_set_conditional_distributions} lead to $n$ distributions from the same exponential family, albeit with parameters that vary across units. The parameter corresponding to the $i^{th}$ unit, for brevity in terminology denoted by $\Truephi[i]$ (defined later), captures the effect of $\svbz^{(i)}$ and helps tackle the challenge of unobserved confounding. However, the challenge still remains to learn $n$ heterogeneous exponential family distributions with one sample per distribution. This challenge has been addressed in two specific scenarios in the literature: (a) if  the unobserved confounding is identical across units, i.e., the parameters $\normalbraces{\Truephi[i]}_{i=1}^n$ 
were all equal, then the challenge boils down to learning parameters of a single exponential family distribution from $n$ samples, which has been well-studied (cf. \cite{ShahSW2021B} for an overview); 
(b) if $\rvbv$, $\rvba$, and $\rvby$ take binary values and have pairwise interactions, then the challenge boils down to learning parameters of an Ising model (a special sub-class of exponential family defined later) with one sample. This specific challenge has been studied under restricted settings: (i) where the dependencies between the variables are known (e.g., \cite{KandirosDDGD2021, mukherjee2021high}) and (ii) where a specific
subset of the parameters are known \citep{DaganDDA2021}.
In this work, we consider a generalized setting where $\rvbv$, $\rvba$, and $\rvby$ can be either discrete, continuous, or both, and do not assume that the underlying dependencies or a specific subset of parameters are known.

\paragraph{Summary of contributions} 
This work introduces a method 
to learn unit-level counterfactual distributions from observational studies, in the presence of unobserved confounding, with one sample per unit, using exponential family modeling. For every unit $i \in [n]$, we reduce learning its counterfactual distribution to learning the unit-specific parameter  $\Truephi[i]$ with access to one sample $(\svbv^{(i)}, \svba^{(i)}, \svby^{(i)})$ from unit $i$. {Here, $\normalbraces{\Truephi[1], \cdots, \Truephi[n]}$ are parameters of $n$ different distributions from the same exponential family.}
The specific technical contributions are as follows:
\begin{enumerate}
\itemsep0em
    \item We introduce a convex (and strictly proper) loss function (\cref{def-loss-function}) that pools the data $\braces{(\svbv^{(i)}, \svba^{(i)}, \svby^{(i)})}_{i=1}^{n}$ across all $n$ samples to jointly learn all $n$ parameters $\normalbraces{\Truephi[i]}_{i=1}^n$.
    \item For every unit $i$, we prove that the mean squared errors of our estimates of (a) $\Truephi[i]$ (\cref{theorem_parameters}) and (b) the expected potential outcomes under alternate interventions (\cref{thm_causal_estimand}) scale linearly with the metric entropy of the underlying parameter space.
    For instance, when $\Truephi[i]$ is $s$-sparse linear combination of $k$ known vectors (\cref{cor_params}), 
    the error---just with one sample---decays as $O(s\log k/p)$, where $p$ is the dimension of the tuple $(\rvbv, \rvba, \rvby)$.
    \item We apply our method to impute missing covariates when they are sparse. Formally, we consider a setup (with no systematically unobserved covariates) where the observed covariates are entirely missing for some fixed fraction of the units. Specifically, for unit $i$ with missing covariates, only $(\svba^{(i)}, \svby^{(i)})$ is observed. For every such unit, we show that our method can recover the missing covariates with the mean squared error decaying as $O(p_v^2/p)$, where $p_v$ and $p$ are the dimensions of $\rvbv$ and $(\rvbv, \rvba, \rvby)$, respectively (\cref{prop_impute_missing_covariates}).
\item {Methodologically, our work advances three threads: (a) learning Ising models (and their extensions to discrete, continuous, or mixed variables) from a single sample, where we learn the dependencies between variables, generalizing prior work \citet{KandirosDDGD2021,DaganDDA2021}, (b) learning Markov random fields (a sub-class of exponential family) from multiple independent but non-identical samples, generalizing prior work \citet{vuffray2016interaction,VuffrayML2022,ShahSW2021A}, 
and (c) learning counterfactual outcomes with an exponential family model, allowing each unit to have different unobserved covariates and providing unit-level guarantees instead of average-level, generalizing \cite{arkhangelsky2018role}.}
 \item In our analysis, we (a) derive sufficient conditions for a continuous random vector supported on a compact set to satisfy the logarithmic Sobolev inequality (\cref{thm_LSI_main}) and (b) provide new concentration bounds for arbitrary functions of a continuous random vector that satisfies the logarithmic Sobolev inequality (\cref{thm_main_concentration}). These results may be of independent interest.
\end{enumerate}
\paragraph{Outline} \cref{sec_related_work} discusses background and related work. We discuss our formulation and algorithm in \cref{section_problem_formulation} and present their analysis in \cref{sec_main_results}. We develop an application of our methodology to impute missing covariates in \cref{sec_sparse_measurement_errors}. We sketch the proof of our main result in \cref{sec_proof_sketch} with detailed proofs deferred to the appendices. We conclude with a discussion in \cref{sec_discussion}.

\paragraph{Notation} 
For any positive integer $n$, let $[n] \coloneqq \{1,\cdots, n\}$.
For a deterministic sequence $u_1, \cdots , u_n$, we let $\svbu \coloneqq (u_1, \cdots, u_n)$. 
For a random sequence $\rvu_1, \cdots , \rvu_n$, we let $\rvbu \coloneqq (\rvu_1, \cdots, \rvu_n)$. 
For a vector $\svbu \in \Reals^p$, we use $u_t$ to denote its $t^{th}$ coordinate and $u_{-t} \in \Reals^{p-1}$ to denote the vector after deleting the $t^{th}$ coordinate. We denote the $\ell_0$, $\ell_p$ $(p \geq 1)$, and $\ell_{\infty}$ norms of a vector $\svbv$ by $\szeronorm{\svbv}$, $\spnorm{\svbv}$, and  $\sinfnorm{\svbv}$, respectively.  For a matrix $\tbf{M} \in \Reals^{p \times p}$, we denote the element in $t^{th}$ row and $u^{th}$ column by $\tbf{M}_{tu}$, the $t^{th}$ row by $\tbf{M}_t$, and the vector obtained after deleting $\tbf{M}_{tt}$ from $\tbf{M}_t$ by $\tbf{M}_{t,-t}$.  Further, we denote the matrix maximum norm by $\maxmatnorm{\tbf{M}}$, the Frobenius norm by $\fronorm{\tbf{M}}$, the spectral norm (operator $2$-norm) by $\opnorm{\tbf{M}}$, the induced $1-$norm (operator $1$-norm) by $\onematnorm{\tbf{M}}$, the induced $\infty$-norm (operator $\infty$-norm) by $\infmatnorm{\tbf{M}}$, and the $(2,\infty)$-norm by $\matnorm{\tbf{M}}_{2,\infty}$. Finally, for vectors $\what{\svbu} \in \Reals^p$ and $\wtil{\svbu} \in \Reals^p$, the mean squared error between $\what{\svbu}$ and $\wtil{\svbu}$ is defined as $\mathrm{MSE}(\what{\svbu}, \wtil{\svbu}) \defn p^{-1} \sum_{t \in [p]} (\what{u_t} - \wtil{u_t})^2$.
\section{Background and related work}
\label{sec_related_work}
This work builds on two vast bodies of literature: exponential family learning and unit-level counterfactual inference with unobserved confounding. For a detailed literature overview of the former, we refer the readers to \cite{bresler2015efficiently,klivans2017learning,VuffrayML2022,ShahSW2021A} (for a special sub-class, Markov random fields (MRFs)\footnote{MRFs can be naturally represented as exponential family distributions with certain sparsity constraints on the parameters via the principle of maximum entropy  \citep{wainwright2008graphical}.}) and \cite{ShahSW2021B} for general exponential families. For an introduction to counterfactual inference, see the books \cite{imbens2015causal, hernan2020causal} for settings with no unobserved confounding and \cite{Pearl2009,Pearl2016} for settings with known causal mechanism (in the form of a causal graph). 

\paragraph{Exponential family learning}
There is a series of works for learning Ising models, a special MRF with binary variables and an instance of a pair-wise exponential family, from a single sample. Such a model has two distinct sets of parameters capturing the contribution of nodes and edges in the underlying undirected graph, referred to as the external field and the interaction matrix.\footnote{E.g., in our model (defined later in \cref{eq_joint_distribution_zvay}), $\phi$ and $\Phi$ correspond to the external field  and the interaction matrix, respectively.} Many strategies exist  for learning such a model when the interaction matrix is known up to a constant and under varying assumptions on the external field; see, e.g., \cite{chatterjee2007estimation,bhattacharya2018inference,daskalakis2019regression,ghosal2020joint,KandirosDDGD2021,mukherjee2021high}. More recently, \cite{DaganDDA2021} provide guarantees for learning the interaction matrix from a single sample when the external field is known. \cite{KandirosDDGD2021} and \cite{mukherjee2021high} extend the tools in \cite{DaganDDA2021} to learn the external field for an Ising model with a known interaction matrix (up to a scalar multiple). Notably, all of these works are based on the pseudo-likelihood estimation \citep{besag1975statistical}. Our work extends the techniques and results from \cite{DaganDDA2021} to learn the external field from one sample of continuous variables with an estimated interaction matrix.
\cite{vuffray2016interaction} introduced a novel {M-estimation-based} loss function for learning Ising models from many independent and identically distributed samples. \cite{VuffrayML2022} and \cite{ShahSW2021A} generalize it to learn general MRFs with multi-ary discrete and continuous variables, respectively. {\cite{RenMVL2021} showed that this loss function has superior numerical performance compared to the ones based on pseudo-likelihood.}
We contribute to this line of work by generalizing that loss function further to learn MRFs with discrete, continuous, and mixed variables with independent but not identically distributed samples. 

 For settings closer to our work, namely, exponential families with unobserved variables, the two common modeling approaches include  restricted Boltzmann machines \citep{bresler2019learning, goel2020learning, bresler2020learning} and latent variable Gaussian graphical models; see, e.g., \cite{chandrasekaran2010latent,ma2013alternating,vinyes2018learning,wang2021learning}.
While the former assumes a bipartite structure with edges only across observed and unobserved variables, the latter imposes a Gaussian generative model. 
In this thread, most related to our set-up is the work by \cite{taeb2020learning} as they {model the conditional distribution of the observed variables conditioned on the unobserved variables as an exponential family similar to us.}
They provide empirically promising results for recovering the underlying graph and the number of unobserved variables (assumed to be small), albeit with limited theoretical guarantees. In contrast, here we provide parameter estimation error in the presence of unobserved variables (notably, we cover all the models they considered).

\paragraph{Unit-level counterfactual inference} Recent years have seen an active interest in developing different strategies for unit-level inference with unobserved confounding.

For the settings with univariate outcomes for each unit, a common approach to deal with unobserved confounding is the instrumental variable (IV) method~\citep{imbens1994identification} when one has access to a variable---the IV---that induces changes in intervention assignment but has no independent effect on outcomes allowing causal effect estimation. Recent works for IV methods with unit-level inference include
\cite{hartford2017deep,athey2019generalized,syrgkanis2019machine,singh2019kernel,xu2020learning,semenova2021debiased,wang2022instrumental}. 
Another approach for univariate outcomes, called causal sensitivity analysis \citep{rosenbaum1983assessing}, estimates the worst-case effect on the causal estimand as a function of the extent of unobserved confounding in a given dataset under varying assumptions on the generative model. For such analysis with unit-level guarantees, see, e.g., 
\cite{yadlowsky2018bounds,kallus2019interval,yin2021conformal,jin2021sensitivity,jesson2021quantifying}.

Closer to our work are those on panel or longitudinal data settings, where one observes multiple outcomes for each unit. For {linear panel data} settings, a common approach is factor modeling, where  potential outcomes and interventions (binary or multi-ary) are assumed to be independent conditional on some latent factors. See, e.g., difference-in-difference methods~\citep{bertrand2004much, angrist2009mostly}, synthetic control ~\citep{abadie1, abadie2}, its variants~\cite{arkhangelsky2021synthetic,dwivedi2022doubly}, and extensions to multi-ary interventions in synthetic interventions \citep{agarwal2020synthetic} and sequential experiments~\citep{dwivedi2022counterfactual}. 
{For non-linear panel data settings, the most commonly used models include probit, logit, Poisson, negative binomial, proportional hazard, and tobit models (see \cite{FV2018} for an overview) where some parametric model characterises the distribution of the outcomes conditional on the unobserved covariates, the observed covariates, and the interventions.} 
Notably, these works on linear and non-linear panel data directly estimate effects (averaged over all observed and unobserved covariates or unit-level for given observed and unobserved covariates) for finitely many interventions {when the intervention assignment has special structure}, while we focus on learning the counterfactual distributions while allowing for multi-ary discrete and continuous interventions {without any special structure}. 
In this thread, our work is most related to \cite{arkhangelsky2018role}, who also use an exponential family to model the unit-wise distribution of the observed covariates and interventions conditioned on the unobserved covariates.
They connect this model to the commonly used fixed effects model for the outcomes in latent factor modeling~\citep{angrist2009mostly}, and provide estimates for the average treatment effect given multiple units with the same set of unobserved covariates. Our work generalizes their set-up by allowing each unit to have a different set of unobserved covariates and provides the first unit-level counterfactual inference guarantee with an exponential family model.

\vsep

\section{Problem formulation and algorithm}
\label{section_problem_formulation}
This section formalizes the problem, specifies our model, and defines the inference tasks of interest. 

\subsection{Underlying causal mechanism and counterfactual distributions}
\label{subsec_causal_mech}
We consider a counterfactual inference task where units go through $p_a \geq 1$ interventions. For every unit, we observe $p_y \geq 1$ outcomes of interest. The interventions and the outcomes could be confounded by $p_v \geq 0$ observed covariates as well as $p_z \geq 0$ unobserved covariates. Additionally, the observed covariates and the unobserved covariates could be arbitrarily associated. We denote the random vector associated with the interventions, the outcomes,  the observed covariates, and the unobserved covariates by $\rvba \defn (\rva_{1}, \cdots, \rva_{p_a}) \in \cA^{p_a}$, $\rvby = (\rvy_{1}, \cdots, \rvy_{p_y}) \in \cY^{p_y}$, $\rvbv \defn (\rvv_{1}, \cdots, \rvv_{p_v}) \in \cV^{p_v}$, and $\rvbz \defn (\rvz_{1}, \cdots, \rvz_{p_z})  \in \cZ^{p_z}$, respectively, where $\cA, \cY, \cV$, and $\cZ$ denote the support of interventions, outcomes, observed covariates, and unobserved covariates, respectively. We allow these sets to contain discrete, continuous, or mixed values. 

\paragraph{Causal mechanism}
We summarize the causal relationship between the random vectors $\rvbz$, $\rvbv$, $\rvba$, and $\rvby$ in \cref{fig_graphical_models}(a) where we denote the arbitrary association between $\rvbz$ and $\rvbv$ by a undirected arrow, and the causal association between (i) $(\rvbz, \rvbv)$ and $\rvba$, (ii) $(\rvbz, \rvbv)$ and $\rvby$, and (iii) $\rvba$ and $\rvby$ by directed arrows. 
More generally, we are interested in any setup consistent with the  graphical model in \cref{fig_graphical_models}(a). We assume access to $n$ independent realizations indexed by $i \in [n]$: $\svbv^{(i)}$, $\svba^{(i)}$, and $\svby^{(i)}$ denote the realizations of $\rvbv$, $\rvba$, and $\rvby$ for unit $i$, respectively. For every realized tuple $(\svbv^{(i)}, \svba^{(i)}, \svby^{(i)})$, there is a corresponding realization $\svbz^{(i)}$ of the unobserved covariates $\rvbz$ that is unobserved. Next, we discuss some examples covered by our framework.

\paragraph{Examples: sequential and network settings}
\label{subsubsec_examples}
\begin{figure}[t]
    \centering
    \begin{tabular}{c}
    \includegraphics[height=0.28\linewidth,clip]{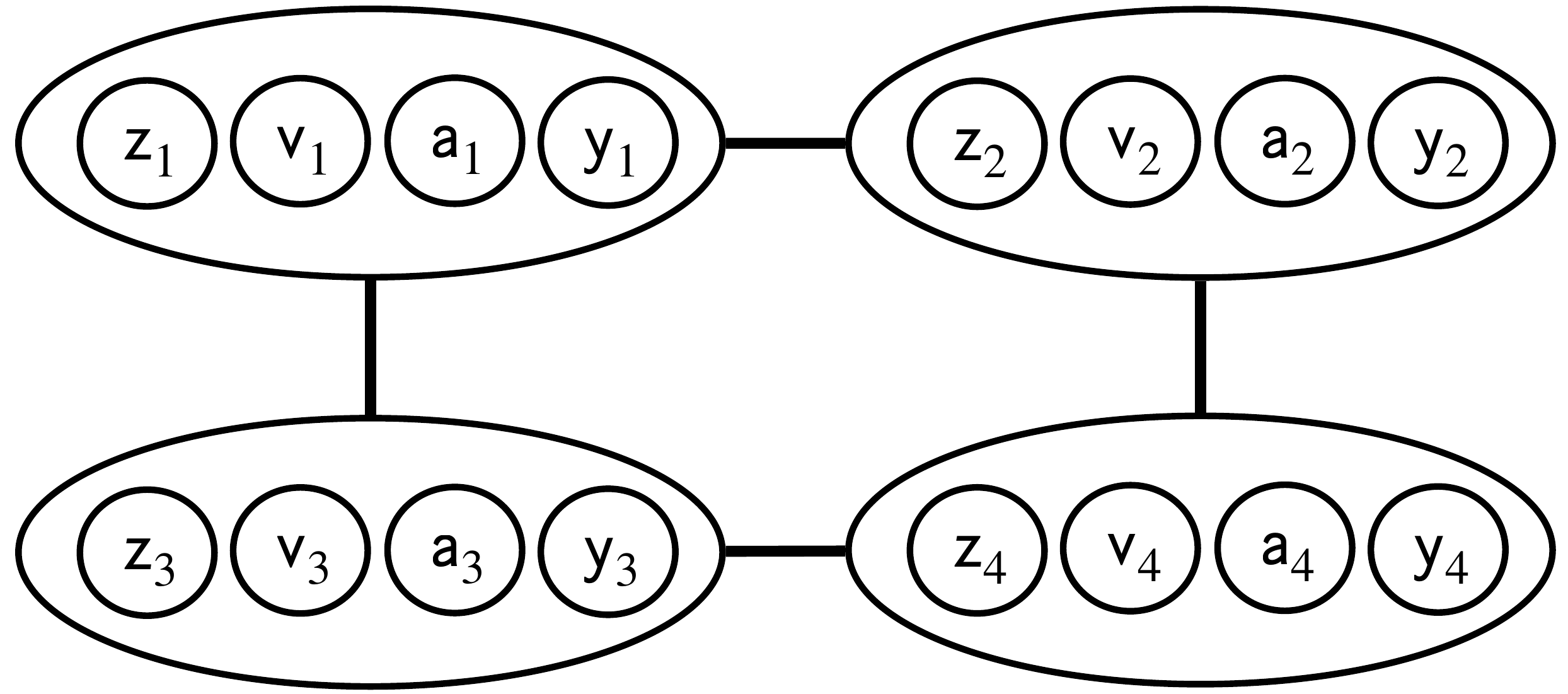} ~~
    \includegraphics[height=0.3\linewidth, clip]{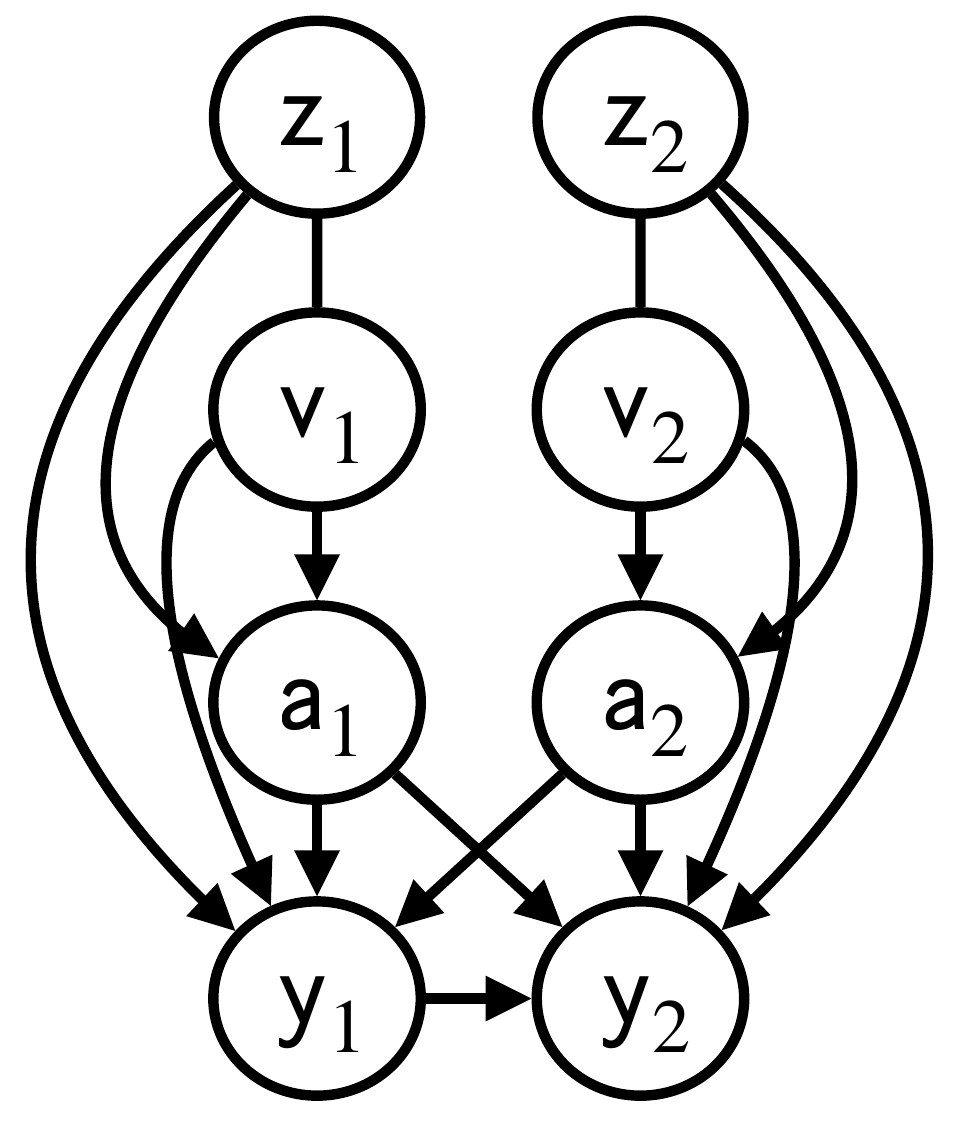}  \\
    \end{tabular}
    \caption{A graphical model for a single unit in the network setting with 4 users; arrows have same meaning as in \cref{fig_graphical_models}. Here $\rvv_t$, $\rvz_t$, $\rva_t$, and $\rvy_t$ denote user $t$'s observed factors, unobserved factors, exposed product, and engagement level, respectively. The left plot illustrates the high-level dependency between the variables of different users in the network, and the right plot expands on it for {(user $1$, user $2$) pair. Analogous dependencies exist for (user $1$, user $3$), (user $2$, user $4$), and (user $3$, user $4$) pairs.}}
    \label{fig_graphical_models_2}
\end{figure}
While \cref{fig_graphical_models}(a) exhibits the high-level causal links between $\rvbz$, $\rvbv$, $\rvba$, and $\rvby$, there could be complex low-level causal links between elements of these vectors. We do not assume any knowledge of such low-level causal links. In \cref{fig_graphical_models}(b), we provide an instance of a sequential setting covered by our work where every unit's (i) $\rva_{t+1}$ depends on $\rva_{t}$ in addition to $\rvv_{t+1}$ and $\rvbz$, and (ii) $\rvy_{t+1}$ depends on $\rva_{t}$ and $\rvy_{t}$ in addition to $\rva_{t+1}$, $\rvv_{t+1}$ and $\rvbz$. Another classical example covered by our framework includes the network setting where a unit represents a social network where users are linked to each other by interpersonal relationships as shown in \cref{fig_graphical_models_2}. Similar to the sequential recommender system, every user was exposed to a product based on observed demographic factors as well as certain unobserved factors, and the user's engagement level was recorded. The engagement level of user $t$, i.e., $\rvy_{t}$, depended its observed demographic factors $\rvv_{t}$, its unobserved factors $\rvz_{t}$, its exposed product $\rva_{t}$ as well as on the product exposed to its neighbor $u$, i.e., $\rva_{u}$. Further, $\rvy_{t}$ could have been associated with $\rvy_{u}$.

\paragraph{Unit-level counterfactual distributions}
We denote the Neyman-Rubin potential outcomes of unit $i \in [n]$ under interventions $\svba \in \cA^{p_a}$ by $\svby^{(i)}(\svba)$. We make the stable unit treatment value assumption (SUTVA) \citep{rubin1980randomization} for the observed outcome, 
i.e.,  
$\svby^{(i)} =  \svby^{(i)}(\svba^{(i)}) $ for all $i \in [n]$.
 For independent units with the causal mechanism and SUTVA assumed here, the unit-level counterfactual distributions are equivalent to certain unit-level conditional distributions as we now argue. Consider unit $i \in [n]$ and fix the observed covariates and the unobserved covariates at $\svbv^{(i)}$ and $\svbz^{(i)}$, respectively. Then, let $\wtil{\svby}^{(i)}$ be a realization of $\rvby$ when $\rvba = \wtil{\svba}^{(i)}$. We are interested in the distribution of the potential outcomes of unit $i$ for interventions $\wtil{\svba}^{(i)}$, i.e., the distribution of $\svby^{(i)}(\wtil{\svba}^{(i)})$ given $\rvbz = \svbz^{(i)}, \rvbv = \svbv^{(i)}$. Under the causal framework considered here (see \cref{fig_graphical_models}(a)), it is equivalent to the distribution of $\svby^{(i)}(\wtil{\svba}^{(i)})$ given $\rvba = \wtil{\svba}^{(i)}, \rvbz = \svbz^{(i)}, \rvbv = \svbv^{(i)}$ since $(\rvbz, \rvbv)$ satisfy ignorability \citep{Pearl2009, imbens2015causal}, i.e., the potential outcomes are independent of the interventions given $(\rvbz, \rvbv)$.  Further, under SUTVA, it is equivalent to the distribution of $\wtil{\svby}^{(i)}$ given $\rvba = \wtil{\svba}^{(i)}, \rvbz = \svbz^{(i)}, \rvbv = \svbv^{(i)}$, i.e., $f_{\rvby | \rvba, \rvbz, \rvbv}(\rvby=\cdot | \rvba= \wtil{\svba}^{(i)}, \svbz^{(i)}, \svbv^{(i)})$. Therefore, our goal is to learn the $n$ unit-level conditional distributions in \cref{eq_set_conditional_distributions}. 
 Now, we proceed to the modeling details.

\subsection{Exponential family modeling and its consequences}
\label{subsec_exp_fam}
Let $\ranvarvec \defeq \normalparenth{\rvbz, \rvbv, \rvba, \rvby}$ be the $\tilp$-dimensional random vector obtained by concatenating $\rvbz$, $\rvbv$, $\rvba$ and $\rvby$ where $\tilp \defn p_z + p_v + p_a + p_y$. {For notational convenience, we start by modeling the joint probability distribution $f_{\ranvarvec}$ as an exponential family and relax this model to the conditional distribution of the outcomes in \cref{subsec_joint_vs_conditional}.} In particular, we parameterize $f_{\ranvarvec}$ with natural parameters $\phi \in \Reals^{\tilp \times 1}$ and $\Phi \in \Reals^{\tilp \times \tilp}$, and natural statistics $\rvbw$ and $\rvbw \rvbw\tp$ so that
\begin{align}
    f_{\ranvarvec}(\varvec; \phi, \Phi) \propto \exp\Bigparenth{ \phi\tp \varvec
    +\varvec\tp \Phi \varvec},
    \qtext{where}
    \varvec \defeq (\svbz, \svbv, \svba, \svby),
    \label{eq_joint_distribution_zvay}
\end{align}
and $\svbz \defn (z_{1}, \cdots, z_{p_z})$, $\svbv \defn (v_{1}, \cdots, v_{p_v})$, $\svba \defn (a_{1}, \cdots, a_{p_a})$, and $\svby \defn (y_{1}, \cdots, y_{p_y})$ denote realizations of $\rvbz$, $\rvbv$, $\rvba$, and $\rvby$, respectively. 
Without loss of generality, we can assume $\Phi$ to be a symmetric matrix. 
Next, we show that with this modeling assumption, learning unit-level counterfactual distribution can be reduced to learning a suitable exponential family model.

Under the exponential family in \cref{eq_joint_distribution_zvay}, the unit-level conditional distribution of $\rvby$ conditioned on $\rvba = \svba$, $\rvbz = \svbz$, and $\rvbv = \svbv$ 
is an exponential family model with natural statistics $\rvby$ and $\rvby \rvby\tp$ and
\begin{align}
    f_{\rvby | \rvba, \rvbz, \rvbv}(\svby | \svba, \svbz, \svbv) \! \propto  \!\exp\Bigparenth{\!\bigbrackets{\phi^{(y)\tp} \!\!+\! 2\svbz\tp\Phi^{(z,y)} \!+\! 2\svbv\tp\Phi^{(v,y)} \!+\! 2\svba\tp\Phi^{(a,y)}} \svby \!+\! \svby\tp\Phi^{(y,y)} \svby}, \label{eq_conditional_distribution_y}
\end{align}
where $\phi^{(y)} \in \Reals^{p \times 1}$ is the component of $\phi$ corresponding to $\rvby$ and $\Phi^{(u, y)} \in \Reals^{p_u \times p_y}$ is the component of $\Phi$ corresponding to $\rvbu$ and $\rvby$ for all $\rvbu \in \{\rvbz, \rvbv, \rvba, \rvby\}$.\footnote{The exponential family in \cref{eq_conditional_distribution_y} is same as the one considered in \citet[Equation 1.3]{taeb2020learning}.} We make two key observations:
    (a) the term $\Phi^{(z,y)\top}\!\svbz$ captures the effect of unobserved covariates $\svbz$ on $f_{\rvby | \rvba, \rvbz, \rvbv}(\rvby\!=\!\cdot | \rvba\!=\! \cdot, \svbz, \svbv)$ and (b) the task of learning $f_{\rvby | \rvba, \rvbz, \rvbv}(\rvby=\cdot | \rvba= \cdot, \svbz, \svbv)$ in \cref{eq_conditional_distribution_y} as a function of $\rvba$ reduces to learning
\begin{align}
    \stext{(i)} \phi^{(y)} + 2\Phi^{(z,y)\top} \svbz + 2\Phi^{(v,y)\top} \svbv, \qtext{} \stext{(ii)} \Phi^{(a,y)}, \qtext{and} \stext{(iii)} \Phi^{(y,y)}. \label{eq_actual_parameters_of_interest}
\end{align}
That is, learning the unit-level conditional distribution for unit $i$ is equivalent to learning
\begin{align}
    \Truephi = \bigbraces{\phi^{(y)} + 2\Phi^{(z,y)\top} \svbz^{(i)} + 2\Phi^{(v,y)\top} \svbv^{(i)}, \Phi^{(a,y)}, \Phi^{(y,y)}},
    \label{eq:gammai}
\end{align}
where the notation $\Truephi$ is the same as in \cref{section_introduction}. {We note that, given $\rvba = \svba$, $\rvbz = \svbz$, and $\rvbv = \svbv$, $\rvby = \svba + \svbz + \svbv + \eta$ is one plausible data generating process (DGP) consistent with \cref{eq_conditional_distribution_y} when the noise variable $\boldsymbol{\eta}$ has an exponential family distribution. More specifically, this DGP, with $\boldsymbol{\eta}$ such that $f(\boldsymbol{\eta}) \propto  \exp\bigparenth{\phi^{(y)\top} \boldsymbol{\eta} + \boldsymbol{\eta}\tp \Phi^{(y,y)} \boldsymbol{\eta}}$, results in the conditional distribution in \cref{eq_conditional_distribution_y} with $\Phi^{(z,y)} = \Phi^{(v,y)} = \Phi^{(a,y)} = \Phi^{(y,y)}$.}

Next, we argue that learning the three quantities in \cref{eq_actual_parameters_of_interest} is subsumed in learning the parameters of the (unit-level) conditional distribution $f_{\rvbx |\rvbz}$ of the random vector $\rvbx \defeq \normalparenth{\rvbv, \rvba, \rvby}$ conditioned on $\rvbz = \svbz$. Note that $f_{\rvbx |\rvbz}$ belongs to an exponential family with natural statistics $\rvbx$ and $\rvbx \rvbx\tp$. For all $\rvbu \in \normalbraces{\rvbv, \rvba, \rvby}$, let $\phi^{(u)} \in \Reals^{p_u \times 1}$ be the component of $\phi$ corresponding to $\rvbu$, and $\Phi^{(z, u)} \in \Reals^{p_z \times p_u}$ be the component of $\Phi$ corresponding to $\rvbz$ and $\rvbu$. Then $f_{\rvbx |\rvbz}$ can be parameterized as follows:
\begin{align}
    \JointDist \!\propto\! \exp \Bigparenth{\! \normalbrackets{\ExternalField(\svbz)}\tp \! \svbx \!+\! \svbx\!\tp \! \ParameterMatrix \svbx\!} \label{eq_conditional_distribution_vay}, \!\! \stext{where}
    \ExternalField(\svbz)  \defn \!\begin{bmatrix}  \phi^{(v)} \!+\! 2 \Phi^{(z,v)\top} \svbz \\  \phi^{(a)} \!+\! 2 \Phi^{(z,a)\top} \svbz \\  \phi^{(y)} \!+\! 2 \Phi^{(z,y)\top} \svbz \end{bmatrix}  \! \! \in \! \Reals^{p\times 1},\, 
\end{align} 
$\svbx \defn (\svbv, \svba, \svby)$, $p \defn p_v + p_a + p_y$ and $\Theta \in \Reals^{p \times p}$ denotes the component of $\Phi$ corresponding to $\rvbx$. 
Given some estimates for $\ExternalField(\svbz)$ and $\ParameterMatrix$, using their appropriate components also yields an estimate of the three quantities in \cref{eq_actual_parameters_of_interest} for any $\rvbv = \svbv$. 
To summarize, the spurious associations or unobserved confounding between $\svba$ and $\svby$ introduced due to unobserved $\rvbz$ are fully captured by $\Phi^{(z,y)\top} \svbz$ or equivalently by $\ExternalField(\svbz)$; thereby, learning unit-level counterfactual distributions require us to learn these unit-level parameters. 

\subsubsection{Reduced inference task and modeling constraints}
\label{subsec_inference_tasks_of_interest}
Let $f_{\ranvarvec}(\cdot; \phi^*, \Phi^*)$ denote the true data generating distribution of $\rvbw$ in \cref{eq_joint_distribution_zvay}, and let $\TrueJointDistfun$ denote the true distribution of $\rvbx$ conditioned on $\rvbz = \svbz$ in \cref{eq_conditional_distribution_vay}.
Then, for all $i \in [n]$, we note that the realization $\svbx^{(i)} \defn (\svbv^{(i)}, \svba^{(i)}, \svby^{(i)})$ is consistent with the conditional distribution $\TrueJointDistfun[(i)]$ where we do not observe $\svbz^{(i)}$. Our primary goal is to learn the $n$ unit-level counterfactual distributions, which as noted above 
simplifies to estimating the following parameters:
\begin{align}
    \hspace{-1cm}\stext{(i) Unit-level} \TrueExternalFieldI[i] \defn \TrueExternalField(\svbz^{(i)}) \stext{for} i \in [n], \qtext{and (ii) Population-level} \TrueParameterMatrix. \label{eq_parameters_of_interest}
\end{align}
Our secondary goal is to estimate the expected potential outcomes for any given unit $i$ (with $\rvbz = \svbz^{(i)}, \rvbv = \svbv^{(i)}$) 
and an alternate intervention $\wtil{\svba}^{(i)}$:
\begin{align}
    \mu^{(i)}(\wtil{\svba}^{(i)}) \defn  \Expectation[\svby^{(i)}(\wtil{\svba}^{(i)}) | \rvbz = \svbz^{(i)}, \rvbv = \svbv^{(i)}],\label{eq_causal_estimand}
\end{align}
where $\svby^{(i)}(\wtil{\svba}^{(i)})$ denotes the potential outcomes for unit $i \in [n]$ under interventions $\wtil{\svba}^{(i)} \in \cA^{p_a}$.

For ease of exposition, we consider bounded continuous sets $\cV$, $\cA$, and $\cY$ 
with $\cV = \cA = \cY \defn \cX  = \normalbrackets{-\xmax, \xmax}$ for a given $\xmax$. In \cref{subsec_discrete}, we consider compact discrete and mixed sets. Throughout this paper, it is convenient to further constrain the model as follows:
\begin{assumption}[Bounded and sparse parameters]
\label{assumptions}
The true model parameters~\cref{eq_parameters_of_interest} satisfy
\begin{align}
    \TrueExternalFieldI[i] &\in \ParameterSet_{\ExternalField} \defn  
    \braces{ \ExternalField  \in \Reals^{p \times 1}: \infnorm{\ExternalField} \leq \aGM} 
    \stext{for all $i \in [n]$,} \label{eq_parameter_set_external_field}
    \intertext{and}
    \TrueParameterMatrix &\in \ParameterSet_{\ParameterMatrix} \defn  \braces{ \ParameterMatrix \in \Reals^{p \times p}: \ParameterMatrix = \ParameterMatrix\tp, ~ \maxmatnorm{\ParameterMatrix} \leq \aGM, ~ {\infmatnorm{\ParameterMatrix}} \leq \bGM}. \label{eq_parameter_set}
\end{align}
\end{assumption}
While \cref{eq_parameter_set_external_field} bounds the unit-level parameters (a necessary condition for model identifiability \citep{SanthanamW2012}), \cref{eq_parameter_set} {bounds the $\ell_1$ norm of the interaction of each $\rvx_t \in \rvbx$ with every $\rvx_u \in \rvbx$ in \cref{eq_conditional_distribution_vay}}. As a result, \cref{assumptions} implies that the exponential family in \cref{eq_conditional_distribution_vay} corresponds to MRFs (see \cref{sec_related_work}), also known as undirected graphical models (defined in \cref{sec_conditioning_trick}). We note that \cref{assumptions} is standard in the literature on learning MRFs \citep{bresler2015efficiently,vuffray2016interaction,klivans2017learning,VuffrayML2022,ShahSW2021A}.  
We are now ready to state our algorithm.

\subsection{An efficient algorithm via a convex objective}
\label{sec_algo}

We first describe our strategy to estimate the parameters in \cref{eq_parameters_of_interest}. Then, we use the estimated parameters to estimate the expected potential outcomes in \cref{eq_causal_estimand}.  We remark that for exponential families considered here, maximum likelihood for parameter estimation is not computationally tractable \citep{wainwright2008graphical,ShahSW2021B}. {As a result, we resort to an alternative objective function inspired by the convex loss functions used in \cite{vuffray2016interaction,VuffrayML2022,ShahSW2021A} as they do not depend on the partition function of the distribution. These loss functions are designed in a specific way (see below for details): (i) the sufficient statistics of the conditional distribution of a variable given all other variables are \textit{centered} by adding appropriate constants, (ii) the loss function is an empirical average of the sum of the inverses of all of these conditional distributions (without the partition function) with \textit{centered} sufficient statistics.}

\subsubsection{Parameter estimation}
\label{subsec_loss_function}

Our convex objective function jointly learns all the parameters of interest by pooling the observations across all $n$ units and exploiting the exponential family structure of $\rvbv$, $\rvba$, and $\rvby$ conditioned on $\rvbz = \svbz$ in \cref{eq_conditional_distribution_vay}, i.e., the objective explicitly utilizes the fact that the population-level parameter $\TrueParameterMatrix$ is shared across units.
In particular, we use the following two steps.

\paragraph{Centering sufficient statistics of the conditional distribution of a variable}
Consider the conditional distribution $f_{\rvx_t | \rvbx_{-t}, \rvbz}$ of the random variable $\rvx_t$ conditioned on $\rvbx_{-t} = \svbx_{-t}$ and $\rvbz = \svbz$ for any $t \in [p]$:
{\begin{align}
    f_{\rvx_t | \rvbx_{-t}, \rvbz}\bigparenth{x_t| \svbx_{-t}, \svbz; \ExternalFieldt(\svbz), \ParameterRowt} \propto \exp\biggparenth{ \bigbrackets{\ExternalFieldt(\svbz) + 2\ParameterRowttt\tp \svbx_{-t}} x_t + \ParameterTU[tt] x_t^2},
    \label{eq_conditional_dist_non_centered}
\end{align}}
where $\ExternalFieldt(\svbz)$ is the $t^{th}$ element of $\ExternalField(\svbz)$, $\ParameterRowt$ is the $t^{th}$ row of $\ParameterMatrix$, $\ParameterTU[tt]$ is the $t^{th}$ element of $\ParameterRowt$, and $\ParameterRowttt \defn \ParameterRowt \setminus \ParameterTU[tt] \in \Reals^{p-1}$ is the vector obtained after deleting $\ParameterTU[tt]$ from $\ParameterRowt$.
{Then, the sufficient statistics in \cref{eq_conditional_dist_non_centered}, namely $\rvx_t$ and $\rvx_t^2$, are centered by subtracting their expected value with respect to the uniform distribution on $\cX$ resulting in}
\begin{align}
    f_{\rvx_t | \rvbx_{-t}, \rvbz}\bigparenth{x_t| \svbx_{-t}, \svbz; \ExternalFieldt(\svbz), \ParameterRowt} \propto \exp\biggparenth{ \bigbrackets{\ExternalFieldt(\svbz) + 2\ParameterRowttt\tp \svbx_{-t}} x_t + \ParameterTU[tt] \Bigparenth{x_t^2 - \frac{\xmax^2}{3}}},
    \label{eq_conditional_dist}
\end{align}
{as the integral of $\rvx_t$ and $\rvx_t^2$ with respect to the uniform distribution on $\cX$ is $0$ and $\xmax^2/3$, respectively. As we see later (in \cref{prop_proper_loss_function}), this centering ensures that our loss function is a proper loss function as well as leads to connections with the surrogate likelihood~\citep[Proposition. 4.1]{ShahSW2021A}. We emphasize that the term $\xmax^2/3$ inside the exponent in \cref{eq_conditional_dist} is vacuous (as it is a constant) and the distribution in \cref{eq_conditional_dist} is equivalent to the one in \cref{eq_conditional_dist_non_centered}.} 

\paragraph{Constructing the loss function}
{Next, the loss function (defined below) is desgined to be an empirical average of the sum over $t \in [p]$ of the inverse of the term in the right hand side of \cref{eq_conditional_dist}.}
\begin{definition}[\tbf{Loss function}]\label{def-loss-function}
Given the samples $\sbraces{\svbx^{(i)}}_{i \in [n]}$, the loss $\loss:\Reals^{p \times (n+p)} \to \Reals$ is given by
\begin{align}
    \loss\bigparenth{\ExtendedParameterMatrix} \!=\! \frac{1}{n}\sump[t] \sumn[i] \! \exp\biggparenth{\!-\!\bigbrackets{\ExternalFieldtI \!+\! 2\ParameterRowttt\tp \svbx_{-t}^{(i)}} x_t^{(i)}\!-\!\ParameterTU[tt] \Bigparenth{[x_t^{(i)}]^2 - \frac{\xmax^2}{3}}}
    \!\!\qtext{where}
    \ExtendedParameterMatrix \!\defeq\!\! \begin{bmatrix} \ExtendedParameterRowT[1]\tp \\ \vdots \\ \ExtendedParameterRowT[p]\tp \end{bmatrix},
    \label{eq:loss_function}
    \end{align}
    and $\ExtendedParameterRowT[t] \!\defn\! \bigbraces{\ExternalFieldtI[1], \cdots, \ExternalFieldtI[n], \ParameterRowt}$ for $t\in[p]$.
\end{definition}
\noindent Our estimate of $\ExtendedTrueParameterMatrix$ (defined analogous to $\ExtendedParameterMatrix$) is given by
\begin{align}
    \ExtendedEstimatedParameterMatrix \in \argmin_{\ExtendedParameterMatrix \in \ParameterSet_{\ExternalField}^n \times \ParameterSet_{\ParameterMatrix}} \loss\bigparenth{\ExtendedParameterMatrix}.
    \label{eq_estimated_parameters}
\end{align}
We note \cref{eq_estimated_parameters} is a convex optimization problem, and a projected gradient descent algorithm (see \cref{subsec_alg}) returns an $\epsilon$-optimal estimate with $\tau = O(p/\epsilon)$ iterations\footnote{This follows from \cite[Theorem. 3.7]{bubeck2015convex} by noting that $\loss(\ExtendedParameterMatrix)$ is $O(p)$ smooth function of $\ExtendedParameterMatrix$.} where $\ExtendedEstimatedParameterMatrix_{\epsilon}$ is said to be an $\epsilon$-optimal estimate if $\loss\bigparenth{\ExtendedEstimatedParameterMatrix_{\epsilon}} \leq \loss\bigparenth{\ExtendedEstimatedParameterMatrix} + \epsilon$ for any $\epsilon > 0$. The loss function $\loss$ 
admits a notable property (see \cref{sec_proof_proper_loss_function} for the proof).
\newcommand{\properlossfunction}{Proper loss function}
\begin{proposition}[\tbf{\properlossfunction}]\label{prop_proper_loss_function} The loss function $\loss$ is strictly proper, i.e., $\ExtendedTrueParameterMatrix = \argmin_{\ExtendedParameterMatrix \in \ParameterSet_{\ExternalField}^n \times \ParameterSet_{\ParameterMatrix}} \Expectation_{\rvbx|\rvbz}\bigbrackets{\loss\bigparenth{\ExtendedParameterMatrix}}$.
\end{proposition}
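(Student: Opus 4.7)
The plan is to reduce the propriety claim to a term-by-term analysis of \cref{eq:loss_function}: each summand depends only on $(\rvx_t^{(i)}, \rvbx_{-t}^{(i)}, \rvbz^{(i)})$ for some $(t, i)$, so by the tower property I would first take expectation of $\rvx_t^{(i)}$ conditional on $(\rvbx_{-t}^{(i)}, \rvbz^{(i)})$, using the explicit exponential-family form of the conditional density from \cref{eq_conditional_dist}. Denoting the loss exponent by $\psi_t^{(i)}(x_t; \svbx_{-t}) \defeq [\ExternalFieldtI + 2\ParameterRowttt\tp \svbx_{-t}] x_t + \ParameterTU[tt](x_t^2 - \xmax^2/3)$ and its counterpart at the true parameters by $\psi_t^{(i),\star}$, the two exponentials combine to give
\baligns
\Expectation_{\rvx_t^{(i)} | \rvbx_{-t}^{(i)}, \rvbz^{(i)}}\bigl[\exp(-\psi_t^{(i)}(\rvx_t^{(i)}))\bigr] = \frac{1}{Z_t^{(i),\star}} \int_\cX \exp\bigl(u_t^{(i)}(x_t)\bigr)\, dx_t,
\ealigns
where $u_t^{(i)}(x_t) \defeq \psi_t^{(i),\star}(x_t) - \psi_t^{(i)}(x_t)$ is an affine combination of the centered statistics $x_t$ and $x_t^2 - \xmax^2/3$, and $Z_t^{(i),\star}$ is the partition function of the true conditional density.

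The second step is Jensen's inequality applied to the uniform measure on $\cX$, yielding
\baligns
\int_\cX \exp(u_t^{(i)}(x_t))\, dx_t \;\geq\; |\cX| \cdot \exp\Bigl(\tfrac{1}{|\cX|}\int_\cX u_t^{(i)}(x_t)\,dx_t\Bigr).
\ealigns
The centering in \cref{eq_conditional_dist} is designed precisely so that $\int_\cX x_t\,dx_t = 0$ and $\int_\cX (x_t^2 - \xmax^2/3)\,dx_t = 0$, hence the right-hand exponent vanishes and the bound simplifies to $|\cX|$. At $\ExtendedParameterMatrix = \ExtendedTrueParameterMatrix$ we have $u_t^{(i)} \equiv 0$ and the bound is attained. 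Summing over $(t,i)$ and taking outer expectation over $\rvbx_{-t}^{(i)}$ gives $\Expectation_{\rvbx|\rvbz}[\loss(\ExtendedParameterMatrix)] \geq \Expectation_{\rvbx|\rvbz}[\loss(\ExtendedTrueParameterMatrix)]$, establishing propriety.

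For the \emph{strict} qualifier, I would characterize equality. Jensen is tight only when $u_t^{(i)}(\cdot; \svbx_{-t}^{(i)})$ is constant, and combined with the vanishing mean this forces $u_t^{(i)} \equiv 0$ on $\cX$. Linear independence of $\{1, x_t, x_t^2\}$ on $\cX$ then imposes, for each $(t,i)$ and almost every $\svbx_{-t}^{(i)}$, the two conditions $\ParameterTU[tt] = \TrueParameterTU[tt]$ and $(\ExternalFieldtI - \TrueExternalFieldtI) + 2(\ParameterRowttt - \TrueParameterRowttt)\tp \svbx_{-t}^{(i)} = 0$. Since the joint density under \cref{eq_conditional_distribution_vay} is strictly positive on the box $\cX^p$, the conditional marginal of $\rvbx_{-t}^{(i)}$ given $\rvbz^{(i)}$ has full support on $\cX^{p-1}$; an affine function of $\svbx_{-t}$ vanishing on a full-dimensional set must be identically zero, yielding $\ParameterRowttt = \TrueParameterRowttt$ and $\ExternalFieldtI = \TrueExternalFieldtI$ for every $(t,i)$.

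The main subtle point is this last step: passing from almost-everywhere equality in $\svbx_{-t}^{(i)}$ to equality of the coefficients themselves. It is here that the full-dimensional support of the exponential family under \cref{assumptions} is essential, and it is what justifies the \textbf{strict} qualifier in the proposition. A secondary but equally important observation is that the centering of sufficient statistics in \cref{eq_conditional_dist} is not cosmetic---without it the linear term in the Jensen bound would not vanish, and the minimizer of the expected loss would drift away from $\ExtendedTrueParameterMatrix$.
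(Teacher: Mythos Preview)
Your argument is correct and complete, including the strictness part. It does, however, proceed differently from the paper's proof. You condition on $\rvbx_{-t}^{(i)}$ first, reduce to a one-dimensional integral in $x_t$, and apply Jensen's inequality against the uniform measure on $\cX$; the centering of the sufficient statistics makes the Jensen lower bound parameter-free, and strictness follows from the equality case of Jensen together with full support of $\rvbx_{-t}^{(i)}$. The paper instead introduces an auxiliary distribution $u_{\rvbx|\rvbz}(\svbx|\svbz;\ExternalFieldt,\ParameterRowt) \propto f^\star_{\rvbx|\rvbz}(\svbx|\svbz)/f_{\rvx_t|\rvbx_{-t},\rvbz}(x_t|\svbx_{-t},\svbz;\ExternalFieldt,\ParameterRowt)$, observes that at the true parameters this does not depend on $x_t$, and then uses the centering to derive the identity $\log \Expectation_{\rvbx|\rvbz}[\exp(-\psi_t^{(i)})] = \KLD{u^\star}{u} - \KLD{u^\star}{f^\star}$; the minimizer is read off from nonnegativity of the first KL term. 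Both routes hinge on the same centering fact, but yours is more elementary (no auxiliary distribution, just Jensen on a one-dimensional integral) and makes the strict-uniqueness step explicit, whereas the paper's KL formulation is more global and ties the loss directly to a divergence interpretation---which is what justifies the ``surrogate likelihood'' language referenced after \cref{eq_conditional_dist}.
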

\cref{prop_proper_loss_function} shows that the solution of the idealized convex program $\min_{\ExtendedParameterMatrix \in \ParameterSet_{\ExternalField}^n \times \ParameterSet_{\ParameterMatrix}} \Expectation_{\rvbx|\rvbz}\bigbrackets{\loss\bigparenth{\ExtendedParameterMatrix}}$ is unique and equal to $\ExtendedTrueParameterMatrix$. {In this idealized convex program, conditioned on the realized values of the unobserved covariates of the $n$ units $\svbz^{(1)}, \cdots, \svbz^{(n)}$, the loss function is averaged over all the randomness in the observed covariates, the interventions, and the outcomes. In other words, for every $i \in [n]$, the idealized convex program has infinite samples from $f_{\rvbx | \rvbz}$ with unobserved covariates $\rvbz$ conditioned to be $\svbz^{(i)}$.} Thus, the convex program in \cref{eq_estimated_parameters} can be seen as a {single} sample version of this idealized program, thereby providing an intuitive justification of our loss function (instead of a maximum likelihood objective, which is not  tractable here). As we show later in our proofs (see \cref{sec_proof_sketch} for an overview), different partial averages on the RHS of \cref{eq:loss_function} also admit useful properties and are critical to our analyses.

{We note that loss function  in \cref{eq:loss_function} is a generalization of the loss functions used in \cite{vuffray2016interaction,VuffrayML2022,ShahSW2021A}. In particular, if the unobserved confounding is identical across units, i.e., $\TrueExternalFieldI[1] = \cdots = \TrueExternalFieldI[n]$, then $\loss\bigparenth{\ExtendedParameterMatrix}$ in \cref{eq:loss_function} can be decomposed into $p$ independent loss functions, one for every $t \in [p]$. These decomposed loss functions are identical to the ones used in these prior works.}

\subsubsection{Causal estimate}
\label{subsec_causal_estimate}
Given the estimate $\ExtendedEstimatedParameterMatrix$, our estimate of the expected potential outcome $\mu^{(i)}(\wtil{\svba}^{(i)})$ under an alternate intervention $\wtil{\svba}^{(i)} \in \cA^{p_a}$~\cref{eq_causal_estimand} is derived as follows:
First, we identify $\EstimatedPhi^{(u, y)} \in \Reals^{p_u \times p_y}$ to be the component of $\EstimatedParameterMatrix$ corresponding to $\rvbu$ and $\rvby$ for all $\rvbu \in \{\rvbv, \rvba, \rvby\}$ and $\EstimatedExternalFieldI[i,y] \in \Reals^{p_y}$ to be the component of $\EstimatedExternalFieldI$ corresponding to $\rvby$. Next, we estimate the conditional distribution of $\rvby$ for unit $i$ as a function of the interventions $\rvba$, while keeping $\rvbv=\svbv^{(i)}$ and $\rvbz=\svbz^{(i)}$ fixed as 
\begin{align}
    \what{f}^{(i)}_{\rvby | \rvba}(\svby | \svba) \propto \exp\Bigparenth{\bigbrackets{\EstimatedExternalFieldI[i,y] + 2\svbv^{(i)\top}\EstimatedPhi^{(v,y)} + 2\svba\tp\EstimatedPhi^{(a,y)}} \svby + \svby\tp\EstimatedPhi^{(y,y)} \svby}. \label{eq_counterfactual_distribution_y}
\end{align}
Finally, we estimate $\mu^{(i)}(\wtil{\svba}^{(i)})$ as the mean under the above conditional distribution, given by
\begin{align}
    \what{\mu}^{(i)}(\wtil{\svba}^{(i)}) & \defn \Expectation_{\what{f}^{(i)}_{\rvby | \rvba}}[\rvby | \rvba = \wtil{\svba}^{(i)}],  
    \label{eq_causal_estimate}
\end{align}
which can be computed by standard algorithms for estimating marginals of graphical models, e.g., via the junction tree algorithm~\citep{wainwright2008graphical} or message-passing algorithms.\footnote{In general, estimating the marginals exactly is computationally hard for undirected graphical models. While the junction tree algorithm works well for graphical models with small treewidth~\citep[Section. 2.5]{wainwright2008graphical}, e.g., for trees or chains as in hidden Markov models or state-space models, message-passing algorithms are the default choice for computing approximate marginals for complex graphs, especially with cycles. However, message-passing algorithms may induce additional approximations, which we do not discuss here.} 

\section{Main results}
\label{sec_main_results}
In this section, we analyze our estimates. First, 
we provide our guarantee on estimating the unit-level and the population-level parameters in \cref{sub:parameter_result}. 
Next, we provide our guarantee on estimating the causal estimand of interest in \cref{subsec_guarantee_outcome_estimate}.  
Before stating our main results, we define a standard notion of complexity of the set $\ParameterSet_{\ExternalField}$, namely metric entropy (defined below) that our guarantees rely on.

\begin{definition}[$\varepsilon$-covering number and metric entropy]\label{def_covering_number_metric_entropy}
Given a set $\cV \subset \Reals^p$ and a scalar $\varepsilon > 0$, we use $\cC(\cV, \varepsilon)$ to denote the $\varepsilon$-covering number of $\cV$ with respect to $\sonenorm{\cdot}$, i.e., $\cC(\cV, \varepsilon)$ denotes the minimum cardinality over all possible subsets $\cU \subset \cV$ that satisfy $\cV \subset \cup_{u \in \cU} \ball(u; \varepsilon)$,
where $\ball(u; \varepsilon) \defn \braces{v \in \Reals^p: \sonenorm{u-v} \leq \varepsilon}$.
We let $\metric_{\ExternalField}(\varepsilon) \defn \log \cC(\ParameterSet_{\ExternalField},\varepsilon)$ denote the metric entropy of $\ParameterSet_{\ExternalField}$, and $\metric_{\ExternalField,n}(\varepsilon) \defn n \metric_{\ExternalField}(n \varepsilon)$ denote a scaled version of it.
\end{definition}

Next, we state two settings with upper bounds on the metric entropy, and we use them as running examples to unpack our general results throughout this paper.
\begin{example}[{Linear combination}]
    \label{exam:lc_dense}
    Consider a set $\ParameterSet_{\ExternalField}$ containing vectors with bounded entries that are also a linear combination of $k$ known vectors in $\real^p$ collected as $\mbf B \in \real^{p\times k}$, i.e., $\ParameterSet_{\ExternalField}=\sbraces{\mbf B \mbf a: \mbf a \in \real^k, \sinfnorm{\mbf B \mbf a} \leq \alpha }$. Then, \citet[Lemma. 11]{DaganDDA2021} implies that $\metric_{\ExternalField}(\radius) = O\bigparenth{k\log \bigparenth{1+\frac{\alpha}{\radius}}}$. Further, $\metric_{\ExternalField,n}(\radius) = O\bigparenth{\frac{\alpha k}{\radius}}$. 
\end{example}

\begin{example}[{Sparse linear combination}]
    \label{exam:sc}
    Consider a set $\ParameterSet_{\ExternalField}$ containing vectors with bounded entries that are also a $s$-\emph{sparse} linear combination of $k$ known vectors in $\real^p$ collected as $\mbf B \in \real^{p\times k}$, i.e., $\ParameterSet_{\ExternalField}=\sbraces{\mbf B \mbf a: \mbf a \in \real^k, \norm{a}_0 \leq s, \sinfnorm{\mbf B \mbf a} \leq \alpha }$. Then \citet[Corollary. 4]{DaganDDA2021} implies that $\metric_{\ExternalField}(\radius) = O\bigparenth{s\log k \log \bigparenth{1+\frac{\alpha}{\radius}}}$. Further, $\metric_{\ExternalField,n}(\radius) = O\bigparenth{\frac{\alpha s \log k}{\radius}}$. 
\end{example}

\newcommand{\edgeparammainresultname}{Recovering population-level parameter}
\newcommand{\parammainresultname}{Guarantee on quality of parameter estimate}
\newcommand{\nodeparammainresultname}{Recovering unit-level parameters}
\subsection{\parammainresultname}
\label{sub:parameter_result}
Our non-asymptotic guarantees use an assumption of a lower bound on the smallest eigenvalue of a suitable set of autocorrelation matrices.

\begin{assumption}\label{ass_pos_eigenvalue}
For any $\svbz \in \cZ^{p_z}$ and $t \in [p]$, let $\lambda_{\min}(\svbz, t)$ denote the smallest eigenvalue of the matrix
$\Expectation_{\rvbx | \rvbz} \bigbrackets{  \trvbx ~ \trvbx\tp | \rvbz = \svbz }$ where $\trvbx \defeq \bigparenth{ \rvx_t, 2\rvbx_{-t} \rvx_t, \rvx_t^2 -\xmax^2/3} \in \real^{p+1}$. We assume $\lambda_{\min} \defn \min_{\svbz \in \cZ^{p_z}, t\in [p]} \lambda_{\min}(\svbz, t)$ is strictly positive.
\end{assumption}
\noindent We note that all eigenvalues of any autocorrelation matrix are non-negative implying $\lambda_{\min}(\svbz, t) \geq 0$ for all $\svbz \in \cZ^{p_z}, t\in [p]$.  \cref{ass_pos_eigenvalue} requires $\lambda_{\min}(\svbz, t) > 0$ for all $\svbz \in \cZ^{p_z}, t\in [p]$ and serves as a sufficient condition to rule out certain singular distributions~\citep[Section. 5]{ShahSW2021B}.\footnote{Essentially, we use this assumption to lower bound the variance of a non-constant random variable (\cref{proof_of_lemma_parameter}).} 
In \cref{subsec_discussion_ass_pos_eigenvalue}, we show that $\lambda_{\min} = \Omega(e^{-c\bGM})$ when $\TrueParameterTU[tt] = 0$ for all $t \in [p]$ as in Ising model where $\rvx_t^2 = 1 $ for all $ t \in [p]$.

We are now ready to state our main result that characterizes a high probability bound on the estimation error for the estimate $\ExtendedEstimatedParameterMatrix$ computed via \cref{eq_estimated_parameters}. To simplify the presentation, we use $c$ and $c'$ to denote universal constants or constants that depend on the parameters $\aGM,\xmax,$ and $\lambda_{\min}$ and can take a different value in each appearance. 
\begin{theorem}[{\parammainresultname}]
\label{theorem_parameters}
Suppose \cref{assumptions,ass_pos_eigenvalue} hold. Fix an $\varepsilon > 0$ and $\delta \in (0,1)$, and define
\begin{align}
R(\varepsilon, \delta) & \!\defn\! \max \sbraces{ ce^{c'\bGM}\!\! \sqrt{\log(\log p/\delta) \!+\! \metric_{\ExternalField}( ce^{-c'\bGM}) }, \varepsilon \ratio} \stext{with} \ratio \!\defn\! \max_{\ExternalField, \bExternalField \in \ParameterSet_{\ExternalField}} \!\!\frac{\sonenorm{\ExternalField \!-\! \bExternalField}}{\stwonorm{\ExternalField \!-\! \bExternalField}} \label{eq_radius_node_thm}
\intertext{and}
\tmetric_{\ExternalField,n}(\varepsilon, \delta)  & \!\defn\! \metric_{\ExternalField,n}\Bigparenth{\frac{\varepsilon^2}{p}} \!+\! p \metric_{\ExternalField}\bigparenth{R^2\normalparenth{\varepsilon, \delta}}. \label{eq_tmetric_node_thm}
\end{align}
 Then, with probability at least $1-\delta$, the 
estimates $\EstimatedParameterMatrix,  \EstimatedExternalFieldI[1], \cdots, \EstimatedExternalFieldI[n]$ defined in \cref{eq_estimated_parameters}
satisfy
\begin{align}
    \matnorm{\EstimatedParameterMatrix \!-\! \TrueParameterMatrix}_{2,\infty} &\leq \varepsilon
    \qquad \quad \ \, \qtext{when}
    n \geq \frac{ce^{c'\bGM} p^2 \Bigparenth{p\log \frac{p}{\delta \varepsilon^2} + \metric_{\ExternalField,n}(\varepsilon^2)}}{\varepsilon^4} \label{eq:matrix_guarantee}
    \intertext{and}
    \max_{i\in[n]}\stwonorm{\EstimatedExternalFieldI - \TrueExternalFieldI} &  \leq  R\Bigparenth{ \varepsilon, \frac{\delta}{n}} 
    \qtext{when} n  \geq  \frac{ce^{c'\bGM} p^4 \Bigparenth{p \log \frac{np^2}{\delta \varepsilon^2} + \tmetric_{\ExternalField,n}\bigparenth{ \varepsilon, \frac{\delta}{n}}}}{\varepsilon^4}.   
    \label{eq:theta_guarantee}
\end{align}
\end{theorem}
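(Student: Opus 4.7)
The plan is to follow the standard M-estimator recipe (basic inequality via optimality $+$ restricted strong convexity $+$ concentration of the score), exploiting the fact that $\loss$ in \cref{eq:loss_function} is row-decomposable: writing $\loss(\ExtendedParameterMatrix) = \sum_{t \in [p]} \loss_t(\ExtendedParameterRowT[t])$ where $\ExtendedParameterRowT[t] = \sbraces{\theta_t^{(1)},\ldots,\theta_t^{(n)},\Theta_{t,\cdot}}$, the joint program decouples into $p$ row-wise problems (up to the symmetry constraint on $\Theta$, which can be either ignored initially and the estimate symmetrized, or enforced while still bounding each row separately). Fix row $t$, set $\Delta := \ExtendedEstimatedParameterRowt[t] - \ExtendedTrueParameterRowT[t]$, and combine convexity of $\loss_t$ with optimality of $\ExtendedEstimatedParameterMatrix$ to obtain
\begin{align*}
\tfrac{1}{2}\, \Delta^\top \nabla^2 \loss_t(\xi)\, \Delta \;\leq\; -\langle \nabla \loss_t(\ExtendedTrueParameterRowT[t]),\, \Delta \rangle
\end{align*}
for some $\xi$ on the segment between $\ExtendedTrueParameterRowT[t]$ and $\ExtendedEstimatedParameterRowt[t]$.

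Next, I would lower-bound the Hessian. Direct differentiation of the exponential in \cref{eq:loss_function} shows $\nabla^2 \loss_t$ is at least $e^{-c\bGM}$ (from boundedness of the exponent over the compact parameter set) times a weighted empirical second-moment matrix of the centered statistic $\trvbx$. Taking conditional expectation given $\rvbz=\svbz^{(i)}$ and invoking \cref{ass_pos_eigenvalue} gives an expected lower bound of $\lambda_{\min}$; to upgrade this to hold uniformly over $\xi$ with high probability, I would discretize $\ParameterSet_{\ExternalField}^n \times \ParameterSet_{\ParameterMatrix}$ via an $\varepsilon$-cover (paying $\metric_{\ExternalField}$), and for each cover point apply \cref{thm_main_concentration} (LSI-based concentration for the single vector $\svbx^{(i)}$) followed by a union bound. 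The output is a restricted strong convexity statement $\Delta^\top \nabla^2 \loss_t(\xi)\Delta \gtrsim \lambda_{\min}\, e^{-c\bGM}\,\|\Delta\|_\star^2$ for a suitable data-dependent norm $\|\cdot\|_\star$, with residual approximation terms absorbed into the entropy.

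For the score, $\nabla \loss_t(\ExtendedTrueParameterRowT[t])$ has zero expectation by strict properness (\cref{prop_proper_loss_function}). Its $\Theta$-coordinates are averages of $n$ independent (but non-identically distributed) bounded summands and concentrate at rate $1/\sqrt{n}$; each $\theta_t^{(i)}$-coordinate, by contrast, is a single non-linear function of $\svbx^{(i)}$ that must be controlled via LSI concentration at rate $O(1)$ per unit. After Cauchy--Schwarz, bounding $\langle \nabla \loss_t, \Delta\rangle$ uniformly in $\Delta$ requires covering the lifted set $\ParameterSet_{\ExternalField}^n \times \ParameterSet_{\ParameterMatrix}$; the natural cover is in $\ell_1$ (since $\loss_t$ is linear in each $\theta_t^{(i)}$), whence the ratio $\gamma = \max_{\theta,\bar\theta}\|\theta-\bar\theta\|_1/\|\theta-\bar\theta\|_2$ appears when passing to the $\ell_2$-norms in the final statement, and $\metric_{\ExternalField,n}$ captures the $n$-fold product complexity. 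Combining lower and upper bounds and inverting yields first the matrix bound \cref{eq:matrix_guarantee} (which benefits from the $1/\sqrt n$ averaging); then, plugging $\EstimatedParameterMatrix - \TrueParameterMatrix$ into the unit-level basic inequality as a nuisance bias and re-running the argument per unit with a union bound over $i\in[n]$ (hence the $\delta/n$ inside $R$), gives the unit-level bound \cref{eq:theta_guarantee} with the composite entropy $\tmetric_{\ExternalField,n}$ naturally splitting into a $\ParameterMatrix$-error term and an $\ExternalField$-cover term.

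The hardest step is the concentration of the $\theta_t^{(i)}$-coordinates of the score. Unlike the $\Theta$-coordinate, which is a sum of $n$ independent bounded contributions, the unit-level coordinate is a single non-linear function of $\svbx^{(i)}$ whose tails are not obvious a priori; the LSI machinery from \cref{thm_LSI_main,thm_main_concentration} is precisely what makes this tractable. Even granting this, one must choose the covering radius at the $n$-fold scale carefully so that the resulting metric entropy matches $\metric_{\ExternalField,n}(\varepsilon) = n\metric_{\ExternalField}(n\varepsilon)$ rather than the naive $n\cdot\metric_{\ExternalField}(\varepsilon)$, and one must fold the matrix error $\varepsilon$ from \cref{eq:matrix_guarantee} into the unit-level bound as a bias term, which is where the $\gamma\varepsilon$ piece inside $R(\varepsilon,\delta/n)$ originates. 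Getting the exact scaling of the entropy--deviation tradeoff through this two-step pipeline is the delicate part of the analysis.
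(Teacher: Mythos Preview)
Your high-level structure is essentially what the paper does: row-decompose $\loss=\sum_t\loss_t$, control the directional first derivative (zero-mean by \cref{prop_proper_loss_function}) and anti-concentrate the directional Hessian (via \cref{ass_pos_eigenvalue}), then combine via a Taylor/contrapositive argument and a covering-plus-Lipschitz step. The two-stage pipeline (matrix first via the $n$-sample average, then per-unit with $\EstimatedParameterMatrix-\TrueParameterMatrix$ entering as a bias and a union bound over $i$) is exactly right, and your explanation of where the $\varepsilon\gamma$ term in $R(\varepsilon,\delta)$ comes from is accurate.

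There is, however, one genuine gap. You propose to ``apply \cref{thm_main_concentration} (LSI-based concentration for the single vector $\svbx^{(i)}$)'' directly, but \cref{thm_LSI_main} only yields LSI under Dobrushin's uniqueness condition, and the model in \cref{eq_conditional_distribution_vay} with $\infmatnorm{\TrueParameterMatrix}\le\bGM$ does \emph{not} satisfy Dobrushin when $\bGM$ is not small. The paper bridges this via a conditioning trick (its \cref{lemma_conditioning_trick}): one constructs $L=O(\bGM^2\log p)$ subsets $S_1,\ldots,S_L\subseteq[p]$ so that $\rvbx_{S_u}\mid(\rvbx_{-S_u},\rvbz)$ \emph{does} satisfy Dobrushin with small coupling constant, writes the per-unit score (and the quadratic form for the Hessian) as an average of $L$ pieces $\psi_u$, conditions on the complement $\svbx_{-S_u}^{(i)}$, applies LSI-based concentration \emph{within} each $S_u$, and union-bounds over $u$. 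Without this decomposition the single-sample concentration step for the unit-level score/Hessian does not go through, and this is precisely the mechanism that produces the $\bGM^2\log p$ multiplicative factor in the failure probability (and hence the $\log\log p$ inside $R(\varepsilon,\delta)$).

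A minor remark: for Part~I you do not need LSI at all. Because $\loss_t$ is an average over $n$ independent (though non-identically distributed) bounded summands, both the directional first derivative and the empirical second-moment form concentrate at rate $1/\sqrt{n}$ by Hoeffding; the paper's proof of \cref{eq:matrix_guarantee} uses only this. The LSI machinery is invoked solely in Part~II, where each $\loss^{(i)}$ depends on a single $\svbx^{(i)}$.
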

\newcommand{\metricentropyone}{Linear combination of known vectors}

\noindent We split the proof into two parts: First, we establish the bound \cref{eq:matrix_guarantee} in \cref{sec:proof_of_theorem_parameters}, which we then use to establish the bound \cref{eq:theta_guarantee} in \cref{sec_proof_thm_node_parameters_recovery}. 
Our guarantee in \cref{eq:matrix_guarantee} provides a non-asymptotic error bound of order $\frac{p^2 (p\log p + \metric_{\ExternalField,n}(n^{-1/2}))}{n^{1/4}}$  (where we treat $\bGM$ as a constant) for estimating $\TrueParameterMatrix$ although the $n$ samples have different unit-level parameters $\sbraces{\TrueExternalFieldI}_{i=1}^{n}$. On the other hand, after squaring both sides and dividing by $p$, the guarantee~\cref{eq:theta_guarantee} for the unit-level parameters can be simplified as follows:\footnote{We replace $\delta/n$ in \cref{eq:theta_guarantee} by $\delta$ as we do not require a union bound over $i \in [n]$ for unit-wise guarantees.} whenever $n \geq c'\varepsilon^{-4} p^4\normalparenth{p\log \frac{p^2}{\delta \varepsilon^2}  + \metric_{\ExternalField,n}(\varepsilon^2/p) + p\metric_{\ExternalField}(c)}$, we have
\begin{align}
\label{eq:simplified_bound_theta}
    \mathrm{MSE}(\EstimatedExternalFieldI, \TrueExternalFieldI)
    \! \leq \! \max\Bigbraces{\varepsilon^2, \dfrac{\metric_{\ExternalField}(c) \!+\! \log (\log \frac{p}{\delta})}{p}}, 
\end{align}
where we use $\ratio \leq \sqrt p$ in \cref{eq_radius_node_thm} and treat $\bGM$ as a constant. For large $n$ so that $\varepsilon$ is small, this error scales linearly with the metric entropy $\metric_{\ExternalField}$---the error becomes worse as the unit-level parameter set $\ParameterSet_{\ExternalField}$ becomes more complex. 

 The next corollary (stated without proof)  provides a formal version of the population-level guarantee in \cref{eq:matrix_guarantee} and the unit-level guarantee in \cref{eq:simplified_bound_theta} for the two examples discussed earlier. We treat $\bGM$ as a constant and note that the dependence is exponential as in \cref{theorem_parameters}. 

 \begin{corollary}[{Consequences for examples}]\label{cor_params}
Suppose \cref{assumptions,ass_pos_eigenvalue} hold. Then, for any fixed $\varepsilon > 0$ and $\delta \in (0,1)$, the following results hold  with probability at least $1-\delta$.
\begin{enumerate}[leftmargin=*,label=(\alph*)]
\item\label{item:lc_dense} \emph{{Linear combination:}}\
If $\ParameterSet_{\ExternalField}$ is as in \cref{exam:lc_dense}, then for all $i \in [n]$,
\begin{align}
\matnorm{\EstimatedParameterMatrix \!-\! \TrueParameterMatrix}_{2,\infty} & \!\leq\! \varepsilon
    \qquad \quad \hspace{0.85cm} \qquad \quad \qquad \quad \qtext{for}
    n \!\geq \! \frac{c p^2 \bigparenth{p \log \frac{p}{\delta \varepsilon^2} \!+\! \frac{k}{\varepsilon^2}}}{\varepsilon^4}\\
    \hspace{-1cm} \mathrm{MSE}(\EstimatedExternalFieldI\!, \TrueExternalFieldI)
    & \!\leq\! \max\Bigbraces{\varepsilon^2, \dfrac{c \bigparenth{ k \!+\! \log (\log \frac{p}{\delta}) }}{p} \!}  \qtext{for}
    n \!\geq\! \frac{c p^5 \bigparenth{\log \frac{p^2}{\delta \varepsilon^2} \!+\! k \!+\! \frac{k}{\varepsilon^2}}}{\varepsilon^4}.
\end{align}
\item\label{item:sc}  \emph{{Sparse linear combination:}}\ 
If $\ParameterSet_{\ExternalField}$ is as in \cref{exam:sc}, then for all $i \in [n]$,
\begin{align}
\hspace{-0.5cm} \matnorm{\EstimatedParameterMatrix \!-\! \TrueParameterMatrix}_{2,\infty} & \!\leq\! \varepsilon
    \qquad \quad \hspace{1.0cm} \quad \qquad \quad \qquad \quad \qtext{for}
    n \!\geq\! \frac{c p^2 \bigparenth{p \log \frac{p}{\delta \varepsilon^2} \!+\! \frac{s \log k}{\varepsilon^2}}}{\varepsilon^4}\\
    \hspace{-0.5cm} \mathrm{MSE}(\EstimatedExternalFieldI \!\!, \TrueExternalFieldI)
    & \!\leq\! \max\!\Bigbraces{ \varepsilon^2\!, \!\dfrac{c \bigparenth{\!s \log k \!+\! \log (\log \frac{p}{\delta}) }}{p}\!}  \qtext{for}
    n \!\geq\! \! \frac{c p^5 \bigparenth{\log \frac{p^2}{\delta \varepsilon^2}  \!+\! s\log k \!+\! \frac{s \log k}{\varepsilon^2}\!}}{\varepsilon^4}.
\end{align}
\end{enumerate}
\end{corollary}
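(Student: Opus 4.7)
The plan is to derive Corollary 1 as a direct substitution of the metric entropy bounds from Examples 1 and 2 into the general guarantees of Theorem 1, with some bookkeeping to handle the two different forms $\metric_{\ExternalField}$ and $\metric_{\ExternalField,n}$ that appear in the bounds. Since Theorem 1 already encapsulates all the probabilistic content (concentration, LSI, eigenvalue lower bounds), no new randomness arguments are needed; what remains is algebraic simplification in the prescribed regime.

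First, I would recall the two ingredients from Examples 1 and 2: for the \emph{linear combination} set, $\metric_{\ExternalField}(\radius) = O(k \log(1 + \alpha/\radius))$ and $\metric_{\ExternalField,n}(\radius) = O(\alpha k/\radius)$; for the \emph{sparse linear combination} set, $\metric_{\ExternalField}(\radius) = O(s \log k \log(1 + \alpha/\radius))$ and $\metric_{\ExternalField,n}(\radius) = O(\alpha s \log k/\radius)$. Throughout, I treat $\alpha, \xmax, \lambda_{\min}, \bGM$ as constants absorbed into the universal constant $c$ (consistent with the corollary's phrasing and the fact that the exponential dependence on $\bGM$ is explicit in Theorem 1).

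For the population-level bound \cref{eq:matrix_guarantee}, I substitute $\metric_{\ExternalField,n}(\varepsilon^2)$ in the sample-size lower bound. In the linear combination case this yields $O(k/\varepsilon^2)$; in the sparse case it yields $O(s\log k/\varepsilon^2)$. Plugging into $n \geq ce^{c'\bGM} p^2 (p\log(p/(\delta\varepsilon^2)) + \metric_{\ExternalField,n}(\varepsilon^2))/\varepsilon^4$ gives the two stated bounds directly.

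For the unit-level bound, I would use the simplified MSE form \cref{eq:simplified_bound_theta} (which uses $\ratio \leq \sqrt{p}$ since $\ParameterSet_{\ExternalField} \subset \real^p$). Here I need to evaluate $\metric_{\ExternalField}(c)$ and $\metric_{\ExternalField,n}(\varepsilon^2/p)$ for each example. In the linear combination case, $\metric_{\ExternalField}(c) = O(k)$ and $\metric_{\ExternalField,n}(\varepsilon^2/p) = O(kp/\varepsilon^2)$; multiplying through the prefactor $p^4/\varepsilon^4$ and adding the $p\metric_{\ExternalField}(c) = O(kp)$ term yields the sample-size condition $n \geq c p^5(\log(p^2/(\delta \varepsilon^2)) + k + k/\varepsilon^2)/\varepsilon^4$, and the MSE bound simplifies to $\max\{\varepsilon^2, c(k + \log\log(p/\delta))/p\}$. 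The sparse case is identical with $k$ replaced by $s\log k$ throughout. The only minor subtlety to watch is that in reducing $\metric_{\ExternalField,n}(n\radius) = n \metric_{\ExternalField}(n\radius) \approx \alpha k/\radius$ we are implicitly in the regime where $\alpha/(n\radius)$ is small, so that $\log(1 + \alpha/(n\radius)) \approx \alpha/(n\radius)$; this is guaranteed for large enough $n$ as specified by the theorem's sample complexity.

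There is no genuine obstacle in this derivation — the corollary is an explicit instantiation of Theorem 1 — so the write-up is essentially a table of substitutions. The only care needed is to correctly distinguish between $\metric_{\ExternalField}$ (evaluated at scales independent of $n$) and $\metric_{\ExternalField,n}$ (evaluated at scales that shrink in $\varepsilon$ but not $n$), and to verify that the $\max$ in the MSE bound correctly combines the $\varepsilon^2$ term (from the generic bound) with the residual $(\metric_{\ExternalField}(c) + \log\log(p/\delta))/p$ term that arises from the radius $R(\varepsilon, \delta/n)$ in \cref{eq_radius_node_thm}.
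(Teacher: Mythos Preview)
Your proposal is correct and matches the paper's intended approach exactly: the paper states the corollary without proof, explicitly framing it as a ``formal version of the population-level guarantee in \cref{eq:matrix_guarantee} and the unit-level guarantee in \cref{eq:simplified_bound_theta} for the two examples discussed earlier,'' i.e., a direct substitution of the metric entropy bounds from \cref{exam:lc_dense,exam:sc} into \cref{theorem_parameters}. Your bookkeeping for $\metric_{\ExternalField}(c)$, $\metric_{\ExternalField,n}(\varepsilon^2)$, and $\metric_{\ExternalField,n}(\varepsilon^2/p)$ in each case is correct and yields the stated sample-size conditions and MSE bounds.
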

\noindent \cref{cor_params} states that, as long as $n$ is polynomially large in $p$, our strategy learns the unit-level parameters (on average in terms of mean square error across coordinates) for each user if $p$ is large compared to either the number of vectors $k$ (\cref{exam:lc_dense}) or the sparsity parameter $s$ (\cref{exam:sc}).

\paragraph{Sharpness of guarantees and generalization of prior results}
The exponential dependence on $\bGM$ in \cref{theorem_parameters} is unavoidable given the lower bounds for learning exponential families even with i.i.d. samples~\citep{SanthanamW2012}. {Regarding the dependence on error tolerance $\varepsilon$, prior works with suitable analogs of our loss function provide two different error scaling: (i) $1/\varepsilon^4$ in \cite{VuffrayML2022,ShahSW2021A,ShahSW2021B} and (ii) $1/\varepsilon^2$ in  \cite{vuffray2016interaction} and \cite{ShahSW2023}. The works in category (ii) use techniques from \cite{negahban2012unified}, and it remains an interesting future direction to see whether similar ideas could be used to sharpen the error scaling of $1/\varepsilon^4$ to the parametric rate of $1/\varepsilon^2$ in \cref{theorem_parameters}. We note that improving the dependence on $\varepsilon$ in \cref{eq:matrix_guarantee} improves the dependence on $\varepsilon$ as well as $p$ in \cref{eq:theta_guarantee}.} In the special case of equal unit-level parameters ($\TrueExternalFieldI[1] = \cdots =  \TrueExternalFieldI[n]$), the analysis in 
\cref{sec:proof_of_theorem_parameters} to establish the bound \cref{eq:matrix_guarantee} can be modified to recover (up to constants) prior guarantee \citep[Lemma.~9.1]{ShahSW2021A} on learning exponential family from $n$ i.i.d. samples. 
Further, the guarantee~\cref{eq:theta_guarantee} recovers the prior guarantee~\citep[Theorem.~6]{KandirosDDGD2021} as a special case where the authors consider learning an Ising model from one sample when the population-level parameter is known up to a scaling factor.
\newcommand{\outcomemainresultname}{Guarantee on quality of outcome estimate}
\subsection{\outcomemainresultname}
\label{subsec_guarantee_outcome_estimate}
Our non-asymptotic guarantee on outcome estimate assumes that the following matrices are suitably stable under small perturbation in the parameters: (i) the covariance matrix of $\rvby$ conditioned on $\rvba$, $\rvbz$, and $\rvbv$ and (ii) the cross-covariance matrix of $\rvby$ and $\rvy_t \rvby$ conditioned on $\rvba$, $\rvbz$, and $\rvbv$ for all $t \in [p_y]$.
\begin{assumption}\label{ass_bounded_op_norm_cov_matrices}
For any set $\mbb B$ containing $\ExternalField, \ParameterMatrix$, there exists a constant $C(\mbb B)$ such that
\begin{align}
  \sup\limits_{\ExternalField, \ParameterMatrix \in \mbb B} \max\Bigbraces{\opnorm{\Covariance_{\ExternalField,\ParameterMatrix}(\rvby, \rvby | {\svba}, \svbz, \svbv)}, \max\limits_{t \in [p_y]} \opnorm{\Covariance_{\ExternalField,\ParameterMatrix}(\rvby, \rvy_t \rvby | {\svba}, \svbz,\svbv)}} \leq C(\mbb B), \label{eq_cov_constraint}
\end{align}
almost surely. The expectation in \cref{eq_cov_constraint} is with respect to the distribution of $\rvby$ conditioned on $\rvba = \svba$, $\rvbz = \svbz$, and $\rvbv = \svbv$ which is fully parameterized by $\ExternalField$ and $\ParameterMatrix$, and can be obtained from \cref{eq_conditional_distribution_vay} after replacing $\ExternalField(\svbz)$ by $\ExternalField$.
\end{assumption}
\noindent In \cref{subsec_bounded_op_norms}, we show that $C(\mbb B)$ is a constant for a class of distributions. We note that this assumption is common in the literature on learning Gaussian graphical models to rule out singular distributions \citep{won2006maximum,zhou2011high,ma2016joint}.

We are now ready to state our guarantee for the estimate $\what{\mu}^{(i)}(\wtil{\svba}^{(i)})$ (see \cref{eq_causal_estimate}) of the expected potential outcomes for any unit $i \in [n]$ under an alternate intervention $\wtil{\svba}^{(i)} \in \cA^{p_a}$. We assume $p_v = p_a = p_y$ for brevity. See the proof in \cref{sec_proof_causal_estimand} where we also state a more general result. 

\begin{theorem}[{\outcomemainresultname}]
\label{thm_causal_estimand}
Suppose \cref{assumptions,ass_pos_eigenvalue,ass_bounded_op_norm_cov_matrices} hold. Then for any fixed $\varepsilon > 0$ and $\delta \in (0,1)$, 
 the estimates $\sbraces{\what{\mu}^{(i)}(\wtil{\svba}^{(i)})}_{i=1}^n$ defined in \cref{eq_causal_estimate} for any $\sbraces{\wtil{\svba}^{(i)} \in \cA^{p_a}}_{i=1}^n$ satisfy
\begin{align}
\label{eq:mu_error}
    \max_{i\in[n]} \! \frac{\stwonorm{\mu^{(i)}(\wtil{\svba}^{(i)}\!) \!-\! \what{\mu}^{(i)}(\wtil{\svba}^{(i)}\!)}}{C(\mbb B_i)} 
    \!\!\leq\!  R\Bigparenth{\varepsilon, \frac{\delta}{n}} \!\!+\! p \varepsilon ~\stext{for}~ n \!\geq\!\! \frac{ce^{c'\bGM} p^4 \!\bigparenth{p \log \frac{np^2}{\delta \varepsilon^2} \!+\! \tmetric_{\ExternalField,n}\normalparenth{ \varepsilon, \frac{\delta}{n}}\!}}{\varepsilon^4}\!,
\end{align}
 with probability at least $1-\delta$,
where $R(\varepsilon, \delta)$ was defined in \cref{eq_radius_node_thm}, $\tmetric_{\ExternalField,n}(\varepsilon, \delta) $ was defined in \cref{eq_tmetric_node_thm}, $C(\mbb B)$ was defined in \cref{eq_cov_constraint}, and
\begin{align}
\mbb B_i \defeq \bigbraces{\ExternalField \in \Lambda_{\theta}: \stwonorm{\ExternalField \!-\! \TrueExternalFieldI} \leq R\Bigparenth{\varepsilon, \frac{\delta}{n}}} \times \bigbraces{\ParameterMatrix \in \Lambda_{\Theta}: \max_{t\in[p]}\stwonorm{\ParameterRowt \!-\! \TrueParameterRowt} \leq \varepsilon}.
\end{align}
\end{theorem}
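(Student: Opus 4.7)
}
The plan is to combine the parameter estimation guarantee from \cref{theorem_parameters} with a sensitivity analysis of the mean functional of the conditional exponential family for $\rvby$ given $(\rvba,\rvbz,\rvbv)$. First, I would invoke \cref{theorem_parameters} with confidence $\delta/n$ and union bound over $i\in[n]$ to obtain, simultaneously for all $i\in[n]$, the bounds $\stwonorm{\EstimatedExternalFieldI-\TrueExternalFieldI}\leq R(\varepsilon,\delta/n)$ and $\max_{t\in[p]}\stwonorm{\EstimatedParameterRowt-\TrueParameterRowt}\leq \varepsilon$, at which point the parameters defining $\what{\mu}^{(i)}$ lie in the ball $\mbb B_i$ containing the true parameters. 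On this event we can apply the operator-norm bound of \cref{ass_bounded_op_norm_cov_matrices} uniformly along any straight-line path between the true and estimated parameters.

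Next, I would reformulate both $\mu^{(i)}(\wtil{\svba}^{(i)})$ and $\what{\mu}^{(i)}(\wtil{\svba}^{(i)})$ as means of an exponential family over $\svby$ with linear natural parameter $\eta_1(\ExternalField,\Theta)\defeq \theta^{(i,y)}+2\Theta^{(v,y)\top}\svbv^{(i)}+2\Theta^{(a,y)\top}\wtil{\svba}^{(i)}\in\Reals^{p_y}$ and quadratic natural parameter $\Pi(\Theta)\defeq \Theta^{(y,y)}\in\Reals^{p_y\times p_y}$. Writing $\eta(s)=(1-s)(\eta_1^\star,\Pi^\star)+s(\what\eta_1,\what\Pi)$ and differentiating the mean map yields the fundamental-theorem-of-calculus identity
\begin{align}
    \what\mu^{(i)}(\wtil{\svba}^{(i)}) - \mu^{(i)}(\wtil{\svba}^{(i)}) \;=\; \int_0^1\!\!\Bigl[\Covariance_{\eta(s)}(\rvby,\rvby)\,(\what\eta_1-\eta_1^\star) + \sum_{t\in[p_y]} \Covariance_{\eta(s)}(\rvby,\rvy_t\rvby)\,(\what\Pi-\Pi^\star)_{t,:}^{\top}\Bigr]\,ds,
\end{align}
because $\partial\Expectation_\eta[\rvby]/\partial\eta_{1,t}$ is the $(\cdot,t)$-column of $\Covariance_\eta(\rvby,\rvby)$ and $\partial\Expectation_\eta[\rvby]/\partial\Pi_{tu}=\Covariance_\eta(\rvby,\rvy_t\rvy_u)$. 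Crucially, \cref{ass_bounded_op_norm_cov_matrices} is designed so that both $\opnorm{\Covariance_{\eta(s)}(\rvby,\rvby)}$ and $\max_t\opnorm{\Covariance_{\eta(s)}(\rvby,\rvy_t\rvby)}$ are at most $C(\mbb B_i)$ throughout the path.

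Taking norms and applying the triangle inequality, the $\ell_2$ error reduces to bounding $\stwonorm{\what\eta_1-\eta_1^\star}$ and $\sum_{t\in[p_y]}\stwonorm{(\what\Pi-\Pi^\star)_{t,:}}$. The first splits into three pieces: the field error $\stwonorm{\EstimatedExternalFieldI[i,y]-\TrueExternalFieldI[i,y]}\leq R(\varepsilon,\delta/n)$, and two bilinear pieces of the form $\stwonorm{(\EstimatedPhi^{(v,y)\top}-\TruePhi^{(v,y)\top})\svbv^{(i)}}$ and analogously for $\wtil{\svba}^{(i)}$; each is controlled by the Frobenius norm of the relevant block (which is at most $\sqrt{p_v}\,\varepsilon$ since each row of the block is a subvector of a row of $\Theta$) multiplied by $\stwonorm{\svbv^{(i)}}\leq\sqrt{p_v}\xmax$, yielding $O(p\varepsilon)$. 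The sum $\sum_t\stwonorm{(\what\Pi-\Pi^\star)_{t,:}}$ is bounded similarly by $p_y\varepsilon\leq p\varepsilon$ since each row of $\what\Phi^{(y,y)}-\Phi^{\star(y,y)}$ is a subvector of a row of $\EstimatedParameterMatrix-\TrueParameterMatrix$. Combining gives $\stwonorm{\mu^{(i)}-\what\mu^{(i)}}\leq C(\mbb B_i)\bigl(R(\varepsilon,\delta/n)+O(p\varepsilon)\bigr)$, which after dividing by $C(\mbb B_i)$ is the claimed bound.

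The main obstacle I anticipate is the careful treatment of the quadratic-in-$\rvby$ part of the natural parameter: unlike the linear part, perturbations of $\Phi^{(y,y)}$ are matrix-valued and the corresponding derivative of the mean is the three-way ``cross-covariance'' $\Covariance(\rvby,\rvy_t\rvby)$, which is precisely why \cref{ass_bounded_op_norm_cov_matrices} needs its second clause. One must also verify that the interpolated parameters stay inside $\mbb B_i$ along the entire path so that the uniform covariance bound applies; this is immediate from convexity of $\mbb B_i$ once the endpoint estimate is shown to lie in $\mbb B_i$ via \cref{theorem_parameters}.
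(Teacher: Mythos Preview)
Your proposal is correct and follows essentially the same route as the paper: reduce to a mean-perturbation bound for the conditional exponential family of $\rvby$, use that the gradient of the mean with respect to the natural parameters is the (cross-)covariance, invoke \cref{ass_bounded_op_norm_cov_matrices} to bound those operator norms uniformly on $\mbb B_i$, and then plug in the parameter guarantees from \cref{theorem_parameters}. Two cosmetic differences: the paper states the perturbation step as a mean-value-theorem lemma (\cref{lemma_exp_fam_parameter_perturbation}) rather than your FTC integral, and it bounds the bilinear pieces $\stwonorm{(\EstimatedPhi^{(v,y)}-\TruePhi[v,y])^{\top}\svbv^{(i)}}$ via $\opnorm{\cdot}\leq\onematnorm{\cdot}\leq\sqrt{p}\,\matnorm{\cdot}_{2,\infty}$ rather than via the Frobenius norm; both routes land at $O(p\varepsilon)$.
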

Repeating the algebra as in \cref{eq:simplified_bound_theta}
and treating $C(\mbb B_i)$ as a constant, the bound~\cref{eq:mu_error} yields the following simplified bound for the MSE of our mean outcome estimate $\mu^{(i)}(\wtil{\svba}^{(i)})$ for unit $i \in [n]$ under treatment $\wtil{\svba}^{(i)} \in \cA^{p_a}$: whenever $n \geq c'\varepsilon^{-4} p^4\normalparenth{p\log \frac{p^2}{\delta \varepsilon^2}  + \metric_{\ExternalField,n}(\varepsilon^2/p) + p\metric_{\ExternalField}(c)}$, we have
\begin{align}
    \mathrm{MSE}(\mu^{(i)}(\wtil{\svba}^{(i)}), \what{\mu}^{(i)}(\wtil{\svba}^{(i)}))
    \!\leq\! \varepsilon^2 \!+\! \dfrac{\metric_{\ExternalField}(c) \!+\! \log (\log \frac{p}{\delta})}{p}. 
\end{align}
This bound is of the same order as in \cref{eq:simplified_bound_theta} and can be formalized for the two examples (\cref{exam:lc_dense,exam:sc}) by deriving a suitable analog of \cref{cor_params}. In a nutshell, in both settings, the unit-level expected potential outcomes can be estimated well when the total number of units $n$ is large and the observations for each unit are high dimensional compared to the number of vectors $k$ in \cref{exam:lc_dense} or the sparsity parameter $s$ in \cref{exam:sc}.  We omit a formal statement for brevity. 

Finally, we also note that as in \cref{theorem_parameters}, the exponential dependence on $\bGM$ is expected to be unavoidable due to the principle of conjugate duality \citep{wainwright2008graphical}, i.e., the existence of a unique mapping from the parameters to the means and vice versa for the exponential family. 
Moreover, as in the discussion after \cref{cor_params}, the sharpness of the rate of $1/\varepsilon^4$ {is left} for future work. {Improving the dependency on $\varepsilon$ in \cref{eq:mu_error} would also improve the dependency on $p$.}
\newcommand{\gm}{\texttt{GM}}
\section{Possible extensions}
We now discuss how to extend our theoretical results with various relaxations of the exponential family modeling. 

\subsection{Modeling only the conditional distribution as exponential family}
\label{subsec_joint_vs_conditional}
Our framework and analysis can be extended to the setting where, instead of the joint distribution $f_{\ranvarvec}$ of $\ranvarvec = \normalparenth{\rvbz, \rvbv, \rvba, \rvby}$, we model only 
the conditional distribution $f_{\rvby|\rvba, \rvbz, \rvbv}$ of $\rvby$ conditioned on $\rvba$, $\rvbz$, and $\rvbv$ as an exponential family.
Note that when the joint distribution $f_{\ranvarvec}$ is an exponential family, the conditional distribution $f_{\rvby|\rvba, \rvbz, \rvbv}$ is also an exponential family, however a vice versa implication does not hold so that the setting considered here is a strict generalization of our previous setting. In fact, the conditional distribution $f_{\rvby|\rvba, \rvbz, \rvbv}$ being an exponential family puts no restrictions on the marginal distribution $f_{\rvbz, \rvbv, \rvba}$ of the unobserved covariates, the observed covariates, and the interventions as is the case with non-linear panel data models (\cref{sec_related_work}).

To estimate the expected potential outcomes $\mu^{(i)}(\wtil{\svba}^{(i)})$ in \cref{eq_causal_estimand} for any given unit $i$ 
and any alternate intervention $\wtil{\svba}^{(i)}$, 
it suffices to estimate the conditional distribution of $f_{\rvby|\rvba, \rvbz, \rvbv}(\cdot\vert \rvbv=\svba, \rvbv=\svbv^{(i)}\rvbz=\svbz^{(i)} )$ $\rvby$ for unit $i$ as a function of the intervention $\rvba$
(as in \cref{eq_counterfactual_distribution_y}). This task is equivalent to estimating $\Truephi$ in \cref{eq:gammai} under the exponential family models in \cref{eq_joint_distribution_zvay} or \cref{eq_conditional_distribution_y}. 

In \cref{subsec_exp_fam}, under the exponential family in \cref{eq_joint_distribution_zvay}, we argued (for analytical convenience) that learning $\Truephi$ is subsumed in learning the parameters corresponding to the conditional distribution $f_{\rvbx |\rvbz}$ of $\rvbx = \normalparenth{\rvbv, \rvba, \rvby}$ conditioned on $\rvbz$
(which also belongs to an exponential family with linear and quadratic interactions) as in \cref{eq_conditional_distribution_vay}. Then, we set the goal of estimating the parameters in \cref{eq_parameters_of_interest} and  designed a loss function to do so.  The loss function depended on the conditional distribution $f_{\rvx_t | \rvbx_{-t}, \rvbz}$ \cref{eq_conditional_dist_non_centered} of the random variable $\rvx_t$ conditioned on $\rvbx_{-t} = \svbx_{-t}$ and $\rvbz = \svbz$ for every $t \in [p]$.

Under the exponential family in \cref{eq_conditional_distribution_y}, we focus on directly learning the components of \cref{eq_parameters_of_interest} relevant to learning $\Truephi$, i.e., 
\begin{align}
    \TrueExternalFieldt(\svbz^{(i)}) & = \phi^{\star(y)} \!+\! 2 \Phi^{\star(z,y)\top} \svbz^{(i)} \in \Reals^{p_y \times 1}, \qtext{for all} t \in \normalbraces{p_v + p_a + 1, \cdots, p_v + p_a + p_y} \label{eq_new_parameters_1}\\
    \TrueParameterRowt & = \normalparenth{\Phi^{\star(v,y)}, \Phi^{\star(a,y)}, \Phi^{\star(y,y)}} \in \Reals^{p \times 1} \qtext{for all} t \in \normalbraces{p_v + p_a + 1, \cdots, p_v + p_a + p_y} \label{eq_new_parameters_2}.
\end{align}
We note that the conditional distribution $f_{\rvy_t | \rvby_{-t}, \rvbv, \rvba, \rvbz}$ of the random variable $\rvy_t$ conditioned on $\rvby_{-t} = \svby_{-t}$, $\rvbv = \svbv$, $\rvba = \svba$, and $\rvbz = \svbz$ for every $t \in [p_y]$ is consistent with the conditional distribution $f_{\rvx_{t'} | \rvbx_{-t'}, \rvbz}$ in \cref{eq_conditional_dist_non_centered} for every $t' \in \normalbraces{p_v + p_a + 1, \cdots, p_v + p_a + p_y}$. As a result, we can adapt the loss function in \cref{eq:loss_function} to learn the parameters in \cref{eq_new_parameters_1,eq_new_parameters_2} by summing over $t \in \normalbraces{p_v + p_a + 1, \cdots, p_v + p_a + p_y} $ instead of $t \in [p]$. Consequently, the guarantees in \cref{sec_main_results} continue to hold with $p$ replaced by $p_y$.

\subsection{Higher order terms in the conditional exponential family}
\label{subsec_high_terms}
In \cref{subsec_joint_vs_conditional}, we described how our framework and results apply when only the conditional distribution $f_{\rvby|\rvba, \rvbz, \rvbv}$ is modeled as the exponential family distribution in \cref{eq_conditional_distribution_y} where the term inside the exponent is linear in $(\rvbz, \rvbv, \rvba)$ and quadratic in $\rvby$.
We now describe how our framework and results are applicable when the conditional distribution $f_{\rvby|\rvba, \rvbz, \rvbv}$ is modeled as the following exponential family distribution
\begin{align}
    f_{\rvby | \rvba, \rvbz, \rvbv}(\svby | \svba, \svbz, \svbv) \! \propto  \exp\bigparenth{q_{\Phi}(\svbv, \svba, \svby)} \!\exp\bigparenth{2\svbz\tp\Phi^{(z,y)}\svby}, \label{eq_conditional_distribution_y_base_measure}
\end{align}
where $q_{\Phi}(\svbv, \svba, \svby)$ is some bounded degree polynomial
in $(\svbv, \svba, \svby)$ parameterized by $\Phi$, i.e., the term inside the exponent is linear in $\rvbz$ and arbitrary bounded degree polynomial in $(\rvbv, \rvba, \rvby)$. We note that every term in $q_{\Phi}(\svbv, \svba, \svby)$ needs to depend on $\rvby$ for it to contribute to $f_{\rvby | \rvba, \rvbz, \rvbv}$ in \cref{eq_conditional_distribution_y_base_measure}. For convenience, hereon, we ignore any dependence on $\rvbv$, and abuse notation to let $q_{\Phi}(\svba, \svby) = q_{\Phi}(\svbv, \svba, \svby)$. Then, in \cref{eq_conditional_distribution_y}, $q_{\Phi}(\svba, \svby)$ was a polynomial of degree 2 , i.e.,
\begin{align}
    q_{\Phi}(\svba, \svby) = q^{(2)}_{\Phi}(\svba, \svby) \defn \texttt{Sum}\Bigparenth{\phi^{(y)} \odot \svby + 2 \Phi^{(a,y)} \odot \bigparenth{\svba \otimes \svby} +  \Phi^{(y,y)} \odot \bigparenth{\svby \otimes \svby}},
\end{align}
where $\odot$ denotes the Hadamard product, $\otimes$ denotes the Kronecker product, $\Phi = (\phi^{(y)}, \Phi^{(a,y)}$, $\Phi^{(y,y)})$ with $\Phi^{(y,y)}$ being symmetric, and $\texttt{Sum}\normalparenth{s_1 + \cdots + s_h} \in \Reals$ sums, over all $i \in [h]$, all the entries of $s_i$ which could be a real number/vector/matrix/tensor. To explain how the loss function in \cref{eq:loss_function} needs to be modified for general $q_{\Phi}(\svba, \svby)$, we consider a polynomial of degree 3:
\begin{align}
    q_{\Phi}(\svba, \svby)  = q^{(2)}_{\Phi}(\svba, \svby) +   \texttt{Sum}\Bigparenth{ \hspace{-1.25cm}  \sum_{(u_1,u_2) \in \normalbraces{(a,a), (a,y), (y,y)}}  \hspace{-1.25cm} c_{u_1,u_2} \cdot \Phi^{(u_1,u_2,y)} \odot \bigparenth{\svbu_1 \otimes \svbu_2 \otimes \svby}}, 
\end{align}
where $c_{a,a} = c_{a,y} = 3$, $c_{y,y} = 1$ are constants chosen for consistency, and $\Phi^{(u_1,u_2,y)} \in \Reals^{p_{u_1} \times p_{u_2} \times p_y}$ is symmetric with respect to indices that are repeated for every $(u_1,u_2) \in \normalbraces{(a,a), (a,y), (y,y)}$. We illustrate the two steps from \cref{subsec_loss_function} below.
\paragraph{Centering sufficient statistics of the conditional distribution of a variable}
The conditional distribution $f_{\rvy_t | \rvby_{-t}, \rvba, \rvbz}$ of the random variable $\rvy_t$ conditioned on $\rvby_{-t} = \svby_{-t}$, $\rvba = \svba$, and $\rvbz = \svbz$ for every $t \in [p_y]$ is given by
\begin{align}
    f_{\rvy_t | \rvby_{-t}, \rvba, \rvbz}\bigparenth{y_t | \svby_{-t}, \svba, \svbz 
    } \!\propto  & \exp\!\biggparenth{\!\texttt{Sum}\Bigparenth{\Bigbrackets{\phi_t(\svbz) + \hspace{-0.322cm} \sum_{u \in \normalbraces{y_{-t}, a}} \hspace{-0.3cm}2\Phi^{(u,y_t)} \odot \svbu + \hspace{-1.79cm} \sum_{(u_1,u_2) \in \normalbraces{(a,a), (a,y_{-t}), (y_{-t},y_{-t})}} \hspace{-1.79cm} c_{u_1,u_2} \Phi^{(u_1,u_2,y_t)} \odot \bigparenth{\svbu_1 \otimes \svbu_2}} y_t \\
    &  \qquad + \Bigbrackets{\Phi^{(y_t,y_t)}  +  \hspace{-0.25cm} \sum_{u \in \normalbraces{y_{-t}, a}} \hspace{-0.25cm} 3\Phi^{(u,y_t,y_t)} \odot \svbu} \Bigparenth{y_t^2 - \frac{\xmax^2}{3}} + \Phi^{(y_t,y_t,y_t)} y_t^3}},
\end{align}
where $\phi_t(\svbz) \defn \phi^{(y_t)}  + 2 \Phi^{(z,y_t)} \odot \svbz$, $c_{y_{-t},y_{-t}} = 3$, and $c_{a,y_{-t}} = 6$. Let $\Phi_t$ denote the concatenation of all the remaining parameters.
As in \cref{eq_conditional_dist}, the term $\xmax^2/3$ inside the exponent is vacuous and centers the sufficient statistics $\rvy_t^2$. The other sufficient statistics, i.e., $\rvx_t$ and $\rvx_t^3$, are naturally centered as their integrals with respect to the uniform distribution on $\cX$ are both zeros. 
\paragraph{Constructing the loss function}
Now, it is easy to see that the corresponding loss $\loss$ is given by
\begin{align}
    \loss& 
    = \frac{1}{n} \sum_{t \in [p_y]} \sumn[i] \exp\!\biggparenth{-\texttt{Sum}\Bigparenth{\Bigbrackets{\phi_t^{(i)} + \hspace{-0.322cm} \sum_{u \in \normalbraces{y_{-t}, a}} \hspace{-0.3cm}2\Phi^{(u,y_t)} \odot \svbu^{(i)} + \hspace{-1.79cm} \sum_{(u_1,u_2) \in \normalbraces{(a,a), (a,y_{-t}), (y_{-t},y_{-t})}} \hspace{-1.79cm} c_{u_1,u_2} \Phi^{(u_1,u_2,y_t)}  \odot \bigparenth{\svbu_1^{(i)} \otimes \svbu_2^{(i)}}} y_t^{(i)} \\
    &  \qquad \qquad \qquad \qquad \qquad + \Bigbrackets{\Phi^{(y_t,y_t)}  +  \hspace{-0.25cm} \sum_{u \in \normalbraces{y_{-t}, a}} \hspace{-0.25cm} 3\Phi^{(u,y_t,y_t)} \odot \svbu^{(i)}} \Bigparenth{\bigbrackets{y_t^{(i)}}^2 - \frac{\xmax^2}{3}} + \Phi^{(y_t,y_t,y_t)} \bigbrackets{y_t^{(i)}}^3}},
\end{align}
and minimizing this convex loss results in the estimates of $\normalbraces{\phi_t^{(i)}}_{i \in [n]}$ and $\normalbraces{\Phi_{t}}_{t \in p_y}$. Consequently, the guarantees in \cref{sec_main_results} continue to hold with $p$ replaced by $p_y$ as long as \cref{assumptions,ass_pos_eigenvalue,ass_bounded_op_norm_cov_matrices} are appropriately generalized.

\paragraph{Tilting the base distribution} We note that the exponential family in \cref{eq_conditional_distribution_y} can be rewritten as
\begin{align}
    f_{\rvby | \rvba, \rvbz, \rvbv}(\svby | \svba, \svbz, \svbv) \! \propto  \exp\bigparenth{2\svbz\tp\Phi^{(z,y)}\svby}
    \exp\bigparenth{2\svbv\tp\Phi^{(v,y)}\svby}
    \exp\bigparenth{2\svba\tp\Phi^{(a,y)}\svby}
 \exp\bigparenth{\phi^{(y)\tp} \svby + \svby\tp\Phi^{(y,y)}\svby}, \label{eq_conditional_tilted}
\end{align}
where $\exp\bigparenth{\phi^{(y)\tp} \svby + \svby\tp\Phi^{(y,y)}\svby}$ stands for a base distribution on $\rvby$ which is exponentially tilted by $\rvbz$, $\rvbv$, and $\rvba$, i.e., by $ \exp\bigparenth{2\svbz\tp\Phi^{(z,y)}\svby}$, $\exp\bigparenth{2\svbv\tp\Phi^{(v,y)}\svby}$, and $\exp\bigparenth{2\svba\tp\Phi^{(a,y)}\svby}$, respectively. Then, generalizing the exponential family in \cref{eq_conditional_distribution_y} to the one in \cref{eq_conditional_distribution_y_base_measure} is equivalent to saying that our approach and results continue to apply when $(a)$ the base distribution on $\rvby$ is an exponential family distribution where the term inside the exponent is arbitrary bounded degree polynomial (instead of quadratic) and $(b)$ the exponent of the exponential tilting of this base distribution by $\normalparenth{\rvbv, \rvba}$ is arbitrary bounded degree polynomial (instead of linear).

\subsection{Discrete and mixed variables}
\label{subsec_discrete}
In \cref{subsec_exp_fam}, we described how our framework and results are applicable when the support of $\rvbv$, $\rvba$, and $\rvby$ are bounded continuous sets, i.e., $\cV = \cA = \cY = [-\xmax, \xmax]$. In \cref{subsec_joint_vs_conditional}, we showed that it suffices to only model the conditional distribution $f_{\rvby|\rvba, \rvbz, \rvbv}$ as an exponential family distribution implying that we do not need any restrictions on the support of $\rvbv$ and $\rvba$. Now, we describe how to adapt our loss function when $\rvby = (\rvy_{1}, \cdots, \rvy_{p_y}) \in \cY_1 \times \cdots \times \cY_{p_y}$ where $\cY_{t}$ is either a discrete compact set or a continuous compact set for $t \in [p_y]$.

We note that the conditional distribution $f_{\rvy_t | \rvby_{-t}, \rvbv, \rvba, \rvbz}$ of the random variable $\rvy_t$ conditioned on $\rvby_{-t} = \svby_{-t}$, $\rvbv = \svbv$, $\rvba = \svba$, and $\rvbz = \svbz$ for every $t \in [p_y]$ is still consistent with the conditional distribution $f_{\rvx_{t'} | \rvbx_{-t'}, \rvbz}$ in \cref{eq_conditional_dist_non_centered} for every $t' \in \normalbraces{p_v + p_a + 1, \cdots, p_v + p_a + p_y}$. However, the constants used to center the sufficient statistics in \cref{eq_conditional_dist} may change. More precisely, for any $t \in [p]$, the sufficient statistics $\rvx_t$ and $\rvx_t^2$ are centered by subtracting $\Expectation_{\cU_t}\bigbrackets{\rvx_t}$ and $\Expectation_{\cU_t}\bigbrackets{\rvx_t^2}$, respectively where $\cU_t$ denotes the uniform distribution supported over $\cY_t$. Consequently, the loss function in \cref{eq:loss_function} as well as \cref{ass_pos_eigenvalue} can be adapted, and the guarantees in \cref{sec_main_results} continue to hold.
\section{Application: Imputing missing covariates}
\label{sec_sparse_measurement_errors}
\newcommand{\impute}{Impute missing covariates}
Consider a setting with no systematically unobserved covariates $\rvbz$; instead, elements of $(\rvbv, \rvba, \rvby)$ are missing or have measurement error for some fraction of the units. Our goal is to impute these missing values or denoise the measurement error in the observed values.

\paragraph{Problem setup} For the ease of exposition, we assume the observed covariates $\rvbv$ can have measurement error\footnote{Our analysis remains the same when observed covariates $\rvbv$ are missing instead of having measurement error.} but the interventions and the outcomes do not have any measurement error. More concretely, for every unit $i \in [n]$, along with the interventions $\svba^{(i)}$ and the outcomes $\svby^{(i)}$, we observe $\csvbv^{(i)} = \svbv^{(i)} + \Delta \svbv^{(i)}$ instead of true covariates $\svbv^{(i)}$ where $\Delta \svbv^{(i)}$ denotes (unobserved) bounded measurement error. We assume that a certain number of units (known to us) have no measurement error: say, $\Delta \svbv^{(i)} = 0$ for all $i \in \normalbraces{n/2+1, \cdots, n}$. 

\paragraph{Questions of interest} Besides counterfactual estimates, our goal is to estimate $\Delta \svbv^{(i)}$ for units with measurement error.

\subsection{A theoretical guarantee}
\label{subsec_theo_gua} 
Our methodology can be applied to estimate these measurement errors when the joint distribution of the true covariates $\rvbv \in \cX^{p_v}$, the interventions $\rvba \in \cX^{p_a}$, and the observed outcomes $\rvby \in \cX^{p_y}$ can be modeled as an exponential family, parameterized by a vector $\phi \in \Reals^p$ and a symmetric matrix $\Phi \in \Reals^{p \times p}$ where $p \defn p_v + p_a + p_y$, i.e., with $\rvbw \defn (\rvbv, \rvba, \rvby)$
\begin{align}
    f_{{\ranvarvec}}({\varvec}; \phi, \Phi) \propto \exp\Bigparenth{ \phi\tp {\varvec}
    +{\varvec}\tp \Phi {\varvec}},
    \qtext{where}
    {\varvec} \defeq (\svbv, \svba, \svby),
    \label{eq_joint_distribution_vay}
\end{align}
and $\svbv \defn (v_{1}, \cdots, v_{p_v})$, $\svba \defn (a_{1}, \cdots, a_{p_a})$, and $\svby \defn (y_{1}, \cdots, y_{p_y})$ denote realizations of $\rvbv$, $\rvba$, and $\rvby$, respectively. To estimate the counterfactual distribution, we decompose $\rvbv$ into $\crvbv$ and $\Delta \rvbv$, and obtain the distribution of the observed quantities $\rvbx \defn (\crvbv, \rvba, \rvby)$ conditioned on $\Delta \rvbv = \Delta \svbv$ as follows (see \cref{proof_impute_missing_covariates} for details)
\begin{align}
    f_{\rvbx|\Delta \rvbv}\bigparenth{\svbx| \Delta \svbv; \ExternalField(\Delta \svbv), \!\ParameterMatrix} \!\propto\! \exp \Bigparenth{\! \normalbrackets{\ExternalField(\Delta \svbv)}\!\tp \svbx \!+\! \svbx\!\tp \ParameterMatrix \svbx\!} \stext{where}
    \ExternalField(\Delta \svbv) \! \defn \begin{bmatrix}  \phi^{(v)} \!-\! 2 \Phi^{(v,v)\!\top} \Delta \svbv  \\  \phi^{(a)} \!-\! 2 \Phi^{(v,a)\!\top} \Delta \svbv  \\ \phi^{(y)} \!-\! 2 \Phi^{(v,y)\!\top} \Delta \svbv \end{bmatrix}\!, \label{eq_x_given_v_application}
\end{align} 
$\svbx \defn (\csvbv, \svba, \svby)$, $\Theta \defn \Phi$, and $\csvbv$, $\svba$, and $\svby$ denote realizations of $\crvbv$, $\rvba$, and $\rvby$, respectively. As in \cref{subsec_exp_fam}, to estimate the counterfactual distribution, it suffices to learn $\ExternalField(\Delta \svbv)\in \Reals^{p\times 1}$ and $\ParameterMatrix \in\Reals^{p\times p}$.

Let $f_{{\ranvarvec}}(\cdot; \phi^{\star}, \Phi^{\star})$ denote the true data generating distribution of ${\rvbw}$ in \cref{eq_joint_distribution_vay} and let $f_{\rvbx|\Delta \rvbv}\bigparenth{\cdot| \Delta \svbv; \TrueExternalField(\Delta \svbv), \TrueParameterMatrix}$ denote the true distribution of $\rvbx$ conditioned on $\Delta \rvbv = \Delta \svbv$. We assume $(a)$ $\max\braces{\infnorm{\Delta \svbv}, \infnorm{\phi^{\star}}, \maxmatnorm{\Phi^{\star}}} \leq \aGM$ and {$(b)$ $\infmatnorm{\Phi^{\star}} \leq \bGM$ analogous to \cref{assumptions} where the row-wise $\ell_1$ sparsity in $(b)$ is assumed to be induced by row-wise $\ell_0$ sparsity, i.e., $\zeronorm{\Phi_t^{\star}} \leq \bGM/\aGM$ for all $t \in [p]$}. Then, given realizations $\normalbraces{ \svbx^{(i)}}_{i=1}^{n}$ consistent with $f_{\rvbx|\Delta \rvbv}\bigparenth{\cdot| \Delta \svbv^{(i)}; \TrueExternalField(\Delta \svbv^{(i)}), \TrueParameterMatrix}$, first, we estimate the parameters $\phi^{\star}$ and $\Phi^{\star} = \TrueParameterMatrix$ using the realizations for units $\normalbraces{n/2 + 1, \cdots, n}$. Next, we exploit the structure in the problem to show that $\TrueExternalFieldI \defn \TrueExternalField(\Delta \svbv^{(i)})$ can be written as a linear combination of known vectors 
with some error, for every unit $i \in \normalbraces{1, \cdots, n/2}$. Then, we use \cref{eq_estimated_parameters} to estimate $\normalbraces{\TrueExternalFieldI}_{i = 1}^{n}$ and obtain estimates of $\normalbraces{\Delta \svbv^{(i)}}_{i = 1}^{n}$ as by-products.
In particular, the estimate of the coefficients associated with the aforementioned linear combination for $\TrueExternalFieldI$ turn out to be our estimate of the measurement error $\Delta \svbv^{(i)}$ for every $i \in \normalbraces{1, \cdots, n/2}$. For $i \in \normalbraces{n/2 + 1, \cdots, n}$, estimating $\TrueExternalFieldI$ and $\Delta \svbv^{(i)}$ is straightforward since $\TrueExternalFieldI = \phi^{\star}$ and $\Delta \svbv^{(i)} = 0$. We provide our guarantee on estimating $\TrueParameterMatrix$, $\TrueExternalFieldI$ for $i \in [n]$, and $\Delta \svbv^{(i)}$ for $i \in [n]$ below with a proof in \cref{proof_impute_missing_covariates}.

\begin{proposition}[{\impute}]
\label{prop_impute_missing_covariates}
Suppose the eigenvalues of $\tbf{B}\tp\tbf{B}$ are lower bounded by $\kappa {p}$ for some $\kappa >0 $ where $\tbf{B}  \!\defn\! \begin{bmatrix} \phi^{\star}, -2 \Phi_1^{\star}, \cdots, -2 \Phi_{p_v}^{\star} \end{bmatrix}\!\in \! \Reals^{p\times (p_v+1)}$. Then, for any fixed $\varepsilon_1 > 0$ and $\delta \in (0,1)$, there exists estimates $\EstimatedParameterMatrix$ and  $\bigbraces{\EstimatedExternalFieldI}_{i=1}^{n}$ such that, 
with probability at least $1-\delta$,
\begin{align}
\matnorm{\EstimatedParameterMatrix - \TrueParameterMatrix}_{2,\infty}  \leq & \varepsilon_1  \qquad \qquad \qquad \qquad \qquad \hspace{0.2cm} \qquad  \qtext{when} n  \geq \frac{ce^{c'\bGM} \log\frac{p}{\sqrt{\delta}}}{\varepsilon_1^2},\label{mse_measurement_error_Theta}\intertext{and}
\max_{i\in[n]}
    \mathrm{MSE}(\EstimatedExternalFieldI, \! \TrueExternalFieldI)
    \leq & \max\Bigbraces{\varepsilon_1^2, \frac{ce^{c'\bGM} \bigparenth{p_v \!+\! \log \normalparenth{\log \frac{np}{\delta}}}}{p}} \stext{when}
    n \geq \frac{ce^{c'\bGM} \bigparenth{\log \frac{\sqrt{n}p}{\sqrt{\delta}} \!+\! p_v}}{\varepsilon_1^2}\label{mse_measurement_mse}.
\end{align}
Further, for any fixed $\varepsilon_2 > 0$, if $\varepsilon_2 \leq \frac{1}{8} \sqrt{\frac{p}{p_v+1}}$, there exist estimates $\bigbraces{\what{\Delta \svbv}^{(i)}}_{i=1}^{n}$ such that,
\begin{align}
    \max_{i\in[n]}
    \stwonorm{\what{\Delta \svbv}^{(i)}- \Delta \svbv^{(i)} }^2 \leq\max\Bigbraces{\frac{c_1 \varepsilon_2^2 \kappa}{p_v+1} , \dfrac{ce^{c'\bGM} \bigparenth{p_v \!+\! \log \normalparenth{\log \frac{np}{\delta}}}}{p \kappa}} \!+\! \varepsilon_2^2 \kappa,
    \label{mse_measurement_error}
\end{align}
with probability at least $1-\delta$, whenever $n \geq ce^{c'\bGM} \kappa^{-2} \varepsilon_2^{-2} (p_v\!+\!1) \bigparenth{\log \frac{\sqrt{n}p}{\sqrt{\delta}} + p_v}$.
\end{proposition}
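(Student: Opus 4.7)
The plan is to decouple the problem into three stages: (i) estimate the population-level parameter $\Theta^\star$ using the ``clean'' half of the sample, (ii) use this estimate to place the unit-level parameters $\theta^{\star(i)}$ in an (approximately) known linear-combination class and apply \cref{theorem_parameters} to recover them, and (iii) invert the linear map $\mathbf{a} \mapsto \mathbf{B}^\star \mathbf{a}$ to extract $\Delta\svbv^{(i)}$. The key structural observation is that, from \cref{eq_x_given_v_application}, for every $i\in[n]$,
\begin{align}
\TrueExternalFieldI \;=\; \mathbf{B}^\star\, \mathbf{a}^{(i)}, \qquad \mathbf{a}^{(i)} \defn (1, \Delta v_1^{(i)}, \ldots, \Delta v_{p_v}^{(i)})^\top \in \Reals^{p_v+1},
\end{align}
with $\mathbf{B}^\star = [\phi^\star,\, -2\Phi_1^\star,\, \ldots,\, -2\Phi_{p_v}^\star]$; thus the whole collection $\{\TrueExternalFieldI\}_{i=1}^n$ lives in a linear-combination class of exactly the form in \cref{exam:lc_dense} with $k=p_v+1$ known vectors, once $\mathbf{B}^\star$ is known.

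For step (i), observe that units $i\in\{n/2+1,\ldots,n\}$ satisfy $\Delta\svbv^{(i)}=0$, so $\TrueExternalFieldI=\phi^\star$ is identical across them. The loss in \cref{def-loss-function}, restricted to these units and with a single shared external-field parameter, collapses to the standard i.i.d. pseudo-likelihood-style loss of \cite{ShahSW2021A}. Invoking the specialization of \cref{theorem_parameters} in the identical-parameter case (recovering, up to constants, the i.i.d.\ guarantee of \citet[Lemma 9.1]{ShahSW2021A} referenced after \cref{cor_params}) on $n/2$ samples delivers estimates $\EstimatedParameterMatrix$ and $\what\phi$ with $\matnorm{\EstimatedParameterMatrix-\TrueParameterMatrix}_{2,\infty}\le \varepsilon_1$ and $\|\what\phi-\phi^\star\|_\infty\le \varepsilon_1$ whenever $n\gtrsim e^{c'\beta}\log(p/\delta)/\varepsilon_1^2$, which is the content of \cref{mse_measurement_error_Theta}.

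For step (ii), form $\what{\mathbf{B}}\defn[\what\phi,-2\EstimatedParameterRowt[1],\ldots,-2\EstimatedParameterRowt[p_v]]$ and define the candidate parameter class
\begin{align}
\Lambda_\theta \;\defn\; \bigbraces{\what{\mathbf{B}}\,\mathbf{a}: \mathbf{a}\in\Reals^{p_v+1},\ \sinfnorm{\what{\mathbf{B}}\mathbf{a}}\le \alpha'}
\end{align}
for a slightly inflated radius $\alpha'$. By the row-sparsity assumption on $\Phi^\star$, the coordinate-wise closeness of $\what{\mathbf{B}}$ and $\mathbf{B}^\star$, and the bound $\sinfnorm{\Delta\svbv^{(i)}}\le\alpha$, every $\TrueExternalFieldI$ is $O(\varepsilon_1 \sqrt{p_v/\kappa})$-close in $\ell_2$ to an element of $\Lambda_\theta$. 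Now apply \cref{theorem_parameters} jointly to all $n$ units, using the metric-entropy bound from \cref{exam:lc_dense} with $k=p_v+1$; the bias from $\what{\mathbf{B}}\ne\mathbf{B}^\star$ only inflates the error budget by a term of the same order as the first-stage rate, and together with the statistical error from \cref{eq:simplified_bound_theta} yields \cref{mse_measurement_mse}. (For the clean units one can instead simply set $\EstimatedExternalFieldI=\what\phi$, which trivially satisfies the bound.)

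For step (iii), define $\what{\mathbf{a}}^{(i)} \defn \argmin_{\mathbf{a}} \twonorm{\EstimatedExternalFieldI - \what{\mathbf{B}}\mathbf{a}}^2$ (under a box constraint $\sinfnorm{\mathbf{a}}\le\alpha$) and read off $\what{\Delta\svbv}^{(i)}$ as its last $p_v$ coordinates. Since $\mathbf{B}^{\star\top}\mathbf{B}^\star$ has eigenvalues at least $\kappa p$ and $\what{\mathbf{B}}$ is close to $\mathbf{B}^\star$ (so that for small enough $\varepsilon_1$ the smallest eigenvalue of $\what{\mathbf{B}}^\top\what{\mathbf{B}}$ is also $\Omega(\kappa p)$ by Weyl), the least-squares map is well-conditioned:
\begin{align}
\twonorm{\what{\mathbf{a}}^{(i)}-\mathbf{a}^{(i)}}^2 \;\lesssim\; \frac{\twonorm{\EstimatedExternalFieldI-\TrueExternalFieldI}^2}{\kappa p} + \frac{\twonorm{(\what{\mathbf{B}}-\mathbf{B}^\star)\mathbf{a}^{(i)}}^2}{\kappa p}.
\end{align}
Substituting the bounds from steps (i)--(ii) and using $\|\mathbf{a}^{(i)}\|_2\le \sqrt{p_v+1}\,\alpha$ to control the second term gives \cref{mse_measurement_error} with the stated sample complexity; the condition $\varepsilon_2\le \tfrac18\sqrt{p/(p_v+1)}$ is what ensures the perturbation of $\what{\mathbf{B}}^\top\what{\mathbf{B}}$ stays below half the smallest eigenvalue so Weyl's inequality applies.

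The main obstacle is bookkeeping in step (ii): we must track how the first-stage error in $\what{\mathbf{B}}$ creates a model-misspecification bias for the linear-combination class $\Lambda_\theta$, combine it additively (rather than multiplicatively) with the statistical error from \cref{theorem_parameters}, and verify that \cref{ass_pos_eigenvalue} continues to hold for the marginal of $(\crvbv,\rvba,\rvby)$ conditioned on $\Delta\svbv$ (which follows by the same argument as for the conditioning on $\svbz$ in the main text). Once this is handled, steps (i) and (iii) are direct invocations of prior i.i.d.\ results and elementary perturbation theory, respectively.
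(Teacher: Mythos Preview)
Your proposal is correct and follows essentially the same three-stage approach as the paper: (i) estimate $(\phi^\star,\Phi^\star)$ from the clean half via the i.i.d.\ guarantee, (ii) place $\TrueExternalFieldI$ in the linear-combination class with basis $\what{\mathbf{B}}$ and apply the Part~II analysis of \cref{theorem_parameters} with the metric entropy of \cref{exam:lc_dense} (tracking the misspecification error $\what{\mathbf{B}}-\mathbf{B}^\star$ exactly as you describe), and (iii) recover $\what{\mathbf a}^{(i)}$ via the well-conditioning of $\what{\mathbf{B}}^\top\what{\mathbf{B}}$ established through Weyl's inequality. The only cosmetic difference is that in the paper $\what{\mathbf a}^{(i)}$ is read off directly from the step-(ii) optimization (since $\EstimatedExternalFieldI$ is constrained to lie in $\what{\mathbf{B}}\cdot\Reals^{p_v+1}$) rather than computed by a separate least-squares fit, and the paper explicitly substitutes the sharper i.i.d.\ rate from step~(i) into the Part~II sample-complexity bound rather than invoking \cref{theorem_parameters} as a black box---both of which you effectively anticipate.
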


The above guarantees can be simplified as follows by treating $\bGM$ and $\kappa$ as constants as well as ignoring the constants, and the logarithmic factors in $n$ and $\delta$ (denoted by $\precsim$ and $\succsim$): for any $\varepsilon_1 > 0$ and $\frac{1}{8} \sqrt{\frac{p}{p_v+1}} \geq \varepsilon_2 > 0$
\begin{align}
\matnorm{\EstimatedParameterMatrix \!-\! \TrueParameterMatrix}_{2,\infty} & \leq  \varepsilon_1   \qquad  \qquad  \qquad  \qquad \qtext{when} n  \succsim \frac{\log p}{\varepsilon_1^2}, \label{mse_measurement_error_Theta_simplified}\\
\max_{i\in[n]} \mathrm{MSE}(\EstimatedExternalFieldI, \TrueExternalFieldI) & \precsim  \max\Bigbraces{\varepsilon_1^2, \dfrac{p_v}{p}}  \qquad  \hspace{0.4cm} \qtext{when} n \succsim \frac{\log p + p_v}{\varepsilon_1^2}, \label{mse_measurement_mse_simplified}
\intertext{and}
\max_{i\in[n]} \stwonorm{\what{\Delta \svbv}^{(i)}- \Delta \svbv^{(i)} }^2 & \precsim \max\Bigbraces{\frac{\varepsilon_2^2}{p_v}, \dfrac{p_v}{p}} + \varepsilon_2^2 \hspace{3mm} \qtext{when}  n \succsim \frac{p_v (\log p + p_v)}{\varepsilon_2^2}. \label{mse_measurement_error_simplified}
\end{align}
For large $n$, whenever, $\max\bigbraces{\varepsilon_1^2, \frac{p_v}{p}}  = \frac{p_v}{p}$ and $\max\bigbraces{\frac{\varepsilon_2^2}{p_v}, \frac{p_v}{p}} = \frac{p_v}{p}$, the guarantees in \cref{mse_measurement_mse_simplified,mse_measurement_error_simplified} can be written as
\begin{align}
\max_{i\in[n]} \mathrm{MSE}(\EstimatedExternalFieldI, \TrueExternalFieldI) & \precsim  \dfrac{p_v}{p} \qquad  \qtext{when} n \succsim \frac{p \log p}{p_v},\label{measurement_mse_simple}
\intertext{and}
\max_{i\in[n]} \stwonorm{\what{\Delta \svbv}^{(i)}- \Delta \svbv^{(i)} }^2 & \precsim \dfrac{p_v^2}{p} \qquad \qtext{when} n \succsim \frac{p \log p}{p_v}. \label{measurement_error_simple}
\end{align}

\paragraph{Remark} The measurement errors can be recovered well as long as enough units with no measurement error are observed (i.e., $n/2$ is large) and the observation per unit is high dimensional (i.e., $p$ is large compared to $p_v^2$). We note that the quadratic dependence (on $p_v$) in \cref{measurement_error_simple} arises because of the error in expressing $\TrueExternalFieldI$ as a linear combination of known vectors. In contrast, we get a linear dependence (on $k$) in \cref{cor_params}\cref{item:lc_dense} where there is no error in expressing $\TrueExternalFieldI$ as a linear combination of known vectors (via \cref{exam:lc_dense}).

\subsection{Simulations} 
We now present some simulation results to empirically evaluate the error scaling of our parameter estimates with three key aspects of the application above: number of units $n$, dimension $p$, and dimension $p_v$ of covariates with measurement error.

\paragraph{Data generation}
We choose $\cX = [-1,1]$ and $p_a = p_y = (p-p_v)/2$. The true joint  distribution~\cref{eq_joint_distribution_vay} of $\rvbw \defn (\rvbv, \rvba, \rvby)$ is set as a truncated Gaussian distribution with the parameters $\phi^{\star} = \mathbf{1} \in \Reals^{p}$ and  
 a positive definite $\Phi^{\star} \in \Reals^{p \times p}$ generated using  \textit{sklearn} package \citep{pedregosa2011scikit} 
 such that $\aGM = 6$, $\bGM = 4$, and 
 $\kappa = 0.15$.
 We draw $n$ i.i.d. samples  $\normalbraces{\svbw^{(i)}}_{i = 1}^{n}$ from this true distrbution using \textit{tmvtnorm} package \citep{wilhelm2010tmvtnorm}.
Next, we generate $\Delta \svbv^{(i)}$ uniformly from $[0.9, 1]^{p_v}$ for units $i \in \normalbraces{1, \cdots, n/2}$ while setting $\Delta \svbv^{(i)} =\mathbf{0}$ for other units.
Combining $\normalbraces{\svbw^{(i)}}_{i = 1}^{n}$ and $\normalbraces{\Delta \svbv^{(i)}}_{i = 1}^{n}$ yields $\normalbraces{\svbx^{(i)}}_{i = 1}^{n}$ (see \cref{eq_x_given_v_application}).

 \begin{figure}[!t]
    \centering
    \begin{tabular}{c}
        \includegraphics[width=1\textwidth]{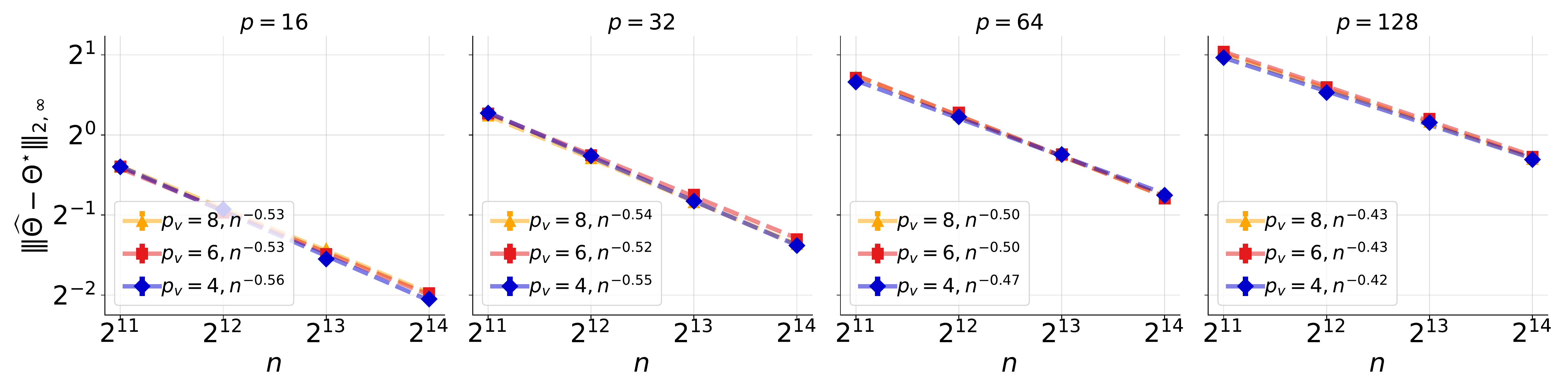}\\
        (a)\\ 
        \includegraphics[width=1\textwidth]{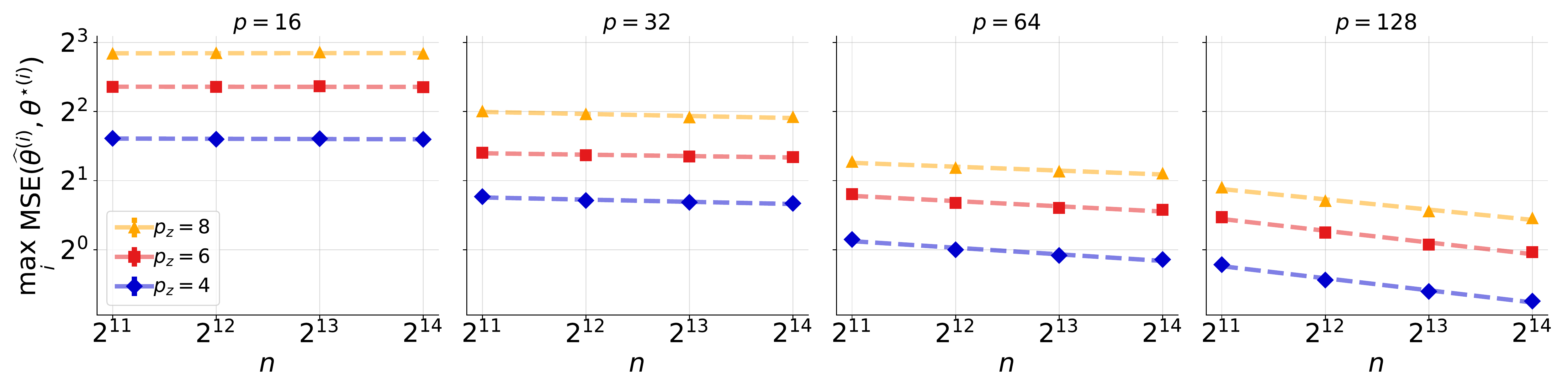}\\
        (b)\\
        \includegraphics[width=1\textwidth]{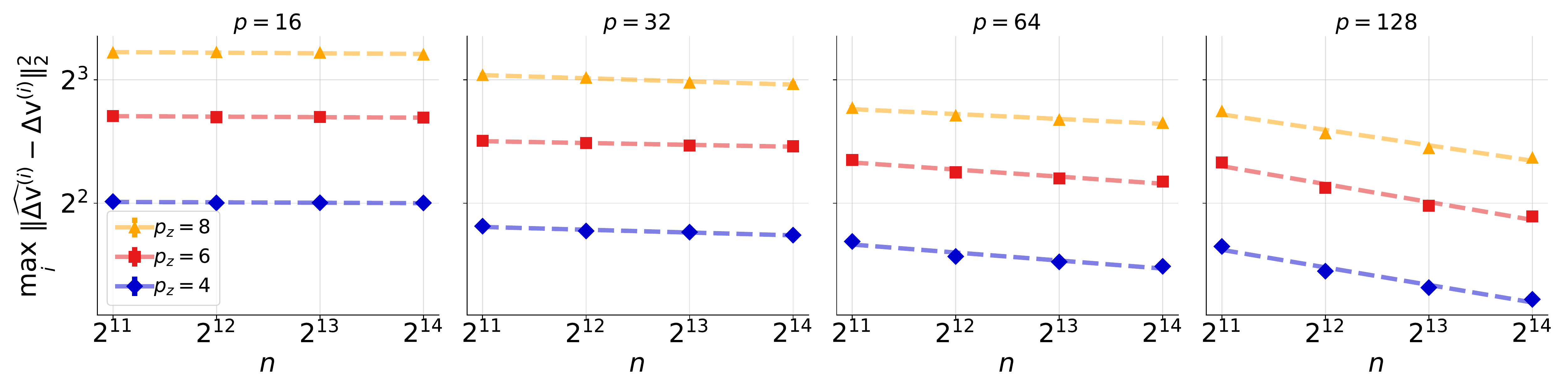}\\
        (c)
    \end{tabular}
    \caption{
    Error scaling with number of units $n$, for various $p$ and $p_v$, for our estimates of $\TrueParameterMatrix$ (top row), $\sbraces{\TrueExternalFieldI}_{i=1}^{n}$ (middle row), and $\sbraces{\Delta \svbv^{(i)}}_{i=1}^{n}$ (bottom row).}
    \label{fig:vs_p_n}
\end{figure}

\paragraph{Plot details} 
In \cref{fig:vs_p_n}, we plot the scaling of errors in our estimates for $\TrueParameterMatrix$ in the top row, $\sbraces{\TrueExternalFieldI}_{i=1}^{n}$ in the middle row, and $\sbraces{\Delta \svbv^{(i)}}_{i=1}^{n}$ in the bottom row. In particular, we present how the error scales as the dimension $n$ grows for various $p$ and $p_v$. We plot the averaged error across 50 independent trials along with $\pm 1$ standard error (the standard error is too small to be visible in our results). 
To help see the error scaling, we provide the least squares fit on the log-log scale (log error vs log x-axis). We display the best linear fit and mention an empirical decay rate in the legend based on the slope of that fit, e.g., for a slope of $-0.56$ for estimating $\TrueParameterMatrix$ when $p = 16$ and $p_v = 4$, we report an empirical rate of $n^{-0.56}$ for the averaged error. In the middle row and the bottom row of \cref{fig:vs_p_n}, the rates vary from $n^{0.00}$ to $n^{-0.17}$, and we omit these weak dependencies in the legend to reduce clutter.

\paragraph{Error scaling for $\EstimatedParameterMatrix$}
From the first row of \cref{fig:vs_p_n}, we observe that the error $\matnorm{\EstimatedParameterMatrix \!-\! \TrueParameterMatrix}_{2,\infty}$ admits a scaling of  between $n^{-0.56}$ and $n^{-0.42}$ for various $p$ and $p_v$. These empirical rates indicate a parametric error rate of $n^{-0.5}$ for $\matnorm{\EstimatedParameterMatrix \!-\! \TrueParameterMatrix}_{2,\infty}$, consistent with the scaling of $\varepsilon^{-2}$ in \cref{mse_measurement_error_Theta_simplified}. Further, as expected, the error $\matnorm{\EstimatedParameterMatrix \!-\! \TrueParameterMatrix}_{2,\infty}$ does not depend on $p_v$ but increases with an increase in $p$.

\paragraph{Error scaling for $\EstimatedExternalFieldI$}
In the middle row of \cref{fig:vs_p_n}, we see the error $\max_{i\in[n]}\mathrm{MSE}(\EstimatedExternalFieldI\!, \TrueExternalFieldI)$ has a weak dependence on $n$ for a fixed $p$ and $p_v$, decreases with an increase in $p$ for any fixed $n$ and $p_v$, and increases with an increase in $p_v$ for any fixed $n$ and $p$.  
This is consistent with \cref{mse_measurement_mse_simplified} when $\max\bigbraces{\varepsilon_1^2, \frac{p_v}{p}} \!=\! \frac{p_v}{p}$ (see \cref{measurement_mse_simple}). Further, we note that the decay of the error with $p$ is slower for smaller $n$ (cf. $n = 2^{11}$ vs $n = 2^{14}$). This is expected from \cref{mse_measurement_mse_simplified} where the $n$  required to ensure $\max\bigbraces{\varepsilon_1^2, \frac{p_v}{p}} \!=\! \frac{p_v}{p}$ increases with an increase in $p$. As a result, for larger $p$, $\varepsilon_1^2$ comes into the picture explaining the increased dependence of the error on $n$ (cf. $p = 16$ vs $p = 128$).

\paragraph{Error scaling for $\what{\Delta \svbv}^{(i)}$ \!}
The trends in the error $\max_{i\in[n]}\!\stwonorm{\what{\Delta \svbv}^{\!(i)}\!-\! \Delta \svbv^{(i)}}^2$ are similar to the error $\max_{i\in[n]}\mathrm{MSE}(\EstimatedExternalFieldI, \TrueExternalFieldI)$. In the bottom row of \cref{fig:vs_p_n}, we see $\max_{i\in[n]}\!\stwonorm{\what{\Delta \svbv}^{\!(i)}\!-\! \Delta \svbv^{(i)} }^2$ has a weak dependence on $n$ for a fixed $p$ and $p_v$, decreases with an increase in $p$ for any fixed $n$ and $p_v$, and increases with an increase in $p_v$ for any fixed $n$ and $p$. This is consistent with \cref{mse_measurement_error_simplified} when $\max\bigbraces{\frac{\varepsilon_2^2}{p_v}, \frac{p_v}{p}} \!=\! \frac{p_v}{p}$ (see \cref{measurement_error_simple}). For the same reason mentioned in the previous paragraph, we see a slower decay in the error with $p$  for smaller $n$ (cf. $n = 2^{11}$ vs $n = 2^{14}$), and a higher dependence of the error on $n$ for larger $p$ (cf. $p = 16$ vs $p = 128$).

\section{Proof Sketch for \cref{theorem_parameters}: \parammainresultname}
\label{sec_proof_sketch}
Our proof of \cref{theorem_parameters} proceeds in two stages (see \cref{fig:proof_sketch} for an overview). First, we establish \cref{eq:matrix_guarantee} for estimating $\TrueParameterMatrix$. Next, we use this guarantee to establish the unit-level guarantee~\cref{eq:theta_guarantee} for each of $\bigbraces{\TrueExternalFieldI[1], \cdots, \TrueExternalFieldI[n]}$ by substituting $\ParameterMatrix = \EstimatedParameterMatrix$ in \cref{eq_estimated_parameters}, i.e.,  analyzing the following convex optimization problem: 
\begin{align}
    \normalbraces{\EstimatedExternalFieldI[1], \cdots, \EstimatedExternalFieldI[n]} \in \argmin_{\normalbraces{\ExternalFieldI[1], \cdots, \ExternalFieldI[n]} \in \ParameterSet_{\ExternalField}^n} \loss\bigparenth{\EstimatedParameterMatrix, \ExternalFieldI[1], \cdots, \ExternalFieldI[n]}. \label{eq_decomposed_second_minimum}
\end{align}

\subsection{Estimating the population-level parameter}
\label{sec_proof_sketch_pop}
In the first part, we show that all points $\ExtendedParameterMatrix \in \ParameterSet_{\ParameterMatrix} \times \ParameterSet_{\ExternalField}^n$, such that $\stwonorm{\ParameterRowt - \TrueParameterRowt} \geq \varepsilon$ for at least one $t \in [p]$, uniformly satisfy 
\begin{align}
    \loss(\ExtendedParameterMatrix) \geq \loss(\ExtendedTrueParameterMatrix) + \Omega(\varepsilon^2) \stext{for} n \geq \frac{ce^{c'\bGM} p^2}{\varepsilon^4} \cdot \Bigparenth{p \log \frac{p}{\delta \varepsilon^2} + \metric_{\ExternalField, n}\big(\varepsilon^2 \big)},
    \label{eq_proof_sketch_thm_edge}
\end{align}
with probability at least $1-\delta$. Then, we conclude the proof using contraposition. 

To prove \cref{eq_proof_sketch_thm_edge}, we first decompose the convex (and positive) objective $\loss(\ExtendedParameterMatrix)$ in \cref{eq:loss_function} as a sum of $p$ convex (and positive) auxiliary objectives $\loss_t$, namely, $\loss(\ExtendedParameterMatrix) = \sump \loss_t\bigparenth{\ExtendedParameterRowT}$ where
\begin{align}
    \loss_t\bigparenth{\ExtendedParameterRowT} \defn \frac{1}{n} \sumn[i] \exp\Bigparenth{-\normalbrackets{\ExternalFieldtI + 2\EstimatedParameterRowttt\tp \svbx_{-t}^{(i)}} x_t^{(i)} - \EstimatedParameterTU[tt] \Bigbrackets{[x_t^{(i)}]^2 - \frac{\xmax^2}{3}}}.
    \label{eq:loss_function_p_original}
\end{align}
Next, for any fixed $t \in [p]$, $\varepsilon > 0$, and $\ExtendedParameterMatrix \in \ParameterSet_{\ExternalField}^n \times \ParameterSet_{\ParameterMatrix}$ with $\stwonorm{\ParameterRowt - \TrueParameterRowt} \geq \varepsilon$, we show (see \cref{lemma_parameter})
\begin{align}
    \loss_t(\ExtendedParameterRowT) \geq \loss_t(\ExtendedTrueParameterRowT) + \Omega(\varepsilon^2) - \varepsilon_1 \qtext{whenever} n \geq \frac{c e^{c'\bGM}\log \frac{p}{\delta}}{\varepsilon_1^2} , \label{eq_firststage_key_step}
\end{align}
and then establish the same bound uniformly for all $t \in [p]$ with probability $1-\delta$. Taking a sum over $t$ on both sides of \cref{eq_firststage_key_step}, we conclude that for any fixed  
$\ExtendedParameterMatrix$ with $\stwonorm{\ParameterRowt - \TrueParameterRowt} \geq \varepsilon$ for some $t \in [p]$, 
\begin{align}
    \loss(\ExtendedParameterMatrix) \geq \loss(\ExtendedTrueParameterMatrix) + \Omega(\varepsilon^2) \qtext{whenever} n \geq \frac{c e^{c'\bGM} p^2 \log \frac{p}{\delta}}{\varepsilon^4}, \label{eq_proof_sketch_first_part_summed}
\end{align}
with probability at least $1\!-\!\delta$ where we substituted $\varepsilon_1 \!=\! c\varepsilon^2/p$. Finally, we conclude \cref{eq_proof_sketch_thm_edge} by using \cref{eq_proof_sketch_first_part_summed}, the Lipschitzness of  $\loss$ (see \cref{lemma_lipschitzness_first_stage}), and a covering number argument (see \cref{sec:proof_of_theorem_parameters}). 

We establish \cref{eq_firststage_key_step} (\cref{lemma_parameter}) via \cref{lemma_conc_first_sec_der}, which provides suitable concentration and anti-concentration results for the first-order and second-order derivatives, respectively, for the auxiliary objective $\loss_t$ in \cref{eq:loss_function_p_original}. We prove \cref{lemma_conc_first_sec_der} by extending the results from \cite{ShahSW2021A} to the setting with non-identical but independent samples $\normalbraces{\svbx^{(i)}\sim \TrueJointDistfunT}_{i = 1}^{n}$.

\begin{figure}[t!]
\resizebox{\textwidth}{!}
{%
\begin{tikzpicture}[scale=0.8, >=stealth,main node/.style={shape=rectangle,draw,rounded corners,}]
    \node[main node][very thick, fill=red!0, font = {\large}] (t2) {\begin{tabular}{c} \cref{theorem_parameters}: Part II: \\ Proof of \cref{eq:theta_guarantee} for unit-level \\parameters $\normalbraces{\TrueExternalFieldI}_{i = 1}^{n}$ \end{tabular}};
    \node[main node][very thick, font = {\large}] (l1) [right=3.1 cm of t2]{\begin{tabular}{c} \cref{lemma_conc_first_sec_der_stage_two}: \\ Concentration of \\gradient and Hessian of\\ loss functions $\normalbraces{\loss^{(i)}}_{i = 1}^{n}$ \end{tabular}};
    \node[main node][very thick, fill=red!0, font = {\large}] (t1) [below=0.9 cm of l1]{\begin{tabular}{c} \cref{theorem_parameters}: Part I: \\ Proof of \cref{eq:matrix_guarantee} for population-\\level parameter $\TrueParameterMatrix$ \end{tabular}};
    \node[main node][very thick, fill=orange!0, font = {\large}] (p3) [right=2.6 cm of l1]{\begin{tabular}{c} \cref{thm_main_concentration}:\\ Tail bounds\\ under LSI \end{tabular}};
    \node[main node][very thick, fill=orange!0, font = {\large}] (p2) [above=1.5cm of p3]{\begin{tabular}{c} \cref{thm_LSI_main}:\\ LSI for weakly\\ dependent RVs \end{tabular}};
    \node[main node][very thick, fill=orange!0, font = {\large}] (p4) [below=1.75 cm of p3]{\begin{tabular}{c} \cref{lemma_conditioning_trick}:\\ Identifying weakly\\dependent RVs in \\exponential family \end{tabular}};
    \node[main node][very thick] (l2) [above=0.7 cm of l1, font = {\large}]{\begin{tabular}{c} \cref{lemma_tenorization_kld} + \cref{lemma_dobrushin_implies_tensorization}:\\ Approximate \\
    tensorization of \\ entropy for weakly \\ dependent RVs \end{tabular}};
    \node at (5.3,0.3) (lipschitzness) {\large (i) \cref{lemma_parameter_single_external_field}};
    \node at (5.3,-0.3) {\large (ii) \cref{lemma_lipschitzness}};
    \node[main node][very thick, font = {\large}] (l3) [above left=2.5cm of l1]{\begin{tabular}{c} \cref{lemma_reverse_pinsker}:\\ Reverse-Pinkser \\inequality \end{tabular}};
    \draw[->,  line width=.6mm] (t1) to[out=90,in=270] (l1);
    \node at (11.9,-2.0) {\large \cref{lemma_expected_psi_upper_bound}};
    \draw[->,  line width=.6mm] (p2) to[out=270,in=90] (p3);
    \draw[->,  line width=.6mm] (p3) to[out=180,in=0] (l1);
    \draw[->,  line width=.6mm] (p4) to[out=160,in=340] (l1);
    \draw[->,  line width=.6mm] (l2) to[out=5,in=180] (p2);
    \draw[->,  line width=.6mm] (l3) to[out=0,in=170] (l2);
    \draw[->,  line width=.6mm] (l1) to[out=180,in=0] (t2);
    \node at (15.2,0.3) {\large \cref{coro}};
    \draw[->,  line width=.6mm] (0.2, -3.5) to (6.75, -3.5);
    \node at (3.4,-3.2) {\large Extend \cite{ShahSW2021A}};
    \node at (3.5,-3.8) {\large to non-identical samples};
    \draw[->,  line width=.6mm] (10.4, -5.7) to (16.6, -5.7);
    \node at (13.5,-5.4) {\large Extend \cite{DaganDDA2021}};
    \node at (13.5,-6.0) {\large to continuous RVs};
    \draw[->,  line width=.6mm] (1.6, 2.7) to (6.8, 2.7);
    \node at (4.2,3.0) {\large Extend \cite{Marton2015}};
    \node at (4.2,2.4) {\large to continuous RVs};
\end{tikzpicture}
}
  \caption{\tbf{Sketch diagram of the results and the proof techniques for \cref{theorem_parameters}.}
  First, we establish~\cref{eq:matrix_guarantee} for estimating $\TrueParameterMatrix$ by extending \citet[Proposition I.1, Proposition I.2]{ShahSW2021A} for i.i.d. data to non-identical samples. Next, we use \cref{eq:matrix_guarantee} to establish \cref{eq:theta_guarantee} for the unit-level parameters $\normalbraces{\TrueExternalFieldI}_{i = 1}^{n}$ via suitable concentration results for derivatives of the auxiliary loss functions in k\cref{eq:loss_function_n_original}. 
  En route, we establish three results of independent interest: (i) \cref{thm_LSI_main} that shows that weakly dependent and bounded random variables satisfy logarithmic Sobolev inequality (LSI) by both extending \citet[Theorem. 1, Theorem. 2]{Marton2015} and establishing a reverse-Pinkser  inequality to continuous random vectors; (ii)  \cref{thm_main_concentration} that extends the tail bounds \citet[Theorem. 6]{DaganDDA2021} to continuous distributions satisfying LSI; and (iii)
\cref{lemma_conditioning_trick} that extends the conditioning trick \citet[Lemma. 2]{DaganDDA2021} for identifying a weakly dependent subset to continuous random vectors.}
    \label{fig:proof_sketch}
\end{figure}

\subsection{Estimating the unit-level parameters}
\label{sec_proof_sketch_unit}
In the second part, we decompose the convex optimization problem in \cref{eq_decomposed_second_minimum} into $n$ convex optimization problems:
\begin{align}
    \loss^{(i)}\bigparenth{\ExternalFieldI} \defn \sump[t] \exp\Bigparenth{-\bigbrackets{\ExternalFieldtI + 2\EstimatedParameterRowttt\tp \svbx_{-t}^{(i)}} x_t^{(i)} - \EstimatedParameterTU[tt] \bigparenth{[x_t^{(i)}]^2 - \frac{\xmax^2}{3}}}
    \stext{for $i \in [n]$.}
    \label{eq:loss_function_n_original}
\end{align}
Noting that the set $\ParameterSet_{\ExternalField}^n$ places independent constraints on the $n$ unit-level parameters, namely $\ExternalFieldI[i] \in \ParameterSet_{\ExternalField}$, independently for all $ i \in [n]$ and combining \cref{eq:loss_function,eq_decomposed_second_minimum}, we find that
\begin{align}
    \min_{\normalbraces{\ExternalFieldI[1], \cdots, \ExternalFieldI[n]} \in \ParameterSet_{\ExternalField}^n} \!\!\!\! \loss\bigparenth{\EstimatedParameterMatrix, \ExternalFieldI[1], \cdots, \ExternalFieldI[n]} \sequal{\cref{eq:loss_function_n_original}} \frac{1}{n}\!\sumn \min_{\ExternalFieldI[i] \in \ParameterSet_{\ExternalField}} \loss^{(i)}\bigparenth{\ExternalFieldI[i]}
    \!\implies
    \EstimatedExternalFieldI \! \in \argmin_{\ExternalFieldI[i] \in \ParameterSet_{\ExternalField}} \loss^{(i)}\bigparenth{\ExternalFieldI[i]}, \label{eq_opt_problem_unit}
\end{align}
for each $i \in [n]$.
Next, we establish that with probability at least $1-\delta$,
\begin{align}
    \loss^{(i)}\bigparenth{\ExternalFieldI} & \geq  \loss^{(i)}\bigparenth{\TrueExternalFieldI} + R^2(\varepsilon, \delta) \stext{when} n \geq \frac{ce^{c'\bGM} p^4}{\varepsilon^4}  \Bigparenth{p \log \frac{p^2}{\delta \varepsilon^2}  + \tmetric_{\ExternalField,n}\normalparenth{ \varepsilon, \delta}}, \label{eq_secondstage_key_step}
\end{align}
uniformly for all points $\ExternalFieldI \in \ParameterSet_{\ExternalField}$ with $\stwonorm{\ExternalFieldI - \TrueExternalFieldI} \geq R(\varepsilon, \delta)$ (see \cref{eq_radius_node_thm}). We conclude the proof by contraposition with the basic inequality $\loss^{(i)}(\EstimatedExternalFieldI) \leq \loss^{(i)}(\TrueExternalFieldI)$ and a standard union bound over all $i \in [n]$.

The proof of \cref{eq_secondstage_key_step} mimics the same road map as that for \cref{eq_proof_sketch_thm_edge}. \cref{lemma_parameter_single_external_field} shows that for any fixed $\ExternalFieldI \in \ParameterSet_{\ExternalField}$, if $\ExternalFieldI$ is far from $\TrueExternalFieldI$, then with high probability $\loss^{(i)}\bigparenth{\ExternalFieldI}$ is significantly larger than $\loss^{(i)}\bigparenth{\TrueExternalFieldI}$. We prove \cref{lemma_parameter_single_external_field} via  concentration of derivatives of $\loss^{(i)}$~\cref{eq:loss_function_n_original} in \cref{lemma_conc_first_sec_der_stage_two}, this objective's Lipschitznes 
 in \cref{lemma_lipschitzness}, and a covering number argument (see \cref{sec_proof_thm_node_parameters_recovery}).

The proof of \cref{lemma_conc_first_sec_der_stage_two} involves several novel arguments: First, for a $\dGM$-Sparse Graphical Model (\cref{def:tau_sgm}), i.e., a generalization of the random vector $\rvbw$ in \cref{eq_joint_distribution_zvay}, \cref{lemma_conditioning_trick} identifies a   subset that satisfies Dobrushin's uniqueness condition (\cref{def_dobrushin_condition}) after conditioning on the complementary subset. Second, \cref{thm_LSI_main} shows that a bounded and weakly dependent continuous random vector (defined using Dobrushin's uniqueness condition) satisfies the logarithmic Sobolev inequality (LSI). Third, \cref{thm_main_concentration} establishes tail bounds for arbitrary functions of a continuous random vector that satisfies LSI. Putting together these results and a robustness result (\cref{lemma_expected_psi_upper_bound}) while invoking concentration results to account for the estimation error for $\TrueParameterMatrix$, yields \cref{lemma_conc_first_sec_der_stage_two}.
\section{Discussion}
\label{sec_discussion}
We introduce an exponential family approach to learn unit-level counterfactual distributions from a single sample per unit even when there is unobserved confounding. By conditioning on the latent confounders and using a novel convex loss function, we estimate the parameters of unit-level counterfactual distributions given the information about what actually happened. {The resulting estimates of unit-level counterfactual distributions enable us to estimate any functional of each unit's potential outcomes under alternate interventions. We analyze each unit's expected potential outcomes under alternate interventions, thereby providing a guarantee on unit-level counterfactual effects, i.e., individual treatment effects.}
We note that our approach makes only macro-level assumptions about the underlying causal graph and does not assume the knowledge of the micro-level causal graph.

A side product of our results is a strategy for answering interventional questions, e.g., to estimate average treatment effects. These questions are equivalent to estimating distributions of the form $f_{\rvby | \mathrm{do}(\rvba)}(\svby | \mathrm{do}(\rvba = \svba))$ where the do-operator \citep{Pearl2009} forces $\rvba$ to be $\svba$. Under the causal framework considered (\cref{fig_graphical_models}(b)), we have $f_{\rvby | \mathrm{do}(\rvba)}(\svby | \mathrm{do}(\rvba = \svba))=\Expectation_{\rvbv, \rvbz}\normalbrackets{f_{\rvby | \rvba, \rvbz, \rvbv}(\svby | \svba, \svbz, \svbv)}$. Consequently, the mixture distribution $n^{-1} \sum_{i \in [n]} \what{f}^{(i)}_{\rvby | \rvba}(\svby | \svba)$ with $\what{f}^{(i)}_{\rvby | \rvba}(\svby | \svba)$ defined in \cref{eq_counterfactual_distribution_y}, serves as a natural estimate via our strategy. 
Investigating the efficacy of this estimator is an interesting future direction. 

{In this work, the conditional exponential family distribution of $\rvby$ in \cref{subsec_exp_fam} or in \cref{subsec_high_terms} was such that the effect of unobserved covariates $\rvbz$---after conditioning on them---was captured by a first-order interaction term varying with the realized value of $\rvbz$ for each unit, e.g., $\sbraces{\ExternalField(\svbz^{(i)})}_{i=1}^n$for the conditional distribution in \cref{subsec_exp_fam}. Focusing on \cref{subsec_exp_fam}, when one considers higher-order interaction terms in the joint distribution, the conditional distributions would also have higher-order interaction terms (the highest order in the conditional distribution is one less than the highest order in the joint distribution) that vary with $\rvbz$. Focusing on \cref{subsec_high_terms}, the exponent of the exponential tilting of the base distribution of the outcomes by the unobserved covariates could have higher-order terms.} For such cases, while our analysis for population-level parameters (\cref{theorem_parameters} Part I's proof in \cref{sec:proof_of_theorem_parameters}) is likely to extend easily, new arguments for analyzing quadratic (or higher-order) interaction terms that vary for each unit seem necessary. Developing these results, e.g., suitable analogs of Dobrushin's condition for higher-order exponential family, present an exciting future venue for research.

Our methodology can be useful for a class of multi-task learning problems \citep{caruana1997multitask}, e.g., when we have multiple logistic regression tasks with some commonalities. For a logistic regression task, the exponential family model~\cref{eq_conditional_distribution_vay} has been used by \cite{DaganDDA2021} to allow dependencies between the labels via the parameter $\ParameterMatrix$ (instead of assuming independence between the labels), e.g., for spatio-temporal data. They consider a single regression task and assume that the dependency matrix $\ParameterMatrix$ is known up to a constant and learn a task-specific parameter $\ExternalField(\svbz)$ (where $\svbz$ denotes a task). Our model and methodology apply to the case of fully unknown $\ParameterMatrix$ given multiple datasets that share the same dependency parameter $\ParameterMatrix$ but have varying task-specific parameters $\ExternalField(\svbz)$; and provide a tractable way to estimate all these parameters together. 
{In fact, our framework and results also apply beyond the quadratic dependencies captured by $\ParameterMatrix$ as described in \cref{subsec_high_terms}.} Analyzing whether our methodology can be extended beyond logistic regression models for multi-task learning is a question worthy of further investigation.
\subsubsection*{Acknowledgments}
The authors thank Thomas Courtade, Yuzhou Gu, Anuran Makur, Wenlong Mou, Felix Otto, and Yury Polyanskiy for helpful pointers regarding Logarithmic Sobolev inequalities.  The authors thank Yuval Dagan and Anthimos Vardis Kandiros for helpful discussion about \cite{DaganDDA2021} and \cite{KandirosDDGD2021}. The authors also thank Alberto Abadie, Avi Feller, and Martin Wainwright for helpful comments. Lastly, the authors thank the anonymous reviewers of Workshop on Causality for Real-world Impact (NeurIPS 2022) for their comments and suggestions.

\subsubsection*{Funding}
This work was supported, in part, by NSF under Grant No. DMS-2023528 as part of the Foundations of Data Science Institute (FODSI), the MIT-IBM Watson AI Lab under Agreement No. W1771646, MIT-IBM projects on Time Series and Causal Inference as well as project with DSO National Laboratory.
\appendix 
\renewcommand\contentsname{
    \begin{center}
        Appendix
    \end{center}
}
\addtocontents{toc}{\protect\setcounter{tocdepth}{2}}
\tableofcontents

\counterwithin{mylemma}{section}
\counterwithin{mydefinition}{section}
\counterwithin{myproposition}{section}
\counterwithin{mycorollary}{section}

\section{Proper loss function and projected gradient descent}
\label{sec_proper_convex}
{In this section, we prove  \cref{prop_proper_loss_function} showing that the loss function in \cref{eq:loss_function} is a proper loss function. We also 
provide an algorithm to obtain an $\epsilon$-optimal estimate of $\ExtendedEstimatedParameterMatrix$.}

\subsection{Proof of {Proposition \ref{prop_proper_loss_function}}}
\label{sec_proof_proper_loss_function}
Fix any $\svbz \in \cZ^{p_z}$. For every $t \in [p]$, define the following parametric distribution 
\begin{align}
    u_{\rvbx|\rvbz}\bigparenth{\svbx| \svbz; \ExternalFieldt(\svbz), \ParameterRowt} \propto \dfrac{\TrueJointDist}{\ConditionalDistT}, \label{eq_u_distribution}
\end{align}
where $\TrueJointDist$ is as defined in \cref{eq_conditional_distribution_vay} and $\ConditionalDistT$ is as defined in \cref{eq_conditional_dist}. Letting $\cx_t \defn x_t^2 - \xmax^2/3$ and using \cref{eq_conditional_dist}, we can write $u_{\rvbx|\rvbz}\bigparenth{\svbx| \svbz; \ExternalFieldt(\svbz), \ParameterRowt}$ in \cref{eq_u_distribution} as
\begin{align}
    u_{\rvbx|\rvbz}\bigparenth{\svbx| \svbz; \ExternalFieldt(\svbz), \ParameterRowt} \propto \TrueJointDist \exp\bigparenth{ -\normalbrackets{\ExternalFieldt(\svbz) + 2\ParameterRowttt\tp \svbx_{-t}}x_t - \ParameterTU[tt]\cx_t }. \label{eq_u_distribution_2}
\end{align}
Then, we have
\begin{align}
    u_{\rvbx|\rvbz}\bigparenth{\svbx| \svbz; \ExternalFieldt(\svbz), \ParameterRowt} & = \dfrac{\TrueJointDist \exp\bigparenth{\!-\!\normalbrackets{\ExternalFieldt(\svbz) \!+\! 2\ParameterRowttt\tp \svbx_{-t}}x_t \!-\! \ParameterTU[tt]\cx_t }}{\!\!\!\!\int_{\svbx \in \cX^p} \! \TrueJointDist \exp\bigparenth{ \!-\!\normalbrackets{\ExternalFieldt(\svbz) \!+\! 2\ParameterRowttt\tp \svbx_{-t}}x_t \!-\! \ParameterTU[tt]\cx_t } d\svbx}\\
    & = \dfrac{\TrueJointDist \exp\bigparenth{ -\normalbrackets{\ExternalFieldt(\svbz) + 2\ParameterRowttt\tp \svbx_{-t}}x_t - \ParameterTU[tt]\cx_t }}{\Expectation_{\rvbx|\rvbz}\Bigbrackets{\exp\bigparenth{ -\normalbrackets{\ExternalFieldt(\svbz) + 2\ParameterRowttt\tp \svbx_{-t}}x_t - \ParameterTU[tt]\cx_t }}}. \label{eq_u_alternate}
\end{align}
Further, for $\ExternalFieldt(\svbz) = \TrueExternalFieldt(\svbz),$ and $\ParameterRowt = \TrueParameterRowt$, we can write an expression for  $u_{\rvbx|\rvbz}\bigparenth{\svbx| \svbz; \TrueExternalFieldt(\svbz), \TrueParameterRowt}$ which does not depend on $\rvx_t$ functionally. From \cref{eq_conditional_dist}, we have
\begin{align}
    u_{\rvbx|\rvbz}\bigparenth{\svbx| \svbz; \TrueExternalFieldt(\svbz), \TrueParameterRowt} \propto f_{\rvbx_{-t}|\rvbz}\bigparenth{\svbx_{-t}| \svbz; \TrueExternalField(\svbz), \TrueParameterMatrix}. \label{eq_u_star_no_dependence}
\end{align}
Now, consider the difference between $\KLD{u_{\rvbx|\rvbz}\bigparenth{\svbx| \svbz; \TrueExternalFieldt(\svbz), \TrueParameterRowt}}{u_{\rvbx|\rvbz}\bigparenth{\svbx| \svbz; \ExternalFieldt(\svbz), \ParameterRowt}}$ and \\
$\KLD{u_{\rvbx|\rvbz}\bigparenth{\svbx| \svbz; \TrueExternalFieldt(\svbz), \TrueParameterRowt}}{\TrueJointDist}$. We have
\begin{align}
    & \KLD{u_{\rvbx|\rvbz}\bigparenth{\cdot| \svbz; \TrueExternalFieldt(\svbz), \TrueParameterRowt}}{u_{\rvbx|\rvbz}\bigparenth{\cdot| \svbz; \ExternalFieldt(\svbz), \ParameterRowt}} \\
    & \qquad \qquad - \KLD{u_{\rvbx|\rvbz}\bigparenth{\cdot| \svbz; \TrueExternalFieldt(\svbz), \TrueParameterRowt}}{\TrueJointDistfun}\\
    & \sequal{(a)} \int_{\svbx \in \cX^p} \!\!\!\! u_{\rvbx|\rvbz}\bigparenth{\svbx| \svbz; \TrueExternalFieldt(\svbz), \TrueParameterRowt} \log \dfrac{\TrueJointDist}{u_{\rvbx|\rvbz}\bigparenth{\svbx| \svbz; \ExternalFieldt(\svbz), \ParameterRowt}} d\svbx \\
    & \sequal{\cref{eq_u_alternate}} \int_{\svbx \in \cX^p} \!\!\!\! u_{\rvbx|\rvbz}\bigparenth{\svbx| \svbz; \TrueExternalFieldt(\svbz), \TrueParameterRowt} \log \dfrac{\Expectation_{\rvbx|\rvbz}\Bigbrackets{\exp\bigparenth{ -\normalbrackets{\ExternalFieldt(\svbz) + 2\ParameterRowttt\tp \svbx_{-t}}x_t - \ParameterTU[tt]\cx_t }}}{\exp\bigparenth{ -\normalbrackets{\ExternalFieldt(\svbz) + 2\ParameterRowttt\tp \svbx_{-t}}x_t - \ParameterTU[tt]\cx_t }} d\svbx \\
    & = \log \Expectation_{\rvbx|\rvbz}\Bigbrackets{\exp\bigparenth{ -\normalbrackets{\ExternalFieldt(\svbz)+2\ParameterRowttt\tp \svbx_{-t}}x_t -\ParameterTU[tt]\cx_t }} \\
    & \qquad \qquad - \int_{\svbx \in \cX^p}  \!\!\!\! u_{\rvbx|\rvbz}\bigparenth{\svbx| \svbz; \TrueExternalFieldt(\svbz), \TrueParameterRowt} \bigparenth{ \normalbrackets{\ExternalFieldt(\svbz) + 2\ParameterRowttt\tp \svbx_{-t}}x_t + \ParameterTU[tt]\cx_t }   d\svbx \\
    & \sequal{(b)} \log \Expectation_{\rvbx|\rvbz}\Bigbrackets{\exp\bigparenth{ -\normalbrackets{\ExternalFieldt(\svbz) + 2\ParameterRowttt\tp \svbx_{-t}}x_t - \ParameterTU[tt]\cx_t }}, \label{eq_kl_difference}
\end{align}
where $(a)$ follows from the definition of KL-divergence and $(b)$ follows because integral is zero since $u_{\rvbx|\rvbz}\bigparenth{\svbx| \svbz; \TrueExternalFieldt(\svbz), \TrueParameterRowt}$ does not functionally depend on $x_t$ as in \cref{eq_u_star_no_dependence}, and $\int_{x_t \in \cX} x_t dx_t= 0$ and $\int_{x_t \in \cX} \cx_t dx_t= 0$. Now, we can write
\begin{align}
    \Expectation_{\rvbx|\rvbz}\bigbrackets{\loss\bigparenth{\ExtendedParameterMatrix}} & = \frac{1}{n} \sump[t] \sumn[i]  \Expectation_{\rvbx|\rvbz}\Bigbrackets{\exp\bigparenth{ -\normalbrackets{\ExternalFieldt(\svbz^{(i)}) + \ParameterRowttt\tp \svbx_{-t}^{(i)}}x_t^{(i)} - \ParameterTU[tt]\cx_t^{(i)} }} \\
    & \sequal{\cref{eq_kl_difference}} \frac{1}{n} \sump[t] \sumn[i] \exp\Bigparenth{\KLD{u_{\rvbx|\rvbz}\bigparenth{\cdot| \svbz^{(i)}; \TrueExternalFieldt(\svbz^{(i)}), \TrueParameterRowt}}{u_{\rvbx|\rvbz}\bigparenth{\cdot| \svbz^{(i)}; \ExternalFieldt(\svbz^{(i)}), \ParameterRowt}} \\
    &  \qquad \qquad\qquad  - \KLD{u_{\rvbx|\rvbz}\bigparenth{\cdot| \svbz^{(i)}; \TrueExternalFieldt(\svbz^{(i)}), \TrueParameterRowt}}{\TrueJointDistfunT}}.\label{eq_population_loss_kl_relationship}
\end{align}
We note that the parameters 
only show up in the first KL-divergence term in the right-hand-side of \cref{eq_population_loss_kl_relationship}. Therefore, it is easy to see that $\Expectation_{\rvbx|\rvbz}\bigbrackets{\loss\bigparenth{\ExtendedParameterMatrix}}$ is minimized uniquely when $\ExternalFieldt(\svbz^{(i)}) = \TrueExternalFieldt(\svbz^{(i)})$ and $\ParameterRowt = \TrueParameterRowt$ for all $t \in [p]$ and all $i \in [n]$, i.e., when $\ExtendedParameterMatrix = \ExtendedTrueParameterMatrix$.

\subsection{Algorithm}
\label{subsec_alg}
{In this section, we provide a projected gradient descent algorithm to return an $\epsilon$-optimal estimate of the convex optimization in \cref{eq_estimated_parameters}. We note that alternative algorithms (including Frank-Wolfe) can also be used.}

\begin{algorithm}[h]
    \SetCustomAlgoRuledWidth{0.4\textwidth} 
    \KwInput{number of iterations $\tau$, step size $\eta$, $\epsilon$, parameter sets $\ParameterSet_{\ExternalField}$ and $ \ParameterSet_{\ParameterMatrix}$}
    \KwOutput{$\epsilon$-optimal estimate $\ExtendedEstimatedParameterMatrix_{\epsilon}$}
    \KwInitialization{$\ExtendedParameterMatrix^{(0)} = \boldsymbol{0}$} 
    {
    \For{$j = 0,\cdots,\tau$}
    {
        $\ExtendedParameterMatrix^{(j+1)} \leftarrow \argmin_{\ExtendedParameterMatrix \in \ParameterSet_{\ExternalField}^n \times \ParameterSet_{\ParameterMatrix}} \stwonorm{\ExtendedParameterMatrix^{(j)} - \eta  \nabla \loss\bigparenth{\ExtendedParameterMatrix^{(j)}} - \ExtendedParameterMatrix}$
    }
    $\ExtendedEstimatedParameterMatrix_{\epsilon} \leftarrow \ExtendedParameterMatrix^{(\tau+1)}$
    }
    \caption{Projected Gradient Descent}
    \label{alg:GradientDescent}
\end{algorithm}
{We note that, in general, projecting onto the space $\ParameterSet_{\ExternalField}^n \times \ParameterSet_{\ParameterMatrix}$ may not be easy depending on the specific form of $\ParameterSet_{\ExternalField}$. For \cref{exam:lc_dense,exam:sc}, projecting on $\ParameterSet_{\ExternalField}$ is equivalent to projecting onto the $k$-dimensional vector $\mbf a$. For \cref{exam:sc}, the $\ell_0$-sparsity is relaxed to $\ell_1$ sparsity. We also do not focus on any issues that may arise due to the choice of the step size $\eta$.}
\section{Proof of {Theorem \ref{theorem_parameters}} Part I: \edgeparammainresultname}
\label{sec:proof_of_theorem_parameters}

To prove this part, it is sufficient to show that all points $\ExtendedParameterMatrix \in \ParameterSet_{\ParameterMatrix} \times \ParameterSet_{\ExternalField}^n$, such that $\stwonorm{\ParameterRowt - \TrueParameterRowt} \geq \varepsilon$ for at least one $t \in [p]$, uniformly satisfy 
\begin{align}
\loss(\ExtendedParameterMatrix) \geq \loss(\ExtendedTrueParameterMatrix) + \Omega(\varepsilon^2) \stext{for} n \geq \frac{ce^{c'\bGM} p^2}{\varepsilon^4} \cdot \biggparenth{p \log \frac{p}{\delta} + \metric_{\ExternalField,n}\big(\varepsilon^2 \big)},
\label{eq_sufficienct_condition_thm_edge}
\end{align}
with probability at least $1-\delta$. Then, the guarantee in \cref{theorem_parameters} follows from \cref{eq_estimated_parameters} by contraposition.

To that end, we decompose $\loss(\ExtendedParameterMatrix)$ in \cref{eq:loss_function} as a sum of $p$ convex (and positive) auxiliary objectives $\loss_t\bigparenth{\ExtendedParameterRowT}$, i.e., $\loss(\ExtendedParameterMatrix) = \sump \loss_t\bigparenth{\ExtendedParameterRowT}$ where
\begin{align}\loss_t\bigparenth{\ExtendedParameterRowT} \defn \frac{1}{n} \sumn[i] \exp\Bigparenth{\!-\!\normalbrackets{\ExternalFieldtI \!+\! 2\ParameterRowttt\tp \svbx_{-t}^{(i)}} x_t^{(i)}\!-\!\ParameterTU[tt]\cx_t^{(i)}},
\label{eq:loss_function_p}
\end{align}
with $\cx_t^{(i)} = \bigbrackets{x_t^{(i)}}^2 - \xmax^2/3$ and $\ExtendedParameterRowT = \bigbraces{\ExternalFieldtI[1], \cdots, \ExternalFieldtI[n], \ParameterRowt}$ as defined in \cref{eq:loss_function}. The lemma below, proven in \cref{proof_of_lemma_parameter}, shows that for any fixed and feasible $\ExtendedParameterRowT$, if $\ParameterRowt$ is far from $\TrueParameterRowt$, then with high probability $\loss_t\bigparenth{\ExtendedParameterRowT}$ is significantly larger than $\loss_t\bigparenth{\ExtendedTrueParameterRowT}$. The lemma uses 
the following constants that depend on model parameters $\tau \defeq (\aGM, \bGM, \xmax, \ParameterMatrix)$:
\begin{align}
\label{eq:constants}
\cone \!\defeq\! \aGM \!+\! 4 \bGM \xmax
\qtext{and}
\ctwo \defeq \exp{(\xmax(\aGM+ 2\bGM \xmax))}.
\end{align}
\newcommand{\parameterseparation}{Gap between the loss function for a fixed parameter}
\begin{lemma}[{\parameterseparation}]\label{lemma_parameter}
	Consider any $\ExtendedParameterMatrix \in \ParameterSet_{\ExternalField}^n \times \ParameterSet_{\ParameterMatrix}$. Fix any $\delta \in (0,1)$. Then, we have uniformly for all $t \in [p]$
	\begin{align}
	\loss_t\bigparenth{\ExtendedParameterRowT} \geq \loss_t\bigparenth{\ExtendedTrueParameterRowT} + \frac{\lambda_{\min} \twonorm{\ParameterRowt - \TrueParameterRowt}^2}{2\ctwo} - \varepsilon
	\qtext{for} n \geq
	\dfrac{ce^{c'\bGM} \log(p/\delta)}{\varepsilon^2},
	\end{align}
	with probability at least  $1-\delta$, where $\ctwo$ was defined in \cref{eq:constants}. 
\end{lemma}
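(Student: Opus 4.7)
The plan is to apply a second-order Taylor expansion of the convex loss $\loss_t$ around $\ExtendedTrueParameterRowT$, then control the first-order term via concentration (using that its expectation vanishes by \cref{prop_proper_loss_function}) and lower bound the second-order term via the anti-concentration hypothesis \cref{ass_pos_eigenvalue}. Writing $\Delta \defn \ExtendedParameterRowT - \ExtendedTrueParameterRowT$, I would invoke the integral remainder form to obtain, for some $\tExtendedParameterRowT$ on the segment joining $\ExtendedParameterRowT$ and $\ExtendedTrueParameterRowT$ (hence feasible),
\begin{align}
\loss_t\bigparenth{\ExtendedParameterRowT} = \loss_t\bigparenth{\ExtendedTrueParameterRowT} + \angles{\nabla \loss_t\bigparenth{\ExtendedTrueParameterRowT},\, \Delta} + \tfrac{1}{2}\, \Delta\tp \nabla^2 \loss_t\bigparenth{\tExtendedParameterRowT}\, \Delta.
\end{align}
The enabling observation for both terms is that the exponent inside each summand of \cref{eq:loss_function_p} is bounded by $\xmax \cone$ in absolute value on $\ParameterSet_{\ExternalField}^n \times \ParameterSet_{\ParameterMatrix}$, so each exponential weight $w_i(\cdot)$ appearing in $\loss_t$ and its derivatives lies in $[1/\ctwo, \ctwo]$ uniformly over the feasible set.

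For the first-order term, \cref{prop_proper_loss_function} yields $\Expectation[\nabla \loss_t(\ExtendedTrueParameterRowT)] = 0$. The derivative with respect to $\ParameterTU$ is a mean-zero empirical average of $n$ independent $O(\xmax^2\ctwo)$-bounded terms and is therefore $O\bigparenth{\xmax^2 \ctwo \sqrt{\log(p/\delta)/n}}$ uniformly in $u \in [p]$ with probability $\geq 1 - \delta/(2p^2)$ by Hoeffding and a union bound; the derivatives with respect to $\ExternalFieldtI$ each involve only the $i$-th sample, but the weighted sum $\sum_i \partial_{\ExternalFieldtI}\loss_t \cdot \Delta_{\ExternalFieldtI}$ is still a sum of $n$ independent mean-zero summands of bounded range, and concentrates at the rate $O(\aGM \xmax \ctwo /\sqrt{n})$. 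Combining via H\"older with the feasibility bounds $\sonenorm{\Delta_{\ParameterRowt}} \leq 2\bGM$ and $\sinfnorm{\Delta_{\ExternalField}} \leq 2\aGM$, and union-bounding over $t \in [p]$, I would obtain
\begin{align}
\max_{t\in[p]}\bigabs{\angles{\nabla \loss_t\bigparenth{\ExtendedTrueParameterRowT}, \Delta}} \leq \varepsilon/2 \qtext{when} n \geq \frac{ce^{c'\bGM}\log(p/\delta)}{\varepsilon^2}
\end{align}
with probability at least $1-\delta/2$.

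For the Hessian, direct differentiation and the lower bound $w_i \geq 1/\ctwo$ would give
\begin{align}
\Delta\tp \nabla^2 \loss_t\bigparenth{\tExtendedParameterRowT}\Delta \geq \frac{1}{n\ctwo} \sumn[i] \bigparenth{\tilde\Delta^{(i)\tp} \trvbx^{(i)}}^2,
\end{align}
where $\tilde\Delta^{(i)} \in \real^{p+1}$ is the augmented direction $\bigparenth{\Delta_{\ExternalFieldtI},\, \Delta_{\ParameterRowttt},\, \Delta_{\ParameterTU[tt]}}$, aligned with $\trvbx^{(i)}$ of \cref{ass_pos_eigenvalue}. Conditional on $\svbz^{(i)}$, \cref{ass_pos_eigenvalue} then gives
\begin{align}
\Expectation\bigbrackets{\bigparenth{\tilde\Delta^{(i)\tp} \trvbx^{(i)}}^2 \;\bigg|\; \svbz^{(i)}} \geq \lambda_{\min} \stwonorm{\tilde\Delta^{(i)}}^2 \geq \lambda_{\min} \stwonorm{\ParameterRowt - \TrueParameterRowt}^2,
\end{align}
since $\tilde\Delta^{(i)}$ contains the full vector $\Delta_{\ParameterRowt}$. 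A second Hoeffding application to the bounded empirical average (uniformly over $t$) would then yield, with probability at least $1-\delta/2$ under the same sample-size requirement,
\begin{align}
\min_{t\in[p]}\Delta\tp \nabla^2 \loss_t(\tExtendedParameterRowT)\Delta \geq \frac{\lambda_{\min}}{\ctwo}\stwonorm{\ParameterRowt - \TrueParameterRowt}^2 - \varepsilon.
\end{align}
Plugging both bounds into the Taylor expansion and halving the Hessian coefficient (absorbing $\varepsilon/2$ twice into a single $\varepsilon$) completes the proof. The main subtlety will be that $\tExtendedParameterRowT$ is data-dependent and chained to $\ExtendedParameterRowT$, but this is neutralized by extracting the uniform-in-parameter sandwich $w_i \in [1/\ctwo, \ctwo]$ first, so that the remaining Hessian lower bound depends only on the fixed statistics $\svbx^{(i)}$ and is handled by standard bounded-sum concentration.
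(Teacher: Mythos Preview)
Your proposal is correct and follows essentially the same approach as the paper: a second-order Taylor expansion along the segment, Hoeffding for the mean-zero first-order term, the uniform sandwich $w_i\in[1/\ctwo,\ctwo]$ to extract a data-only quadratic form from the Hessian, and then \cref{ass_pos_eigenvalue} plus a second Hoeffding for the anti-concentration. The only cosmetic difference is that the paper bounds the full \emph{directional} derivative $\partial_{\uOmT}\loss_t(\ExtendedTrueParameterRowT)$ as a single mean-zero average of $n$ independent terms bounded by $2\cone\ctwo\xmax$ (one Hoeffding, one union bound over $t$), whereas you split into coordinate-wise partials for $\ParameterRowt$ combined via H\"older plus a direct sum for the $\ExternalFieldtI$ block; both routes deliver the same $n\geq ce^{c'\bGM}\log(p/\delta)/\varepsilon^2$ threshold.
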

\newcommand{\lipschitznesslossfunction}{Lipschitzness of the loss function}
{
	\noindent Next, we show that the loss function $\loss$ is Lipschitz (see \cref{sub:proof_lemma_lipschitzness_first_stage} for the proof).
	\begin{lemma}[{\lipschitznesslossfunction}]\label{lemma_lipschitzness_first_stage}
		Consider any $\ExtendedParameterMatrix, \tExtendedParameterMatrix \in \ParameterSet_{\ExtendedParameterMatrix}$. Then, the loss function $\loss$ is $2\xmax^2 \ctwo$-Lipschitz in a suitably-adjusted $\ell_1$ norm:
		\begin{align}  \bigabs{\loss\bigparenth{\tExtendedParameterMatrix} - \loss\bigparenth{\ExtendedParameterMatrix}} \leq 2\xmax^2 \ctwo \Bigparenth{\sump \sonenorm{\tParameterRowt - \ParameterRowt}  + \frac{1}{n} \sumn \sonenorm{\tExternalFieldI -\ExternalFieldI}}, \label{eq_lipschitz_property_first_stage}
		\end{align}
		where the constant $\ctwo$ was defined in \cref{eq:constants}.
\end{lemma}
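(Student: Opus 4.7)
} The natural approach is to write $\loss(\ExtendedParameterMatrix) = \frac{1}{n}\sum_{t\in[p]}\sum_{i\in[n]} \exp(g_t^{(i)}(\ExtendedParameterRowT))$, where
\begin{align*}
g_t^{(i)}(\ExtendedParameterRowT) \defn -\bigbrackets{\ExternalFieldtI + 2\ParameterRowttt\tp \svbx_{-t}^{(i)}} x_t^{(i)} - \ParameterTU[tt]\cx_t^{(i)},
\end{align*}
is an affine function of the parameters, and then apply the mean value theorem separately to each exponential. Since the feasible set $\ParameterSet_{\ExternalField}^n \times \ParameterSet_{\ParameterMatrix}$ is convex, the line segment joining $\ExtendedParameterMatrix$ to $\tExtendedParameterMatrix$ remains feasible, so I can bound $\exp(g_t^{(i)})$ uniformly along this segment.

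The first step is to bound $g_t^{(i)}$ uniformly on the feasible set. Using $|x_t^{(i)}|\le \xmax$, $|\cx_t^{(i)}|\le \xmax^2$, $\infnorm{\ExternalFieldI}\le \aGM$, and the split $|\ParameterTU[tt]|+\sonenorm{\ParameterRowttt} \le \bGM$, one checks that $|g_t^{(i)}| \le \aGM \xmax + 2\bGM\xmax^2$, so $\exp(g_t^{(i)})\le \ctwo$ along the segment. The second step is to bound the differences: by linearity of $g_t^{(i)}$,
\begin{align*}
\bigabs{g_t^{(i)}(\tExtendedParameterRowT) - g_t^{(i)}(\ExtendedParameterRowT)}
\le \xmax \bigabs{\tExternalFieldtI - \ExternalFieldtI} + 2\xmax^2 \sonenorm{\tParameterRowttt - \ParameterRowttt} + \xmax^2 \bigabs{\tParameterTU[tt] - \ParameterTU[tt]} \le 2\xmax^2 \bigbrackets{\bigabs{\tExternalFieldtI - \ExternalFieldtI} + \sonenorm{\tParameterRowt - \ParameterRowt}},
\end{align*}
where the last inequality loosely bounds $\xmax \le 2\xmax^2$ so that every partial derivative is uniformly controlled by $2\xmax^2$. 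Combining with the mean value bound $|\exp(a)-\exp(b)| \le \max_{c\in[a,b]} e^c\cdot|a-b|$ gives $|\exp(g_t^{(i)}(\tExtendedParameterRowT)) - \exp(g_t^{(i)}(\ExtendedParameterRowT))| \le 2\xmax^2 \ctwo \bigbrackets{|\tExternalFieldtI - \ExternalFieldtI| + \sonenorm{\tParameterRowt - \ParameterRowt}}$.

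The last step is bookkeeping: summing over $t\in[p]$ and $i\in[n]$ with the $\frac{1}{n}$ prefactor, I use that $\ParameterRowt$ is shared across units (so the sum over $i$ cancels the $\frac{1}{n}$, producing $\sum_t \sonenorm{\tParameterRowt - \ParameterRowt}$), while $\ExternalFieldtI$ is unit-specific (so the sum over $t$ reassembles $\sonenorm{\tExternalFieldI - \ExternalFieldI} = \sum_t |\tExternalFieldtI - \ExternalFieldtI|$, with the $\frac{1}{n}$ surviving in front of $\sum_i$). This yields exactly the claimed bound~\eqref{eq_lipschitz_property_first_stage}.

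The only mildly delicate point is ensuring the uniform $2\xmax^2$ constant works for all three types of partial derivatives (the one with respect to $\ExternalFieldtI$ is naturally $\xmax$, not $2\xmax^2$), which is handled by the loose comparison $\xmax \le 2\xmax^2$ at the cost of a (harmless) worse constant. Everything else is algebraic manipulation, so I do not anticipate a substantive obstacle.
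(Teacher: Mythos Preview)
Your proposal is correct and follows essentially the same route as the paper: the paper applies the mean value theorem to the whole loss via its directional derivative formula, while you apply it termwise to each $\exp(g_t^{(i)})$, but both arguments bound the exponential along the segment by $\ctwo$ and the linear increment by $2\xmax^2$ times the relevant $\ell_1$ norm, then perform the same bookkeeping. Your flagged step $\xmax \le 2\xmax^2$ (i.e., $\xmax \ge 1/2$) is exactly the same implicit assumption the paper makes when it bounds $\sinfnorm{\tsvbx^{(i)}}$ by $2\xmax^2$, so you are not introducing any new gap.
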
}
\noindent Given these lemmas, we now proceed with the proof.

\paragraph{Proof strategy} We want to show that all points $\ExtendedParameterMatrix \in \ParameterSet_{\ParameterMatrix} \times \ParameterSet_{\ExternalField}^n$, such that $\stwonorm{\ParameterRowt - \TrueParameterRowt} \geq \varepsilon$ for at least one $t \in [p]$, uniformly satisfy \cref{eq_sufficienct_condition_thm_edge} with probability at least $1-\delta$. To do so, 
we consider the set of feasible $\ExtendedParameterMatrix$ such that the distance of $\ParameterRowt$ 
from $\TrueParameterRowt$ is at least $\varepsilon > 0$ in $\ell_2$ norm for some $t \in [p]$, and denote the set by $\ParameterSet_{\ParameterMatrix}^{\varepsilon} \times \ParameterSet_{\ExternalField}^n$ (see \cref{eq_set_cover_1} and \cref{eq_parameter_set_external_field}). Then, using an appropriate covering set of $\ParameterSet_{\ParameterMatrix}^{\varepsilon} \times \ParameterSet_{\ExternalField}^n$ and the Lipschitzness of $\loss$, we show that the value of $\loss$ at all points in $\ParameterSet_{\ParameterMatrix}^{\varepsilon} \times \ParameterSet_{\ExternalField}^n$ is uniformly $\Omega(\varepsilon^2)$ larger than the value of $\loss$ at $\ExtendedTrueParameterMatrix$ with high probability. 

\paragraph{Arguments for points in the covering set} 
Define the set
\begin{align}
\ParameterSet_{\ParameterMatrix}^{\varepsilon} \defn  \braces{ \ParameterMatrix \in \Reals^{p \times p}: \ParameterMatrix = \ParameterMatrix\tp, \maxmatnorm{\ParameterMatrix} \leq \aGM, \infmatnorm{\ParameterMatrix} \leq \bGM, \max_{t\in[p]}\stwonorm{\TrueParameterRowt - \ParameterRowt} \geq \varepsilon}.\label{eq_set_cover_1}
\end{align}
Let $\cU(\ParameterSet_{\ParameterMatrix}^{\varepsilon}, \varepsilon')$ be the $\varepsilon'$-cover of smallest size for the set $\ParameterSet_{\ParameterMatrix}^{\varepsilon}$ with respect to $\sonenorm{\cdot}$ (see \cref{def_covering_number_metric_entropy}) and let $\cC(\ParameterSet_{\ParameterMatrix}^{\varepsilon}, \varepsilon') = \normalabs{\cU(\ParameterSet_{\ParameterMatrix}^{\varepsilon}, \varepsilon')}$ be the $\varepsilon'$-covering number. Similarly, let $\cU(\ParameterSet_{\ParameterMatrix}^{\varepsilon}, \varepsilon'')$ be the $\varepsilon''$-cover of the smallest size for the set $\ParameterSet_{\ExternalField}^n$ with respect to $\sonenorm{\cdot}$ and let $\cC(\ParameterSet_{\ExternalField}^n, \varepsilon'') = \normalabs{\cU(\ParameterSet_{\ParameterMatrix}^{\varepsilon}, \varepsilon'')}$ be the $\varepsilon''$-covering number.
We choose
\begin{align}
\varepsilon' \defn  \dfrac{\lambda_{\min} \varepsilon^2}{32 \xmax^2 \ctwo[2]} \qtext{and} \varepsilon'' \defn  \dfrac{\lambda_{\min} \varepsilon^2 n}{32 \xmax^2 \ctwo[2]}. \label{eq_varepsilon_stage_1}
\end{align}
Now, we argue by a union bound that the value of $\loss$ at all points in $\cU(\ParameterSet_{\ParameterMatrix}^{\varepsilon}, \varepsilon') \times \cU(\ParameterSet_{\ExternalField}^n, \varepsilon'')$ is uniformly $\Omega(\varepsilon^2)$ larger than $\loss(\ExtendedTrueParameterMatrix)$ with high probability. 
For any $\ExtendedParameterMatrix \in \cU(\ParameterSet_{\ParameterMatrix}^{\varepsilon}, \varepsilon') \times \cU(\ParameterSet_{\ExternalField}^n, \varepsilon'')$, we have
\begin{align}
\sump \stwonorm{\TrueParameterRowt - \ParameterRowt}^2 \sgreat{(a)} \varepsilon^2,\label{eq_lower_bound_two_norm_theta_diff_stage_1}
\end{align}
where $(a)$ follows because $\cU(\ParameterSet_{\ParameterMatrix}^{\varepsilon}, \varepsilon') \subseteq \ParameterSet_{\ParameterMatrix}^{\varepsilon}$. Now, applying \cref{lemma_parameter} with $\varepsilon \mapsfrom \lambda_{\min} \varepsilon^2/4\ctwo p$ and $\delta \mapsfrom \delta/(\cC(\ParameterSet_{\ParameterMatrix}^{\varepsilon}, \varepsilon') + \cC(\ParameterSet_{\ExternalField}^n, \varepsilon''))$ and summing over $t\in[p]$, we find that
\begin{align}
\sum_{t\in[p]}\loss_t\bigparenth{\ExtendedParameterRowT} &\geq \sum_{t\in[p] }\parenth{\loss_t\bigparenth{\ExtendedTrueParameterRowT} + \frac{\lambda_{\min} \twonorm{\ParameterRowt - \TrueParameterRowt}^2}{2\ctwo} - \frac{\lambda_{\min} \varepsilon^2}{4\ctwo p}} \\
\implies \qquad\qquad \loss\bigparenth{\ExtendedParameterMatrix} &\geq \loss\bigparenth{\ExtendedTrueParameterMatrix} + \frac{\lambda_{\min}}{2\ctwo} \sump \stwonorm{\TrueParameterRowt - \ParameterRowt}^2 - \frac{\lambda_{\min} \varepsilon^2}{4\ctwo} 
\\
&\sgreat{\cref{eq_lower_bound_two_norm_theta_diff_stage_1}} \loss\bigparenth{\ExtendedTrueParameterMatrix} + \frac{\lambda_{\min} \varepsilon^2}{4\ctwo},
\end{align}
with probability at least $1-\delta/(\cC(\ParameterSet_{\ParameterMatrix}^{\varepsilon}, \varepsilon') + \cC(\ParameterSet_{\ExternalField}^n, \varepsilon''))$ whenever
\begin{align}
n \geq \frac{ce^{c'\bGM} p^2 \log\bigparenth{(\cC(\ParameterSet_{\ParameterMatrix}^{\varepsilon}, \varepsilon') \times \cC(\ParameterSet_{\ExternalField}^n, \varepsilon'')) \cdot p/\delta}}{\lambda_{\min}^2 \varepsilon^4}. \label{eq_n_condition_edge_recovery_2}
\end{align}
By applying the union bound over $\cU(\ParameterSet_{\ParameterMatrix}^{\varepsilon}, \varepsilon') \times \cU(\ParameterSet_{\ExternalField}^n, \varepsilon'')$, as long as $n$ satisfies \cref{eq_n_condition_edge_recovery_2}, we have
\begin{align}
\loss\bigparenth{\ExtendedParameterMatrix} \geq \loss\bigparenth{\ExtendedTrueParameterMatrix}  + \frac{\lambda_{\min} \varepsilon^2}{4\ctwo} \stext{uniformly for every} \ExtendedParameterMatrix \in \cU(\ParameterSet_{\ParameterMatrix}^{\varepsilon}, \varepsilon') \times \cU(\ParameterSet_{\ExternalField}^n, \varepsilon''), \label{eq_union_bound_covering_set_stage_1} 
\end{align}
with probability at least $1-\delta$.

\paragraph{Arguments for points outside the covering set} Next, we establish the claim~\cref{eq_sufficienct_condition_thm_edge} for an arbitrary $\tExtendedParameterMatrix \in \ParameterSet_{\ParameterMatrix}^{\varepsilon} \times \ParameterSet_{\ExternalField}^n$ conditional on the event that \cref{eq_union_bound_covering_set_stage_1} holds.
Given a fixed $\tExtendedParameterMatrix \in \ParameterSet_{\ParameterMatrix}^{\varepsilon} \times \ParameterSet_{\ExternalField}^n$, let $\ExtendedParameterMatrix$ be (one of) the point(s) in the cover $\cU(\ParameterSet_{\ParameterMatrix}^{\varepsilon}, \varepsilon') \times \cU(\ParameterSet_{\ExternalField}^n, \varepsilon'')$ that satisfies $\sump \sonenorm{\tParameterRowt - \ParameterRowt} \leq \varepsilon'$ and $\sumn \sonenorm{\tExternalFieldI -\ExternalFieldI} \leq \varepsilon''$ (there exists such a point by~\cref{def_covering_number_metric_entropy}). Then, the choices~\cref{eq_varepsilon_stage_1} and \cref{lemma_lipschitzness_first_stage} put together imply that
\begin{align}
\loss\bigparenth{\tExtendedParameterMatrix} & \geq \loss\bigparenth{\ExtendedParameterMatrix} \!-\! 2\xmax^2 \ctwo \Bigparenth{\sump \sonenorm{\tParameterRowt - \ParameterRowt}  + \frac{1}{n} \sumn \sonenorm{\tExternalFieldI -\ExternalFieldI}} \label{eq_effect_average_1}\\
& \geq \loss\bigparenth{\ExtendedParameterMatrix} - 2\xmax^2 \ctwo \Bigparenth{ \varepsilon' + \frac{\varepsilon''}{n}} \sgreat{\cref{eq_varepsilon_stage_1}} \loss\bigparenth{\ExtendedParameterMatrix} \!-\!  \frac{\lambda_{\min} \varepsilon^2}{8\ctwo} \sgreat{\cref{eq_union_bound_covering_set_stage_1}} \loss\bigparenth{\ExtendedTrueParameterMatrix} \!+\! \frac{\lambda_{\min} \varepsilon^2}{8\ctwo}.
\end{align}

\paragraph{Bounding $n$} 
Using $\ParameterSet_{\ParameterMatrix}^{\varepsilon} \subseteq \ParameterSet_{\ParameterMatrix}$ and the outer product definition of $\ExternalField^n$, we find that
\begin{align}
\cC(\ParameterSet_{\ParameterMatrix}^{\varepsilon}, \varepsilon') \leq \cC(\ParameterSet_{\ParameterMatrix}, \varepsilon')
\qtext{and}
\cC(\ParameterSet_{\ExternalField}^n, \varepsilon'')
= (\cC(\ParameterSet_{\ExternalField}, \varepsilon''))^n.
\label{eq_bound_covering_number_stage_1}
\end{align}
Putting together \cref{eq_varepsilon_stage_1,eq_bound_covering_number_stage_1}, the lower bound \cref{eq_n_condition_edge_recovery_2} can be replaced by
\begin{align}
n \geq \frac{ce^{c'\bGM} p^2}{\lambda_{\min}^2 \varepsilon^4} \cdot \biggparenth{\log \frac{p}{\delta} + \log \cC\Big(\ParameterSet_{\ParameterMatrix}, \frac{\lambda_{\min} \varepsilon^2}{ce^{c'\bGM}}\Big) + n \log \cC\Big(\ParameterSet_{\ExternalField}, \frac{\lambda_{\min} n \varepsilon^2}{ce^{c'\bGM}}\Big)},
\end{align}
which yields the claim immediately after noting that
\begin{align}
\log \cC\Big(\ParameterSet_{\ParameterMatrix}, \frac{\lambda_{\min} \varepsilon^2}{ce^{c'\bGM}}\Big) \!=\! O\Big(\bGM^2 p \log\Big(\frac{1}{\lambda_{\min} \varepsilon^2}\Big)\Big) \stext{and} \log \cC\Big(\ParameterSet_{\ExternalField}, \frac{\lambda_{\min} n \varepsilon^2}{ce^{c'\bGM}}\Big) \!=\! \metric_{\ExternalField}\Big( \frac{\lambda_{\min} n \varepsilon^2}{ce^{c'\bGM}} \Big).
\end{align}

\subsection{Proof of \cref{lemma_parameter}: \parameterseparation}
\label{proof_of_lemma_parameter}
Fix any $\varepsilon >0$, any $\delta \in (0,1)$, and $t \in [p]$. Consider any direction $\uOmT \defn \bigbraces{\omtI[1], \cdots, \omtI[n], \Omt} \in \Reals^{n+p}$ along the parameter $\ExtendedParameterRowT$, i.e.,
\begin{align}
\label{eq:omega_defn}
\uOmT = \ExtendedParameterRowT - \ExtendedTrueParameterRowT,
\qtext{and}
\Omt = \ParameterRowt-\TrueParameterRowt.
\end{align}
We denote the first-order and the second-order directional derivatives of the loss function $\loss_t$ in \cref{eq:loss_function_p} along the direction $\uOmT$ evaluated at $\ExtendedParameterRowT$ by $\directionalGradient$ and $\directionalHessian$, respectively. 
Below, we state a lemma (with proof divided across \cref{sub:proof_of_lemma_conc_first_der} and \cref{sub:proof_of_lemma_conc_sec_der}) that provides us a control on $\directionalGradient$ and $\directionalHessian$. 
The assumptions of \cref{lemma_parameter} remain in force.

\newcommand{\concpopresultname}{Control on first and second directional derivatives}
\newcommand{\concgradresultname}{Concentration of first directional derivative}
\newcommand{\conchessresultname}{Anti-concentration of second directional derivative}

\begin{lemma}[{\concpopresultname}]\label{lemma_conc_first_sec_der}
	For any fixed $\varepsilon_1, \varepsilon_2 > 0$, $\delta_1, \delta_2 \in (0,1)$, $t \in [p]$, $\ExtendedParameterMatrix \in \ParameterSet_{\ExternalField}^n \times \ParameterSet_{\ParameterMatrix}$ defined in \cref{eq:loss_function} and $\Omt$ defined in \cref{eq:omega_defn}, we have the following:
	\begin{enumerate}[label=(\alph*)]
		\item\label{item_conc_first_der} \textnormal{{\concgradresultname}}: with probability at least $1-\delta_1$,
		\begin{align}
		\bigabs{\directionalGradientTrue} \leq \varepsilon_1
		\stext{for}
		n \geq \dfrac{8\cone[2] \ctwo[2] \xmax^2  \log \frac{2p}{\delta_1} }{\varepsilon_1^2}
		\stext{and uniformly for all $t \in [p]$.}
		\end{align}
		\item\label{item_conc_sec_der} \textnormal{{\conchessresultname}}: with probability at least $1-\delta_2$,
		\begin{align}
		\directionalHessian \!\geq\! \frac{\lambda_{\min} \twonorm{\Omt}^2}{\ctwo}- \varepsilon_2 \stext{for}
		n \!\geq\! \dfrac{32\cone[4] \xmax^4  \log\frac{2p}{\delta_2}}{\varepsilon_2^2 \ctwo[2]}
		\stext{and uniformly for all $t \in [p]$.}
		\end{align}
	\end{enumerate}
\end{lemma}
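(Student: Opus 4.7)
The plan is to establish both parts via a common template: write each directional derivative of $\loss_t$ as an empirical mean of $n$ independent (but non-identical) bounded summands, identify the expectation by exploiting the structure designed into the loss function, and then apply a concentration inequality followed by a union bound over $t \in [p]$. The non-identicality is handled by conditioning on the unobserved covariates $\svbz^{(1)},\dots,\svbz^{(n)}$ throughout and using that the samples $\svbx^{(i)}$ remain independent.

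For Part (a), I would first compute
\begin{align*}
\directionalGradientTrue \;=\; -\frac{1}{n}\sum_{i\in[n]} g_i^{\star}\cdot\Bigl[x_t^{(i)}\omtI \;+\; 2\,\svbx_{-t}^{(i)\top}\Omttt\, x_t^{(i)} \;+\; \cx_t^{(i)}\,\Omtu[tt]\Bigr],
\end{align*}
where $g_i^{\star}\defn\exp(-[\TrueExternalFieldtI+2\TrueParameterRowttt\tp\svbx_{-t}^{(i)}]x_t^{(i)}-\TrueParameterTU[tt]\cx_t^{(i)})$. The crucial observation, which I would import from the computation underlying \cref{prop_proper_loss_function}, is that $g_i^{\star}\cdot f_{\rvx_t|\rvbx_{-t},\rvbz}(\cdot\mid\svbx_{-t}^{(i)},\svbz^{(i)};\TrueExternalFieldt(\svbz^{(i)}),\TrueParameterRowt)$ is proportional to the uniform density on $\cX$. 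Since $\rvx_t$ and $\cx_t=\rvx_t^2-\xmax^2/3$ have zero mean under the uniform distribution on $\cX$, each summand has zero conditional expectation given $(\svbx_{-t}^{(i)},\svbz^{(i)})$, hence zero unconditional expectation. Each summand is also bounded: $|g_i^{\star}|\le\ctwo$ via the assumed parameter bounds in \cref{assumptions}, and the bracket is bounded by $\cone\xmax$ using $|\omtI|\le 2\alpha$, $\sonenorm{\Omttt}\le 2\beta$, $|\Omtu[tt]|\le 2\alpha$, and the boundedness of $\svbx^{(i)}$. Hoeffding's inequality for independent bounded variables then yields the probability bound, and a union bound over $t\in[p]$ produces the $\log(2p/\delta_1)$ factor.

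For Part (b), I would compute
\begin{align*}
\directionalHessian \;=\; \frac{1}{n}\sum_{i\in[n]} g_i(\ExtendedParameterRowT)\cdot\Bigl[x_t^{(i)}\omtI \;+\; 2\,\svbx_{-t}^{(i)\top}\Omttt\, x_t^{(i)} \;+\; \cx_t^{(i)}\,\Omtu[tt]\Bigr]^2,
\end{align*}
which is manifestly non-negative. Using $g_i(\ExtendedParameterRowT)\ge 1/\ctwo$ (again from the parameter bounds), it suffices to lower bound the empirical quadratic form $\frac{1}{n}\sum_i(\trvbx^{(i)\top}\tilde\omega^{(i)})^2$ with $\trvbx^{(i)}=(x_t^{(i)},2\svbx_{-t}^{(i)}x_t^{(i)},\cx_t^{(i)})$ and $\tilde\omega^{(i)}=(\omtI,\Omttt,\Omtu[tt])$. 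By \cref{ass_pos_eigenvalue}, the conditional expectation of $(\trvbx^{(i)\top}\tilde\omega^{(i)})^2$ given $\svbz^{(i)}$ is at least $\lambda_{\min}\stwonorm{\tilde\omega^{(i)}}^2\ge\lambda_{\min}\stwonorm{\Omt}^2$ for every $i$. Averaging, the empirical quadratic form has expectation at least $\lambda_{\min}\stwonorm{\Omt}^2$; subtracting this from the empirical average gives a sum of independent, bounded (by $\cone[2]\xmax^2$), mean-zero variables, to which Hoeffding applies. A union bound over $t\in[p]$ completes the claim.

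The principal obstacle is verifying that the centering argument in Part (a) goes through uniformly across the $n$ non-identical conditional distributions $\TrueConditionalDistIt$: each unit's $g_i^{\star}$ corresponds to a different $\svbz^{(i)}$, so the cancellation must be carried out sample-wise rather than as a single expectation identity; this is conceptually clean but requires care in bookkeeping. A secondary subtlety is that the directional component $\omtI$ varies with $i$, so the summand bounds in both parts depend on $i$ only through $|\omtI|\le 2\alpha$, and one must check that this $i$-dependence does not degrade the Hoeffding constants beyond the stated $\cone,\ctwo$.
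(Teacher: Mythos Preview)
Your proposal is correct and follows essentially the same approach as the paper: for Part~(a), the paper likewise shows each summand has zero mean via the observation that $g_i^\star$ cancels the conditional density to leave (a constant times) the uniform measure on $\cX$, then bounds the summands by $2\cone\ctwo\xmax$ and applies Hoeffding with a union bound over $t$; for Part~(b), the paper also lower-bounds the exponential factor by $1/\ctwo$, uses \cref{ass_pos_eigenvalue} to lower-bound the expectation of the quadratic form by $\lambda_{\min}\stwonorm{\Omt}^2$, and applies Hoeffding (with summand bound $4\cone[2]\xmax^2$) plus a union bound. Your constant bookkeeping on the bracket is slightly off from the paper's (e.g.\ the $\cx_t\Omtu[tt]$ term contributes an extra $2\alpha\xmax^2$), but this only affects numerical constants, not the argument.
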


\noindent Given this lemma, we now proceed with the proof. 
Define a function $g : [0,1] \to \Reals^{n+p}$ 
\begin{align}
g(a) \defn \ExtendedTrueParameterRowT + a(\ExtendedParameterRowT - \ExtendedTrueParameterRowT).
\end{align}
Notice that $g(0) = \ExtendedTrueParameterRowT$ and $g(1) = \ExtendedParameterRowT$ 
as well as
\begin{align}
\dfrac{d\loss_t(g(a))}{da} = \tdirectionalGradient\bigr|_{\tExtendedParameterRowT = g(a)} \qtext{and} \dfrac{d^2\loss_t(g(a))}{da^2} = \tdirectionalHessian\bigr|_{\tExtendedParameterRowT = g(a)}. \label{eq_der_mapping}
\end{align}
By the fundamental theorem of calculus, we have
\begin{align}
\dfrac{d\loss_t(g(a))}{da} \geq \dfrac{d\loss_t(g(a))}{da}\bigr|_{a = 0} + a \min_{a \in (0,1)}\dfrac{d^2\loss_t(g(a))}{da^2}. \label{eq_fundamental}
\end{align}
Integrating both sides of \cref{eq_fundamental} with respect to $a$, we obtain
\begin{align}
\loss_t(g(a)) - \loss_t(g(0)) & \geq  a \dfrac{d\loss_t(g(a))}{da}\bigr|_{a = 0} +  \dfrac{a^2}{2} \min_{a \in (0,1)}\dfrac{d^2\loss_t(g(a))}{da^2}\\
& \sequal{\cref{eq_der_mapping}} a \tdirectionalGradient\bigr|_{\tExtendedParameterRowT = g(0)} +  \dfrac{a^2}{2} \min_{a \in (0,1)}\tdirectionalHessian\bigr|_{\tExtendedParameterRowT = g(a)}\\
& \sequal{(a)} a\directionalGradientTrue +  \dfrac{a^2}{2} \min_{a \in (0,1)}\tdirectionalHessian\bigr|_{\tExtendedParameterRowT = g(a)}\\
& \sgreat{(b)} - a \bigabs{\directionalGradientTrue} +  \dfrac{a^2}{2} \min_{a \in (0,1)}\tdirectionalHessian\bigr|_{\tExtendedParameterRowT = g(a)}, \label{eq_taylor_expansion}
\end{align}
where $(a)$ follows because $g(0) = \ExtendedTrueParameterRowT$ and $(b)$ follows by the triangle inequality. Plugging in $a = 1$ in \cref{eq_taylor_expansion} as well as using $g(0) = \ExtendedTrueParameterRowT$ and $g(1) = \ExtendedParameterRowT$, we find that
\begin{align}
\loss_t(\ExtendedParameterRowT) - \loss_t(\ExtendedTrueParameterRowT) \geq - \bigabs{\directionalGradientTrue} +  \dfrac{1}{2} \min_{a \in (0,1)}\tdirectionalHessian\bigr|_{\tExtendedParameterRowT = g(a)}.
\end{align}
Now, we use \cref{lemma_conc_first_sec_der} with 
\begin{align}
\varepsilon_1 \mapsfrom 
\frac{\varepsilon}{2},
\quad \delta_1 \mapsfrom \frac{\delta}{2}, 
\quad 
\varepsilon_2 \mapsfrom 
\varepsilon,
\qtext{and} \delta_2 \mapsfrom \frac{\delta}{2}.
\end{align}
Thus for $n \geq  \dfrac{ce^{c'\bGM} \log(p/\delta)}{\varepsilon^2}$, we have
\begin{align}
\loss_t(\ExtendedParameterRowT) 
\!-\! \loss_t(\ExtendedTrueParameterRowT) & \geq - \frac{\varepsilon}{2} \!+\!  \dfrac{1}{2} \biggparenth{\frac{\lambda_{\min} \twonorm{\Omt}^2}{\ctwo}-\varepsilon} \!=\! \frac{\lambda_{\min} \twonorm{\Omt}^2}{2\ctwo} - \varepsilon, \label{eq:taylor_expansion_with_grad_hess_conc}
\end{align}
uniformly for all $t \in [p]$, with probability at least $1-\delta$.

\subsubsection{Proof of \cref{lemma_conc_first_sec_der}\cref{item_conc_first_der}: \concgradresultname}
\label{sub:proof_of_lemma_conc_first_der}
\newcommand{\expressionfirstder}{Expression for first directional derivative}
For every $t \in [p]$ with $\uOmT$ defined in \cref{eq:omega_defn}, we claim that the first-order directional derivative of the loss function defined in \cref{eq:loss_function_p} is given by
\begin{align}
\directionalGradient  = -\frac{1}{n}\sumn  \Bigparenth{\DeltatIp  \tsvbx^{(i)}}  \exp\Bigparenth{-\normalbrackets{\ExternalFieldtI + 2\ParameterRowttt\tp \svbx^{(i)}_{-t}} x_t^{(i)} - \ParameterTU[tt] \cx^{(i)}_t}, \label{eq:first_dir_derivative}
\end{align}
where $\DeltatI \defeq \begin{bmatrix} \omtI \\ \Omttt\tp \\ \Omtu[t] \end{bmatrix} \in \real^{p+1}$ and $\tsvbx^{(i)} \defeq \begin{bmatrix} x_t^{(i)} \\ 2\svbx_{-t}^{(i)}x_t^{(i)}  \\ \cx_t^{(i)} \end{bmatrix} \in \real^{p+1} $ for all $ i \in [n]$ with $\cx^{(i)}_t = \bigbrackets{x^{(i)}_t}^2 - \xmax^2/3$.
We provide a proof at the end.\\

\noindent Next, we claim that the mean of the first-order directional derivative evaluated at the true parameter is zero. We provide a proof at the end.

\newcommand{\zeromeangradient}{Zero-meanness of first directional derivative}
\begin{lemma}[{\zeromeangradient}]
	\label{prop_zero_mean_first_der}
	For every $t \in [p]$ with $\uOmT$ defined in \cref{eq:omega_defn}, we have
	$\Expectation\bigbrackets{\directionalGradientTrue} = 0$.
\end{lemma}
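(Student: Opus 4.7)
The plan is to use the tower property of expectation, conditioning on the sigma-algebra generated by $\svbz^{(i)}$ and $\svbx_{-t}^{(i)}$, and exploit the fact that the exponential term in \cref{eq:first_dir_derivative} is precisely the reciprocal (up to a normalizing constant) of the conditional density $f_{\rvx_t|\rvbx_{-t},\rvbz}$ in \cref{eq_conditional_dist}. This reduction collapses the conditional expectation to an unweighted (uniform) integral over $\cX$, and the centering of the sufficient statistics $x_t$ and $x_t^2-\xmax^2/3$ makes that integral vanish.

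In more detail, I would first write, for every fixed $i \in [n]$,
\begin{align}
\Expectation\!\Bigbrackets{\!\Bigparenth{\DeltatIp \tsvbx^{(i)}}\! \exp\!\bigparenth{\!-[\TrueExternalFieldtI\!+\!2\TrueParameterRowttt\tp \svbx_{-t}^{(i)}]x_t^{(i)}\!-\!\TrueParameterTU[tt]\cx_t^{(i)}}\!} \!=\! \Expectation\!\Bigbrackets{\DeltatIp \Expectation\!\bigbrackets{\tsvbx^{(i)} e^{\psi_i(x_t^{(i)})} \!\mid\! \rvbx_{-t}^{(i)},\rvbz^{(i)}}\!},
\end{align}
where $\psi_i(x_t) \defn -[\TrueExternalFieldt(\svbz^{(i)})+2\TrueParameterRowttt\tp \svbx_{-t}^{(i)}]x_t - \TrueParameterTU[tt](x_t^2-\xmax^2/3)$, and then note that by \cref{eq_conditional_dist} applied to the true parameters, $e^{\psi_i(x_t)} f_{\rvx_t\mid\rvbx_{-t},\rvbz}(x_t\mid \svbx_{-t}^{(i)},\svbz^{(i)};\TrueExternalFieldt(\svbz^{(i)}),\TrueParameterRowt) = 1/Z_i$, a quantity independent of $x_t$. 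Consequently, the inner expectation becomes
\begin{align}
\Expectation\bigbrackets{\tsvbx^{(i)} e^{\psi_i(x_t^{(i)})} \mid \rvbx_{-t}^{(i)},\rvbz^{(i)}} = \frac{1}{Z_i}\int_{\cX} \tsvbx\, dx_t,
\end{align}
where $\tsvbx = (x_t,\, 2\svbx_{-t}^{(i)} x_t,\, x_t^2-\xmax^2/3)\tp$. It remains to verify that each coordinate of $\int_\cX \tsvbx \, dx_t$ vanishes.

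The final step is routine: since $\cX=[-\xmax,\xmax]$ is symmetric around zero, the first two blocks integrate to zero by the oddness of $x_t$, and the third satisfies $\int_{-\xmax}^{\xmax}(x_t^2-\xmax^2/3)dx_t = \tfrac{2\xmax^3}{3}-2\xmax \cdot \tfrac{\xmax^2}{3}=0$, which is precisely the point of having centered the sufficient statistic $x_t^2$ by $\xmax^2/3$ in \cref{eq_conditional_dist}. Hence the conditional expectation is the zero vector, which implies $\Expectation[(\DeltatIp \tsvbx^{(i)})\exp(\psi_i(x_t^{(i)}))]=0$ for every $i$, and averaging over $i\in[n]$ yields $\Expectation[\directionalGradientTrue]=0$.

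There is no real obstacle here: the only conceptual point is the observation that the ``exponential reweighting'' built into the loss was chosen so as to exactly cancel the true conditional density, reducing the expectation to a uniform integral, and the centering of sufficient statistics then gives the desired cancellation. This is also why \cref{prop_proper_loss_function} holds, and indeed the present lemma can be viewed as a pointwise (in $t$) first-order consequence of properness.
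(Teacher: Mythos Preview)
Your proposal is correct and follows essentially the same approach as the paper: both condition on $(\rvbx_{-t}^{(i)},\rvbz^{(i)})$, observe that the exponential factor cancels the true conditional density $f_{\rvx_t|\rvbx_{-t},\rvbz}$ up to a normalizing constant, and then use $\int_{\cX} x_t\,dx_t=0$ and $\int_{\cX}(x_t^2-\xmax^2/3)\,dx_t=0$ to conclude. The only cosmetic difference is that the paper splits the inner product coordinate-by-coordinate over $u\in[p{+}1]$ whereas you treat $\tsvbx^{(i)}$ as a vector all at once.
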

\noindent Given these, we proceed to show the concentration of the first-order directional derivative evaluated at the true parameter. Fix any $t \in [p]$. From \cref{eq:first_dir_derivative}, we have
\begin{align}
\directionalGradientTrue & \sequal{\cref{eq:first_dir_derivative}}  -\frac{1}{n}\sumn  \Bigparenth{\DeltatIp  \tsvbx^{(i)}} \exp\Bigparenth{-\normalbrackets{\TrueExternalFieldtI + 2\TrueParameterRowtttTop \svbx^{(i)}_{-t}} x_t^{(i)} - \TrueParameterTU[tt] \cx^{(i)}_t }. 
\end{align}
Each term in the above summation is an independent random variable and is bounded as follows
\begin{align}
& \Bigabs{\Bigparenth{\DeltatIp  \tsvbx^{(i)}} \times \exp\Bigparenth{-\normalbrackets{\TrueExternalFieldtI + 2\TrueParameterRowtttTop \svbx^{(i)}_{-t}} x_t^{(i)} - \TrueParameterTU[tt] \cx^{(i)}_t }} \\
& \sequal{(a)} \Bigabs{\Bigparenth{\omtI x_t^{(i)} + 2\Omttt\tp  \svbx_{-t}^{(i)} x_t^{(i)} + \Omtu[t] \cx_t^{(i)}} \times \exp\Bigparenth{-\normalbrackets{\TrueExternalFieldtI + 2\TrueParameterRowtttTop \svbx^{(i)}_{-t}} x_t^{(i)} - \TrueParameterTU[tt] \cx^{(i)}_t }} \\
& \sless{(b)} \bigabs{\normalabs{\omtI} + 2\sonenorm{\Omt} \sinfnorm{\svbx^{(i)}}} \times \xmax \times \exp\Bigparenth{\bigparenth{\normalabs{\TrueExternalFieldtI} + 2\sonenorm{\TrueParameterRowt} \sinfnorm{\svbx^{(i)}}} \xmax} \\
& \sless{(c)} \bigparenth{2\aGM + 8 \bGM \xmax} \times \xmax \times \exp\Bigparenth{\normalparenth{\aGM + 2 \bGM \xmax} \xmax} \sequal{\cref{eq:constants}} 2 \cone \ctwo \xmax, 
\end{align}
where $(a)$ follows by plugging in $\DeltatI$ and $\tsvbx^{(i)}$, 
$(b)$ follows from triangle inequality, Cauchy–Schwarz inequality, and because $\sinfnorm{\svbx^{(i)}} \leq \xmax $ for all $ i \in [n]$, and $(c)$ follows because $\TrueExternalFieldI \in \ParameterSet_{\ExternalField} $ for all $ i \in [n]$, $\TrueParameterMatrix \in \ParameterSet_{\ParameterMatrix}$, $\omI \in 2\ParameterSet_{\ExternalField} $ for all $ i \in [n]$, $\Om \in 2\ParameterSet_{\ParameterMatrix}$, and $\sinfnorm{\svbx^{(i)}} \leq \xmax $ for all $ i \in [n]$.\\

\noindent Further, from \cref{prop_zero_mean_first_der}, we have $\Expectation\bigbrackets{\directionalGradientTrue} = 0$. Therefore, using the Hoeffding's inequality results in
\begin{align}
\Probability \Bigparenth{\bigabs{\directionalGradientTrue} > \varepsilon_1} < 2 \exp\biggparenth{-\dfrac{n \varepsilon_1^2}{8\cone[2] \ctwo[2] \xmax^2}}. \label{eq_hoeffding_first_dir}
\end{align}
The proof follows by using the union bound over all $t \in [p]$.

\paragraph{Proof of \cref{eq:first_dir_derivative}: {\expressionfirstder}}
\label{sub_sub_sec_proof_of_expression_first_der}
Fix any $t \in [p]$. The first-order partial derivatives of $\loss_t$ with respect to entries of $\ExtendedParameterRowT$ defined in \cref{eq:loss_function_p} are given by
\begin{align}
\frac{\partial \loss_t(\ExtendedParameterRowT)}{\partial \ExternalFieldtI} & \!=\! \frac{-1}{n}x_t^{(i)}\exp\Bigparenth{\!-\!\normalbrackets{\ExternalFieldtI \!+\! 2\ParameterRowttt\tp \svbx_{-t}^{(i)}} x_t^{(i)}\!-\!\ParameterTU[tt]\cx_t^{(i)}} \stext{for all} i \in [n],
\qtext{and}\\
\frac{\partial \loss_t(\ExtendedParameterRowT)}{\partial \ParameterTU[tu]} & \!=\!
\begin{cases} \frac{-2}{n}\sumn x_t^{(i)} x_u^{(i)} \exp\Bigparenth{\!-\!\normalbrackets{\ExternalFieldtI \!+\! 2\ParameterRowttt\tp \svbx_{-t}^{(i)}} x_t^{(i)}\!-\!\ParameterTU[tt]\cx_t^{(i)}} \stext{for all} u \in [p]\!\setminus\!\braces{t}\!.\\
\frac{-1}{n}\sumn \cx_t^{(i)} \exp\Bigparenth{\!-\!\normalbrackets{\ExternalFieldtI \!+\! 2\ParameterRowttt\tp \svbx_{-t}^{(i)}} x_t^{(i)}\!-\!\ParameterTU[tt]\cx_t^{(i)}} \stext{for} u = t.
\end{cases}
\label{eq:theta_first_derivatives} 
\end{align}
Now, we can write the first-order directional derivative of $\loss_t$ as
\begin{align}
\directionalGradient &\!\defeq\!\lim_{h\to 0}\frac{\loss_t(\ExtendedParameterRowT + h \uOmT)-\loss_t(\ExtendedParameterRowT)}{h} =  \sumn  \omtI \frac{\partial \loss_t(\ExtendedParameterRowT)}{\partial \ExternalFieldtI} + \sumu \Omtu \frac{\partial \loss_t(\ExtendedParameterRowT)}{\partial \ParameterTU[tu]}\\ 
& \!\!\!\!=\! \frac{\!-1}{n}\!\!\sumn \!\! \Bigparenth{\!\omtI x_t^{(i)} \!\!+\! 2\sum_{u \in [p] \setminus \{t\}} \Omtu x_t^{(i)} \!x_u^{(i)} \!\!+\! \Omtu[t] \cx_t^{(i)}\!} \!\exp\Bigparenth{\!\!\!-\!\!\normalbrackets{\ExternalFieldtI \!\!+\! 2\ParameterRowttt\tp \svbx_{\!-t}^{(i)}} x_t^{(i)}\!\!\!-\!\ParameterTU[tt]\cx_t^{(i)}\!} \\
& \!\!\!\!=\! \frac{\!-1}{n}\!\!\sumn  \Bigparenth{\!\omtI x_t^{(i)} \!\!+\! 2\Omttt\tp  \svbx_{-t}^{(i)} x_t^{(i)} \!\!+\! \Omtu[t] \cx_t^{(i)}}  \!\exp\Bigparenth{\!\!\!-\!\!\normalbrackets{\ExternalFieldtI \!\!+\! 2\ParameterRowttt\tp \svbx_{\!-t}^{(i)}} x_t^{(i)}\!\!\!-\!\ParameterTU[tt]\cx_t^{(i)}\!} \\
& \!\!\!\!\sequal{(a)}\! \frac{-1}{n}\sumn  \Bigparenth{\DeltatIp  \tsvbx^{(i)}}  \exp\Bigparenth{\!-\!\normalbrackets{\ExternalFieldtI \!+\! 2\ParameterRowttt\tp \svbx_{-t}^{(i)}} x_t^{(i)}\!-\!\ParameterTU[tt]\cx_t^{(i)}},
\end{align}
where $(a)$ follows from the definitions of $\DeltatI$ and $\tsvbx^{(i)}$.

\paragraph{Proof of \cref{prop_zero_mean_first_der}: {\zeromeangradient}}
\label{sub:proof_of_prop_zero_mean_first_der}
Fix any $t \in [p]$. From \cref{eq:first_dir_derivative}, we have
\begin{align}
& \Expectation\bigbrackets{\directionalGradientTrue}\\
& \sequal{\cref{eq:first_dir_derivative}}  -\frac{1}{n} \sumn \Expectation_{\svbx^{(i)}, \svbz^{(i)}} \biggbrackets{  \Bigparenth{\DeltatIp  \tsvbx^{(i)}} \exp\Bigparenth{-\normalbrackets{\TrueExternalFieldtI + 2\TrueParameterRowtttTop \svbx^{(i)}_{-t}} x_t^{(i)} - \TrueParameterTU[tt] \cx^{(i)}_t }}\\ 
&  \sequal{(a)} -\frac{1}{n}\sumn \!  \sum_{u \in [p+1]} \!\! \Expectation_{\svbz^{(i)}} \biggbrackets{ \DeltaItu \Expectation_{\svbx^{(i)} | \svbz^{(i)}} \Bigbrackets{\tsvbx^{(i)}_u \exp\Bigparenth{\!-\!\normalbrackets{\TrueExternalFieldt(\svbz^{(i)}) \!+\! 2\TrueParameterRowtttTop \svbx^{(i)}_{-t}} x_t^{(i)} \!-\! \TrueParameterTU[tt] \cx^{(i)}_t }}},
\end{align}
where $(a)$ follows by linearity of expectation and by plugging in $\TrueExternalFieldtI = \TrueExternalFieldt(\svbz^{(i)})$.
Now to complete the proof, we show that for any $i \in [n], u \in [p+1]$ and $\svbz^{(i)} \in \cZ^{p_z}$, we have
\begin{align}
\Expectation_{\svbx^{(i)} | \svbz^{(i)}} \Bigbrackets{\tsvbx^{(i)}_u \exp\Bigparenth{-\normalbrackets{\TrueExternalFieldt(\svbz^{(i)}) + 2\TrueParameterRowtttTop \svbx^{(i)}_{-t}} x_t^{(i)} - \TrueParameterTU[tt] \cx^{(i)}_t }} = 0.
\end{align}
Fix any $i \in [n], u \in [p+1]$ and $\svbz^{(i)} \in \cZ^{p_z}$. We have
\begin{align}
& \Expectation_{\svbx^{(i)} | \svbz^{(i)}} \Bigbrackets{\tsvbx^{(i)}_u \exp\Bigparenth{-\normalbrackets{\TrueExternalFieldt(\svbz^{(i)}) + 2\TrueParameterRowtttTop \svbx^{(i)}_{-t}} x_t^{(i)} - \TrueParameterTU[tt] \cx^{(i)}_t }}\\
& =   \int\limits_{\cX^p}  \tx^{(i)}_u  \exp\Bigparenth{-\normalbrackets{\TrueExternalFieldt(\svbz^{(i)}) + 2\TrueParameterRowtttTop \svbx_{-t}^{(i)}} x_t^{(i)} - \TrueParameterTU[tt] \cx_t^{(i)}} f_{\rvbx| \rvbz}\bigparenth{\svbx^{(i)}| \svbz^{(i)}} d\svbx^{(i)}\\
& =  \int\limits_{\cX^p}  \tx^{(i)}_u  \exp\Bigparenth{\!\!-\!\normalbrackets{\TrueExternalFieldt(\svbz^{(i)}) + 2\TrueParameterRowtttTop \svbx_{-t}^{(i)}} x_t^{(i)} - \TrueParameterTU[tt] \cx_t^{(i)}} f_{\rvbx_{-t}| \rvbz}\bigparenth{\svbx_{-t}^{(i)}| \svbz^{(i)}}  \times \\
& \qquad\qquad\qquad\qquad\qquad\qquad \TrueConditionalDistIt   d\svbx^{(i)} \\
& \sequal{(a)}  \int\limits_{\cX^p } \dfrac{\tx^{(i)}_u  f_{\rvbx_{-t}|\rvbz}\bigparenth{\svbx_{-t}^{(i)}| \svbz^{(i)}} d\svbx^{(i)}} {\int_{\cX} \exp\Bigparenth{\normalbrackets{\TrueExternalFieldt(\svbz^{(i)}) + 2\TrueParameterRowtttTop \svbx_{-t}^{(i)}} x_t^{(i)} + \TrueParameterTU[tt] \cx_t^{(i)}}d x_t^{(i)}} \\
& = \int\limits_{\cX^{p-1}} \biggbrackets{\int_{\cX} \tx_u^{(i)} dx_t^{(i)}} \dfrac{ f_{\rvbx_{-t}| \rvbz}\bigparenth{\svbx_{-t}^{(i)}|\svbz^{(i)}} d\svbx_{-t}^{(i)}} {\int_{\cX} \exp\Bigparenth{\normalbrackets{\TrueExternalFieldt(\svbz^{(i)}) + 2\TrueParameterRowtttTop \svbx_{-t}^{(i)}} x_t^{(i)} + \TrueParameterTU[tt] \cx_t^{(i)}}d x_t^{(i)}} \\
\sequal{(b)} & ~~~ 0,
\end{align}
where $(a)$ follows by plugging in $ \TrueConditionalDistIt $ from \cref{eq_conditional_dist} and $(b)$ follows because $\int_{\cX} x_t^{(i)} dx_t^{(i)} = 0$ and $\int_{\cX} \cx_t^{(i)} dx_t^{(i)} = 0$.

\subsubsection{Proof of \cref{lemma_conc_first_sec_der}\cref{item_conc_sec_der}: \conchessresultname}
\label{sub:proof_of_lemma_conc_sec_der}
\newcommand{\lowerboundhessian}{Lower bound on the second directional derivative}
\noindent We start by claiming that the second-order directional derivative can be lower bounded by a quadratic form. We provide a proof in \cref{sub:proof_of_lemma_lower_bound_sec_der}. 

\begin{lemma}[{\lowerboundhessian}]\label{lemma_lower_bound_sec_der}
	For every $t \in [p]$ with $\uOmT$ defined in \cref{eq:omega_defn}, we have
	\begin{align}
	\directionalHessian \geq \frac{1}{n \ctwo } \sumn  \Bigparenth{\DeltatIp  \tsvbx^{(i)}}^2,
	\end{align}
	where $\DeltatI \defeq \begin{bmatrix} \omtI \\ \Omttt\tp \\ \Omtu[t] \end{bmatrix} \in \real^{p+1}$ and $\tsvbx^{(i)} \defeq \begin{bmatrix} x_t^{(i)} \\ 2\svbx_{-t}^{(i)}x_t^{(i)}  \\ \cx_t^{(i)} \end{bmatrix} \in \real^{p+1} $ for all $ i \in [n]$ with $\cx^{(i)}_t = \bigbrackets{x^{(i)}_t}^2 - \xmax^2/3$ and the constant $\ctwo$ was defined in \cref{eq:constants}.
\end{lemma}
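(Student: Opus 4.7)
}

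The plan is to compute the second directional derivative by differentiating the expression in \cref{eq:first_dir_derivative} once more along $\uOmT$, and then to bound the resulting exponential factor uniformly from below. Concretely, observe that the exponent
\[
E_i(\ExtendedParameterRowT) \;\defeq\; -\bigbrackets{\ExternalFieldtI + 2\ParameterRowttt\tp \svbx_{-t}^{(i)}} x_t^{(i)} - \ParameterTU[tt]\cx_t^{(i)}
\]
that appears inside the exponential in $\loss_t$ is an \emph{affine} function of the parameter vector $\ExtendedParameterRowT = (\ExternalFieldtI[1], \ldots, \ExternalFieldtI[n], \ParameterRowt)$. Its directional derivative along $\uOmT=(\omtI[1], \ldots, \omtI[n], \Omt)$ is thus the constant
\[
\partial_{\uOmT} E_i(\ExtendedParameterRowT) \;=\; -\Bigparenth{\omtI x_t^{(i)} + 2\Omttt\tp \svbx_{-t}^{(i)} x_t^{(i)} + \Omtu[t] \cx_t^{(i)}} \;=\; -\DeltatIp\,\tsvbx^{(i)},
\]
after recognizing the same rearrangement used to derive \cref{eq:first_dir_derivative}.

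With this in hand, I will differentiate \cref{eq:first_dir_derivative} with respect to $\uOmT$ a second time. The factor $\DeltatIp\,\tsvbx^{(i)}$ is itself linear in $\uOmT$, so its directional derivative along $\uOmT$ is once again $\DeltatIp\,\tsvbx^{(i)}$; combined with the derivative of the exponential $\exp(E_i)$, which picks up a factor $-\DeltatIp\,\tsvbx^{(i)}$, the two minus signs cancel and one obtains
\[
\directionalHessian \;=\; \frac{1}{n}\sumn \Bigparenth{\DeltatIp \tsvbx^{(i)}}^{\!2} \exp\!\Bigparenth{-\normalbrackets{\ExternalFieldtI + 2\ParameterRowttt\tp \svbx_{-t}^{(i)}} x_t^{(i)} - \ParameterTU[tt]\cx_t^{(i)}}.
\]
In particular every term is nonnegative, which is a sanity check that is consistent with convexity of $\loss_t$.

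The final step is to lower bound the exponential factor uniformly by $1/\ctwo$. Since $\ExtendedParameterRowT \in \ParameterSet_{\ExternalField}^n \times \ParameterSet_{\ParameterMatrix}$, Assumption \ref{assumptions} gives $\abs{\ExternalFieldtI} \leq \aGM$, $\sonenorm{\ParameterRowt} \leq \bGM$, and therefore also $\sonenorm{\ParameterRowttt} \leq \bGM$ and $\abs{\ParameterTU[tt]} \leq \bGM$. Combined with $\sinfnorm{\svbx^{(i)}} \leq \xmax$ and $\abs{\cx_t^{(i)}} \leq \xmax^2$, I will apply Hölder's inequality to bound
\[
\bigabs{\bigbrackets{\ExternalFieldtI + 2\ParameterRowttt\tp \svbx_{-t}^{(i)}} x_t^{(i)} + \ParameterTU[tt]\cx_t^{(i)}} \;\leq\; \xmax\bigparenth{\aGM + 2\bGM \xmax},
\]
which, via $\exp(E_i) \geq \exp(-\abs{E_i})$ and the definition $\ctwo = \exp(\xmax(\aGM + 2\bGM \xmax))$ in \cref{eq:constants}, yields $\exp(E_i) \geq 1/\ctwo$ uniformly in $i$. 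Substituting this back into the display for $\directionalHessian$ gives the claimed inequality.

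The derivation is mechanical and I do not anticipate a substantive obstacle; the only minor care needed is in tracking the constant in the exponent so that it matches $\ctwo$ as defined in \cref{eq:constants}, and in noting that the bound on $\abs{\ParameterTU[tt]}$ follows from the row-sum bound $\infmatnorm{\ParameterMatrix}\leq\bGM$ rather than from $\maxmatnorm{\ParameterMatrix}\leq\aGM$ (either bound works for the exponent since $\bGM \geq \aGM$ is the regime of interest).
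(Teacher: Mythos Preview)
Your proposal is correct and follows essentially the same approach as the paper: first establish the exact expression for $\directionalHessian$ as a sum of squared linear forms times exponentials (the paper does this by expanding all second partials, you by re-differentiating \cref{eq:first_dir_derivative}, which is equivalent), then lower bound each exponential factor by $1/\ctwo$ using the parameter constraints from \cref{assumptions}. One small bookkeeping remark: to land exactly on $\xmax(\aGM+2\bGM\xmax)$ you should use the joint bound $\sonenorm{\ParameterRowttt}+|\ParameterTU[tt]|=\sonenorm{\ParameterRowt}\leq\bGM$ (which you do state) rather than the separate consequences $\sonenorm{\ParameterRowttt}\leq\bGM$ and $|\ParameterTU[tt]|\leq\bGM$, since applying the latter two independently would overshoot to $\aGM\xmax+3\bGM\xmax^2$.
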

\noindent Given this, we proceed to show the anti-concentration of the second-order directional derivative. Fix any $t \in [p]$ and any $\ExtendedParameterMatrix \in \ParameterSet_{\ExternalField}^n \times \ParameterSet_{\ParameterMatrix}$. From \cref{lemma_lower_bound_sec_der}, we have
\begin{align}
\directionalHessian & \geq  \frac{1}{n \ctwo}\sumn  \Bigparenth{\DeltatIp  \tsvbx^{(i)}}^2. \label{eq_sec_der_form_lower_bound}
\end{align}
First, using the Hoeffding’s inequality, let us show concentration of $\frac{1}{n}\sumn  \Bigparenth{\DeltatIp  \tsvbx^{(i)}}^2$ around its mean. We observe that each term in the summation is an independent random variable and is bounded
as follows
\begin{align}
\Bigparenth{\DeltatIp  \tsvbx^{(i)}}\!^2 & \sequal{(a)} \bigparenth{\omtI x_t^{(i)} + 2\Omttt\tp  \svbx_{-t}^{(i)} x_t^{(i)} + \Omtu[t] \cx_t^{(i)}}^2 \\
& 
\sless{(b)} \bigparenth{\normalabs{\omtI} \!+\! 2\sonenorm{\Omt} \sinfnorm{\svbx^{(i)}}}^2  \xmax^2
\sless{(c)} \bigparenth{2\aGM \!+\! 8 \bGM \xmax}^2 \xmax^2 \sequal{\cref{eq:constants}} 4 \cone[2] \xmax^2,
\end{align}
where $(a)$ follows by plugging in $\DeltatI$ and $\tsvbx^{(i)}$, 
$(b)$ follows from triangle inequality, Cauchy–Schwarz inequality and because $\sinfnorm{\svbx^{(i)}} \leq \xmax $ for all $ i \in [n]$, and $(c)$ follows because $\Om \in 2\ParameterSet_{\ParameterMatrix}$, $\omI \in 2\ParameterSet_{\ExternalField}$, and $\sinfnorm{\svbx^{(i)}} \leq \xmax $ for all $ i \in [n]$. Then, from the Hoeffding's inequality, for any $\varepsilon > 0$ we have
\begin{align}
\Probability \Bigparenth{\biggabs{\frac{1}{n}\sumn  \Bigparenth{\DeltatIp  \tsvbx^{(i)}}^2 - \frac{1}{n}\sumn  \Expectation\biggbrackets{\Bigparenth{\DeltatIp  \tsvbx^{(i)}}^2} } > \varepsilon} < 2 \exp\biggparenth{-\dfrac{n \varepsilon^2}{32\cone[4] \xmax^4}}. \label{eq_hoeffding_first_dir}
\end{align}
Applying the union bound over all $t \in [p]$, for any $\delta \in (0,1)$ and uniformly for all $t \in [p]$, we have
\begin{align}
\frac{1}{n}\sumn  \Bigparenth{\DeltatIp  \tsvbx^{(i)}}^2 \geq \frac{1}{n}\sumn  \Expectation\biggbrackets{\Bigparenth{\DeltatIp  \tsvbx^{(i)}}^2} - \varepsilon, \label{eq_empirical_quad_form_lower_bound}
\end{align}
with probability at least $1-\delta$ as long as
\begin{align}
n \geq \dfrac{32\cone[4] \xmax^4}{\varepsilon^2}\log\biggparenth{\dfrac{2p}{\delta}}.
\end{align}
Now, we lower bound $\Expectation\Bigbrackets{ \Bigparenth{\DeltatIp  \tsvbx^{(i)}}^2 }$ for every $t \in [p]$ and every $i \in [n]$. Fix any $t \in [p]$ and $i \in [n]$. We have
\begin{align}
\Expectation_{\svbx^{(i)}, \svbz^{(i)}}\biggbrackets{ \Bigparenth{\DeltatIp  \tsvbx^{(i)}}^2 } & = \Expectation_{\svbz^{(i)}}\biggbrackets{ \DeltatIp \Expectation_{\svbx^{(i)} | \svbz^{(i)}} \Bigbrackets{  \tsvbx^{(i)} \tsvbx^{(i)\tp} | \svbz^{(i)} }\DeltatI}  \\
& \sgreat{(a)} \lambda_{\min} \Expectation_{\svbz^{(i)}}\Bigbrackets{ \stwonorm{\DeltatI}^2} \sgreat{(b)} \lambda_{\min}  \stwonorm{\Omt}^2, \label{eq_lower_bound_squared_expectation_lambda_min}
\end{align}
where $(a)$ follows from \cref{ass_pos_eigenvalue} and $(b)$ follows from the definition of $\DeltatI$. Combining \cref{eq_sec_der_form_lower_bound,eq_empirical_quad_form_lower_bound,eq_lower_bound_squared_expectation_lambda_min}, for any $\delta \in (0,1)$ and uniformly for all $t \in [p]$, we have
\begin{align}
\directionalHessian \geq \frac{1}{\ctwo} \biggparenth{\lambda_{\min}  \stwonorm{\Omt}^2 - \varepsilon},
\end{align}
with probability at least $1-\delta$ as long as
\begin{align}
n \geq \dfrac{32\cone[4] \xmax^4}{\varepsilon^2}\log\biggparenth{\dfrac{2p}{\delta}}.
\end{align}
Choosing $\varepsilon = \varepsilon_2 \ctwo$ and $\delta = \delta_2$ yields the claim. 

\paragraph{Proof of \cref{lemma_lower_bound_sec_der}: \lowerboundhessian}
\label{sub:proof_of_lemma_lower_bound_sec_der}
For every $t \in [p]$ with $\uOmT$ defined in \cref{eq:omega_defn}, we claim that the second-order directional derivative of the loss function defined in \cref{eq:loss_function_p} is given by
\begin{align}
\directionalHessian = \frac{1}{n}\sumn  \Bigparenth{\DeltatIp  \tsvbx^{(i)}}^2  \exp\Bigparenth{-\normalbrackets{\ExternalFieldtI + 2\ParameterRowttt\tp \svbx_{-t}^{(i)}} x_t^{(i)} - \ParameterTU[tt]  \cx_t^{(i)}}, 
\label{eq:second_dir_derivative}
\end{align}
where $\DeltatI \defeq \begin{bmatrix} \omtI \\ \Omttt\tp \\ \Omtu[t] \end{bmatrix} \in \real^{p+1}$ and $\tsvbx^{(i)} \defeq \begin{bmatrix} x_t^{(i)} \\ 2\svbx_{-t}^{(i)}x_t^{(i)}  \\ \cx_t^{(i)} \end{bmatrix} \in \real^{p+1} $ for all $ i \in [n]$ with $\cx^{(i)}_t = \bigbrackets{x^{(i)}_t}^2 - \xmax^2/3$. We provide a proof at the end.\\

\newcommand{\expressionsecder}{Expression for second directional derivative}

\noindent Given this claim, we proceed to prove the lower bound on the second directional derivative. Fix any 
$t \in [p]$. From \cref{eq:second_dir_derivative}, we have
\begin{align}
\directionalHessian &  = \frac{1}{n}\sumn   \Bigparenth{\DeltatIp  \tsvbx^{(i)}}^2 \times \exp\Bigparenth{-\normalbrackets{\ExternalFieldtI + 2\ParameterRowttt\tp \svbx_{-t}^{(i)}} x_t^{(i)} - \ParameterTU[tt]  \cx_t^{(i)}}  \\
& \sgreat{(a)} \frac{1}{n}\sumn   \Bigparenth{\DeltatIp  \tsvbx^{(i)}}^2 \times \exp\Bigparenth{-\bigparenth{\normalabs{\ExternalFieldtI} + 2\sonenorm{\ParameterRowt} \sinfnorm{\svbx^{(i)}}} \xmax}\\
& \sgreat{(b)} \frac{1}{n}\sumn   \Bigparenth{\DeltatIp  \tsvbx^{(i)}}^2 \times  \exp\Bigparenth{-\normalparenth{\aGM + 2 \bGM \xmax} \xmax}\\
&  \sequal{\cref{eq:constants}} \frac{1}{\ctwo n}\sumn  \Bigparenth{\DeltatIp  \tsvbx^{(i)}}^2, \label{eq_lower_bound_sec_der}
\end{align}
where $(a)$ follows from triangle inequality, Cauchy–Schwarz inequality and because $\sinfnorm{\svbx^{(i)}} \leq \xmax $ for all $ i \in [n]$, and $(b)$ follows because $\ExternalFieldI \in \ParameterSet_{\ExternalField} $ for all $ i \in [n]$, $\ParameterMatrix \in \ParameterSet_{\ParameterMatrix}$, and $\sinfnorm{\svbx^{(i)}} \leq \xmax $ for all $ i \in [n]$.

\paragraph{Proof of \cref{eq:second_dir_derivative}: {\expressionsecder}}
Fix any $t \in [p]$. The second-order partial derivatives of $\loss_t$ with respect to entries of $\ExtendedParameterRowT$ defined in \cref{eq:loss_function} are given by
\begin{align}
\frac{\partial^2 \loss_t(\ExtendedParameterRowT)}{\partial \bigbrackets{\ExternalFieldtI}^2} & = \frac{1}{n}\bigbrackets{x_t^{(i)}}^2\exp\Bigparenth{-\normalbrackets{\ExternalFieldtI + 2\ParameterRowttt\tp \svbx_{-t}^{(i)}} x_t^{(i)} - \ParameterTU[tt]  \cx_t^{(i)}} \qtext{for all} i \in [n],\\
\frac{\partial^2 \loss_t(\ExtendedParameterRowT)}{\partial \ParameterTU[tu] \ParameterTU[tv]} & = 
\begin{cases} \frac{4}{n}\sumn\bigbrackets{x_t^{(i)}}^2 x_u^{(i)} x_v^{(i)} \exp\Bigparenth{\!\!-\normalbrackets{\ExternalFieldtI \!+\! 2\ParameterRowttt\tp \svbx_{-t}^{(i)}} x_t^{(i)} \!-\! \ParameterTU[tt]  \cx_t^{(i)}} \\
\qquad \qquad \qquad\qquad ~~~~~ \qquad \qquad \qquad \qquad \qquad \stext{for all} u,v \in [p]\!\setminus\!\braces{t}. \\
\frac{2}{n}\sumn \cx_t^{(i)} x_t^{(i)} x_u^{(i)} \exp\Bigparenth{\!\!-\normalbrackets{\ExternalFieldtI \!+\! 2\ParameterRowttt\tp \svbx_{-t}^{(i)}} x_t^{(i)} \!-\! \ParameterTU[tt]  \cx_t^{(i)}} \\
\qquad \qquad \qquad\qquad ~~~~~ \qquad \qquad \qquad \qquad \qquad \stext{for all} u \in [p]\!\setminus\!\braces{t} \stext{and} v \!=\! t.\\
\frac{2}{n}\sumn \cx_t^{(i)} x_t^{(i)} x_v^{(i)} \exp\Bigparenth{\!\!-\normalbrackets{\ExternalFieldtI \!+\! 2\ParameterRowttt\tp \svbx_{-t}^{(i)}} x_t^{(i)} \!-\! \ParameterTU[tt]  \cx_t^{(i)}} \\
\qquad \qquad \qquad\qquad ~~~~~ \qquad \qquad \qquad\qquad \qquad \stext{for all} v \in [p]\!\setminus\!\braces{t} \stext{and} u \!=\! t.\\
\frac{1}{n}\sumn \bigbrackets{\cx_t^{(i)}}^2 \exp\Bigparenth{\!\!-\normalbrackets{\ExternalFieldtI \!+\! 2\ParameterRowttt\tp \svbx_{-t}^{(i)}} x_t^{(i)} \!-\! \ParameterTU[tt]  \cx_t^{(i)}} \\
\qquad \qquad \qquad\qquad ~~~~~ \qquad \qquad \qquad \qquad \qquad \stext{for} v \!=\! t \stext{and} u \!=\! t.
\end{cases}\\
\frac{\partial^2 \loss_t(\ExtendedParameterRowT)}{\partial \ParameterTU[tu] \ExternalFieldtI} & \!=\! \frac{\partial^2 \loss_t(\ExtendedParameterRowT)}{\partial \ExternalFieldtI \ParameterTU[tu]} \!=\! \begin{cases} 
\frac{2}{n}\bigbrackets{x_t^{(i)}}^2 x_u^{(i)} \exp\!\Bigparenth{\!\!\!-\!\normalbrackets{\ExternalFieldtI \!\!+\! 2\ParameterRowttt\tp \svbx_{-t}^{(i)}} x_t^{(i)} \!\!-\! \ParameterTU[tt]  \cx_t^{(i)}} \\
\qquad \qquad \qquad\qquad \qquad \qquad ~~ \qquad \stext{for all} i \in [n], u \in [p]\! \setminus\!\braces{t}. \\
\frac{1}{n} x_t^{(i)} \cx_t^{(i)} \exp\!\Bigparenth{\!\!\!-\!\normalbrackets{\ExternalFieldtI \!\!+\! 2\ParameterRowttt\tp \svbx_{-t}^{(i)}} x_t^{(i)} \!\!-\! \ParameterTU[tt]  \cx_t^{(i)}} \\
\qquad \qquad \qquad\qquad \qquad \qquad ~~ \qquad \stext{for all} i \in [n], u  = t.
\end{cases}
\label{eq:theta_second_derivatives} 
\end{align}
Now, we can write the second-order directional derivative of $\loss_t$ as
\begin{align}
& \directionalHessian  \defeq \lim_{h\to 0}\frac{\partial_{\uOmT}\loss_t(\ExtendedParameterRowT+h \uOmT)\!-\!\partial_{\uOmT}\loss_t(\ExtendedParameterRowT)}{h} \\
& =  \sumn  \bigbrackets{\omtI}^2 \frac{\partial^2 \loss_t(\ExtendedParameterRowT)}{\partial \bigbrackets{\ExternalFieldtI}^2} + \sumu \sumu[v] \Omtu \Omtu[v] \frac{\partial^2 \loss_t(\ExtendedParameterRowT)}{\partial \ParameterTU[tu] \ParameterTU[tv]} + 2\sumn \sumu \omtI \Omtu \frac{\partial^2 \loss_t(\ExtendedParameterRowT)}{\partial \ParameterTU[tu] \ExternalFieldtI} \\ 
& = \frac{1}{n}\sumn \!\! \Bigparenth{ \bigbrackets{\omtI x_t^{(i)}}^2 \!\!+\! 4 \!\!\sumu \!\! \Omtu x_t^{(i)} \!x_u^{(i)} \!\!\sumu[v] \! \Omtu[v]  x_t^{(i)} \!x_v^{(i)} \!+\! 4 \Omtu[t]  \cx_t^{(i)} \!\!\sumu \!\!\Omtu x_t^{(i)} \!x_u^{(i)} \!+\! \bigbrackets{\Omtu[t]  \cx_t^{(i)}}^2 \\
	& \qquad \!\!\!+\! 4\omtI x_t^{(i)} \!\! \sumu \!\! \Omtu x_t^{(i)} \! x_u^{(i)} \!+\! 2\omtI x_t^{(i)}\bigbrackets{\Omtu[t]  \cx_t^{(i)}}\!} \!\times\! \exp\!\Bigparenth{\!\!\!-\!\normalbrackets{\ExternalFieldtI \!\!+\! 2\ParameterRowttt\tp \svbx_{-t}^{(i)}} x_t^{(i)} \!\!\!-\! \ParameterTU[tt]  \cx_t^{(i)}\!} \\
& = \frac{1}{n}\sumn \bigparenth{\omtI x_t^{(i)} + 2\Omttt\tp  \svbx_{-t}^{(i)} x_t^{(i)} + \Omtu[t] \cx_t^{(i)}}^2  \exp\Bigparenth{-\normalbrackets{\ExternalFieldtI + 2\ParameterRowttt\tp \svbx_{-t}^{(i)}} x_t^{(i)} - \ParameterTU[tt]  \cx_t^{(i)}} \\
& \sequal{(a)} \frac{1}{n}\sumn  \Bigparenth{\DeltatIp  \tsvbx^{(i)}}^2  \exp\Bigparenth{-\normalbrackets{\ExternalFieldtI + 2\ParameterRowttt\tp \svbx_{-t}^{(i)}} x_t^{(i)} - \ParameterTU[tt]  \cx_t^{(i)}},
\end{align}
where $(a)$ follows from the definitions of $\DeltatI$ and $\tsvbx^{(i)}$.

\subsection{Example for \cref{ass_pos_eigenvalue}}
\label{subsec_discussion_ass_pos_eigenvalue}
As seen in \cref{eq_lower_bound_squared_expectation_lambda_min}, \cref{ass_pos_eigenvalue} is used to lower bound $\Expectation_{\svbx^{(i)}, \svbz^{(i)}}\Bigbrackets{ \bigparenth{\DeltatIp  \tsvbx^{(i)}}^2}$ by $\stwonorm{\Omt}^2$. In this section, we show that $\Expectation_{\svbx^{(i)}, \svbz^{(i)}}\Bigbrackets{ \bigparenth{\DeltatIp  \tsvbx^{(i)}}^2}$ can be lower bounded by $\stwonorm{\Omt}^2$ without requiring \cref{ass_pos_eigenvalue} if $\TrueParameterTU[tt] = 0$ for all $t \in [p]$ and {the row-wise $\ell_1$ sparsity of $\ParameterMatrix$ in \cref{assumptions} is assumed to be induced by row-wise $\ell_0$ sparsity, i.e., $\zeronorm{\ParameterRowt} \leq \bGM/\aGM$ for all $t \in [p]$}. To that end, first we claim that the conditional variance of $x_t^{(i)}$ conditioned on $\rvbx_{-t} = \svbx_{-t}^{(i)}$ and $\rvbz = \svbz^{(i)}$ is lower bounded by a constant for every $t \in [p]$ and $i \in [n]$. We provide a proof in \cref{sub:proof_of_prop_lower_bound_variance}.  
\newcommand{\lowerboundcondvar}{Lower bound on the conditional variance}
\begin{lemma}[{\lowerboundcondvar}]\label{prop_lower_bound_variance}
	We have
	\begin{align}
	\Variance\bigparenth{x_t^{(i)} \big| \svbx_{-t}^{(i)}, \svbz^{(i)}}  \geq \frac{2\xmax^2}{\pi e \ctwo[4]}
	\quad \stext{for all} t\in[p] \stext{and} i\in[n],
	\end{align}
	where the constant $\ctwo$ was defined in \cref{eq:constants}. 
\end{lemma}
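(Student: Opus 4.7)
The plan is to bound the conditional variance from below by controlling the pointwise density and invoking the maximum-entropy inequality that compares any distribution to a Gaussian of the same variance.

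First I would write the conditional density in a manageable form. By \cref{eq_conditional_dist_non_centered}, for fixed $\svbx_{-t}^{(i)}$ and $\svbz^{(i)}$, the random variable $x_t^{(i)}$ has density
\begin{align}
f(x_t) \;=\; \frac{g(x_t)}{Z}, \quad g(x_t) \defn \exp\!\bigparenth{h x_t + \TrueParameterTU[tt]\, x_t^2}, \quad Z = \int_{-\xmax}^{\xmax} g(x)\,dx,
\end{align}
with $h \defn \TrueExternalFieldt(\svbz^{(i)}) + 2\TrueParameterRowtttTop \svbx_{-t}^{(i)}$. Using \cref{assumptions} together with $\sinfnorm{\svbx^{(i)}_{-t}}\le \xmax$, Cauchy--Schwarz gives $|h|\le \aGM + 2\bGM\xmax$ and $|\TrueParameterTU[tt]|\le \aGM$. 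Since over $x\in[-\xmax,\xmax]$ the quantity $hx$ has range at most $2|h|\xmax$ and $\TrueParameterTU[tt]x^2$ has range at most $|\TrueParameterTU[tt]|\xmax^2$, the log-density satisfies
\begin{align}
\log g_{\max} - \log g_{\min} \;\le\; 2\xmax(\aGM + 2\bGM\xmax) + \aGM\xmax^2 \;\le\; 2\xmax(\aGM+2\bGM\xmax)\cdot 2 \;=\; \log\ctwo^{4}/2,
\end{align}
so $M \defn g_{\max}/g_{\min} \le \ctwo^{2}$ (after absorbing the $\aGM\xmax^2$ term into the other scalings).

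Second, I would turn this into a pointwise upper bound on $f$. Since $Z \ge 2\xmax\, g_{\min}$, one has
\begin{align}
\sup_{x\in[-\xmax,\xmax]} f(x) \;\le\; \frac{g_{\max}}{2\xmax\, g_{\min}} \;=\; \frac{M}{2\xmax} \;\le\; \frac{\ctwo^{2}}{2\xmax}.
\end{align}

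Third, I would invoke the classical maximum-entropy (Gaussian) inequality: for any real-valued random variable $X$ with differential entropy $h(X)\defn -\int f\log f$, one has $\Variance(X) \ge \frac{1}{2\pi e}\,e^{2h(X)}$. Since $f$ is supported on $[-\xmax,\xmax]$ and bounded above by $\ctwo^{2}/(2\xmax)$,
\begin{align}
h(X) \;=\; -\int f\log f \;\ge\; -\log\!\Bigparenth{\sup_x f(x)} \;\ge\; \log\!\Bigparenth{\tfrac{2\xmax}{\ctwo^{2}}}.
\end{align}
Combining yields
\begin{align}
\Variance\bigparenth{x_t^{(i)} \,\big|\, \svbx_{-t}^{(i)}, \svbz^{(i)}} \;\ge\; \frac{(2\xmax/\ctwo^{2})^{2}}{2\pi e} \;=\; \frac{2\xmax^{2}}{\pi e\, \ctwo^{4}},
\end{align}
which is the desired bound, uniformly in $t\in[p]$ and $i\in[n]$.

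The main obstacle is the constant bookkeeping in step one: making sure that the term $\aGM\xmax^2$ coming from $\TrueParameterTU[tt]x_t^2$ can indeed be absorbed into $\log\ctwo^{2}$, either by the implicit ordering $\aGM\xmax \lesssim \aGM + 2\bGM\xmax$ or by mildly redefining the constant. Everything else---the density-ratio argument, the sup bound on $f$, and the Gaussian entropy-variance inequality---is standard once the exponent is bounded uniformly over the conditioning values.
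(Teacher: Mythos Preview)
Your approach is essentially the same as the paper's: both use Shannon's maximum-entropy inequality $\Variance(X)\ge \frac{1}{2\pi e}e^{2h(X)}$ and then lower bound the differential entropy via a uniform upper bound on the conditional density. The paper writes this out by bounding $-h$ directly, but the content is identical to your $\sup f\le \ctwo^{2}/(2\xmax)$ followed by $h(X)\ge -\log(\sup f)$.

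Your one flagged gap---absorbing the diagonal term $\TrueParameterTU[tt]x_t^2$---has a clean resolution that you missed, and it does \emph{not} require any implicit ordering like $\aGM\xmax\lesssim \aGM+2\bGM\xmax$ or any redefinition of $\ctwo$. The point is that you should not bound $|\TrueParameterTU[tt]|\le\aGM$; instead, fold it into the row-$\ell_1$ budget. Concretely,
\begin{align}
|h x_t| + |\TrueParameterTU[tt]|x_t^2
&\le \bigl(|\TrueExternalFieldt(\svbz^{(i)})| + 2\sonenorm{\TrueParameterRowttt}\xmax\bigr)\xmax + |\TrueParameterTU[tt]|\xmax\cdot\xmax \\
&= \bigl(|\TrueExternalFieldt(\svbz^{(i)})| + (2\sonenorm{\TrueParameterRowttt} + |\TrueParameterTU[tt]|)\xmax\bigr)\xmax \\
&\le \bigl(|\TrueExternalFieldt(\svbz^{(i)})| + 2\sonenorm{\TrueParameterRowt}\xmax\bigr)\xmax
\;\le\; (\aGM + 2\bGM\xmax)\xmax \;=\; \log\ctwo,
\end{align}
because $2\sonenorm{\TrueParameterRowttt}+|\TrueParameterTU[tt]|\le 2(\sonenorm{\TrueParameterRowttt}+|\TrueParameterTU[tt]|)=2\sonenorm{\TrueParameterRowt}\le 2\bGM$ by \cref{assumptions}. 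This gives $\log g_{\max}-\log g_{\min}\le 2\log\ctwo$, hence $M\le\ctwo^{2}$ on the nose, and the rest of your argument goes through verbatim. This is exactly how the paper's proof handles the diagonal: it bounds the entire exponent by $(|\TrueExternalFieldt|+2\sonenorm{\TrueParameterRowt}\xmax)\xmax$, with $\sonenorm{\TrueParameterRowt}$ (not $\sonenorm{\TrueParameterRowttt}$) absorbing $\TrueParameterTU[tt]$.
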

Given this lemma, we proceed. We have     
\begin{align}
\Expectation\biggbrackets{ \Bigparenth{\DeltatIp  \tsvbx^{(i)}}^2 } \sgreat{(a)} \Variance \biggbrackets{ \DeltatIp  \tsvbx^{(i)}} \sequal{(b)} \Variance \biggbrackets{\omtI x_t^{(i)} + 2\Omt\tp  \svbx^{(i)} x_t^{(i)}}, \label{eq_expected_quad_form_lower_bound}
\end{align}
where $(a)$ follows from the fact that for any random variable a, $\Expectation\normalbrackets{a^2} \geq \Variance\normalbrackets{a}$ and $(b)$ follows because we let $\Omtu[t] = 0$ since $\TrueParameterTU[tt] = 0$. We define the following set to lower bound $\Variance \bigbrackets{ \omtI x_t^{(i)} + 2\Omt\tp  \svbx^{(i)} x_t^{(i)}}$:
\begin{align}
\cE(\TrueParameterMatrix) \defn \braces{(t,u) \in [p]^2: t < u, \TrueParameterTU \neq 0},\label{eq:edge_set}
\end{align}
and consider the graph $\cG(\TrueParameterMatrix) = ([p], \cE(\TrueParameterMatrix))$ with $[p]$ as nodes and $\cE(\TrueParameterMatrix)$ as edges such that $\TrueJointDist$ is Markov with respect to $\cG(\TrueParameterMatrix)$. We claim that there exists a non-empty set $\cR_t \subset [p]\setminus\braces{t}$ such that 
\begin{enumerate}[label=(\roman*)]
	\item\label{item_independence_set} $\cR_t$ is an independent set of $\cG(\TrueParameterMatrix)$, i.e., there are no edges between any pair of nodes in $\cR_t$, and
	\item\label{eq_independentSetProperty} the row vector $\Omt$ satisfies $\sum_{u \in \cR_t} \normalabs{\Omtu}^2 
	\geq 
	\frac{1}{\bGM/\aGM+1}\twonorm{\Omt}^2$.
\end{enumerate}
Taking this claim as given at the moment, we continue our proof. Denoting $\cR_t^c \defeq [p] \setminus \cR_t$, and using the law of total variance, the variance term in \cref{eq_expected_quad_form_lower_bound} can be lower bounded as
\begin{align}
\Variance \biggbrackets{\omtI x_t^{(i)} + 2\Omt\tp  \svbx^{(i)} x_t^{(i)}}  & \geq \Expectation\biggbrackets{\Variance \Bigbrackets{\omtI x_t^{(i)} + 2\Omt\tp  \svbx^{(i)} x_t^{(i)} \Big| \svbx_{\cR_t^c}^{(i)},  \svbz^{(i)}}}\\
& \sequal{(a)} 4\Expectation\Bigbrackets{\bigparenth{x_t^{(i)}}^2 \Variance\Bigparenth{\sum_{u \in \cR_t} \Omtu x_u^{(i)} \Big| \svbx_{\cR_t^c}^{(i)}, \svbz^{(i)}}}\\
& \sequal{(b)} 4\Expectation\Bigbrackets{\bigparenth{x_t^{(i)}}^2 \sum_{u \in \cR_t} \Omtu^2 \Variance\Bigparenth{ x_u^{(i)} \Big| \svbx_{\cR_t^c}^{(i)}, \svbz^{(i)}}}\\
& \sequal{(c)} 4\Expectation\Bigbrackets{\bigparenth{x_t^{(i)}}^2 \sum_{u \in \cR_t} \Omtu^2 \Variance\Bigparenth{ x_u^{(i)} \Big| \svbx_{-u}^{(i)}, \svbz^{(i)}}}\\
& \sgreat{(d)} \frac{8\xmax^2}{\pi e \ctwo[4]} \sum_{u \in \cR_t} \Omtu^2 \Expectation\Bigbrackets{\bigparenth{x_t^{(i)}}^2  }\\
& \sgreat{(e)} \frac{8\xmax^2}{\pi e \ctwo[4]} \sum_{u \in \cR_t} \Omtu^2 
\Variance\Bigparenth{ x_t^{(i)} \Big| \svbx_{-t}^{(i)}, \svbz^{(i)}}\\
& \sgreat{(f)} \frac{16\xmax^4}{\pi^2 e^2 \ctwo[8]} \sum_{u \in \cR_t} \Omtu^2 \sgreat{\cref{eq_independentSetProperty}} \frac{16\xmax^4 \twonorm{\Omt}^2}{\pi^2 e^2  \normalparenth{\bGM/\aGM + 1} \ctwo[8]}, \label{eq_var_intermediate_lower_bound}
\end{align}
where $(a)$ follows because $(x_u^{(i)})_{u \in \cR_t^c}$ are deterministic when conditioned on themselves, and $t \in \cR_t^c$, $(b)$ follows because $(x_u^{(i)})_{u \in \cR_t}$ are conditionally independent given $\svbx_{\cR_t^c}^{(i)}$ and $\svbz^{(i)}$ which is a direct consequence of \cref{item_independence_set}, $(c)$ follows because of the local Markov property (as the conditioning set includes all the neighbors in $\cG(\TrueParameterMatrix)$ of each node in  $\cR_t$),
$(d)$ and $(f)$ follow from \cref{prop_lower_bound_variance}, and $(e)$ follows because $\Expectation\Bigbrackets{\bigparenth{x_t^{(i)}}^2} = \Expectation\Bigbrackets{\Expectation\Bigbrackets{\bigparenth{x_t^{(i)}}^2 \Big| \svbx_{-t}^{(i)}, \svbz^{(i)}}} \geq \Variance\bigparenth{ x_t^{(i)} \Big| \svbx_{-t}^{(i)}, \svbz^{(i)}}$.\\

\noindent Combining \cref{eq_expected_quad_form_lower_bound} and \cref{eq_var_intermediate_lower_bound}, we have
\begin{align}
\Expectation_{\svbx^{(i)}, \svbz^{(i)}}\Bigbrackets{ \bigparenth{\DeltatIp  \tsvbx^{(i)}}^2} \geq \frac{16\xmax^4}{\pi^2 e^2  \normalparenth{\bGM/\aGM + 1} \ctwo[8]} \cdot \stwonorm{\Omt}^2.
\end{align}
\noindent It remains to construct the set $\cR_t$ that is an independent set of $\cG(\TrueParameterMatrix)$ and satisifies~\cref{eq_independentSetProperty}.

\paragraph{Construction of the set $\cR_t$}
For every $u \in [p]$, let $\cN(u)$ denote the set of neighbors of $u$ in $\cG(\TrueParameterMatrix)$, i.e., $\cN(u) \defeq \braces{v \in [p]: (u, v) \in \cE(\TrueParameterMatrix)} \bigcup \braces{v \in [p]: (v, u) \in \cE(\TrueParameterMatrix)}$. 
We start by selecting $r_1 \in [p] \setminus \braces{t}$ such that
\begin{align}
\normalabs{\Omtu[r_1]} \geq \normalabs{\Omtu} \qtext{for all} u \in [p] \setminus \braces{t , r_1}.
\end{align}
Next, we identify $r_2 \in [p] \setminus \braces{t, r_1, \cN(r_1)}$ such that
\begin{align}
\normalabs{\Omtu[r_2]} \geq \normalabs{\Omtu} \qtext{for all} u \in [p] \setminus \braces{t , r_1, \cN(r_1), r_2}.
\end{align}
We continue identifying $r_3, \ldots, r_s$ in such a manner till no more nodes are left, where $s$ denotes the total number of nodes selected. Now we define $\cR_t \defeq \{ r_1, \cdots , r_s\}$. For any $u \in [p]$, we have $\normalabs{\cN(u)} \leq \szeronorm{\TrueParameterRowt[u]} \leq \bGM/\aGM$ from \cref{eq:edge_set} and \cref{assumptions}. Using this, we see that $\cR_t$ is an independent set of $\cG(\TrueParameterMatrix)$ as claimed in \cref{item_independence_set} such
that it satisfies \cref{eq_independentSetProperty} by construction.

\subsubsection{Proof of \cref{prop_lower_bound_variance}: \lowerboundcondvar}
\label{sub:proof_of_prop_lower_bound_variance}
For any random variable $\rvx$, let $\Entropy(\rvx)$ denote the differential entropy of $\rvx$.  Fix any $t \in [p]$ and $i \in [n]$. Then, from Shannon's entropy inequality $(2\Entropy(\cdot) \leq \log \sqrt{2\pi e  \Variance(\cdot)})$, we have
\begin{align}
    2\pi e \Variance\bigparenth{x_t^{(i)} \big| \svbx_{-t}^{(i)}, \svbz^{(i)}} \sgreat{(a)} {\exp\Bigparenth{2\Entropy\bigparenth{x_t^{(i)} \big| \svbx_{-t}^{(i)}, \svbz^{(i)}}}}. \label{eq_shannon}
\end{align}
Therefore, to bound the variance, it suffices to bound the differential entropy. We have
\begin{align}
& - \Entropy\bigparenth{x_t^{(i)} \big| \svbx_{-t}^{(i)}, \svbz^{(i)}}\\
& =    \int\limits_{\cX^p \times \cZ^{p_z}}   f_{\rvbx, \rvbz}(\svbx^{(i)}\!\!, \svbz^{(i)}) \log \Bigparenth{ \TrueConditionalDistIt } d\svbx^{(i)} d\svbz^{(i)} \\
& =   \int\limits_{\cX^p \times \cZ^{p_z}}  f_{\rvbx, \rvbz}(\svbx^{(i)}\!\!, \svbz^{(i)}) \log \!\biggparenth{\!\!\frac{\exp\bigparenth{\normalbrackets{\TrueExternalFieldt(\svbz^{(i)}) \!+\! 2\TrueParameterRowtttTop \svbx_{-t}^{(i)}} x_t^{(i)} \!+\! \TrueParameterTU[tt] \cx_t^{(i)}}}{\int_{\cX} \!\exp\bigparenth{\normalbrackets{\TrueExternalFieldt(\svbz^{(i)}) \!+\! 2\TrueParameterRowtttTop \svbx_{-t}^{(i)}} x_t^{(i)} \!+\! \TrueParameterTU[tt] \cx_t^{(i)}}d x_t^{(i)}}\!} d\svbx^{(i)} d\svbz^{(i)} \!\\
& \sgreat{(a)}  \int\limits_{\cX^p \times \cZ^{p_z}}   f_{\rvbx, \rvbz}(\svbx^{(i)}\!\!, \svbz^{(i)}) \log \!\biggparenth{\!\!\frac{\exp\bigparenth{\bigparenth{\normalabs{\TrueExternalFieldt(\svbz^{(i)})} \!+\! 2\sonenorm{\TrueParameterRowt} \sinfnorm{\svbx^{(i)}}} \xmax\!}}{ \int_{\cX} \!\exp\!\bigparenth{\!\!-\!\!\bigparenth{\normalabs{\TrueExternalFieldt(\svbz^{(i)})} \!+\! 2\sonenorm{\TrueParameterRowt} \sinfnorm{\svbx^{(i)}}} \xmax\!} d x_t^{(i)}}\!\!} d\svbx^{(i)} d\svbz^{(i)} \!\\
& \sgreat{(b)}   \int\limits_{\cX^p \times \cZ^{p_z}}   f_{\rvbx, \rvbz}(\svbx^{(i)}\!\!, \svbz^{(i)}) \log \!\biggparenth{\!\!\frac{\exp\bigparenth{\normalparenth{\aGM + 2 \bGM \xmax} \xmax}}{ \int_{\cX} \exp\bigparenth{-\normalparenth{\aGM + 2 \bGM \xmax} \xmax} d x_t^{(i)}}} d\svbx^{(i)} d\svbz^{(i)} \!\\
& \sequal{(c)}   \int\limits_{\cX^p \times \cZ^{p_z}}   f_{\rvbx, \rvbz}(\svbx^{(i)}\!\!, \svbz^{(i)}) \log \biggparenth{\frac{\cthree[2]}{2\xmax}} d\svbx^{(i)} d\svbz^{(i)}\! = \log \biggparenth{\frac{\cthree[2]}{2\xmax}}, \label{eq_conditional_variance_lower_bound}
\end{align}
where $(a)$ follows from triangle inequality and Cauchy–Schwarz inequality and because $\sinfnorm{\svbx^{(i)}} \leq \xmax $ for all $ i \in [n]$, $(b)$ follows because $\TrueExternalField(\svbz^{(i)}) \in \ParameterSet_{\ExternalField} $ for all $ i \in [n]$, $\TrueParameterMatrix \in \ParameterSet_{\ParameterMatrix}$, $\sinfnorm{\svbx^{(i)}} \leq \xmax $ for all $ i \in [n]$, and $(c)$ follows because $\int_{\cX} dx_t^{(i)} = 2\xmax$. Combining \cref{eq_shannon,eq_conditional_variance_lower_bound} completes the proof.

\newcommand{\singleparameterseparation}{Gap between the loss function for a fixed parameter}

{\subsection{Proof of \cref{lemma_lipschitzness_first_stage}: \lipschitznesslossfunction}
	\label{sub:proof_lemma_lipschitzness_first_stage}
	Consider any direction $\uOm$ $= \tExtendedParameterMatrix - \ExtendedParameterMatrix$. Now, define the function $q : [0,1] \to \Reals$ as follows
	\begin{align}
	q(a) = \loss\bigparenth{\ExtendedParameterMatrix + a(\tExtendedParameterMatrix - \ExtendedParameterMatrix)}. \label{eq_func_f_lipschitz_first_stage}
	\end{align}
	Then, the desired inequality in \cref{eq_lipschitz_property_first_stage} is equivalent to $$\normalabs{q(1) - q(0)} \leq 2\xmax^2 \ctwo \Bigparenth{\sump \sonenorm{\Omt}  + \frac{1}{n} \sumn \sonenorm{\omI}}.$$
	From the mean value theorem, there exists $a' \in (0,1)$ such that
	\begin{align}
	\normalabs{q(1) - q(0)} 
	= \biggabs{\dfrac{dq(a')}{da}} \sequal{\cref{eq_func_f_lipschitz_first_stage}} \Bigabs{\dfrac{d\loss\bigparenth{\ExtendedParameterMatrix + a(\tExtendedParameterMatrix - \ExtendedParameterMatrix)}}{da}} 
	\sequal{\cref{eq_der_mapping}} \Bigabs{\directionalGradientFull\bigr|_{\ExtendedParameterMatrix = \ExtendedParameterMatrix + a(\tExtendedParameterMatrix - \ExtendedParameterMatrix)}}. \label{eq_mvt_lipschitz_first_stage}
	\end{align}
	Using \cref{eq:first_dir_derivative} in \cref{eq_mvt_lipschitz_first_stage}, we can write
	\begin{align}
	& \bigabs{q(1) \!-\! q(0)} \\
	& =\frac{1}{n} \biggabs{\sump \sumn \! \Bigparenth{\DeltatIp  \tsvbx^{(i)}} \times \exp\Bigparenth{\!-\!\Bigbrackets{\bigparenth{\ExternalFieldtI + a'(\tExternalFieldtI \!-\! \ExternalFieldtI)} + \\
		& \qquad \qquad \qquad 2 \bigparenth{\ParameterRowttt + a'(\tParameterRowttt \!-\! \ParameterRowttt)}\tp \svbx^{(i)}_{-t}} x_t^{(i)} - \bigparenth{\ParameterTU[tt] + a'(\tParameterTU[tt] - \ParameterTU[tt] )} \cx_t^{(i)} }}\\
	& \sless{(a)}  \exp\Bigparenth{\bigparenth{\normalbrackets{(1\!-\!a') \aGM \!+\! a' \aGM} + 2\normalbrackets{(1\!-\!a') \bGM \!+\! a'  \bGM}\xmax} \xmax} \frac{1}{n} \biggabs{\!\sump \!\sumn  \! \Bigparenth{\DeltatIp  \tsvbx^{(i)}}}\\
	& \sless{(b)}  \frac{2\xmax^2 \ctwo}{n} \sump \sumn \sonenorm{\DeltatI} 
	\sequal{(c)} 2\xmax^2 \ctwo \Bigparenth{\sump \sonenorm{\Omt}  + \frac{1}{n} \sumn \sonenorm{\omI}} ,
	\end{align}
	where $(a)$ follows from triangle inequality, Cauchy–Schwarz inequality, $\ExternalFieldI, \tExternalFieldI \in \ParameterSet_{\ExternalField}$, $\ParameterMatrix, \tParameterMatrix \in \ParameterSet_{\ParameterMatrix}$, and $\sinfnorm{\svbx^{(i)}} \leq \xmax $ for all $ i \in [n]$, 
	$(b)$ follows from \cref{eq:constants}, the triangle inequality, and because $\sinfnorm{\svbx^{(i)}} \leq \xmax $ for all $ i \in [n]$, and $(c)$ follows from the definition of $\DeltatI$.

}
\section{Proof of {Theorem \ref{theorem_parameters}} Part II: \nodeparammainresultname}
\label{sec_proof_thm_node_parameters_recovery}
To analyze our estimate of the unit-level parameters, we use the estimate $\EstimatedParameterMatrix$ of the population-level parameter $\TrueParameterMatrix$ along with the associated guarantee provided in \cref{theorem_parameters} Part I. We note that the constraints on the unit-level parameters in \cref{eq_estimated_parameters} are independent across units, i.e., $\ExternalFieldI[i] \in \ParameterSet_{\ExternalField}$ independently for all $ i \in [n]$. Therefore, we look at $n$ independent convex optimization problems by decomposing the loss function $\loss$ in \cref{eq:loss_function} and the estimate $\ExtendedEstimatedParameterMatrix$ in \cref{eq_estimated_parameters} as follows: For $i \in [n]$, we define
\begin{align}
\loss^{(i)}\bigparenth{\ExternalFieldI} &\defn \sump[t] \exp\Bigparenth{-\normalbrackets{\ExternalFieldtI + 2\EstimatedParameterRowttt\tp \svbx_{-t}^{(i)}} x_t^{(i)} - \EstimatedParameterTU[tt] \cx_t^{(i)}} \\
\qtext{and} \EstimatedExternalFieldI &\defn \argmin_{\ExternalFieldI \in \ParameterSet_{\ExternalField}} \loss^{(i)}\bigparenth{\ExternalFieldI}.
\label{eq:loss_function_n}
\end{align}
Now, fix any $i \in [n]$. From \cref{eq:loss_function_n}, we have $\loss^{(i)}\bigparenth{\EstimatedExternalFieldI} \leq  \loss^{(i)}\bigparenth{\TrueExternalFieldI}$. Using contraposition, to prove this part, it is sufficient to show that all points $\ExternalFieldI \in \ParameterSet_{\ExternalField}$ that satisfy $\stwonorm{\ExternalFieldI - \TrueExternalFieldI} \geq R(\varepsilon, \delta)$ also uniformly satisfy
\begin{align}
\loss^{(i)}\bigparenth{\ExternalFieldI} & \geq  \loss^{(i)}\bigparenth{\TrueExternalFieldI} + R^2(\varepsilon, \delta) \stext{when} n \geq \frac{ce^{c'\bGM}p^4}{\varepsilon^4}  \Bigparenth{p \log \frac{p^2}{\delta \varepsilon^2} + \tmetric_{\ExternalField,n}(\varepsilon, \delta)}, \label{eq_sufficienct_condition_thm_node}
\end{align}
with probability at least $1-\delta$ where $R(\varepsilon, \delta)$ 
was defined in \cref{eq_radius_node_thm} and $\tmetric_{\ExternalField,n}(\varepsilon, \delta)$ was defined in \cref{eq_tmetric_node_thm}.
Then, the guarantee in \cref{theorem_parameters} follows by applying a union bound over all $i \in [n]$.\\

\noindent To that end, the lemma below, proven in \cref{sub_proof_lemma_parameter_single_external_field}, shows that for any fixed $\ExternalFieldI \in \ParameterSet_{\ExternalField}$, if $\ExternalFieldI$ is far from $\TrueExternalFieldI$, then with high probability $\loss^{(i)}\bigparenth{\ExternalFieldI}$ is significantly larger than $\loss^{(i)}\bigparenth{\TrueExternalFieldI}$.
\begin{lemma}[{\singleparameterseparation}]\label{lemma_parameter_single_external_field}
	Fix any $\varepsilon > 0$, $\delta \in (0,1)$, and $i \in [n]$. Then, for any $\ExternalFieldI \in \ParameterSet_{\ExternalField}$ such that $\stwonorm{\ExternalFieldI - \TrueExternalFieldI} \geq  \varepsilon \ratio$ (see \cref{eq_radius_node_thm}), we have
	\begin{align}
	\loss^{(i)}\bigparenth{\ExternalFieldI} \!\geq\!  \loss^{(i)}\bigparenth{\TrueExternalFieldI} \!+\! \frac{2^{2.5}\bGM \xmax^4}{\pi e \ctwo[5]} \stwonorm{\ExternalFieldI \!-\! \TrueExternalFieldI}^2 \stext{for}
	n \!\geq \! \frac{ce^{c'\bGM}p^4}{\varepsilon^4} \!\Bigparenth{\!p \log \frac{p^2}{\delta \varepsilon^2} \!+\!  \metric_{\ExternalField,n}\Bigparenth{ \frac{\varepsilon^2}{p}}\!},
	\end{align}
	with probability at least 
	$1-\delta - c\bGM^2  \log p \cdot \exp(-e^{-c'\bGM}\stwonorm{\ExternalFieldI - \TrueExternalFieldI}^2 ) 
	$
	where $\ctwo$ was defined in \cref{eq:constants}.
\end{lemma}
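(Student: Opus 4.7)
}

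The plan is to mirror the two-stage strategy used for \cref{lemma_parameter}, but now for the unit-specific loss $\loss^{(i)}$ defined in~\cref{eq:loss_function_n}, whose sole variable is $\ExternalFieldI \in \Reals^{p}$. First, I would rewrite the gap $\loss^{(i)}(\ExternalFieldI) - \loss^{(i)}(\TrueExternalFieldI)$ by fundamental theorem of calculus along the segment $g(a) = \TrueExternalFieldI + a(\ExternalFieldI - \TrueExternalFieldI)$, obtaining
\begin{align}
\loss^{(i)}(\ExternalFieldI) - \loss^{(i)}(\TrueExternalFieldI) \;\geq\; -\,\bigabs{\directionalGradientExternalFieldTrue} \;+\; \tfrac{1}{2}\min_{a \in (0,1)} \tdirectionalHessianExternalField\bigr|_{\tExternalFieldI = g(a)},
\end{align}
with $\omI \defeq \ExternalFieldI - \TrueExternalFieldI$. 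Thus it suffices to give a high-probability upper bound on the first directional derivative at the truth and a high-probability quadratic lower bound on the second directional derivative uniformly along the segment.

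For the second directional derivative, an explicit computation (analogous to~\cref{lemma_lower_bound_sec_der}) yields
\begin{align}
\tdirectionalHessianExternalField \;\geq\; \tfrac{1}{\ctwo}\sump\bigparenth{\omtI x_t^{(i)}}^2,
\end{align}
after absorbing the bounded exponential factor. Taking conditional expectation given $\rvbz = \svbz^{(i)}$ and invoking the argument from \cref{subsec_discussion_ass_pos_eigenvalue} (choosing an independent set $\cR_t$ in the dependency graph of $\TrueParameterMatrix$ and applying \cref{prop_lower_bound_variance} coordinate-wise) gives the $\Omega(\stwonorm{\omI}^2)$ deterministic lower bound on the conditional mean, precisely yielding the constant $\frac{2^{2.5}\bGM \xmax^4}{\pi e \ctwo[5]}$. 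What remains is to show that this single-sample random quantity does not deviate much below its conditional expectation; here a single sample $\svbx^{(i)}$ is available, so a standard Hoeffding argument is unavailable, and I would invoke the concentration inequality for functions of a weakly dependent random vector (\cref{thm_main_concentration}) after applying the conditioning trick of \cref{lemma_conditioning_trick} to carve out a sub-collection of coordinates satisfying Dobrushin's condition, which then enjoys LSI via~\cref{thm_LSI_main}. The robustness result \cref{lemma_expected_psi_upper_bound} is used to replace the unknown $\TrueParameterMatrix$ appearing implicitly in the derivation by the estimate $\EstimatedParameterMatrix$, absorbing the error via \cref{theorem_parameters} Part I (\cref{eq:matrix_guarantee}) applied with tolerance $\varepsilon^2/p$, which drives the sample-size condition $n \geq ce^{c'\bGM}p^4\varepsilon^{-4}(p\log(p^2/(\delta \varepsilon^2)) + \metric_{\ExternalField,n}(\varepsilon^2/p))$.

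For the first directional derivative, an analog of \cref{prop_zero_mean_first_der} shows that $\Expectation_{\rvbx|\rvbz=\svbz^{(i)}}[\directionalGradientExternalFieldTrue] = 0$ (the integral over $x_t^{(i)}$ of the centered sufficient statistic vanishes). One must then control the deviation of this linear combination of (exponentials of) coordinates of the single sample around zero. Again this is where the LSI machinery enters: after conditioning as in \cref{lemma_conditioning_trick}, the concentration bound in \cref{thm_main_concentration} applied to the Lipschitz function $\omtI x_t^{(i)}\exp(\cdots)$ gives a sub-Gaussian-like tail with variance proxy scaling as $e^{c'\bGM}\stwonorm{\omI}^2$, producing the failure probability $c\bGM^2\log p \cdot \exp(-e^{-c'\bGM}\stwonorm{\omI}^2)$ in the statement. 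Packaging this together as \cref{lemma_conc_first_sec_der_stage_two}, Taylor expansion yields the gap
\begin{align}
\loss^{(i)}(\ExternalFieldI) - \loss^{(i)}(\TrueExternalFieldI) \;\geq\; c_1 \stwonorm{\omI}^2 - c_2 \varepsilon \ratio \stwonorm{\omI},
\end{align}
and the hypothesis $\stwonorm{\omI} \geq \varepsilon \ratio$ makes the quadratic term dominate the linear term, yielding the claimed bound.

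The main obstacle will be obtaining the first-derivative concentration with a probability that decays in $\stwonorm{\omI}^2$ rather than in the error tolerance $\varepsilon$: the standard Hoeffding argument over independent samples used in Part~I is unavailable here because only one sample per unit is observed, so everything must be done via an LSI/Dobrushin-based tail bound for a single continuous random vector, and in addition this has to be made robust to replacing $\TrueParameterMatrix$ by its estimate $\EstimatedParameterMatrix$ in the coefficients of the functional whose tail is being controlled.
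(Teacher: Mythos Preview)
Your proposal has the right overall architecture---Taylor expansion along the segment, then controlling the first and second directional derivatives via the conditioning trick (\cref{lemma_conditioning_trick}) plus LSI-based tails (\cref{thm_LSI_main,thm_main_concentration}), with \cref{theorem_parameters} Part~I absorbing the discrepancy between $\EstimatedParameterMatrix$ and $\TrueParameterMatrix$. However, you have swapped the roles of the robustness argument.

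The first directional derivative at the truth does \emph{not} have zero conditional mean: $\loss^{(i)}$ in~\cref{eq:loss_function_n} is built with $\EstimatedParameterRowttt$, not $\TrueParameterRowttt$, so the analog of \cref{prop_zero_mean_first_der} fails. A Taylor expansion of the exponential (cf.~\cref{eq_taylor_series}) shows each summand's conditional mean is nonzero but of order $\sqrt{p}\,\stwonorm{\TrueParameterRowt-\EstimatedParameterRowt}$; this is exactly what \cref{lemma_expected_psi_upper_bound} controls, yielding the $\varepsilon_1\sonenorm{\omI}$ term in \cref{lemma_conc_first_sec_der_stage_two}\cref{item_conc_first_der_stage_two}, and it is here---not in the Hessian analysis---that Part~I's sample-size requirement enters. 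Conversely, once you lower-bound the Hessian by $\tfrac{1}{\ctwo}\sump(\omtI x_t^{(i)})^2$, the estimate $\EstimatedParameterMatrix$ has disappeared and no robustness step is needed. Finally, the $\bGM$ in the constant does not arise from the independent-set argument of \cref{subsec_discussion_ass_pos_eigenvalue}; it comes from the conditioning-trick decomposition, where each $t\in[p]$ appears in $\numindsets'=\lceil\numindsets/(32\sqrt{2}\bGM\xmax^2)\rceil$ of the $\numindsets$ subsets, and the ratio $\numindsets/\numindsets'$ supplies the factor.
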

\noindent \textbf{Note.} When we invoke \cref{lemma_parameter_single_external_field}, we ensure that $c\bGM^2  \log p \cdot \exp(-e^{-c'\bGM}\stwonorm{\ExternalFieldI - \TrueExternalFieldI}^2 )$ is of the same order as $\delta$.\\

\noindent Next, we show that the loss function $\loss^{(i)}$ is Lipschitz (see \cref{sub:proof_lemma_lipschitzness} for the proof).
\begin{lemma}[{\lipschitznesslossfunction}]\label{lemma_lipschitzness}
	Consider any $i \in [n]$. Then, the loss function $\loss^{(i)}$ is Lipschitz with respect to the $\ell_1$ norm $\sonenorm{\cdot}$ and with Lipschitz constant $\xmax \ctwo$, i.e.,
	\begin{align}
	\bigabs{\loss^{(i)}\bigparenth{\tExternalFieldI} - \loss^{(i)}\bigparenth{\ExternalFieldI}} \leq \xmax \ctwo \sonenorm{\tExternalFieldI - \ExternalFieldI} \qtext{for all} \ExternalFieldI, \tExternalFieldI \in \ParameterSet_{\ExternalField}, \label{eq_lipschitz_property}
	\end{align}
	where the constant $\ctwo$ was defined in \cref{eq:constants}.
\end{lemma}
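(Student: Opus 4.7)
}

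The plan is to mimic the mean value theorem (MVT) argument used in the proof of \cref{lemma_lipschitzness_first_stage}, but for the single-unit objective $\loss^{(i)}$, which depends on the parameter only through the $p$-dimensional vector $\ExternalFieldI$. Concretely, I will define $q(a) \defn \loss^{(i)}\bigparenth{\ExternalFieldI + a(\tExternalFieldI - \ExternalFieldI)}$ on $[0,1]$ and apply MVT to obtain $\normalabs{q(1)-q(0)} = \normalabs{q'(a')}$ for some $a' \in (0,1)$, so it suffices to bound $\normalabs{q'(a')}$ uniformly.

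First, I compute the partial derivatives from the definition of $\loss^{(i)}$ in \cref{eq:loss_function_n}: for each $t \in [p]$,
\begin{align}
\frac{\partial \loss^{(i)}(\ExternalFieldI)}{\partial \ExternalFieldtI} = -x_t^{(i)} \exp\Bigparenth{-\normalbrackets{\ExternalFieldtI + 2\EstimatedParameterRowttt\tp \svbx_{-t}^{(i)}} x_t^{(i)} - \EstimatedParameterTU[tt] \cx_t^{(i)}}.
\end{align}
Note that only the $t$-th summand of $\loss^{(i)}$ depends on $\ExternalFieldtI$, which simplifies the calculation substantially compared to the joint loss $\loss$ in \cref{lemma_lipschitzness_first_stage}. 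The chain rule then gives
\begin{align}
q'(a') = \sump[t] (\tExternalFieldtI - \ExternalFieldtI) \cdot \parenth{-x_t^{(i)}} \exp\Bigparenth{-\normalbrackets{\bExternalField_t^{(i)} + 2\EstimatedParameterRowttt\tp \svbx_{-t}^{(i)}} x_t^{(i)} - \EstimatedParameterTU[tt] \cx_t^{(i)}},
\end{align}
where $\bExternalField_t^{(i)} = \ExternalFieldtI + a'(\tExternalFieldtI - \ExternalFieldtI)$.

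The remaining step is a uniform bound on the absolute value of each exponential factor together with $\normalabs{x_t^{(i)}} \le \xmax$. Using that $\ExternalFieldI, \tExternalFieldI \in \ParameterSet_{\ExternalField}$ (so $\sinfnorm{\bExternalField^{(i)}} \le \aGM$), that $\EstimatedParameterMatrix \in \ParameterSet_{\ParameterMatrix}$ (so $\sonenorm{\EstimatedParameterRowt} \le \bGM$ and $\normalabs{\EstimatedParameterTU[tt]} \le \aGM$), and that $\sinfnorm{\svbx^{(i)}} \le \xmax$, the exponent is bounded in absolute value by a quantity of the form $\xmax(\aGM + 2\bGM \xmax)$ (up to the centering term $\cx_t^{(i)}$, whose contribution is absorbed into the constants defining $\ctwo$ in \cref{eq:constants}, exactly as in the proof of \cref{lemma_lipschitzness_first_stage}). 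This yields the bound $\exp(\cdot) \le \ctwo$. Combining the pieces via the triangle inequality,
\begin{align}
\bigabs{q(1)-q(0)} \le \sump[t] \bigabs{\tExternalFieldtI - \ExternalFieldtI} \cdot \xmax \cdot \ctwo = \xmax \ctwo \sonenorm{\tExternalFieldI - \ExternalFieldI},
\end{align}
which is the claim. The argument is essentially routine: no concentration or probabilistic tools are needed, the only mild subtlety is verifying that the constant $\ctwo$ defined in \cref{eq:constants} is indeed a valid uniform upper bound on the exponential term (including the $\EstimatedParameterTU[tt]\cx_t^{(i)}$ piece), which is handled exactly as in the first-stage Lipschitz lemma.
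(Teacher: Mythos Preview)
Your proposal is correct and follows essentially the same route as the paper's proof: define $q(a)=\loss^{(i)}\bigparenth{\ExternalFieldI + a(\tExternalFieldI-\ExternalFieldI)}$, apply the mean value theorem, plug in the explicit first-order directional derivative \cref{eq:first_dir_derivative_stage_2}, and bound each exponential factor by $\ctwo$ using $\ExternalFieldI,\tExternalFieldI\in\ParameterSet_{\ExternalField}$, $\EstimatedParameterMatrix\in\ParameterSet_{\ParameterMatrix}$, and $\sinfnorm{\svbx^{(i)}}\le\xmax$. The handling of the $\EstimatedParameterTU[tt]\cx_t^{(i)}$ term is indeed absorbed into $2\sonenorm{\EstimatedParameterRowt}\xmax^2\le 2\bGM\xmax^2$ exactly as in \cref{lemma_lipschitzness_first_stage}, so there is no gap.
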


\noindent Given these lemmas, we now proceed with the proof. 

\paragraph{Proof strategy} We want to show that all points $\ExternalFieldI \in \ParameterSet_{\ExternalField}$, that satisfy $\stwonorm{\ExternalFieldI - \TrueExternalFieldI} \geq R(\varepsilon, \delta)$, uniformly satisfy \cref{eq_sufficienct_condition_thm_node} with probability at least $1-\delta$. To do so, we consider the set of points $\ParameterSet_{\ExternalField}^{r} \subset \ParameterSet_{\ExternalField}$ whose distance from $\TrueExternalFieldI$ is at least $r > 0$ in $\ell_2$ norm. Then, using an appropriate covering set of $\ParameterSet_{\ExternalField}^{r}$ and the Lipschitzness of $\loss^{(i)}$, we show that the value of $\loss^{(i)}$ at all points in $\ParameterSet_{\ExternalField}^{r}$ is uniformly $\Omega(r^2)$ larger than the value of $\loss^{(i)}$ at $\TrueExternalFieldI$ with high probability. 
Finally, we choose $r$ small enough to make the failure probability smaller than $\delta$.

\paragraph{Arguments for points in the covering set} Consider any $r \geq \varepsilon \ratio$ (where $\ratio$ is defined in \cref{eq_radius_node_thm}) and the set of elements $\ParameterSet_{\ExternalField}^{r} \defn \braces{ \ExternalFieldI \in  \ParameterSet_{\ExternalField}: \stwonorm{\TrueExternalFieldI - \ExternalFieldI} \geq r}$. 
Let $\cU(\ParameterSet_{\ExternalField}^{r}, \varepsilon')$ be the $\varepsilon'$-cover of the  smallest size for the set $\ParameterSet_{\ExternalField}^{r}$ with respect to $\sonenorm{\cdot}$ (see \cref{def_covering_number_metric_entropy}) and let
$\cC(\ParameterSet_{\ExternalField}^{r}, \varepsilon')$ be the $\varepsilon'$-covering number
where 
\begin{align}
\varepsilon' \defn \frac{2\sqrt{2}  \bGM \xmax^3 r^2}{ \pi e \ctwo[6]}. \label{eq_eps'}
\end{align}
Now, we argue by a union bound that the value of $\loss^{(i)}$ at all points in $\cU(\ParameterSet_{\ExternalField}^{r}, \varepsilon')$ is uniformly $\Omega(r^2)$ larger than $\loss^{(i)}(\TrueExternalFieldI)$ with high probability. 
For any $\ExternalFieldI \in \cU(\ParameterSet_{\ExternalField}^{r}, \varepsilon')$, we have
\begin{align}
\stwonorm{\TrueExternalFieldI - \ExternalFieldI} \sgreat{(a)} r,
\label{eq_lower_bound_two_norm_theta_diff}
\end{align}
where $(a)$ follows because $\cU(\ParameterSet_{\ExternalField}^{r}, \varepsilon') \subseteq \ParameterSet_{\ExternalField}^{r}$. 
Now, applying \cref{lemma_parameter_single_external_field} with $\varepsilon \mapsfrom \varepsilon$ and $\delta \mapsfrom \delta/2\cC(\ParameterSet_{\ExternalField}^{r}, \varepsilon')$, we have
\begin{align}
\loss^{(i)}\bigparenth{\ExternalFieldI} \geq \loss^{(i)}\bigparenth{\TrueExternalFieldI} + \frac{4\sqrt{2}\bGM \xmax^4}{\pi e \ctwo[5]}\stwonorm{\TrueExternalFieldI - \ExternalFieldI}^2 \sgreat{\cref{eq_lower_bound_two_norm_theta_diff}} \loss^{(i)}\bigparenth{\TrueExternalFieldI} + \frac{4\sqrt{2}\bGM \xmax^4 r^2}{\pi e \ctwo[5]},
\end{align}
with probability at least $1-\delta/2\cC(\ParameterSet_{\ExternalField}^{r}, \varepsilon') - c\bGM^2  \log p \cdot \exp(-e^{-c'\bGM}\stwonorm{\ExternalFieldI - \TrueExternalFieldI}^2 )$ 
whenever
\begin{align}
n \geq \frac{ce^{c'\bGM}p^4}{\varepsilon^4} \Bigparenth{p \log \frac{\cC(\ParameterSet_{\ExternalField}^{r}, \varepsilon') \cdot p^2}{\delta \varepsilon^2} + \metric_{\ExternalField,n}\Bigparenth{ \frac{\varepsilon^2}{p}}}.
\label{eq_n_condition_node_recovery}
\end{align}
By applying the union bound over $\cU(\ParameterSet_{\ExternalField}^{r}, \varepsilon')$, as long as $n$ satisfies \cref{eq_n_condition_node_recovery},
we have
\begin{align}
\loss^{(i)}\bigparenth{\ExternalFieldI} \geq  \loss^{(i)}\bigparenth{\TrueExternalFieldI} + \frac{4\sqrt{2}\bGM \xmax^4 r^2}{\pi e \ctwo[5]} \stext{uniformly for every} \ExternalFieldI \in \cU(\ParameterSet_{\ExternalField}^{r}, \varepsilon'), \label{eq_union_bound_covering_set} 
\end{align}
with probability at least $1-\delta/2 - c\bGM^2 \cC(\ParameterSet_{\ExternalField}^{r}, \varepsilon') \log p \cdot \exp(-e^{-c'\bGM}\stwonorm{\ExternalFieldI - \TrueExternalFieldI}^2)$ which can lower bounded by $1-\delta/2 - c\bGM^2 \cC(\ParameterSet_{\ExternalField}^{r}, \varepsilon') \log p \cdot \exp(-e^{-c'\bGM}r^2)$ using \cref{eq_lower_bound_two_norm_theta_diff}.

\paragraph{Arguments for points outside the covering set} Next, we establish the claim \cref{eq_sufficienct_condition_thm_node} for an arbitrary $\tExternalFieldI \in \ParameterSet_{\ExternalField}^{r}$ conditional on the event that \cref{eq_union_bound_covering_set} holds.
Given a fixed $\tExternalFieldI \in \ParameterSet_{\ExternalField}^{r}$, let $\ExternalFieldI$ be (one of) the point(s) in the $\cU(\ParameterSet_{\ExternalField}^{r}, \varepsilon')$ that satisfies $\sonenorm{\ExternalFieldI - \tExternalFieldI} \leq \varepsilon'$ (there exists such a point by \cref{def_covering_number_metric_entropy})
Then, the choices \cref{eq_eps'} and \cref{lemma_lipschitzness} put together imply that
\begin{align}
\loss^{(i)}\bigparenth{\tExternalFieldI} \!\geq\! \loss^{(i)}\bigparenth{\ExternalFieldI} \!-\! \xmax \ctwo \sonenorm{\ExternalFieldI \!-\! \tExternalFieldI} \!\! 
& \geq \loss^{(i)}\bigparenth{\ExternalFieldI} - \xmax \ctwo  \varepsilon' \\
& \sgreat{\cref{eq_eps'}} \loss^{(i)}\bigparenth{\ExternalFieldI} \!-\!  \frac{2\sqrt{2} \bGM \xmax^4 r^2}{ \pi e \ctwo[5]}\\ 
& \sgreat{\cref{eq_union_bound_covering_set}} \loss^{(i)}\bigparenth{\TrueExternalFieldI} \!+\! \frac{2\sqrt{2} \bGM \xmax^4 r^2}{\pi e \ctwo[5]}, 
\end{align}
It remains to bound sample size $n$ and the failure probability $\delta$. 

\paragraph{Bounding $n$} 
Using $\ParameterSet_{\ExternalField}^{r} \subseteq \ParameterSet_{\ExternalField}$, we find that
\begin{align}
\cC(\ParameterSet_{\ExternalField}^{r}, \varepsilon') \sless{(a)} \cC(\ParameterSet_{\ExternalField}, \varepsilon').
\label{eq_bound_covering_number}
\end{align}
Putting together \cref{eq_eps'} and \cref{eq_bound_covering_number}, the lower bound  \cref{eq_n_condition_node_recovery} can be replaced by
\begin{align}
n \geq \frac{ce^{c'\bGM}p^4}{\varepsilon^4}  \Bigparenth{p \log \frac{p^2}{\delta \varepsilon^2}+ p\metric_{\ExternalField}\bigparenth{r^2}  + \metric_{\ExternalField,n}\Bigparenth{ \frac{\varepsilon^2}{p}} }.
\end{align}

\paragraph{Bounding $\delta$} To bound the failure probability by $\delta$, it is sufficient to chose $r$ such that
\begin{align}
\delta & \geq \delta/2 + c\bGM^2 \cC(\ParameterSet_{\ExternalField}^{r}, \varepsilon') \log p \cdot \exp(-e^{-c'\bGM}r^2).
\label{eq_failure_prob}
\end{align}
From \cref{eq_bound_covering_number} and \cref{eq_failure_prob}, it is sufficient to chose $r$ such that 
\begin{align}
\delta & \geq \delta/2 + c\bGM^2 \cC(\ParameterSet_{\ExternalField}, \varepsilon') \log p \cdot \exp(-e^{-c'\bGM}r^2).\label{eq_failure_prob_2}
\end{align}
Re-arranging and taking logarithm on both sides of \cref{eq_failure_prob_2} and using \cref{eq_eps'}, we have
\begin{align}
\log \delta \geq c\biggbrackets{\log \bigparenth{\bGM^2 \log p} +  \metric_{\ExternalField}\Bigparenth{\frac{r^2}{ce^{c'\bGM}}} - e^{-c'\bGM} r^2}.
\label{eq_failure_prob_log}
\end{align}
Finally, \cref{eq_failure_prob_log} holds whenever
\begin{align}
r \geq ce^{c'\bGM} \sqrt{\log \dfrac{\bGM^2 \log p}{\delta} + \metric_{\ExternalField}(c e^{-c'\bGM})}.
\end{align}
Recalling that the choice of $r$ was such that $r \geq \varepsilon \ratio$ completes the proof.

\newcommand{\expressionfirstderstagetwo}{Expression for first directional derivative}

\subsection{Proof of \cref{lemma_parameter_single_external_field}: \singleparameterseparation}
\label{sub_proof_lemma_parameter_single_external_field}
Fix any $\varepsilon > 0$, any $\delta \in (0,1)$, and any $i \in [n]$. Consider any direction $\omI \in \Reals^{p}$ along the parameter $\ExternalFieldI$, i.e.,
\begin{align}
\omI = \ExternalFieldI - \TrueExternalFieldI. \label{eq:omega_defn_stage_2}
\end{align}
We denote the first-order and the second-order directional derivatives of the loss function $\loss^{(i)}$ in \cref{eq:loss_function_n} along the direction $\omI$ evaluated at $\ExternalFieldI$ by $\directionalGradientExternalField$ and $\directionalHessianExternalField$, respectively. 
Below, we state a lemma (with proof divided across \cref{sub:proof_of_lemma_conc_grad_stage_2} and \cref{sub:proof_of_lemma_lower_bound_sec_der_stage_2}) that provides us a control on $\directionalGradientExternalFieldTrue$ and $\directionalHessianExternalField$. The assumptions of \cref{lemma_parameter_single_external_field} remain in force.

\newcommand{\conclocalresultname}{Control on first and second directional derivatives}
\newcommand{\concgradstagetwo}{Concentration of first directional derivative}
\newcommand{\anticoncgradstagetwo}{Anti-concentration of second directional derivative}

\begin{lemma}[{\conclocalresultname}]\label{lemma_conc_first_sec_der_stage_two}
	For any fixed $\varepsilon_1, \varepsilon_2 > 0$, $\delta_1 \in (0,1)$, $i \in [n]$, $\ExternalFieldI \in \ParameterSet_{\ExternalField}$ with $\omI$ defined in \cref{eq:omega_defn_stage_2}, we have the following:
	\begin{enumerate}[label=(\alph*)]
		\item\label{item_conc_first_der_stage_two} \textnormal{{\concgradstagetwo}}: We have
		\begin{align}
		\bigabs{\directionalGradientExternalFieldTrue} \leq\varepsilon_1 \sonenorm{\omI} \!+\! \varepsilon_2 \stwonorm{\omI}^2 \qtext{for} \!\! n \geq \dfrac{ce^{c'\bGM}  p^4 \bigparenth{p \log \frac{p^2}{\delta_1 \varepsilon_1^2} \!+\!  \metric_{\ExternalField,n}\bigparenth{ \frac{\varepsilon_1^2}{p}}}}{\varepsilon_1^4},
		\end{align}
		with probability at least $1-\delta_1 - O\biggparenth{\bGM^2 \log p \exp\biggparenth{\dfrac{-\varepsilon_2^2\stwonorm{\omI}^2}{e^{c'\bGM}}}}$.
		\item\label{item_conc_sec_der_stage_two} \textnormal{{\anticoncgradstagetwo}}: We have \begin{align}
		\directionalHessianExternalField \geq \frac{32\sqrt{2}\bGM \xmax^4}{\pi e \ctwo[5]} \stwonorm{\omI}^2,
		\end{align}
		with probability at least $1- O\biggparenth{\bGM^2 \log p \exp\biggparenth{\dfrac{-\stwonorm{\omI}^2}{e^{c'\bGM}}}}$ where $\ctwo$ was defined in \cref{eq:constants}.
	\end{enumerate}
\end{lemma}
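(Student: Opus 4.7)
}
The plan is to handle parts \cref{item_conc_first_der_stage_two} and \cref{item_conc_sec_der_stage_two} in parallel by reducing each to a concentration statement for a suitable function of the single sample $\svbx^{(i)} \sim \TrueJointDistfunT$, which we then control using the logarithmic Sobolev machinery (\cref{thm_LSI_main,thm_main_concentration}) combined with the conditioning trick (\cref{lemma_conditioning_trick}). The novelty relative to \cref{sec:proof_of_theorem_parameters} is that we have only one observation per unit, so Hoeffding-style averaging is unavailable; instead, LSI-based concentration will play the role of the law of large numbers.

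For part \cref{item_conc_first_der_stage_two}, I would first write the directional derivative from the formula for $\loss^{(i)}$ analogous to \cref{eq:first_dir_derivative} and then decompose
\begin{align}
\directionalGradientExternalFieldTrue = \underbrace{A^{(i)}(\svbx^{(i)})}_{\text{uses } \TrueParameterMatrix} \;+\; \underbrace{B^{(i)}(\svbx^{(i)})}_{\text{perturbation}},
\end{align}
where $A^{(i)}$ is obtained by replacing $\EstimatedParameterMatrix$ by $\TrueParameterMatrix$ inside the exponentials and $B^{(i)}$ is the residual. By the same calculation that proved \cref{prop_zero_mean_first_der}, we have $\Expectation[A^{(i)} \mid \svbz^{(i)}] = 0$, which sets up a concentration argument. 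The function $\svbx^{(i)} \mapsto A^{(i)}(\svbx^{(i)})$ is a weighted sum over $t\in[p]$ with weights $\omtI$ of bounded, Lipschitz (in $\svbx^{(i)}$) terms. The key step is to partition $[p]$ into a bounded number $L$ of subsets such that, after conditioning on the complementary coordinates, each subset becomes weakly dependent in the Dobrushin sense via \cref{lemma_conditioning_trick}; \cref{thm_LSI_main} then supplies LSI for the conditioned distribution, and \cref{thm_main_concentration} yields a sub-Gaussian tail bound of the form $\exp(-\varepsilon_2^2 \stwonorm{\omI}^2 / e^{c'\bGM})$ on $|A^{(i)}|$ because the Lipschitz seminorm of $A^{(i)}$ with respect to $\svbx^{(i)}$ is $O(\stwonorm{\omI})$. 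The extra $\bGM^2 \log p$ factor in the failure probability arises from taking a union bound over the $L = O(\bGM^2)$ conditioning subsets and handling a covering over a logarithmic grid of possible Lipschitz constants.

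For the perturbation term $B^{(i)}$, I would invoke the robustness result (\cref{lemma_expected_psi_upper_bound}) to obtain $|B^{(i)}| \leq c\, e^{c'\bGM} \xmax\, \sonenorm{\omI} \cdot \maxmatnorm{\EstimatedParameterMatrix - \TrueParameterMatrix}$ (after absorbing the exponential factors by the standard inequality $|e^{-u} - e^{-v}| \leq e^{|u| \vee |v|} |u-v|$). The sample-size hypothesis $n \geq ce^{c'\bGM}p^4(p\log(p^2/(\delta_1 \varepsilon_1^2)) + \metric_{\ExternalField,n}(\varepsilon_1^2/p))/\varepsilon_1^4$ is precisely what \cref{eq:matrix_guarantee} of \cref{theorem_parameters} Part I needs to guarantee $\matnorm{\EstimatedParameterMatrix - \TrueParameterMatrix}_{2,\infty} \leq \varepsilon_1/c$ with probability $1-\delta_1$, so combining the bounds on $A^{(i)}$ and $B^{(i)}$ produces the claimed $\varepsilon_1 \sonenorm{\omI} + \varepsilon_2 \stwonorm{\omI}^2$ inequality.

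For part \cref{item_conc_sec_der_stage_two}, I would mirror the derivation that produced \cref{eq:second_dir_derivative,eq_lower_bound_sec_der}: the Hessian equals $\sum_{t\in[p]} (\omtI)^2 (x_t^{(i)})^2 \exp(\cdots)$, and bounding the exponential below by $1/\ctwo^{\mathrm{const}}$ reduces the question to lower bounding $\sum_t (\omtI)^2 (x_t^{(i)})^2$. The conditional expectation of this quadratic form is at least $c\,\xmax^2 \stwonorm{\omI}^2/(\pi e \ctwo[4])$ by \cref{prop_lower_bound_variance}, and the function $\svbx^{(i)} \mapsto \sum_t (\omtI)^2 (x_t^{(i)})^2$ is Lipschitz in $\svbx^{(i)}$ with constant $O(\xmax \stwonorm{\omI}^2)$, so the same LSI tail bound gives anti-concentration with failure probability $\bGM^2 \log p \exp(-\stwonorm{\omI}^2/e^{c'\bGM})$. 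This matches the stated failure probability, and the exponent exactly produces the constant $32\sqrt 2 \bGM \xmax^4/(\pi e \ctwo[5])$ after tracking the multiplicative factors coming from the exponential lower bound and \cref{prop_lower_bound_variance}.

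The main obstacle is the first step: producing a sharp LSI-based tail bound whose variance proxy scales like $\stwonorm{\omI}^2$ rather than $\sonenorm{\omI}^2$ or $p\stwonorm{\omI}^2$. This requires carefully aligning the Lipschitz seminorm of $A^{(i)}$ (with respect to the Euclidean metric on $\cX^p$, which is what LSI controls) with the $\ell_2$ structure in $\omI$, and correctly accounting for the dependence of the Lipschitz constant on $\maxmatnorm{\TrueParameterMatrix}$ in the exponents so that the final constant absorbs into $e^{c'\bGM}$. The conditioning trick must be applied so that the conditioned measure still has Lipschitz-in-$\svbx^{(i)}$ log-density with curvature uniform across the $L$ subsets; verifying this for the general exponential family in \cref{eq_conditional_distribution_vay} (rather than the Ising special case in \cite{DaganDDA2021}) is the most delicate bookkeeping step.
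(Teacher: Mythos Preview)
Your overall strategy is sound and would produce the lemma, but your decomposition for part~\cref{item_conc_first_der_stage_two} differs from the paper's. You split the gradient into a zero-mean piece $A^{(i)}$ (using $\TrueParameterMatrix$) and a perturbation $B^{(i)}$, bounding $B^{(i)}$ pointwise via $|e^{-u}-e^{-v}|\le e^{|u|\vee|v|}|u-v|$ and concentrating $A^{(i)}$ via LSI after the conditioning trick. The paper instead works directly with the gradient as written (with $\EstimatedParameterMatrix$ inside): after the conditioning trick it defines $\psi_u$ in \cref{eq_def_psi}, shows $\Expectation[\psi_u\mid\svbx_{-\setU}^{(i)},\svbz^{(i)}]\le\varepsilon_1\sonenorm{\omI}$ by a Taylor expansion of the conditional mean (\cref{lemma_expected_psi_upper_bound}), and then concentrates $\psi_u$ around this non-zero mean (\cref{lemma_concentration_psi}). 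Both routes incur the same $\sqrt{p}$ loss when converting the $\ell_2$ guarantee on $\EstimatedParameterMatrix-\TrueParameterMatrix$ into control of the $\varepsilon_1\sonenorm{\omI}$ term, which is what forces the $p^4/\varepsilon_1^4$ sample-size condition. Your invocation of \cref{lemma_expected_psi_upper_bound} for the pointwise bound on $B^{(i)}$ is a misnomer---that lemma bounds a \emph{conditional expectation}, not a pointwise residual---but the exponential-Lipschitz inequality you describe does the job directly. Your $A/B$ split is arguably more transparent; the paper's route avoids introducing a second function and handles the estimation error and the stochastic fluctuation in one pass.

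Two minor corrections. First, the number of conditioning subsets is $L=O(\bGM^2\log p)$, not $O(\bGM^2)$; the $\log p$ factor in the failure probability comes from the union bound over these $L$ subsets (see \cref{lemma_conditioning_trick}), not from any ``logarithmic grid of Lipschitz constants.'' Second, for part~\cref{item_conc_sec_der_stage_two}, the Lipschitz seminorm of $\sum_t(\omtI)^2(x_t^{(i)})^2$ with respect to $\svbx^{(i)}$ is $O(\aGM\xmax\stwonorm{\omI})$ (since $|\omtI|\le 2\aGM$), not $O(\xmax\stwonorm{\omI}^2)$; this still yields the claimed failure probability once you set the deviation to be of order $\stwonorm{\omI}^2$, exactly as in the paper's application of \cref{coro} to the function $q_1$.
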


\noindent Given this lemma, we now proceed with the proof.
Define a function $g : [0,1] \to \Reals^{p}$ as follows
\begin{align}
g(a) = \TrueExternalFieldI + a(\ExternalFieldI - \TrueExternalFieldI).
\end{align}
Notice that $g(0) = \TrueExternalFieldI$ and $g(1) = \ExternalFieldI$ as well as
\begin{align}
\dfrac{d\loss^{(i)}(g(a))}{da} = \tdirectionalGradientExternalField\bigr|_{\tExternalFieldI = g(a)} \qtext{and} \dfrac{d^2\loss^{(i)}(g(a))}{da^2} = \tdirectionalHessianExternalField\bigr|_{\tExternalFieldI = g(a)}. \label{eq_der_mapping_external_field}
\end{align}
By the fundamental theorem of calculus, we have
\begin{align}
\dfrac{d\loss^{(i)}(g(a))}{da} \geq \dfrac{d\loss^{(i)}(g(a))}{da}\bigr|_{a = 0} + a \min_{a \in (0,1)}\dfrac{d^2\loss^{(i)}(g(a))}{da^2}. \label{eq_fundamental_external_field}
\end{align}
Integrating both sides of \cref{eq_fundamental_external_field} with respect to $a$, we obtain
\begin{align}
\loss^{(i)}(g(a)) \!-\! \loss^{(i)}(g(0)) & \geq  a \dfrac{d\loss^{(i)}(g(a))}{da}\bigr|_{a = 0} +  \dfrac{a^2}{2} \min_{a \in (0,1)}\dfrac{d^2\loss^{(i)}(g(a))}{da^2}\\
& \sequal{\cref{eq_der_mapping_external_field}} a\tdirectionalGradientExternalField\bigr|_{\tExternalFieldI = g(0)} \!+\!  \dfrac{a^2}{2} \min_{a \in (0,1)}\tdirectionalHessianExternalField\bigr|_{\tExternalFieldI = g(a)}\\
& \sequal{(a)} a\directionalGradientExternalFieldTrue \!+\!  \dfrac{a^2}{2} \min_{a \in (0,1)}\tdirectionalHessianExternalField\bigr|_{\tExternalFieldI = g(a)}\\
& \sgreat{(b)} - a \bigabs{\directionalGradientExternalFieldTrue} \!+\!  \dfrac{a^2}{2} \min_{a \in (0,1)}\tdirectionalHessianExternalField\bigr|_{\tExternalFieldI = g(a)},
\label{eq_taylor_expansion_external_field}
\end{align}
where $(a)$ follows because $g(0) = \TrueExternalFieldI$, and $(b)$ follows by the triangle inequality.
Plugging in $a = 1$ in \cref{eq_taylor_expansion_external_field} as well as using $g(0) = \TrueExternalFieldI$ and $g(1) = \ExternalFieldI$, we find that
\begin{align}
\loss^{(i)}(\ExternalFieldI) - \loss^{(i)}(\TrueExternalFieldI) & \geq -  \bigabs{\directionalGradientExternalFieldTrue} +  \dfrac{1}{2} \min_{a \in (0,1)}\tdirectionalHessianExternalField\bigr|_{\tExternalFieldI = g(a)}. \label{eq_loss_separation_single_par_inter}
\end{align}
Now, we use \cref{lemma_conc_first_sec_der_stage_two} with $\varepsilon_1 \mapsfrom 4\sqrt{2}\bGM \xmax^4 \varepsilon/\pi e \ctwo[5]$, $\varepsilon_2 \mapsfrom 8\sqrt{2}\bGM \xmax^4/\pi e \ctwo[5]$, and $\delta_1 \mapsfrom \delta$. Therefore, with probability at least $1-\delta - O\biggparenth{\bGM^2 \log p \exp\biggparenth{\dfrac{-\stwonorm{\omI}^2}{e^{c'\bGM}}}}$ and as long as $n \geq O\biggparenth{\dfrac{e^{c'\bGM} p^4 \bigparenth{p \log \frac{p^2}{\delta} +  \metric_{\ExternalField,n}\bigparenth{ \frac{\varepsilon^2}{p}}}}{\varepsilon^4}}$, we have
\begin{align}
\loss^{(i)}(\ExternalFieldI) \!-\! \loss^{(i)}(\TrueExternalFieldI) & \! \geq \!- \frac{2^{2.5}\bGM \xmax^4\varepsilon}{\pi e \ctwo[5]} \sonenorm{\omI} \!\!-\! \frac{2^{3.5}\bGM \xmax^4}{\pi e \ctwo[5]} \stwonorm{\omI}^2 \!+\!  \frac{2^{4.5}\bGM \xmax^4}{\pi e \ctwo[5]} \stwonorm{\omI}^2 \\
& \!\! = - \frac{2^{2.5}\bGM \xmax^4\varepsilon}{\pi e \ctwo[5]} \sonenorm{\omI} + \frac{2^{3.5}\bGM \xmax^4}{\pi e \ctwo[5]} \stwonorm{\omI}^2 \\
& \!\! \sgreat{\cref{eq_radius_node_thm}} -  \frac{2^{2.5}\bGM \xmax^4\varepsilon \ratio}{\pi e \ctwo[5]} \stwonorm{\omI} + \frac{2^{3.5}\bGM \xmax^4}{\pi e \ctwo[5]} \stwonorm{\omI}^2 \\
& \!\! \sgreat{(a)} \!-\!  \frac{2^{2.5}\bGM \xmax^4}{\pi e \ctwo[5]} \stwonorm{\omI}^2 \!+\! \frac{2^{3.5}\bGM \xmax^4}{\pi e \ctwo[5]} \stwonorm{\omI}^2 \!=\! \frac{2^{2.5}\bGM \xmax^4}{\pi e \ctwo[5]} \stwonorm{\omI}^2,
\end{align}
where $(a)$ follows because $\stwonorm{\omI} = \stwonorm{\ExternalFieldI - \TrueExternalFieldI} \geq  \varepsilon \ratio$ according to the lemma statement.

\subsubsection{Proof of \cref{lemma_conc_first_sec_der_stage_two}\cref{item_conc_first_der_stage_two}: \concgradstagetwo}\label{sub:proof_of_lemma_conc_grad_stage_2}
Fix some $i \in [n]$ and some $\ExternalFieldI \in \ParameterSet_{\ExternalField}$. Let $\omI$ be as defined in \cref{eq:omega_defn_stage_2}. We claim that the first-order directional derivative of $\loss^{(i)}$ defined in \cref{eq:loss_function_n} is given by 
\begin{align}
\directionalGradientExternalField = - \sump \omIt x_t^{(i)} \exp\Bigparenth{-\normalbrackets{\ExternalFieldtI + 2\EstimatedParameterRowttt\tp \svbx_{-t}^{(i)}} x_t^{(i)} - \EstimatedParameterTU[tt] \cx_t^{(i)}}. \label{eq:first_dir_derivative_stage_2}
\end{align}
We provide a proof at the end. For now, we assume the claim and proceed.

We note that the pair $\braces{\rvbx, \rvbz}$ corresponds to a $\tSGM$ (see \cref{def:tau_sgm}) with $\dGM \defn (\aGM,  \bGM, \xmax, \ParameterMatrix)$. To show the concentration, we use \cref{lemma_conditioning_trick} (see \cref{sec_conditioning_trick}) with $\lambda = \frac{1}{4\sqrt{2}\xmax^2}$, decompose $\directionalGradientExternalFieldTrue$ as a sum of  $\numindsets = 1024 \bGM^2 \xmax^4 \log 4p$, and focus on these  $\numindsets$ terms. 
Consider the $\numindsets$ subsets $\sets \in [p]$ obtained from \cref{lemma_conditioning_trick} with $\lambda = \frac{1}{4\sqrt{2}\xmax^2}$ and define
\begin{align}
\psi_u(\svbx^{(i)}; \omI) \defeq \sum_{t \in \setU} \omIt x_t^{(i)} \exp\Bigparenth{\!-\!\normalbrackets{\TrueExternalFieldtI \!+\! 2\EstimatedParameterRowttt\tp \svbx_{-t}^{(i)}} x_t^{(i)} \!-\! \EstimatedParameterTU[tt] \cx_t^{(i)}} \stext{for every} u \in \numindsets. \label{eq_def_psi}
\end{align}
Now, we decompose $\directionalGradientExternalFieldTrue$ as a sum of the $\numindsets$ terms defined above. More precisely, we have
\begin{align}
\directionalGradientExternalFieldTrue & \sequal{\cref{eq:first_dir_derivative_stage_2}} -\sump \omIt x_t^{(i)} \exp\Bigparenth{-\normalbrackets{\TrueExternalFieldtI + 2\EstimatedParameterRowttt\tp \svbx_{-t}^{(i)}} x_t^{(i)} - \EstimatedParameterTU[tt] \cx_t^{(i)}} \\
& \sequal{(a)} - \frac{1}{\numindsets'} \sum_{u \in [\numindsets]} \sum_{t \in \setU} \omIt x_t^{(i)} \exp\Bigparenth{-\normalbrackets{\TrueExternalFieldtI + 2\EstimatedParameterRowttt\tp \svbx_{-t}^{(i)}} x_t^{(i)} - \EstimatedParameterTU[tt] \cx_t^{(i)}} \\
& \sequal{\cref{eq_def_psi}} - \frac{1}{\numindsets'} \sum_{u \in [\numindsets]} \psi_u(\svbx^{(i)}; \omI), \label{eq_first_order_derivative_expressed_via_psi}
\end{align}
where $(a)$ follows because each $t \in [p]$ appears in exactly $\numindsets' = \lceil \numindsets/32\sqrt{2}\bGM \xmax^2 \rceil$ of the sets $\sets$ according to \cref{lemma_conditioning_trick}\cref{item:cardinality_independence_set} (with $\lambda = \frac{1}{4\sqrt{2}\xmax^2}$). Now, we focus on the $\numindsets$ terms in \cref{eq_first_order_derivative_expressed_via_psi}.\\

\noindent Consider any $u \in [\numindsets]$. We claim that conditioned on $\svbx_{-\setU}^{(i)}$ and $\svbz^{(i)}$, the expected value of $\psi_u(\svbx^{(i)}; \omI)$ can be upper bounded uniformly across all $u \in [\numindsets]$. We provide a proof at the end.

\newcommand{\upperboundpsi}{Upper bound on expected $\psi_u$}
\begin{lemma}[{\upperboundpsi}]\label{lemma_expected_psi_upper_bound}
	Fix $\varepsilon > 0$, $\delta \in (0,1)$, $i \in [n]$ and $\ExternalFieldI \in \ParameterSet_{\ExternalField}$. Then, with $\omI$ defined in \cref{eq:omega_defn_stage_2}
	and given $\svbz^{(i)}$ and $\svbx_{-\setU}^{(i)}$ for all $u \in [\numindsets]$, 
	we have
	\begin{align}
	\max\limits_{u \in [\numindsets]} \Expectation\Bigbrackets{\psi_u(\svbx^{(i)}; \omI) \bigm\vert \svbx_{-\setU}^{(i)}, \svbz^{(i)}} \leq \varepsilon \sonenorm{\omI} \!\!\qtext{for}\!\! n \geq \dfrac{c e^{c'\bGM} p^4 \bigparenth{p \log \frac{p^2}{\delta \varepsilon^2} \!+\!  \metric_{\ExternalField,n}\bigparenth{ \frac{\varepsilon^2}{p}}}}{\varepsilon^4},
	\end{align}
	with probability at least $1-\delta$.
\end{lemma}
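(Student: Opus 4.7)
The plan is to decompose $\psi_u$ into a ``true-parameter'' part and a ``plug-in error'' part, kill the first via a zero-mean identity, and control the second by the population-level guarantee \cref{eq:matrix_guarantee} from Part I of \cref{theorem_parameters}. Concretely, define
\begin{align}
\bar{\psi}_u(\svbx^{(i)}; \omI) \defeq \sum_{t \in S_u} \omIt x_t^{(i)} \exp\Bigparenth{-\normalbrackets{\TrueExternalFieldtI + 2\TrueParameterRowttt\tp \svbx_{-t}^{(i)}} x_t^{(i)} - \TrueParameterTU[tt] \cx_t^{(i)}},
\end{align}
which is the same as $\psi_u$ with $\EstimatedParameterMatrix$ replaced by $\TrueParameterMatrix$.

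First, I would show that $\Expectation\bigbrackets{\bar{\psi}_u(\svbx^{(i)}; \omI) \bigm\vert \svbx_{-\setU}^{(i)}, \svbz^{(i)}} = 0$ uniformly in $u \in [\numindsets]$. By linearity of expectation and the tower property, it suffices to show, for each $t \in S_u$, that $\Expectation[x_t^{(i)} \exp(-[\TrueExternalFieldtI + 2\TrueParameterRowttt\tp \svbx_{-t}^{(i)}]x_t^{(i)} - \TrueParameterTU[tt]\cx_t^{(i)}) \mid \svbx_{-t}^{(i)}, \svbz^{(i)}] = 0$; since $\svbx_{-t}^{(i)} \supset \svbx_{-\setU}^{(i)}$, tower then yields the claim. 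But this is exactly the computation behind \cref{prop_zero_mean_first_der}: integrating $x_t$ against the true conditional density in \cref{eq_conditional_dist} cancels the tilt and leaves $\int_{\cX} x_t \, dx_t = 0$.

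Next, I would bound the plug-in error $\psi_u - \bar{\psi}_u$ pathwise. Writing the exponents as $a_t$ (for $\TrueParameterMatrix$) and $\wtil{a}_t$ (for $\EstimatedParameterMatrix$), we have
\begin{align}
\bigabs{e^{\wtil{a}_t} - e^{a_t}} \leq e^{\max(a_t, \wtil{a}_t)} \bigabs{\wtil{a}_t - a_t},
\end{align}
and $\max_t \max(a_t, \wtil{a}_t) \leq c'\bGM$ uniformly on $\ParameterSet_{\ParameterMatrix} \times \cX^p$ by \cref{assumptions}. Bounding $\bigabs{\wtil{a}_t - a_t} \leq 2\xmax \stwonorm{\svbx_{-t}^{(i)}}\stwonorm{\EstimatedParameterRowttt - \TrueParameterRowttt} + \xmax^2 \bigabs{\EstimatedParameterTU[tt] - \TrueParameterTU[tt]} \leq c\xmax^2 \sqrt{p}\, \stwonorm{\EstimatedParameterRowt - \TrueParameterRowt}$ via Cauchy--Schwarz and $\sinfnorm{\svbx^{(i)}}\leq \xmax$, the triangle inequality then gives
\begin{align}
\bigabs{\psi_u - \bar{\psi}_u} \leq c\, e^{c'\bGM}\xmax^3 \sqrt{p}\, \matnorm{\EstimatedParameterMatrix - \TrueParameterMatrix}_{2,\infty} \cdot \sonenorm{\omI_{S_u}} \leq c\, e^{c'\bGM}\xmax^3 \sqrt{p}\, \matnorm{\EstimatedParameterMatrix - \TrueParameterMatrix}_{2,\infty} \cdot \sonenorm{\omI}.
\end{align}
This bound is deterministic given $\svbx^{(i)}$ and hence survives conditioning on $(\svbx_{-\setU}^{(i)}, \svbz^{(i)})$.

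Finally, to obtain the stated $\varepsilon \sonenorm{\omI}$ bound, I would invoke \cref{theorem_parameters} Part I with the scaled tolerance $\varepsilon \mapsfrom \varepsilon/(c\, e^{c'\bGM}\xmax^3 \sqrt{p})$, so that $\matnorm{\EstimatedParameterMatrix - \TrueParameterMatrix}_{2,\infty} \leq \varepsilon/(c\,e^{c'\bGM}\xmax^3 \sqrt{p})$ with probability at least $1-\delta$. Plugging this scaled tolerance into the sample-complexity condition \cref{eq:matrix_guarantee} of Part I yields
\begin{align}
n \geq \frac{c\, e^{c'\bGM} p^2 \cdot p^2 \bigparenth{p\log \frac{p^2}{\delta \varepsilon^2} + \metric_{\ExternalField,n}(\varepsilon^2/p)}}{\varepsilon^4} = \frac{c\, e^{c'\bGM} p^4 \bigparenth{p\log \frac{p^2}{\delta \varepsilon^2} + \metric_{\ExternalField,n}(\varepsilon^2/p)}}{\varepsilon^4},
\end{align}
matching the claimed threshold. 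The main technical obstacle is tracking the exact scaling factor between $\varepsilon$ in the conclusion of the lemma and $\varepsilon_0$ needed in \cref{eq:matrix_guarantee} so that the final $p$-exponent lands at $p^4$ rather than a larger power; the Cauchy--Schwarz step that introduces the $\sqrt{p}$ factor in the pathwise bound is precisely what determines this exponent, and using the $\ell_2/\ell_2$ pairing (rather than $\ell_1/\ell_\infty$, which would not leverage the estimation guarantee) is essential.
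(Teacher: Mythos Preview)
Your proposal is correct and follows essentially the same route as the paper: both reduce, via linearity and the tower property, to bounding the inner conditional expectation of each summand given $(\svbx_{-t}^{(i)},\svbz^{(i)})$, observe that at the true parameter this expectation vanishes (the exponent cancels against the true conditional density, leaving $\int_{\cX} x_t\,dx_t=0$), and then control the perturbation by $\sqrt{p}\,\matnorm{\EstimatedParameterMatrix-\TrueParameterMatrix}_{2,\infty}$ before invoking \cref{theorem_parameters} Part~I with tolerance $\varepsilon/(c\sqrt{p})$. The only cosmetic difference is that the paper computes the conditional expectation explicitly and Taylor-expands the residual exponent, whereas you split $\psi_u=\bar\psi_u+(\psi_u-\bar\psi_u)$ and bound the second piece pathwise via the mean-value inequality $|e^{\tilde a}-e^{a}|\le e^{\max(a,\tilde a)}|\tilde a-a|$; your variant is slightly cleaner in that it sidesteps the handling of the Taylor remainder, but the two arguments are equivalent in content and yield the same $p^4/\varepsilon^4$ sample complexity.
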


\noindent Consider again any $u \in [\numindsets]$. Now, we claim that conditioned on $\svbx_{-\setU}^{(i)}$ and $\svbz^{(i)}$, $\psi_u(\svbx^{(i)}; \omI)$ concentrates around its conditional expected value. We provide a proof at the end.

\newcommand{\concpsi}{Concentration of $\psi_u$}

\begin{lemma}[{\concpsi}]\label{lemma_concentration_psi}
	Fix $\varepsilon > 0$, $i \in [n]$, $u \in [\numindsets]$, and $\ExternalFieldI \in \ParameterSet_{\ExternalField}$. Then, with $\omI$ defined in \cref{eq:omega_defn_stage_2} and given $\svbz^{(i)}$ and $\svbx_{-\setU}^{(i)}$, we have
	\begin{align}
	\Bigabs{\psi_u(\svbx^{(i)}; \omI) - \Expectation\bigbrackets{\psi_u(\svbx^{(i)}; \omI) \bigm\vert \svbx_{-\setU}^{(i)}, \svbz^{(i)}}} \leq \varepsilon,
	\end{align}
	with probability at least $1-\exp\biggparenth{ \dfrac{- \varepsilon^2}{e^{c'\bGM}\stwonorm{\omI}^2}}$. 
\end{lemma}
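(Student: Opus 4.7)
\textbf{Proof proposal for Lemma~\ref{lemma_concentration_psi}.} The plan is to view $\psi_u(\svbx^{(i)};\omI)$ as a bounded-gradient function of only the coordinates $\svbx_{\setU}^{(i)}$ (since $\svbx_{-\setU}^{(i)}$ and $\svbz^{(i)}$ are being conditioned on) and to combine the ``conditioning trick'' with the LSI-based concentration machinery developed earlier in the paper. More precisely, I will argue in three steps: (i) show that the conditional law of $\svbx_{\setU}^{(i)}$ given $(\svbx_{-\setU}^{(i)},\svbz^{(i)})$ is weakly dependent in the sense of Dobrushin; (ii) transfer this to an LSI for that conditional law with a constant $\cLSI$ that depends only on $(\aGM,\bGM,\xmax)$; (iii) bound the Lipschitz constant of $\psi_u$ as a function of $\svbx_{\setU}^{(i)}$ by $L\le c\,e^{c'\bGM}\stwonorm{\omI}$, and then invoke Theorem~\ref{thm_main_concentration} to obtain the stated sub-Gaussian tail with rate $\varepsilon^2/(e^{c'\bGM}\stwonorm{\omI}^2)$.

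For step (i), I would recall that $\setU$ was constructed by applying \cref{lemma_conditioning_trick} with $\lambda=1/(4\sqrt{2}\xmax^2)$ to the $\dGM$-SGM $(\rvbx,\rvbz)$. The output of that lemma is precisely that, after conditioning on every coordinate outside $\setU$ (and on $\rvbz=\svbz^{(i)}$), the remaining variables $\rvbx_{\setU}$ form a continuous random vector supported on $\cX^{|\setU|}$ whose pairwise interaction matrix has (weighted) operator norm strictly less than one, i.e., Dobrushin's uniqueness condition from \cref{def_dobrushin_condition}. Feeding this into \cref{thm_LSI_main} then yields that the conditional distribution of $\rvbx_{\setU}^{(i)}$ satisfies $\LSI{}{\cLSI}$ with $\cLSI^{-1} = O(e^{c'\bGM})$; this is step (ii).

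Step (iii) is the main calculation. Fix $s\in\setU$ and write $\beta_t \defeq -[\TrueExternalFieldtI + 2\EstimatedParameterRowttt\tp\svbx_{-t}^{(i)}]x_t^{(i)}-\EstimatedParameterTU[tt]\cx_t^{(i)}$ so that $\psi_u=\sum_{t\in\setU}\omIt x_t e^{\beta_t}$. A direct differentiation gives
\begin{align}
\frac{\partial\psi_u}{\partial x_s}
= \omIs e^{\beta_s}\Bigparenth{1 - x_s[\TrueExternalFieldtI[s] + 2\EstimatedParameterRowttt[s]\tp\svbx_{-s}^{(i)}] - 2\EstimatedParameterTU[ss]x_s^2}
- 2\sum_{\substack{t\in\setU\\ t\neq s}} \omIt\, x_t^2\, \EstimatedParameterTU[ts]\, e^{\beta_t}.
\end{align}
Using $|x_t|\le \xmax$, $|e^{\beta_t}|\le\ctwo$, $|\TrueExternalFieldtI[s]|+2\sonenorm{\EstimatedParameterRowttt[s]}\xmax \le \aGM+2\bGM\xmax$, I would obtain an entrywise bound of the form $|\partial_s\psi_u|\le c_1 e^{c'\bGM}|\omIs| + c_2 e^{c'\bGM}\sum_{t\neq s}|\EstimatedParameterTU[ts]||\omIt|$. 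Squaring, summing over $s\in\setU$, applying $(a+b)^2\le 2a^2+2b^2$, and using $\bopnorm{\EstimatedParameterMatrix}\le\sqrt{\onematnorm{\EstimatedParameterMatrix}\,\infmatnorm{\EstimatedParameterMatrix}}\le\bGM$ (by symmetry of $\EstimatedParameterMatrix$ via \cref{eq_parameter_set}), I get $\stwonorm{\nabla_{\svbx_{\setU}}\psi_u}^2 \le c\,e^{c'\bGM}\stwonorm{\omI}^2$ pointwise, so $\psi_u$ is $L$-Lipschitz on $\cX^{|\setU|}$ with $L=c\,e^{c'\bGM}\stwonorm{\omI}$. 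Combining with the LSI from step (ii) via \cref{thm_main_concentration} gives
\begin{align}
\Probability\Bigparenth{\bigabs{\psi_u(\svbx^{(i)};\omI) - \Expectation[\psi_u(\svbx^{(i)};\omI)\mid\svbx_{-\setU}^{(i)},\svbz^{(i)}]} > \varepsilon \mid \svbx_{-\setU}^{(i)},\svbz^{(i)}} \le \exp\Bigparenth{-\frac{c\,\cLSI\,\varepsilon^2}{L^2}},
\end{align}
which matches the bound claimed after absorbing constants into $c'\bGM$.

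The main obstacle will be step (iii): the exponential weights $e^{\beta_t}$ and the inner products $\EstimatedParameterRowttt\tp\svbx_{-t}^{(i)}$ couple all coordinates in $\setU$, so a naive entrywise Lipschitz bound yields an extra factor of $|\setU|^{1/2}$. The care needed is to aggregate the off-diagonal contributions through the operator norm of $\EstimatedParameterMatrix$ restricted to $\setU\times\setU$ (a submatrix of a symmetric matrix with bounded $\ell_\infty\to\ell_\infty$ norm) rather than through coordinatewise Cauchy--Schwarz. A secondary minor obstacle is that the LSI constant from \cref{thm_LSI_main} is stated for the full vector; I will instantiate it for the conditional law of $\svbx_{\setU}^{(i)}$ (which is also a compactly supported continuous random vector whose Dobrushin dependency matrix is just the principal submatrix indexed by $\setU$), and this instantiation is immediate from the hypothesis of \cref{lemma_conditioning_trick}.
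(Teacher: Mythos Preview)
Your proposal is correct and follows essentially the same route as the paper. The paper's proof of \cref{lemma_concentration_psi} is a two-line invocation: it notes that \cref{lemma_conditioning_trick} (with $\lambda=1/(4\sqrt{2}\xmax^2)$) puts the conditional law of $\rvbx_{\setU}$ given $(\rvbx_{-\setU},\rvbz)$ into Dobrushin's regime as a $\tSGM[1]$, and then applies \cref{coro} to the function $q_2$, which is exactly $\psi_u$. Your steps (ii)--(iii) simply unpack the proof of \cref{coro} inline: that corollary establishes the LSI for the conditional law via \cref{thm_LSI_main} and \cref{lemma_lsi_one_dim}, bounds $\stwonorm{\nabla q_2}$ by writing $q_2=\om\tp r(\svbx)$ and controlling $\opnorm{H}$ through $\sqrt{\onematnorm{H}\infmatnorm{H}}$ (the same operator-norm aggregation you identify as the key device to avoid an extraneous $|\setU|^{1/2}$), and then invokes \cref{thm_main_concentration} with a constant pseudo-derivative and zero pseudo-Hessian.
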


\noindent Given these lemmas, we proceed to show the concentration of $\directionalGradientExternalFieldTrue$.
To that end, for any $u \in [\numindsets]$, given $\svbx_{-\setU}^{(i)}$ and $\svbz^{(i)}$, let $E_u$ denote the event that
\begin{align}
\psi_u(\svbx^{(i)}; \omI) \leq   \Expectation\bigbrackets{\psi_u(\svbx^{(i)}; \omI) \vert \svbx_{-\setU}^{(i)}, \svbz^{(i)}} + \frac{1}{32\sqrt{2}\bGM \xmax^2} \varepsilon_2 \stwonorm{\omI}^2. \label{eq_event_Ej}
\end{align}
Since $E_u$ in an indicator event, using the law of total expectation results in
\begin{align}
\Probability(E_u) = \Expectation\Bigbrackets{\Probability(E_u | \svbx_{-\setU}^{(i)}, \svbz^{(i)})} \sgreat{(a)} 1 - \exp\biggparenth{ \dfrac{- \varepsilon_2^2\stwonorm{\omI}^2}{e^{c'\bGM}}}.
\end{align}
where $(a)$ follows from \cref{lemma_concentration_psi} with $\varepsilon \mapsfrom \dfrac{\varepsilon_2\stwonorm{\omI}^2}{32\sqrt{2}\bGM \xmax^2}$. Now, by applying the union bound over all $u \in [\numindsets]$ where $\numindsets = 1024 \bGM^2 \xmax^4 \log 4p$, we have
\begin{align}
\Probability\Bigparenth{\bigcap_{u \in \numindsets} E_u} \geq 1 - O\biggparenth{\bGM^2 \log p \exp\biggparenth{\dfrac{-\varepsilon_2^2\stwonorm{\omI}^2}{e^{c'\bGM}}}}.
\end{align}
Now, assume the event $\cap_{u \in \numindsets} E_u$ holds. Whenever this holds, we also have
\begin{align}
\bigabs{\directionalGradientExternalFieldTrue} & \sless{\cref{eq_first_order_derivative_expressed_via_psi}} \frac{1}{\numindsets'} \sum_{u \in [\numindsets]} \bigabs{\psi_u(\svbx^{(i)}; \omI)}\\ & \sless{\cref{eq_event_Ej}} \frac{1}{\numindsets'} \sum_{u \in [\numindsets]} \Bigabs{\Expectation\bigbrackets{\psi_u(\svbx^{(i)}; \omI) \vert \svbx_{-\setU}^{(i)}, \svbz^{(i)}} + \frac{1}{32\sqrt{2}\bGM \xmax^2} \varepsilon_2 \stwonorm{\omI}^2}, \label{eq_grad_intermediate_bound}
\end{align}
where $\numindsets' = \lceil \numindsets/32\sqrt{2}\bGM \xmax^2 \rceil$. Further, using \cref{lemma_expected_psi_upper_bound} in \cref{eq_grad_intermediate_bound} with $\varepsilon \mapsfrom \dfrac{\varepsilon_1}{32\sqrt{2}\bGM \xmax^2}$ and $\delta \mapsfrom \delta_1$, whenever
\begin{align}
n \geq \dfrac{ce^{c'\bGM} \cdot p^4 \bigparenth{p \log \frac{p^2}{\delta_1 \varepsilon_1^2} +  \metric_{\ExternalField,n}\bigparenth{ \frac{\varepsilon_1^2}{p}}}}{\varepsilon_1^4},
\end{align}
with probability at least $1-\delta_1$, we have,
\begin{align}
\bigabs{\directionalGradientExternalFieldTrue} & \leq \frac{1}{\numindsets'} \sum_{u \in [\numindsets]} \Bigparenth{\frac{1}{32\sqrt{2}\bGM \xmax^2} \varepsilon_1 \sonenorm{\omI} + \frac{1}{32\sqrt{2}\bGM \xmax^2} \varepsilon_2 \stwonorm{\omI}^2} \\
& = \frac{\numindsets}{32 \sqrt{2}\bGM \xmax^2 \numindsets'} \Bigparenth{\varepsilon_1 \sonenorm{\omI} \!+\! \varepsilon_2 \stwonorm{\omI}^2} \sless{(a)} \varepsilon_1 \sonenorm{\omI} \!+\! \varepsilon_2 \stwonorm{\omI}^2,
\end{align}
where $(a)$ follows because $\numindsets' = \lceil \numindsets/32 \sqrt{2}\bGM \xmax^2 \rceil$.

\paragraph{Proof of \cref{eq:first_dir_derivative_stage_2}: \expressionfirstderstagetwo}
Fix any $i \in [n]$. The first-order partial derivatives of $\loss^{(i)}$ (defined in \cref{eq:loss_function_n}) with respect to the entries of the parameter vector $\ExternalFieldI$ are given by
\begin{align}
\frac{\partial \loss^{(i)}(\ExternalFieldI)}{\partial \ExternalFieldIt} & = -x_t^{(i)} \exp\Bigparenth{-\normalbrackets{\ExternalFieldtI + 2\EstimatedParameterRowttt\tp \svbx_{-t}^{(i)}} x_t^{(i)} - \EstimatedParameterTU[tt] \cx_t^{(i)}} \qtext{for all} t \in [p].
\end{align}
Now, we can write the first-order directional derivative of $\loss^{(i)}$ as
\begin{align}
\directionalGradientExternalField &\defeq\lim_{h\to 0}\frac{\loss^{(i)}(\ExternalFieldI + h \omI)-\loss^{(i)}(\ExternalFieldI)}{h} = \sump \omIt \frac{\partial \loss^{(i)}(\ExternalFieldI)}{\partial \ExternalFieldIt} \\
& = - \sump \omIt x_t^{(i)} \exp\Bigparenth{-\normalbrackets{\ExternalFieldtI + 2\EstimatedParameterRowttt\tp \svbx_{-t}^{(i)}} x_t^{(i)} - \EstimatedParameterTU[tt] \cx_t^{(i)}}.
\end{align}

\paragraph{Proof of \cref{lemma_expected_psi_upper_bound}: \upperboundpsi}
\label{sub:proof_of_lemma_expected_psi_upper_bound}
Fix any $i \in [n]$, $u \in [\numindsets]$, and $\ExternalFieldI \in \ParameterSet_{\ExternalField}$. Then, given $\svbx_{-\setU}^{(i)}$ and $\svbz^{(i)}$, we have
\begin{align}
& \Expectation\biggbrackets{\psi_u(\svbx^{(i)}; \omI) \bigm\vert \svbx_{-\setU}^{(i)}, \svbz^{(i)}} \\
&  \sequal{(a)} \Expectation\Bigbrackets{\sum_{t \in \setU} \omIt x_t^{(i)} \exp\Bigparenth{-\normalbrackets{\TrueExternalFieldtI + 2\EstimatedParameterRowttt\tp \svbx_{-t}^{(i)}} x_t^{(i)} - \EstimatedParameterTU[tt] \cx_t^{(i)}} \Bigm\vert \svbx_{-\setU}^{(i)}, \svbz^{(i)}} \\
& \sequal{(b)} \sum_{t \in \setU} \omIt \Expectation\Bigbrackets{x_t^{(i)} \exp\Bigparenth{-\normalbrackets{\TrueExternalFieldtI + 2\EstimatedParameterRowttt\tp \svbx_{-t}^{(i)}} x_t^{(i)} - \EstimatedParameterTU[tt] \cx_t^{(i)}} \Bigm\vert \svbx_{-\setU}^{(i)}, \svbz^{(i)}} \\
& \sequal{(c)} \sum_{t \in \setU} \omIt \Expectation\biggbrackets{\Expectation\Bigbrackets{x_t^{(i)} \!\exp\!\bigparenth{\!-\!\normalbrackets{\TrueExternalFieldtI \!\!+\! 2\EstimatedParameterRowttt\tp \svbx_{-t}^{(i)}} x_t^{(i)} \!\!-\! \EstimatedParameterTU[tt] \cx_t^{(i)}} \!\Bigm\vert \! \svbx_{-t}^{(i)}, \svbz^{(i)}\!} \! \biggm\vert \svbx_{-\setU}^{(i)}, \svbz^{(i)}},
\label{eq_conditional_expectation_psi_intermediate}
\end{align}
where $(a)$ follows from the definition of $\psi_u(\svbx^{(i)}; \omI)$ in \cref{eq_def_psi}, $(b)$ follows from linearity of expectation, and $(c)$ follows from the law of total expectation, i.e., $\Expectation[\Expectation[Y|X,Z]|Z] = \Expectation[Y|Z]$ since $\svbx_{-\setU}^{(i)} \subseteq \svbx_{-t}^{(i)}$. Now, we bound $\Expectation\Bigbrackets{x_t^{(i)} \!\exp\!\bigparenth{\!\!-\!\normalbrackets{\TrueExternalFieldtI \!\!\!+\! 2\EstimatedParameterRowttt\tp \svbx_{-t}^{(i)}} x_t^{(i)} \!\!\!-\! \EstimatedParameterTU[tt] \cx_t^{(i)}} \!\bigm\vert \! \svbx_{-t}^{(i)}, \svbz^{(i)}\!} \!$ for every $t \in \setU$. We have
\begin{align}
&  \Expectation\Bigbrackets{x_t^{(i)} \!\exp\!\bigparenth{\!-\!\normalbrackets{\TrueExternalFieldtI \!\!+\! 2\EstimatedParameterRowttt\tp \svbx_{-t}^{(i)}} x_t^{(i)} \!\!-\! \EstimatedParameterTU[tt] \cx_t^{(i)}} \!\Bigm\vert \! \svbx_{-t}^{(i)}, \svbz^{(i)}\!} \\
& \!=\!\! \int\limits_{\cX} \!\! x_t^{(i)} \!\!\exp\!\bigparenth{\!\!-\!\normalbrackets{\TrueExternalFieldtI \!\!\!+\! 2\EstimatedParameterRowttt\tp \svbx_{\!-t}^{(i)}} x_t^{(i)} \!\!\!-\! \EstimatedParameterTU[tt] \cx_t^{(i)}}  f_{\rvx_t|\rvbx_{\!-t}, \rvbz}\bigparenth{x_t^{(i)} | \svbx_{\!-t}^{(i)}, \svbz^{(i)}\!;  \TrueExternalFieldt(\!\svbz^{(i)}\!), \TrueParameterRowt} d x_t^{(i)}\\
& \!\sequal{(a)}\! \frac{\int_{\cX} x_t^{(i)}  \exp\bigparenth{2\normalbrackets{\TrueParameterRowttt - \EstimatedParameterRowttt}\tp \svbx_{-t}^{(i)} x_t^{(i)} + \normalbrackets{\TrueParameterTU[tt] - \EstimatedParameterTU[tt]} \cx_t^{(i)}} dx_t^{(i)}} {\int_{\cX} \exp\Bigparenth{\normalbrackets{\TrueExternalFieldt(\svbz^{(i)}) + 2\TrueParameterRowtttTop \svbx_{-t}^{(i)}} x_t^{(i)} + \TrueParameterTU[tt] \cx_t^{(i)}}d x_t^{(i)}}\\
& \!\sequal{(b)}\! \frac{\int_{\cX} x_t^{(i)} \Bigbrackets{ 1 + 2\normalbrackets{\TrueParameterRowttt \!-\! \EstimatedParameterRowttt}\tp \svbx_{-t}^{(i)} x_t^{(i)} + \normalbrackets{\TrueParameterTU[tt] \!-\! \EstimatedParameterTU[tt]} \cx_t^{(i)}} dx_t^{(i)}} {\int_{\cX} \exp\Bigparenth{\normalbrackets{\TrueExternalFieldt(\svbz^{(i)}) + 2\TrueParameterRowtttTop \svbx_{-t}^{(i)}} x_t^{(i)} + \TrueParameterTU[tt] \cx_t^{(i)}}d x_t^{(i)}}\\
& \qquad \qquad \qquad \qquad  + \frac{\int_{\cX} x_t^{(i)} \Bigbrackets{o\Bigparenth{\normalbrackets{\TrueParameterRowttt \!-\! \EstimatedParameterRowttt}\tp \svbx_{-t}^{(i)} x_t^{(i)} + \normalbrackets{\TrueParameterTU[tt] \!-\! \EstimatedParameterTU[tt]} \cx_t^{(i)}}^2} dx_t^{(i)}} {\int_{\cX} \exp\Bigparenth{\normalbrackets{\TrueExternalFieldt(\svbz^{(i)}) + 2\TrueParameterRowtttTop \svbx_{-t}^{(i)}} x_t^{(i)} + \TrueParameterTU[tt] \cx_t^{(i)}}d x_t^{(i)}}\\
& \!\sequal{(c)}\! \frac{ 4\xmax^3 \normalbrackets{\TrueParameterRowttt - \EstimatedParameterRowttt}\tp \svbx_{-t}^{(i)} }{3\int_{\cX} \exp\Bigparenth{\normalbrackets{\TrueExternalFieldt(\svbz^{(i)}) + 2\TrueParameterRowtttTop \svbx_{-t}^{(i)}} x_t^{(i)} + \TrueParameterTU[tt] \cx_t^{(i)}}d x_t^{(i)}} \\
& \qquad \qquad \qquad  \qquad \!+\! \frac{ \xmax^5 \bigparenth{\normalbrackets{\TrueParameterRowttt - \EstimatedParameterRowttt}\tp \svbx_{-t}^{(i)}} \bigparenth{\TrueParameterTU[tt] \!-\! \EstimatedParameterTU[tt]} o(1) }{\int_{\cX} \exp\Bigparenth{\normalbrackets{\TrueExternalFieldt(\svbz^{(i)}) + 2\TrueParameterRowtttTop \svbx_{-t}^{(i)}} x_t^{(i)} + \TrueParameterTU[tt] \cx_t^{(i)}}d x_t^{(i)}}, \label{eq_taylor_series}
\end{align}
where $(a)$ follows from \cref{eq_conditional_dist} and $\TrueExternalFieldI = \TrueExternalField(\svbz^{(i)}) ~\forall i \in [n]$, $(b)$ follows by using the Taylor series expansion $\exp(y) = 1 + y + o(y^2)$ around zero, $(c)$ follows because $\int_{\cX} x_t^{(i)} dx_t^{(i)} = \int_{\cX} x_t^{(i)} \cx_t^{(i)} d x_t^{(i)} = \int_{\cX} \bigparenth{ x_t^{(i)}}^3 d x_t^{(i)} = \int_{\cX} x_t^{(i)} \bigparenth{\cx_t^{(i)}}^2 d x_t^{(i)} = 0$, $\int_{\cX} \bigparenth{ x_t^{(i)}}^2 d x_t^{(i)} = 2\xmax^3/3$, and $\int_{\cX} \bigparenth{ x_t^{(i)}}^2 \cx^{(i)}_t d x_t^{(i)} = 8\xmax^5/45$.

Now, we bound the numerators in \cref{eq_taylor_series} by using $\sonenorm{\TrueParameterRowt - \EstimatedParameterRowt} \leq \sqrt{p} \stwonorm{\TrueParameterRowt - \EstimatedParameterRowt}$. Then, we invoke \cref{theorem_parameters} to bound $\stwonorm{\TrueParameterRowt - \EstimatedParameterRowt}$ by $\varepsilon \mapsfrom \frac{3\varepsilon}{2 \ctwo \xmax^3 \sqrt{p}}$. Therefore, we subsume the second term by the first term resulting in the following bound:
\begin{align}
\Expectation\Bigbrackets{x_t^{(i)} \!\exp\!\bigparenth{\!\!-\!\normalbrackets{\TrueExternalFieldtI \!\!\!+\! 2\EstimatedParameterRowttt\tp \svbx_{-t}^{(i)}} x_t^{(i)} \!\!\!-\! \EstimatedParameterTU[tt] \cx_t^{(i)}} \!\bigm\vert \! \svbx_{-t}^{(i)}, \svbz^{(i)}\!} \! \leq \! \frac{2\ctwo  \xmax^3 \sqrt{p} \stwonorm{\TrueParameterRowt \!-\! \EstimatedParameterRowt}}{3}\!, \label{eq_expected_psi_bound_two_norm}
\end{align}
where we have used the triangle inequality, $\sinfnorm{\svbx^{(i)}} \leq \xmax $ for all $ i \in [n]$ as well as $\sonenorm{\TrueParameterRowt - \EstimatedParameterRowt} \leq \sqrt{p} \stwonorm{\TrueParameterRowt - \EstimatedParameterRowt}$ to upper bound the numerator, and the arguments used in the proof of \cref{prop_lower_bound_variance} as well as $\int_{\cX} dx_t^{(i)} = 2\xmax$ to lower bound the denominator.

\noindent Using \cref{theorem_parameters} in \cref{eq_expected_psi_bound_two_norm} with $\varepsilon \mapsfrom \dfrac{3\varepsilon}{2\ctwo \xmax^3 \sqrt{p} }$ and $\delta \mapsfrom \delta$, we have
\begin{align}
\Expectation\Bigbrackets{x_t^{(i)} \exp\Bigparenth{-\normalbrackets{\TrueExternalFieldtI + 2\EstimatedParameterRowttt\tp \svbx_{-t}^{(i)}} x_t^{(i)} - \EstimatedParameterTU[tt] \cx_t^{(i)}} \Bigm\vert \svbx_{-t}^{(i)}, \svbz^{(i)}} \leq \varepsilon, \label{eq_upper_bound_intermediate}
\end{align}
with probability at least $1-\delta$ as long as
\begin{align}
n \geq \dfrac{ce^{c'\bGM} \cdot p^4 \bigparenth{p \log \frac{p^2}{\delta \varepsilon^2} +  \metric_{\ExternalField,n}\bigparenth{ \frac{\varepsilon^2}{p}}}}{\varepsilon^4}. \label{eq_n_bound_stability}
\end{align}
Using \cref{eq_upper_bound_intermediate} and triangle inequality in \cref{eq_conditional_expectation_psi_intermediate}, we have
\begin{align}
\Expectation\Bigbrackets{\psi_u(\svbx^{(i)}; \omI)\bigm\vert \svbx_{-\setU}^{(i)}, \svbz^{(i)}} & \leq \varepsilon \sum_{t \in \setU} \bigabs{\omIt} \leq \varepsilon \sonenorm{\omI},
\end{align}
with probability at least $1-\delta$ as long as $n$ satisfies \cref{eq_n_bound_stability}.

\paragraph{Proof of \cref{lemma_concentration_psi}: \concpsi}
\label{sub:proof_of_lemma_concentration_psi}
To show this concentration result, we use \cref{coro}~\cref{eq_coro_combined} for the function $q_2$. To that end, we note that the pair $\braces{\rvbx, \rvbz}$ corresponds to a $\tSGM$ (\cref{def:tau_sgm}) with $\dGM \defn (\aGM, \bGM, \xmax, \ParameterMatrix)$. However, the random vector $\rvbx$ conditioned on $\rvbz$ need not satisfy the Dobrushin's uniqueness condition (\cref{def_dobrushin_condition}). Therefore, we cannot apply \cref{coro}~\cref{eq_coro_combined} as is. To resolve this, we resort to \cref{lemma_conditioning_trick} with $\lambda = \frac{1}{4\sqrt{2}\xmax^2}$ to reduce the random vector $\rvbx$ conditioned on $\rvbz$ to Dobrushin's regime.

Fix any $u \in [\numindsets]$. Then, from \cref{lemma_conditioning_trick}\cref{item:conditional_sgm_independence_set}, (i) the pair of random vectors $\braces{\rvbx_{\setU}, (\rvbx_{-\setU}, \rvbz)}$ corresponds to a $\tSGM[1]$ with $\dGM_1 \defn (\aGM+2\bGM\xmax, \frac{1}{4\sqrt{2}\xmax^2}, \xmax, \ParameterMatrix_{\setU})$, and (ii) the random vector $\rvbx_{\setU}$ conditioned on $(\rvbx_{-\setU}, \rvbz)$ satisfies the Dobrushin's uniqueness condition (\cref{def_dobrushin_condition}) with coupling matrix $2\sqrt{2} \xmax^2 \normalabs{\ParameterMatrix_{\setU}}$ with $2\sqrt{2} \xmax^2 \opnorm{\normalabs{\ParameterMatrix_{\setU}}} \leq 2\sqrt{2} \xmax^2 \lambda \leq 1/2$. Now, for any fixed $i \in [n]$, 
we apply \cref{coro}~\cref{eq_coro_combined} for the function $q_2$ with $\varepsilon \mapsfrom \varepsilon$ for a given $\svbx_{-\setU}^{(i)}$ and $\svbz^{(i)}$, to obtain
\begin{align}
\Probability\biggparenth{\Bigabs{\psi_u(\svbx^{(i)}; \omI) - \Expectation\Bigbrackets{\psi_u(\svbx^{(i)}; \omI) \Bigm\vert \svbx_{-\setU}^{(i)}, \rvbz}} \geq \varepsilon \Bigm\vert \svbx_{-\setU}^{(i)}, \rvbz} \leq \exp\biggparenth{ \dfrac{- \varepsilon^2}{e^{c'\bGM}\stwonorm{\omI}^2}}.
\end{align}

\subsubsection{Proof of \cref{lemma_conc_first_sec_der_stage_two}\cref{item_conc_sec_der_stage_two}: \anticoncgradstagetwo}
\label{sub:proof_of_lemma_lower_bound_sec_der_stage_2}

\newcommand{\expressionsecderstagetwo}{Expression for second directional derivative}
Fix some $i \in [n]$ and some $\ExternalFieldI \in \ParameterSet_{\ExternalField}$. Let $\omI$ be as defined in \cref{eq:omega_defn_stage_2}. We claim that the second-order directional derivative of $\loss^{(i)}$ defined in \cref{eq:loss_function_n} is given by 
\begin{align}
\directionalHessianExternalField = \sump \bigparenth{\omIt x_t^{(i)}}^2  \exp\Bigparenth{-\normalbrackets{\ExternalFieldtI + 2\EstimatedParameterRowttt\tp \svbx_{-t}^{(i)}} x_t^{(i)} - \EstimatedParameterTU[tt] \cx_t^{(i)}}. \label{eq:second_dir_derivative_stage_2}
\end{align}
We provide a proof at the end. For now, we assume the claim and proceed.
Now, we lower bound $\directionalHessianExternalField$ by a quadratic form as follows
\begin{align}
\directionalHessianExternalField 
& \! \sgreat{(a)} \sump   \bigparenth{\omIt x_t^{(i)}}^2 \times \exp\Bigparenth{\!-\!\bigparenth{\normalabs{\ExternalFieldtI} \!+\! 2\sonenorm{\EstimatedParameterRowt} \sinfnorm{\svbx^{(i)}}} \xmax}\\
&  \! \sgreat{(b)} \sump  \bigparenth{\omIt x_t^{(i)}}^2 \times \exp\Bigparenth{\!-\!\normalparenth{\aGM \!+\! 2 \bGM \xmax} \xmax} \sequal{\cref{eq:constants}} \frac{1}{\ctwo}\!\! \sump  \! \bigparenth{\omIt x_t^{(i)}}^2 \label{eq_lower_bound_sec_der_stage_2_cont}\!,
\end{align}
where $(a)$ follows from \cref{eq:second_dir_derivative_stage_2} by triangle inequality, Cauchy–Schwarz inequality, and because $\sinfnorm{\svbx^{(i)}} \leq \xmax $ for all $ i \in [n]$, and $(b)$ follows because $\EstimatedParameterMatrix \in \ParameterSet_{\ParameterMatrix}$, $\ExternalFieldI \in \ParameterSet_{\ExternalField}$, and $\sinfnorm{\svbx^{(i)}} \leq \xmax $ for all $ i \in [n]$.\\

\noindent Now, to show the anti-concentration of $\directionalHessianExternalField$, we show the anti-concentration of the quadratic form in \cref{eq_lower_bound_sec_der_stage_2_cont}. To that end, we note that the pair $\braces{\rvbx, \rvbz}$ corresponds to a $\tSGM$ (\cref{def:tau_sgm}) with $\dGM \defn (\aGM, \bGM, \xmax, \ParameterMatrix)$. Then, we decompose the quadratic form in \cref{eq_lower_bound_sec_der_stage_2_cont} as a sum of  $\numindsets = 1024  \bGM^2 \xmax^4 \log 4p$ terms using \cref{lemma_conditioning_trick} (see \cref{sec_conditioning_trick}) with $\lambda = \frac{1}{4\sqrt{2}\xmax^2}$ and focus on these  $\numindsets$ terms. Consider the $\numindsets$ subsets $\sets \in [p]$ obtained from \cref{lemma_conditioning_trick} and define
\begin{align}
\bpsi_u(\svbx^{(i)}; \omI) \defeq \sum_{t \in \setU} \bigparenth{\omIt x_t^{(i)}}^2 \qtext{for every} u \in \numindsets. \label{eq_def_bar_psi}
\end{align}
Then, we have
\begin{align}
\sump \bigparenth{\omIt x_t^{(i)}}^2 & \sequal{(a)} \frac{1}{\numindsets'} \sum_{u \in [\numindsets]} \sum_{t \in \setU} \bigparenth{\omIt x_t^{(i)}}^2 \sequal{\cref{eq_def_bar_psi}}  \frac{1}{\numindsets'} \sum_{u \in [\numindsets]} \bpsi_u(\svbx^{(i)}; \omI), \label{eq_second_order_derivative_expressed_via_psi}
\end{align}
where $(a)$ follows because each $t \in [p]$ appears in exactly $\numindsets' = \lceil \numindsets/32\sqrt{2}\bGM \xmax^2 \rceil$ of the sets $\sets$ according to \cref{lemma_conditioning_trick}\cref{item:cardinality_independence_set} (with $\lambda = \frac{1}{4\sqrt{2}\xmax^2}$). Now, we focus on the $\numindsets$ terms in \cref{eq_second_order_derivative_expressed_via_psi}.\\

\noindent Consider any $u \in [\numindsets]$. We claim that conditioned on $\svbx_{-\setU}^{(i)}$ and $\svbz^{(i)}$, the expected value of $\bpsi_u(\svbx^{(i)}; \omI)$ can be upper bounded uniformly across all $u \in [\numindsets]$. We provide a proof at the end.

\newcommand{\lowerboundbarpsi}{Lower bound on expected $\bpsi_u$}

\begin{lemma}[{\lowerboundbarpsi}]\label{lemma_expected_psi_lower_bound}
	Fix $i \in [n]$ and $\ExternalFieldI \in \ParameterSet_{\ExternalField}$. Then, with $\omI$ defined in \cref{eq:omega_defn_stage_2} and given $\svbz^{(i)}$ and $\svbx_{-\setU}^{(i)}$, we have
	\begin{align}
	\min\limits_{u \in [\numindsets]} \Expectation\Bigbrackets{\bpsi_u(\svbx^{(i)}; \omI) \bigm\vert \svbx_{-\setU}^{(i)}, \svbz^{(i)}} \geq \frac{2\xmax^2}{\pi e \ctwo[4]} \stwonorm{\omI}^2,
	\end{align}
	where the constant $\ctwo$ was defined in \cref{eq:constants}.
\end{lemma}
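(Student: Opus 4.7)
The plan is to reduce the conditional expectation of $\bpsi_u$ to a lower bound on the conditional second moment $\Expectation\bigbrackets{(x_t^{(i)})^2 \mid \svbx_{-\setU}^{(i)}, \svbz^{(i)}}$ for each $t \in \setU$, and then invoke \cref{prop_lower_bound_variance}. Concretely, for a fixed $u \in [\numindsets]$, I would start by pulling the deterministic factors out of the conditional expectation, since $\omIt$ is non-random and $\svbx_{-\setU}^{(i)}$, $\svbz^{(i)}$ are being conditioned on:
\begin{align}
\Expectation\bigbrackets{\bpsi_u(\svbx^{(i)}; \omI) \bigm\vert \svbx_{-\setU}^{(i)}, \svbz^{(i)}} \sequal{\cref{eq_def_bar_psi}} \sum_{t \in \setU} (\omIt)^2 \, \Expectation\bigbrackets{(x_t^{(i)})^2 \bigm\vert \svbx_{-\setU}^{(i)}, \svbz^{(i)}}.
\end{align}

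Next, I would lower bound each conditional second moment by the corresponding conditional variance, using $\Expectation[Y^2] \geq \Variance(Y)$, and then apply the law of total variance to introduce the finer conditioning on $\svbx_{-t}^{(i)}$:
\begin{align}
\Expectation\bigbrackets{(x_t^{(i)})^2 \bigm\vert \svbx_{-\setU}^{(i)}, \svbz^{(i)}} \geq \Variance\bigparenth{x_t^{(i)} \bigm\vert \svbx_{-\setU}^{(i)}, \svbz^{(i)}} \geq \Expectation\bigbrackets{\Variance\bigparenth{x_t^{(i)} \bigm\vert \svbx_{-t}^{(i)}, \svbz^{(i)}} \bigm\vert \svbx_{-\setU}^{(i)}, \svbz^{(i)}},
\end{align}
which is valid because $\svbx_{-\setU}^{(i)} \subseteq \svbx_{-t}^{(i)}$ for every $t \in \setU$ (since $t \in \setU$ means $t \notin -\setU$). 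Then I would invoke \cref{prop_lower_bound_variance} to conclude $\Variance(x_t^{(i)} \mid \svbx_{-t}^{(i)}, \svbz^{(i)}) \geq 2\xmax^2 / (\pi e \ctwo[4])$ almost surely, so taking the outer conditional expectation preserves this uniform lower bound.

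Combining these steps yields, for each $u \in [\numindsets]$,
\begin{align}
\Expectation\bigbrackets{\bpsi_u(\svbx^{(i)}; \omI) \bigm\vert \svbx_{-\setU}^{(i)}, \svbz^{(i)}} \geq \frac{2\xmax^2}{\pi e \ctwo[4]} \sum_{t \in \setU} (\omIt)^2,
\end{align}
and since the bound is uniform across $u$, taking the minimum yields the claim (interpreting $\stwonorm{\omI}^2$ via the restriction to $\setU$ that appears in $\bpsi_u$; the outer proof of \cref{lemma_conc_first_sec_der_stage_two}\cref{item_conc_sec_der_stage_two} then averages these bounds across $u \in [\numindsets]$ via the identity $\sum_{u \in [\numindsets]} \sum_{t \in \setU}(\omIt)^2 = \numindsets' \stwonorm{\omI}^2$ from \cref{lemma_conditioning_trick}\cref{item:cardinality_independence_set} to produce the full $\stwonorm{\omI}^2$ lower bound). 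The only nontrivial step is justifying the law-of-total-variance reduction to $\svbx_{-t}^{(i)}$; everything else is direct substitution and a uniform invocation of \cref{prop_lower_bound_variance}, which is the main hammer here.
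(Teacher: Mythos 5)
Your proof is correct and follows essentially the same route as the paper's: linearity, a reduction of the conditional second moment to the conditional variance under the finer conditioning on $\svbx_{-t}^{(i)}$ (you via the law of total variance, the paper via the tower property followed by $\Expectation[a^2] \geq \Variance(a)$ on the inner expectation), and then a uniform invocation of \cref{prop_lower_bound_variance}. You are also right to flag that the honest per-$u$ bound is $\frac{2\xmax^2}{\pi e \ctwo[4]} \sum_{t \in \setU} (\omIt)^2$ rather than the full $\stwonorm{\omI}^2$ in the lemma statement; the paper's own final step silently replaces the restricted sum by the full norm, so your reading (recovering $\stwonorm{\omI}^2$ only after aggregating over $u$ via \cref{lemma_conditioning_trick}\cref{item:cardinality_independence_set}) is the more careful one.
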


\noindent Consider again any $u \in [\numindsets]$. Now, we claim that conditioned on $\svbx_{-\setU}^{(i)}$ and $\svbz^{(i)}$, $\bpsi_u(\svbx^{(i)}; \omI)$ concentrates around its conditional expected value. We provide a proof at the end.

\newcommand{\concbarpsi}{Concentration of $\bpsi_u$}

\begin{lemma}[{\concbarpsi}]\label{lemma_concentration_bar_psi}
	Fix $\varepsilon > 0$, $i \in [n]$, $u \in [\numindsets]$, and $\ExternalFieldI \in \ParameterSet_{\ExternalField}$.
	Then, with $\omI$ defined in \cref{eq:omega_defn_stage_2} and given $\svbz^{(i)}$ and $\svbx_{-\setU}^{(i)}$, we have
	\begin{align}
	\Bigabs{\bpsi_u(\svbx^{(i)}; \omI) - \Expectation\bigbrackets{\bpsi_u(\svbx^{(i)}; \omI) \bigm\vert \svbx_{-\setU}^{(i)}, \svbz^{(i)}}} \leq \varepsilon,
	\end{align}
	with probability at least $1-\exp\biggparenth{ \dfrac{- \varepsilon^2}{e^{c'\bGM}\stwonorm{\omI}^2}}$. 
\end{lemma}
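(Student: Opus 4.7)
}
The strategy mirrors the proof of \cref{lemma_concentration_psi} since $\bpsi_u(\svbx^{(i)}; \omI) = \sum_{t \in \setU}(\omIt x_t^{(i)})^2$ depends only on $\svbx_{\setU}^{(i)}$ once we condition on $(\svbx_{-\setU}^{(i)}, \svbz^{(i)})$. First, I would record that $\{\rvbx,\rvbz\}$ is a $\tSGM$ with $\dGM = (\aGM,\bGM,\xmax,\ParameterMatrix)$, and since the unconditional law of $\rvbx \mid \rvbz$ is not guaranteed to satisfy Dobrushin's uniqueness condition, invoke \cref{lemma_conditioning_trick} with $\lambda = 1/(4\sqrt{2}\xmax^2)$. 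The output is that the conditional law of $\rvbx_{\setU}$ given $(\rvbx_{-\setU},\rvbz)$ is a $\tSGM[1]$ whose coupling matrix $2\sqrt{2}\xmax^2 |\ParameterMatrix_{\setU}|$ has operator norm at most $1/2$. Therefore by \cref{thm_LSI_main} this conditional distribution satisfies an LSI, and consequently \cref{thm_main_concentration} (equivalently \cref{coro}~\cref{eq_coro_combined}) provides sub-Gaussian tail bounds for smooth functions of $\rvbx_{\setU}$, uniformly in $(\svbx_{-\setU}^{(i)},\svbz^{(i)})$.

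Next I would apply \cref{coro}~\cref{eq_coro_combined} to $\bpsi_u$ viewed as a function of $\svbx_{\setU}^{(i)}$ alone (treating $(\svbx_{-\setU}^{(i)},\svbz^{(i)})$ as fixed). The required input is a bound on the relevant derivative norm: the gradient is $\nabla_{\svbx_{\setU}}\bpsi_u = \bigparenth{2(\omIt)^2 x_t^{(i)}}_{t \in \setU}$, so
\begin{align}
\stwonorm{\nabla_{\svbx_{\setU}}\bpsi_u}
\;\leq\; 2\xmax \sqrt{\sum_{t \in \setU} (\omIt)^4}
\;\leq\; 2\xmax \sinfnorm{\omI}\,\stwonorm{\omI}
\;\leq\; 4\aGM\xmax\,\stwonorm{\omI},
\end{align}
where the last step uses $\sinfnorm{\omI} \leq 2\aGM$, which follows from $\ExternalFieldI,\TrueExternalFieldI \in \ParameterSet_{\ExternalField}$. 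Plugging this Lipschitz constant into the LSI-based concentration inequality, and absorbing $16\aGM^2\xmax^2$ together with the universal LSI/Dobrushin constants from \cref{thm_LSI_main} and \cref{thm_main_concentration} into the generic factor $e^{c'\bGM}$, yields
\begin{align}
\Probability\Bigparenth{\bigabs{\bpsi_u(\svbx^{(i)};\omI) - \Expectation[\bpsi_u(\svbx^{(i)};\omI) \mid \svbx_{-\setU}^{(i)},\svbz^{(i)}]} \geq \varepsilon \,\Bigm\vert\, \svbx_{-\setU}^{(i)},\svbz^{(i)}} \leq \exp\!\Bigparenth{-\frac{\varepsilon^2}{e^{c'\bGM}\stwonorm{\omI}^2}},
\end{align}
as required.

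The main obstacle is a bookkeeping point rather than a deep one: identifying the correct ``function class'' within \cref{coro}~\cref{eq_coro_combined} that captures the quadratic form $\bpsi_u$. The proof of \cref{lemma_concentration_psi} invoked the part of \cref{coro} tailored to the $q_2$-type function $\sum_t \omIt x_t \exp(\cdot)$; for $\bpsi_u$, which is a pure quadratic in bounded variables with no exponential weighting and no dependence on the estimated $\EstimatedParameterMatrix$, one should instead use the branch of \cref{coro} that applies to Lipschitz functions (with the gradient bound above). Apart from verifying this matching and confirming that the constants coming from LSI depend on $\bGM$ at most exponentially—so they are absorbed into $e^{c'\bGM}$—the remainder of the argument is a direct translation of the steps used for \cref{lemma_concentration_psi}.
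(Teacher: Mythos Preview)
Your proposal is correct and matches the paper's proof essentially line for line: both invoke \cref{lemma_conditioning_trick} with $\lambda = 1/(4\sqrt{2}\xmax^2)$ to place $\rvbx_{\setU}\mid(\rvbx_{-\setU},\rvbz)$ in Dobrushin's regime, and then apply \cref{coro}~\cref{eq_coro_combined} for the function $q_1$ (the pure quadratic branch), which is exactly the ``Lipschitz'' branch you identified. Your explicit gradient bound $\stwonorm{\nabla\bpsi_u}\le 4\aGM\xmax\stwonorm{\omI}$ is precisely the computation packaged inside the proof of \cref{coro} for $q_1$, so there is no substantive difference.
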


\noindent Given these lemmas, we proceed to show the anti-concentration of the quadratic form in  \cref{eq_lower_bound_sec_der_stage_2_cont} implying the anti-concentration of $\directionalHessianExternalField$. To that end, for any $u \in [\numindsets]$, given $\svbx_{-\setU}^{(i)}$ and $\svbz^{(i)}$, let $E_u$ denote the event that
\begin{align}
\bpsi_u(\svbx^{(i)}; \omI) \geq   \Expectation\bigbrackets{\bpsi_u(\svbx^{(i)}; \omI) \vert \svbx_{-\setU}^{(i)}, \svbz^{(i)}} - \frac{\xmax^2}{\pi e \ctwo[4]} \stwonorm{\omI}^2. \label{eq_event_Ej_Hess}
\end{align}
Since $E_u$ in an indicator event, using the law of total expectation results in
\begin{align}
\Probability(E_u) = \Expectation\Bigbrackets{\Probability(E_u | \svbx_{-\setU}^{(i)}, \svbz^{(i)})} \sgreat{(a)} 1 - \exp\biggparenth{ \dfrac{ \stwonorm{\omI}^2}{e^{c'\bGM}}},
\end{align}
where $(a)$ follows from \cref{lemma_concentration_bar_psi} with $\varepsilon \mapsfrom \dfrac{\xmax^2}{\pi e \ctwo[4]} \stwonorm{\omI}^2$. Now, by applying the union bound over all $u \in [\numindsets]$ where $\numindsets = 1024  \bGM^2 \xmax^4 \log 4p$, we have
\begin{align}
\Probability\Bigparenth{\bigcap_{u \in \numindsets} E_u} \geq 1 - O\biggparenth{\bGM^2 \log p \exp\biggparenth{\dfrac{\stwonorm{\omI}^2}{e^{c'\bGM}}}}.
\end{align}
Now, assume the event $\cap_{u \in \numindsets} E_u$ holds. Whenever this holds, we also have
\begin{align}
\sump \bigparenth{\omIt x_t^{(i)}}^2  & \sequal{\cref{eq_second_order_derivative_expressed_via_psi}}  \frac{1}{\numindsets'} \sum_{u \in [\numindsets]} \bpsi_u(\svbx^{(i)}; \omI) \\
& \sgreat{\cref{eq_event_Ej_Hess}} \frac{1}{\numindsets'} \sum_{u \in [\numindsets]}  \biggparenth{\Expectation\bigbrackets{\bpsi_u(\svbx^{(i)}; \omI) \vert \svbx_{-\setU}^{(i)}, \svbz^{(i)}} - \frac{\xmax^2}{\pi e \ctwo[4]} \stwonorm{\omI}^2} \\
& \sgreat{(a)} \frac{1}{\numindsets'} \sum_{u \in [\numindsets]} \frac{\xmax^2}{\pi e \ctwo[4]} \stwonorm{\omI}^2 = \frac{\xmax^2\numindsets}{\pi e \numindsets'\ctwo[4]} \stwonorm{\omI}^2, \label{eq_hess_intermediate_bound}
\end{align}
where $\numindsets' = \lceil \numindsets/32\sqrt{2}\bGM \xmax^2 \rceil$ and $(a)$ follows from \cref{lemma_expected_psi_lower_bound}. Finally, approximating $\numindsets' = \numindsets/32\sqrt{2}\bGM \xmax^2$ and using \cref{eq_lower_bound_sec_der_stage_2_cont}, we have
\begin{align}
\directionalHessianExternalField \geq \frac{1}{\ctwo} \sump  \bigparenth{\omIt x_t^{(i)}}^2  \sgreat{\cref{eq_hess_intermediate_bound}} \frac{32\sqrt{2}\bGM \xmax^4}{\pi e \ctwo[5]} \stwonorm{\omI}^2,
\end{align}
which completes the proof.

\paragraph{Proof of \cref{eq:second_dir_derivative_stage_2}: \expressionsecderstagetwo}
Fix any $i \in [n]$. The second-order partial derivatives of $\loss^{(i)}$ (defined in \cref{eq:loss_function_n}) with respect to the entries of the parameter vector $\ExternalFieldI$ are given by
\begin{align}
\frac{\partial^2 \loss^{(i)}(\ExternalFieldI)}{\partial \bigbrackets{\ExternalFieldIt}^2}
& = \bigbrackets{x_t^{(i)}}^2 \exp\Bigparenth{-\normalbrackets{\ExternalFieldtI + 2\EstimatedParameterRowttt\tp \svbx_{-t}^{(i)}} x_t^{(i)} - \EstimatedParameterTU[tt] \cx_t^{(i)}}
\qtext{for all} t \in [p].
\end{align}
Now, we can write the second-order directional derivative of $\loss^{(i)}$ as
\begin{align}
\directionalHessianExternalField &\defeq 
\lim_{h\to 0}\frac{\partial_{\omI}\loss^{(i)}(\ExternalFieldI+h \omI)\!-\!\partial_{\omI}\loss^{(i)}(\ExternalFieldI)}{h}
= \sump \bigbrackets{\omIt}^2 \frac{\partial^2 \loss^{(i)}(\ExternalFieldI)}{\partial \bigbrackets{\ExternalFieldIt}^2} \\
& = \sump \bigparenth{\omIt x_t^{(i)}}^2  \exp\Bigparenth{-\normalbrackets{\ExternalFieldtI + 2\EstimatedParameterRowttt\tp \svbx_{-t}^{(i)}} x_t^{(i)} - \EstimatedParameterTU[tt] \cx_t^{(i)}}.
\end{align}

\paragraph{Proof of \cref{lemma_expected_psi_lower_bound}: \lowerboundbarpsi}
Fix any $i \in [n]$, $u \in [\numindsets]$, and $\ExternalFieldI \in \ParameterSet_{\ExternalField}$. Then, given $\svbx_{-\setU}^{(i)}$ and $\svbz^{(i)}$, we have
\begin{align}
\Expectation\Bigbrackets{\bpsi_u(\svbx^{(i)}; \omI) \bigm\vert \svbx_{-\setU}^{(i)}, \svbz^{(i)}} & \sequal{\cref{eq_def_bar_psi}} \Expectation\Bigbrackets{\sum_{t \in \setU}  \bigparenth{\omIt x_t^{(i)}}^2 \bigm\vert \svbx_{-\setU}^{(i)}, \svbz^{(i)}} \\
& \sequal{(a)} \sum_{t \in \setU}  \Expectation \Bigbrackets{\bigparenth{\omIt x_t^{(i)}}^2 \bigm\vert \svbx_{-\setU}^{(i)}, \svbz^{(i)}}\\
& \sequal{(b)} \sum_{t \in \setU}  \Expectation \Bigbrackets{\Expectation \Bigbrackets{\bigparenth{\omIt x_t^{(i)}}^2 \Big \vert \svbx_{-t}^{(i)}, \svbz^{(i)}} \Bigm\vert \svbx_{-\setU}^{(i)}, \svbz^{(i)}} \\
& \sgreat{(c)} \sum_{t \in \setU}  \Expectation \Bigbrackets{\Variance\Bigparenth{\omIt x_t^{(i)} \Big \vert \svbx_{-t}^{(i)}, \svbz^{(i)}} \Bigm\vert \svbx_{-\setU}^{(i)}, \svbz^{(i)}} \\
& \sgreat{(d)} \frac{2\xmax^2}{\pi e \ctwo[4]} \stwonorm{\omI}^2,
\end{align}
where $(a)$ follows from linearity of expectation, $(b)$ follows from the law of total expectation i.e., $\Expectation[\Expectation[Y|X,Z]|Z] = \Expectation[Y|Z]$ since $\svbx_{-\setU}^{(i)} \subseteq \svbx_{-t}^{(i)}$, $(c)$ follows follows from the fact that for any random variable a, $\Expectation\normalbrackets{a^2} \geq \Variance\normalbrackets{a}$, and $(d)$ follows from \cref{prop_lower_bound_variance}.

\paragraph{Proof of \cref{lemma_concentration_bar_psi}: \concbarpsi}
\label{sub:proof_of_lemma_concentration_bar_psi}
To show this concentration result, we use \cref{coro}~\cref{eq_coro_combined} for the function $q_1$. To that end, we note that the pair $\braces{\rvbx, \rvbz}$ corresponds to a $\tSGM$ (\cref{def:tau_sgm}) with $\dGM \defn (\aGM,  \bGM, \xmax, \ParameterMatrix)$. However, the random vector $\rvbx$ conditioned on $\rvbz$ need not satisfy the Dobrushin's uniqueness condition (\cref{def_dobrushin_condition}). Therefore, we cannot apply \cref{coro}~\cref{eq_coro_combined} as is. To resolve this, we resort to \cref{lemma_conditioning_trick} with $\lambda = \frac{1}{4\sqrt{2}\xmax^2}$ to reduce the random vector $\rvbx$ conditioned on $\rvbz$ to Dobrushin's regime.

Fix any $u \in [\numindsets]$. Then, from \cref{lemma_conditioning_trick}\cref{item:conditional_sgm_independence_set}, (i) the pair of random vectors $\braces{\rvbx_{\setU}, (\rvbx_{-\setU}, \rvbz)}$ corresponds to a $\tSGM[1]$ with $\dGM_1 \defn (\aGM+2\bGM\xmax, \frac{1}{4\sqrt{2}\xmax^2}, \xmax, \ParameterMatrix_{\setU})$, and (ii) the random vector $\rvbx_{\setU}$ conditioned on $(\rvbx_{-\setU}, \rvbz)$ satisfies the Dobrushin's uniqueness condition (\cref{def_dobrushin_condition}) with coupling matrix $2\sqrt{2} \xmax^2 \normalabs{\ParameterMatrix_{\setU}}$ with $2\sqrt{2} \xmax^2 \opnorm{\normalabs{\ParameterMatrix_{\setU}}} \leq 2\sqrt{2} \xmax^2 \lambda \leq 1/2$. Now, for any fixed $i \in [n]$, 
we apply \cref{coro}~\cref{eq_coro_combined} for the function $q_1$ with $\varepsilon = \varepsilon$ for a given $\svbx_{-\setU}^{(i)}$ and $\svbz^{(i)}$, to obtain
\begin{align}
\Probability\biggparenth{\Bigabs{\bpsi_u(\svbx^{(i)}; \omI) \!-\! \Expectation\Bigbrackets{\bpsi_u(\svbx^{(i)}; \omI) \Bigm\vert \svbx_{-\setU}^{(i)}, \svbz^{(i)}}} \geq \varepsilon \Bigm\vert \svbx_{-\setU}^{(i)}, \svbz^{(i)}} \leq \exp\!\biggparenth{\! \dfrac{- \varepsilon^2}{e^{c'\bGM}\stwonorm{\omI}^2}\!}\!.
\end{align}

\subsection{Proof of \cref{lemma_lipschitzness}: \lipschitznesslossfunction}
\label{sub:proof_lemma_lipschitzness}
Fix any $i \in [n]$, any $\ExternalFieldI, \tExternalFieldI \in \ParameterSet_{\ExternalField}$. Consider the direction $\omI = \tExternalFieldI - \ExternalFieldI$, and define the function $q : [0,1] \to \Reals$ as follows
\begin{align}
q(a) = \loss^{(i)}\bigparenth{\ExternalFieldI + a(\tExternalFieldI - \ExternalFieldI)}. \label{eq_func_f_lipschitz}
\end{align}
Then, the desired inequality in \cref{eq_lipschitz_property} is equivalent to $$\normalabs{q(1) - q(0)} \leq \xmax \ctwo \sonenorm{\omI}.$$
From the mean value theorem, there exists $a' \in (0,1)$ such that
\begin{align}
\normalabs{q(1) - q(0)} 
= \biggabs{\dfrac{dq(a')}{da}}.  \label{eq_mvt_lipschitz}
\end{align}
Therefore, we have
\begin{align}
\bigabs{q(1) - q(0)} & \sequal{\cref{eq_mvt_lipschitz}} \biggabs{\dfrac{dq(a')}{da}} \sequal{\cref{eq_func_f_lipschitz}} \Bigabs{\dfrac{d\loss^{(i)}\bigparenth{\ExternalFieldI + a'(\tExternalFieldI - \ExternalFieldI)}}{da}} \\
& \sequal{\cref{eq_der_mapping_external_field}} \Bigabs{\directionalGradientExternalField\bigr|_{\ExternalFieldI = \ExternalFieldI + a'(\tExternalFieldI - \ExternalFieldI)}}. \label{eq_mvt_lipschitz_second_stage_stage}
\end{align}
Using \cref{eq:first_dir_derivative_stage_2} in \cref{eq_mvt_lipschitz_second_stage_stage}, we have
\begin{align}
\bigabs{q(1) - q(0)} & = \Bigabs{\sump \omtI x_t^{(i)} \exp\Bigparenth{-\normalbrackets{\ExternalFieldtI + a'(\tExternalFieldtI - \ExternalFieldtI) + 2\EstimatedParameterRowttt\tp \svbx_t^{(i)}} x_t^{(i)} - \EstimatedParameterTU[tt] \cx_t^{(i)}}}\\
& \sless{(a)} \xmax \sump \bigabs{\omIt}  \exp\Bigparenth{\Bigbrackets{\bigabs{(1-a')\ExternalFieldtI} + \bigabs{a' \tExternalFieldtI} + 2\sonenorm{\EstimatedParameterRowt} \sinfnorm{\svbx^{(i)}}} \xmax}\\
& \sless{(b)} \xmax \exp\Bigparenth{\bigparenth{(1-a') \aGM + a' \aGM + 2 \bGM\xmax} \xmax} \sump \bigabs{\omIt} \\
& \sequal{\cref{eq:constants}} \xmax \ctwo \sonenorm{\omI},
\end{align}
where $(a)$ follows from triangle inequality, Cauchy–Schwarz inequality, and because $\sinfnorm{\svbx^{(i)}} \leq \xmax $ for all $ i \in [n]$
and $(b)$ follows because $\ExternalFieldI, \tExternalFieldI \in \ParameterSet_{\ExternalField}$, $\EstimatedParameterMatrix \in \ParameterSet_{\ParameterMatrix}$, and $\sinfnorm{\svbx^{(i)}} \leq \xmax $ for all $ i \in [n]$.

\section{Proof of {Theorem \ref{thm_causal_estimand}}: \outcomemainresultname}
\label{sec_proof_causal_estimand}
\newcommand{\hpsi}{\what{\psi}}
\newcommand{\tpsi}{\wtil{\psi}}
\newcommand{\spsi}{\psi^{\star}}
\newcommand{\hPsi}{\what{\Psi}}
\newcommand{\tPsi}{\wtil{\Psi}}
\newcommand{\sPsi}{\Psi^{\star}}
Fix any unit $i \in [n]$ and an alternate intervention $\wtil{\svba}^{(i)} \in \cA^{p_a}$. Then, we have
\begin{align}
\mu^{(i)}(\wtil{\svba}^{(i)}) & \sequal{\cref{eq_causal_estimand}} \Expectation[\svby^{(i)}(\wtil{\svba}^{(i)}) | \rvbz = \svbz^{(i)}, \rvbv = \svbv^{(i)}] \sequal{(a)} \Expectation[\rvby | \rvba = \wtil{\svba}^{(i)}, \rvbz = \svbz^{(i)}, \rvbv = \svbv^{(i)}],
\end{align}
where $(a)$ follows because the unit-level counterfactual distribution is equivalent to unit-level conditional distribution under the causal framework considered as described in \cref{subsec_causal_mech}. To obtain a convenient expression for $\Expectation[\rvby | \rvba = \wtil{\svba}^{(i)}, \rvbz = \svbz^{(i)}, \rvbv = \svbv^{(i)}]$, we identify $\TruePhi[u,y] \in \Reals^{p_u \times p_y}$ to be the component of $\TrueParameterMatrix$ corresponding to $\rvbu$ and $\rvby$ for all $\rvbu \in \{\rvbv, \rvba, \rvby\}$ and $\TrueExternalFieldI[i,y] \in \Reals^{p_y}$ to be the component of $\TrueExternalFieldI$ corresponding to $\rvby$. Then, the conditional distribution of $\rvby$ as a function of the interventions $\rvba$, while keeping $\rvbv$ and $\rvbz$ fixed at the corresponding realizations for unit $i$, i.e., $\svbv^{(i)}$ and $\svbz^{(i)}$, respectively, can be written as
\begin{align}
f^{(i)}_{\rvby | \rvba}(\svby | \svba) \propto \exp\Bigparenth{\bigbrackets{\TrueExternalFieldI[i,y] + 2\svbv^{(i)\top}\TruePhi[v,y] + 2\svba\tp\TruePhi[a,y]} \svby + \svby\tp\TruePhi[y,y] \svby}. \label{eq_conditional_distribution_y_alternate}
\end{align}
Therefore, we have
\begin{align}
\Expectation[\rvby | \rvba = \wtil{\svba}^{(i)}, \rvbz = \svbz^{(i)}, \rvbv = \svbv^{(i)}] = \Expectation_{f^{(i)}_{\rvby | \rvba}}[\rvby | \rvba = \wtil{\svba}^{(i)}].
\end{align}
\noindent Now, consider the $p_u$ dimensional random vector $\rvbu$ supported on $\cX^{p_u}$ with distribution $f_{\rvbu}$ parameterized by   $\psi \in \Reals^{p_y}$ and $\Psi \in \Reals^{p_y \times p_y}$ as follows
\begin{align}
f_{\rvbu}(\svbu | \psi, \Psi) \propto \exp(\psi\tp \svbu+ \svbu\tp\Psi\svbu). \label{eq_dist_u} 
\end{align}
Then, note that $\what{f}^{(i)}_{\rvby | \rvba}(\svby | \svba)$ in \cref{eq_counterfactual_distribution_y} and $f^{(i)}_{\rvby | \rvba}(\svby | \svba)$ in \cref{eq_conditional_distribution_y_alternate} belong to the set $\normalbraces{f_{\rvbu}(\cdot | \psi, \Psi): \psi \in \Reals^{p_y}, \Psi \in \Reals^{p_y \times p_y}}$ for some $\psi$ and $\Psi$. Now, we consider any two distributions in this set, namely $f_{\rvbu}(\svbu | \hpsi, \hPsi)$ and $f_{\rvbu}(\svbu | \spsi, \sPsi)$. Then, we claim that the two norm of the difference of the mean vectors of these distributions is bounded as below. We provide a proof at the end.
\newcommand{\expfamperturbationresultname}{Perturbation in the mean vector}
\begin{lemma}[{\expfamperturbationresultname}]
	\label{lemma_exp_fam_parameter_perturbation}
	For any $\psi \in \Reals^{p_y}$ and $\Psi \in \Reals^{p_y \times p_y}$, let $\mu_{\psi, \Psi}(\rvbu) \in \Reals^{p_u}$ and $\Covariance_{\psi, \Psi}(\rvbu, \rvbu) \in \Reals^{p_u \times p_u}$ denote the mean vector and the covariance matrix of $\rvbu$, respectively, with respect to $f_{\rvbu}$ in \cref{eq_dist_u}. Then, for any $\hpsi, \spsi \in \Reals^{p_y}$ and $\hPsi, \sPsi \in \Reals^{p_y \times p_y}$, there exists some $t \in (0,1)$, $\tpsi \defn t \hpsi + (1-t) \spsi$ and $\tPsi \defn t \tpsi + (1-t) \tpsi$ such that
	\begin{align}
	\stwonorm{ \mu_{\hpsi, \hPsi}(\rvbu) - \mu_{\spsi, \sPsi}(\rvbu)} & \leq    \opnorm{\Covariance_{\tpsi, \tPsi}(\rvbu, \rvbu)}\stwonorm{(\hpsi - \spsi)} \\
 & \qquad + \sum_{t_3 \in [p]}  \opnorm{\Covariance_{\tpsi, \tPsi}(\rvbu, \rvu_{t_3} \rvbu)}\stwonorm{(\hPsi_{t_3} \!-\! \sPsi_{t_3})}.
	\end{align}
\end{lemma}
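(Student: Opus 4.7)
\textbf{Proof plan for Lemma~\ref{lemma_exp_fam_parameter_perturbation}.} The plan is to treat $\mu_{\psi,\Psi}(\rvbu)$ as a smooth function of the natural parameters $(\psi,\Psi)$ of the exponential family~\cref{eq_dist_u}, interpolate linearly between $(\spsi,\sPsi)$ and $(\hpsi,\hPsi)$, and apply the vector-valued mean value inequality along this path. The key exponential-family identities that drive everything are
\begin{align}
\mu_{\psi,\Psi}(\rvbu) \;=\; \nabla_{\psi} \log Z(\psi,\Psi),
\qquad
\Expectation_{\psi,\Psi}[\rvu_j\rvu_k] \;=\; \tfrac{\partial}{\partial \Psi_{jk}} \log Z(\psi,\Psi),
\end{align}
where $Z(\psi,\Psi) = \int_{\cX^{p_y}} \exp(\psi\tp\svbu + \svbu\tp\Psi\svbu) d\svbu$ is the partition function. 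Differentiating the mean once more, using equality of mixed partials, yields the two types of sensitivities
\begin{align}
\nabla_{\psi}\, \mu_{\psi,\Psi}(\rvbu) \;=\; \Covariance_{\psi,\Psi}(\rvbu,\rvbu),
\qquad
\tfrac{\partial}{\partial \Psi_{jk}}\, \mu_{\psi,\Psi}(\rvbu) \;=\; \Covariance_{\psi,\Psi}(\rvbu,\rvu_j\rvu_k),
\end{align}
both viewed as elements of $\real^{p_y}$ (second one) and $\real^{p_y\times p_y}$ (first one), respectively.

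Next, I would parametrize the path $t\in[0,1]$ by $\psi_t \defn t\hpsi + (1-t)\spsi$ and $\Psi_t \defn t\hPsi + (1-t)\sPsi$ and define $F(t) \defn \mu_{\psi_t,\Psi_t}(\rvbu)$. By the chain rule,
\begin{align}
F'(t) \;=\; \Covariance_{\psi_t,\Psi_t}(\rvbu,\rvbu)(\hpsi-\spsi) \;+\; \sum_{j,k} \Covariance_{\psi_t,\Psi_t}(\rvbu,\rvu_j\rvu_k)(\hPsi_{jk}-\sPsi_{jk}).
\end{align}
Collecting the $\Psi$-terms by rows, the double sum equals $\sum_{t_3\in [p]} \Covariance_{\psi_t,\Psi_t}(\rvbu,\rvu_{t_3}\rvbu)(\hPsi_{t_3}-\sPsi_{t_3})$, where $\Covariance_{\psi_t,\Psi_t}(\rvbu,\rvu_{t_3}\rvbu)$ is the $p_y\times p_y$ matrix with $(a,k)$ entry $\Covariance(\rvu_a,\rvu_{t_3}\rvu_k)$ and $(\hPsi_{t_3}-\sPsi_{t_3})$ is the $t_3$-th row of $\hPsi-\sPsi$ viewed as a column vector. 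Applying the triangle inequality and the operator-norm bound $\stwonorm{Mv} \leq \opnorm{M}\stwonorm{v}$ term by term gives
\begin{align}
\stwonorm{F'(t)} \;\leq\; \opnorm{\Covariance_{\psi_t,\Psi_t}(\rvbu,\rvbu)}\stwonorm{\hpsi-\spsi} \;+\; \sum_{t_3\in[p]} \opnorm{\Covariance_{\psi_t,\Psi_t}(\rvbu,\rvu_{t_3}\rvbu)}\stwonorm{\hPsi_{t_3}-\sPsi_{t_3}}.
\end{align}

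Finally, I would invoke the vector-valued mean value inequality to produce the single intermediate point $t\in(0,1)$ demanded by the statement. The standard trick is to set $\phi(s) \defn \angles{F(1)-F(0),F(s)}$ and apply the scalar mean value theorem to $\phi$ on $[0,1]$: this produces $t\in(0,1)$ with $\stwonorm{F(1)-F(0)}^2 = \phi(1)-\phi(0) = \angles{F(1)-F(0),F'(t)} \leq \stwonorm{F(1)-F(0)}\stwonorm{F'(t)}$, hence $\stwonorm{F(1)-F(0)} \leq \stwonorm{F'(t)}$. Substituting the bound on $\stwonorm{F'(t)}$ at this $t$ with $\tpsi = \psi_t$ and $\tPsi = \Psi_t$ completes the argument. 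The only mild subtleties are the careful matrix bookkeeping when collecting the $\Psi$-derivatives into the compact row-wise form $\Covariance(\rvbu,\rvu_{t_3}\rvbu)$, and the application of the vector-valued MVT (as opposed to the easier but weaker supremum bound over $t\in[0,1]$); neither presents a genuine obstacle.
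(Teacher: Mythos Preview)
Your proposal is correct and follows essentially the same route as the paper: identify $\mu_{\psi,\Psi}(\rvbu)=\nabla_\psi\log Z(\psi,\Psi)$, compute the mixed second derivatives of $\log Z$ as the relevant covariances, interpolate linearly between $(\spsi,\sPsi)$ and $(\hpsi,\hPsi)$, and finish with triangle inequality plus the operator-norm bound. One point worth noting: the paper applies the mean value theorem directly to each coordinate $\partial_{t_1}\log Z$ and writes the resulting identity with a \emph{single} intermediate point $c$, which is not literally justified (the scalar MVT gives a different $c_{t_1}$ for each coordinate). Your use of the scalar function $\phi(s)=\langle F(1)-F(0),F(s)\rangle$ is the standard way to rigorously extract a single $t\in(0,1)$ for the vector-valued inequality, and in that respect your argument is actually cleaner than the paper's.
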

Given this lemma, we proceed with the proof. By applying this lemma to $\what{f}^{(i)}_{\rvby | \rvba}(\svby | \svba)$ in \cref{eq_counterfactual_distribution_y} and $f^{(i)}_{\rvby | \rvba}(\svby | \svba)$ in \cref{eq_conditional_distribution_y_alternate}, we see that it is sufficient to show the following bound
\begin{align}
& \stwonorm{\normalparenth{\TrueExternalFieldI[i,y] - \EstimatedExternalFieldI[i,y]} + 2\svbv^{(i)\top}\!\normalparenth{\TruePhi[v,y] - \EstimatedPhi^{(v,y)} } + 2\wtil{\svba}^{(i)\top}\!\normalparenth{\TruePhi[a,y] - \EstimatedPhi^{(a,y)}}}  \\
& \qquad \qquad \qquad \qquad \qquad \qquad \qquad \qquad \qquad + \sum_{t \in [p_y]} \stwonorm{\TruePhi[y,y]_{t} - \EstimatedPhi^{(y,y)}_t} \leq R(\varepsilon, \delta/n) + p \varepsilon.
\end{align}
To that end, we have
\begin{align}
\sum_{t \in [p_y]} \stwonorm{\TruePhi[y,y]_{t} - \EstimatedPhi^{(y,y)}_t} \sless{(a)} \sum_{t \in [p_y]} \stwonorm{\TrueParameterRowt -\EstimatedParameterRowt},
\label{eq_mean_outcome_part_1}
\end{align}
where $(a)$ follows because $\ell_2$ norm of any sub-vector is no more than $\ell_2$ norm of the vector.
Similarly, we have
\begin{align}
& \stwonorm{\normalparenth{\TrueExternalFieldI[i,y] \!-\! \EstimatedExternalFieldI[i,y]} \!+ \!2\svbv^{(i)\top}\normalparenth{\TruePhi[v,y] \!-\!\EstimatedPhi^{(v,y)} } \!+\! 2\wtil{\svba}^{(i)\top}\normalparenth{\TruePhi[a,y] \!-\! \EstimatedPhi^{(a,y)}}}\\
& \sless{(a)}  \stwonorm{\TrueExternalFieldI[i,y] \!-\! \EstimatedExternalFieldI[i,y]} \!+\! 2\stwonorm{\svbv^{(i)\top}\normalparenth{\TruePhi[v,y] \!-\!\EstimatedPhi^{(v,y)}}} \!+\! 2\stwonorm{\wtil{\svba}^{(i)\top}\normalparenth{\TruePhi[a,y] \!-\! \EstimatedPhi^{(a,y)}}}\\
& \sless{(b)}  \stwonorm{\TrueExternalFieldI[i,y] \!-\! \EstimatedExternalFieldI[i,y]} \!+\! 2\stwonorm{\svbv^{(i)}}\opnorm{\TruePhi[v,y] \!-\!\EstimatedPhi^{(v,y)}} \!+\! 2\stwonorm{\wtil{\svba}^{(i)}}\opnorm{\normalparenth{\TruePhi[a,y] \!-\! \EstimatedPhi^{(a,y)}}}\\
& \sless{(c)}  \stwonorm{\TrueExternalFieldI[i] \!-\! \EstimatedExternalFieldI[i]} + 2\Bigparenth{\stwonorm{\svbv^{(i)}} + \stwonorm{\wtil{\svba}^{(i)}}} \opnorm{\TrueParameterMatrix -\EstimatedParameterMatrix}\\
& \sless{(d)}  \stwonorm{\TrueExternalFieldI[i] \!-\! \EstimatedExternalFieldI[i]} + 2\Bigparenth{\stwonorm{\svbv^{(i)}} + \stwonorm{\wtil{\svba}^{(i)}}} \onematnorm{\TrueParameterMatrix -\EstimatedParameterMatrix} \\
& \sless{(e)}  \stwonorm{\TrueExternalFieldI[i] \!-\! \EstimatedExternalFieldI[i]} + 2\xmax \bigparenth{\sqrt{p_v} + \sqrt{p_a}} \onematnorm{\TrueParameterMatrix -\EstimatedParameterMatrix}, \label{eq_mean_outcome_part_2}
\end{align}
where $(a)$ follows from triangle inequality, $(b)$ follows because induced matrix norms are submultiplicative, $(c)$ follows because operator norm of any sub-matrix is no more than operator norm of the matrix and $\ell_2$ norm of any sub-vector is no more than $\ell_2$ norm of the vector, $(d)$ follows because $\TrueParameterMatrix -\EstimatedParameterMatrix$ is symmetric and because matrix operator norm is bounded by square root of the product of matrix one norm and matrix infinity norm, and $(e)$ follows because $\max\{\sinfnorm{\svbv^{(i)}}, \sinfnorm{\svba^{(i)}}\} \leq \xmax$ for all $i \in [n]$.\\

\noindent Now, combining \cref{eq_mean_outcome_part_1,eq_mean_outcome_part_2}, we have
\begin{align}
& \stwonorm{\normalparenth{\TrueExternalFieldI[i,y] \!\!-\! \EstimatedExternalFieldI[i,y]} \!+\! 2\svbv^{(i)\top}\!\normalparenth{\TruePhi[v,y] \!\!-\! \EstimatedPhi^{(v,y)} } \!+\! 2\wtil{\svba}^{(i)\top}\!\normalparenth{\TruePhi[a,y] \!\!-\! \EstimatedPhi^{(a,y)}}} \!+\!\! \!\!\sum_{t \in [p_y]} \!\!\!\stwonorm{\TruePhi[y,y]_{t} \!\!-\! \EstimatedPhi^{(y,y)}_t} \\
& \leq \stwonorm{\TrueExternalFieldI[i] \!\!-\! \EstimatedExternalFieldI[i]} \!+\! 2\xmax \bigparenth{\sqrt{p_v} \!+\! \sqrt{p_a}} \onematnorm{\TrueParameterMatrix \!\!-\!\EstimatedParameterMatrix} \!+\! \sum_{t \in [p_y]} \stwonorm{\TrueParameterRowt \!\!-\!\EstimatedParameterRowt}\\
& \sless{(a)}  R(\varepsilon, \delta/n) + 2\xmax \bigparenth{\sqrt{p_v} + \sqrt{p_a}} \sqrt{p} \varepsilon + p_y \varepsilon,
\end{align}
and $(a)$ follows from \cref{theorem_parameters} by using the relationship between vector norms. The proof is complete by rescaling $\varepsilon$ and absorbing the constants in $c$.

\paragraph{Proof of \cref{lemma_exp_fam_parameter_perturbation}: \expfamperturbationresultname}
Let $Z(\psi, \Psi) \in \Reals_{+}$ denote the log-partition function of $f_{\rvbu}(\cdot | \psi, \Psi) $ in \cref{eq_dist_u}. Then, from \cite[Theorem 1]{BusaFSZ2019}, we have
\begin{align}
\stwonorm{ \mu_{\hpsi, \hPsi}(\rvbu) - \mu_{\spsi, \sPsi}(\rvbu)} = \stwonorm{\nabla_{\hpsi} Z(\hpsi, \hPsi) - \nabla_{\spsi} Z(\spsi, \sPsi)}. \label{eq_mean_vector_difference}
\end{align}
For $t_1, t_2, t_3 \in [p]$, consider $\frac{\partial^2 Z(\psi, \Psi)}{\partial \psi_{t_1}\partial \psi_{t_2}}$ and $\frac{\partial^2 Z(\psi, \Psi)}{\partial \psi_{t_1}\partial \Psi_{t_2, t_3}}$. Using the fact that the Hessian of the log partition function of any regular exponential family is the covariance matrix of the associated sufficient statistic, we have
\begin{align}
\frac{\partial^2 Z(\psi, \Psi)}{\partial \psi_{t_1}\partial \psi_{t_2}} = \Covariance_{\psi, \Psi}(\rvu_{t_1}, \rvu_{t_2}) \qtext{and} \frac{\partial^2 Z(\psi, \Psi)}{\partial \psi_{t_1}\partial \Psi_{t_2, t_3}} = \Covariance_{\psi, \Psi}(\rvu_{t_1}, \rvu_{t_2} \rvu_{t_3}). \label{eq_exp_fam_hessian_cov}
\end{align}
Now, for some $c \in (0,1)$, $\tpsi \defn c \hpsi + (1-c) \spsi$ and $\tPsi \defn c \tpsi + (1-c) \tpsi$, we have the following from the mean value theorem
\begin{align}
& \frac{\partial Z(\hpsi, \hPsi)}{\partial \hpsi_{t_1}} \!-\!  \frac{\partial Z(\spsi, \sPsi)}{\partial \spsi_{t_1}} \\
& \!=\! \sum_{t_2 \in [p]}\frac{\partial^2 Z(\tpsi, \tPsi)}{\partial \tpsi_{t_2}\partial \tpsi_{t_1}} \cdot (\hpsi_{t_2} - \spsi_{t_2}) + \sum_{t_2 \in [p]} \sum_{t_3 \in [p]} \frac{\partial^2 Z(\tpsi, \tPsi)}{\partial \tPsi_{t_2, t_3} \partial \tpsi_{t_1}} \cdot (\hPsi_{t_2, t_3} - \sPsi_{t_2, t_3})\\
& \!\!\!\sequal{\cref{eq_exp_fam_hessian_cov}} \!\!\sum_{t_2 \in [p]}\!\! \Covariance_{\tpsi, \tPsi}(\rvu_{t_1}, \rvu_{t_2}) 
\!\cdot\! (\hpsi_{t_2} \!-\! \spsi_{t_2}) \!+\!\!\! \sum_{t_3 \in [p]} \!\sum_{t_2 \in [p]} \!\!\Covariance_{\tpsi, \tPsi}(\rvu_{t_1}, \rvu_{t_3} \rvu_{t_2}) \!\cdot\!(\hPsi_{t_3, t_2} \!-\! \sPsi_{t_3, t_2}).
\end{align}
Now, using the triangle inequality and sub-multiplicativity of induced matrix norms, we have
\begin{align}
\stwonorm{\nabla_{\hpsi} Z(\hpsi, \hPsi) \!-\! \nabla_{\spsi} Z(\spsi, \sPsi)} & \leq \opnorm{\Covariance_{\tpsi, \tPsi}(\rvbu, \rvbu)}\stwonorm{(\hpsi \!-\! \spsi)} \\
& \qquad \qquad \qquad + \sum_{t_3 \in [p]} \! \opnorm{\Covariance_{\tpsi, \tPsi}(\rvbu, \rvu_{t_3} \rvbu)}\stwonorm{(\hPsi_{t_3} \!-\! \sPsi_{t_3})}.  \label{eq_mean_value_theorem}
\end{align}
Combining \cref{eq_mean_vector_difference,eq_mean_value_theorem} completes the proof.

\subsection{Bounded operator norms for perturbations in the parameters}
\label{subsec_bounded_op_norms}
In \cref{subsec_guarantee_outcome_estimate}, we assumed the operator norms of (i) the covariance matrix of $\rvby$ conditioned on $\rvba$, $\rvbz$, and $\rvbv$ and (ii) the cross-covariance matrix of $\rvby$ and $\rvy_t \rvby$ conditioned on $\rvba$, $\rvbz$, and $\rvbv$ for all $t \in [p_y]$ to remain bounded for small perturbation in the parameters. In this section, we provide examples where these hold. 

Suppose the distribution of $\rvby$ conditioned on $\rvba$, $\rvbz$, and $\rvbv$ is a Gaussian distribution. For simplicity, let the mean of this distribution be zero. Then, for any $t,u,v \in [p_y]$,
\begin{align}
\Covariance_{\ExternalField,\ParameterMatrix}(\rvy_u, \rvy_t \rvy_v | {\svba}, \svbz, \svbv) = \Expectation_{\ExternalField,\ParameterMatrix}(\rvy_u \rvy_t \rvy_v | {\svba}, \svbz, \svbv) \sequal{(a)}  0.
\end{align}
where $(a)$ follows because $\Expectation_{\ExternalField,\ParameterMatrix}(\rvy_u \rvy_t \rvy_v | {\svba}, \svbz, \svbv)$ is the third cumulant of $\rvy_u \rvy_t \rvy_v | \rvba, \rvbz, \rvbv$ and the third cumulant for any Gaussian distribution is zero \citep{holmquist1988moments}. Then,
\begin{align}
\max\limits_{t \in [p_y]} \opnorm{\Covariance_{\ExternalField,\ParameterMatrix}(\rvby, \rvy_t \rvby | {\svba}, \svbz, \svbv)} = 0. \label{eq_op_norm_cross_zero}
\end{align}
Further, \cref{eq_op_norm_cross_zero} also holds for small perturbations in $\ExternalField$ and $\ParameterMatrix$ as the distribution of $\rvby$ conditioned on $\rvba$, $\rvbz$, and $\rvbv$ would still be a Gaussian distribution. 

Now, we bound $\opnorm{\Covariance_{\ExternalField,\ParameterMatrix}(\rvby, \rvby | {\svba}, \svbz, \svbv)}$ under additional conditions. For simplicity, suppose $\Variance_{\ExternalField,\ParameterMatrix}(\rvy_t| {\svba}, \svbz, \svbv) = 1$ for all $t \in [p_y]$. Further, suppose the (undirected) graphical structure associated with elements of $\rvby$, i.e., $\rvy_{1}, \cdots, \rvy_{p_y}$, is a chain (This would be true for the motivating example in \cref{fig_graphical_models}(a)). If the correlation between any two elements of $\rvby$ connected by an edge in the tree is equal to $\rho \in [0,1]$ (This is equivalent to all the off-diagonal non-zero entries of $\ParameterMatrix$ being the same), then for any $u,v \in [p_y]$,
\begin{align}
\Covariance_{\ExternalField,\ParameterMatrix}(\rvy_u, \rvy_v | {\svba}, \svbz, \svbv)  \sequal{(a)}  \rho^{\normalabs{u-v}},
\end{align}
where $(a)$ follows by the correlation decay property for Gaussian tree models \citep[Equation. 18]{tan2010learning}. Then, for any $0 \leq \rho < 1$
\begin{align}
\opnorm{\Covariance_{\ExternalField,\ParameterMatrix}(\rvby, \rvby | {\svba}, \svbz, \svbv)} \sless{(a)} \frac{1+\rho}{1-\rho}, \label{eq_op_norm_cov_zero}
\end{align}
where $(a)$ follows from \cite{trench1999asymptotic}. Further, \cref{eq_op_norm_cov_zero} holds for small perturbations in $\ExternalField$ and $\ParameterMatrix$ as long as $\rho < 1$. Therefore, $C(\mbb B)$ in \cref{eq_cov_constraint} is a constant (with respect to $p$) for small perturbations in $\ExternalField$ and $\ParameterMatrix$.

While we showed that $C(\mbb B)$ is a constant for a class of Gaussian distributions, we except similar results for truncated Gaussian distributions and exponential family distributions in \cref{eq_conditional_distribution_y}.

\section{Proof of {Proposition \ref{prop_impute_missing_covariates}}: \impute}
\label{proof_impute_missing_covariates}
We start by decomposing the true covariates $\rvbv$ into two variables: one to capture the randomness in the noisy observations $\crvbv$ and the other to capture the randomness in the measurement error $\Delta \rvbv$, i.e., $\rvbv = \crvbv - \Delta \rvbv$.  
Then, by letting $\barp \defn 2 p_v + p_a + p_y$ and using \cref{eq_joint_distribution_vay}, the joint probability distribution $f_{\wbar{\ranvarvec}}$ of the $\barp$-dimensional random vector $\wbar{\ranvarvec} \defeq \normalparenth{\Delta \rvbv, \crvbv, \rvba, \rvby}$ can be parameterized by a vector $\wbar{\phi} \in \Reals^{\barp \times 1}$ and a symmetric matrix $\wbar{\Phi} \in \Reals^{\barp \times \barp}$ as follows 
\begin{align}
f_{\wbar{\ranvarvec}}(\wbar{\varvec}; \wbar{\phi}, \wbar{\Phi}) \propto \exp\Bigparenth{ \wbar{\phi}\tp \wbar{\varvec}
	+\wbar{\varvec}\tp \wbar{\Phi} \wbar{\varvec}},
\qtext{where}
\wbar{\varvec} \defeq (\Delta \svbv, \csvbv, \svba, \svby),
\label{eq_joint_distribution_vvay}
\end{align}
and $\Delta \svbv$, $\csvbv$, $\svba$, and $\svby$ denote realizations of $\Delta \rvbv$, $\crvbv$, $\rvba$, and $\rvby$, respectively. More importantly, $\wbar{\phi}$ and $\wbar{\Phi}$ are derived completely from $\phi$ and $\Phi$, respectively, and have special structure: 
\begin{align}
    \wbar{\phi}^{(\cv)} = - \wbar{\phi}^{(\Delta v)} & =  \phi^{(v)},\\
    \wbar{\phi}^{(u)} & = \phi^{(u)} \stext{for all} \rvbu \in \normalbraces{\rvba, \rvby},\\
    \wbar{\Phi}^{(\Delta v, \Delta v)} = \wbar{\Phi}^{(\cv, \cv)} = - \wbar{\Phi}^{(\cv, \Delta v)} & = \Phi^{(v, v)}\\
    \wbar{\Phi}^{(u, \cv)} = - \wbar{\Phi}^{(u, \Delta v)} & = \Phi^{(u, v)} \stext{for all} \rvbu \in \normalbraces{\rvba, \rvby}, \stext{and}\\
    \wbar{\Phi}^{(u_1, u_2)} & = \Phi^{(u_1, u_2)} \stext{for all} \rvbu_1, \rvbu_2 \in \normalbraces{\rvba, \rvby}.
\end{align}
Now, to learn counterfactuals and measurement errors for units $i \in \normalbraces{1, \cdots, n/2}$, we use the methodology developed in \cref{section_problem_formulation} by replacing the role of unobserved covariates $\rvbz$ by $\Delta \svbv$. In particular, we consider learning $f_{\rvby | \rvba, \Delta \rvbv, \crvbv}(\rvby=\cdot | \rvba= \cdot, \Delta \svbv, \csvbv)$ as a function of $\rvba$. From \cref{eq_actual_parameters_of_interest} and the structure on $\wbar{\phi}$ and $\wbar{\Phi}$ described above, this reduces to learning
\begin{align}
& \stext{(i)} \wbar{\phi}^{(y)} + 2\wbar{\Phi}^{(\Delta v,y)\top} \Delta \svbv + 2\wbar{\Phi}^{(\cv,y)\top} \csvbv = \phi^{(y)} - 2\Phi^{(v,y)\top} \Delta \svbv + 2\Phi^{(v,y)\top} \csvbv, \stext{}\\ & \stext{(ii)} \wbar{\Phi}^{(a,y)} = \Phi^{(a, y)}, \stext{and}\\ &\stext{(iii)} \wbar{\Phi}^{(y,y)} = \Phi^{(y, y)}.
\end{align}
To learn these, we consider the distribution of $\rvbx \defn (\crvbv, \rvba, \rvby)$ conditioned on $\Delta \rvbv = \Delta \svbv$. From \cref{eq_conditional_distribution_vay}, we have
\begin{align}
f_{\rvbx|\Delta \rvbv}\bigparenth{\! \svbx| \Delta \svbv; \ExternalField(\!\Delta \svbv\!), \! \ParameterMatrix} \!\propto\! \exp\!\Bigparenth{ \!\normalbrackets{\ExternalField(\!\Delta \svbv\!)}\!\tp \! \! \svbx \!+\! \svbx\!\tp \!\ParameterMatrix \svbx \!} \stext{with}
\ExternalField(\!\Delta \svbv\!)  \! \defeq \!\! \begin{bmatrix} \! \phi^{(v)} \!-\! 2 \Phi^{(v,v)\top}\!  \Delta \svbv \! \\ \! \phi^{(a)} \!-\! 2 \Phi^{(v,a)\top}\!  \Delta \svbv \! \\ \! \phi^{(y)} \!-\! 2 \Phi^{(v,y)\top} \! \Delta \svbv \!\end{bmatrix}\!\!\!, \label{eq_coditional_distribution_vay|delta_v}
\end{align} 
$\svbx \defn (\csvbv, \svba, \svby)$, $\Theta \defn \Phi$, and $\csvbv$, $\svba$, and $\svby$ denoting realizations of $\crvbv$, $\rvba$, and $\rvby$, respectively. The special structure on $\wbar{\Phi}$ discussed above implies that $\Phi^{(v,v)}, \Phi^{(v,a)}$, and $\Phi^{(v,y)}$ affect both $\ExternalField(\Delta \svbv)$ and $\ParameterMatrix$ which we exploit. As mentioned in \cref{subsec_theo_gua}, we denote the true distribution of $\rvbx$ conditioned on $\Delta \rvbv = \Delta \svbv$ by $f_{\rvbx|\Delta \rvbv}\bigparenth{\cdot| \Delta \svbv; \TrueExternalField(\Delta \svbv), \TrueParameterMatrix}$. 

\paragraph{Proof idea} First, we use units $i \in \normalbraces{n/2+1, \cdots, n}$ without any measurement error to estimate $\phi^{\star}$ and $\Phi^{\star} = \TrueParameterMatrix$, i.e., the parameters corresponding to the distribution of $(\rvbv, \rvba, \rvby)$ (see \cref{subsec_theo_gua}). Next, for units $i \in \normalbraces{1, \cdots, n/2}$ with measurement error, we estimate $\TrueExternalField(\Delta \svbv^{(i)})$ by expressing it as a linear combination of the estimates of $\phi^{\star}$ and $\Phi^{\star}$ (enabling the use of \cref{exam:lc_dense}). The coefficients of this linear combination turn out to be our estimates of the measurement error $\Delta \svbv^{(i)}$.

\paragraph{Estimate $\phi^{\star}$ and $\Phi^{\star}$}  For units $i \in \normalbraces{n/2+1, \cdots, n}$, under our assumption $\Delta \svbv^{(i)} = 0$ implying $\TrueExternalField(\Delta \svbv^{(i)}) = \phi^{\star}$. Therefore, in addition to the population-level parameter $\TrueParameterMatrix = \Phi^{\star}$, the unit-level parameter $\TrueExternalField(\Delta \svbv) = \phi^{\star}$ is also shared for these units. As a result, the set of distributions $\bigbraces{f_{\rvbx|\Delta \rvbv}\bigparenth{\cdot| \Delta \svbv; \TrueExternalField(\Delta \svbv), \TrueParameterMatrix}}_{i=1}^n$ all coincide. Thus, 
 learning $\phi^{\star}$ and $\Phi^{\star}$ boils down to learning parameters of a sparse graphical model (because of the assumptions in \cref{subsec_theo_gua}) from $n/2$ samples. We use the methodology and analysis from \cite{ShahSW2023} (which is closely related to the one in this work) to obtain estimates 
 $\what{\phi}$ and $\what{\Phi}$
 such that with probability at least $1-\delta$, we have
\begin{align}
\max\bigbraces{\stwonorm{\phi^{\star} - \what{\phi}}, \matnorm{\Phi^{\star} \!-\! \what{\Phi}}_{2,\infty}} & \leq \varepsilon_1 \qtext{whenever} n  \geq \frac{ce^{c'\bGM} \log\frac{p}{\sqrt{\delta}}}{\varepsilon_1^2}. \label{eq_theta_estimate_guarantee}
\end{align}

\paragraph{Recover the unit-level parameters} Now, for units $i \in \normalbraces{1, \cdots, n/2}$, we express the true unit-level parameters $\TrueExternalField(\Delta \svbv^{(i)})$ as a linear combination of known vectors. To that end, fix any $i \in [n/2]$. Then, using \cref{eq_coditional_distribution_vay|delta_v},
we can write $\TrueExternalFieldI \defn \TrueExternalField(\Delta \svbv^{(i)})$ as a linear combination of $p_v + 1$ vectors, i.e.,
\begin{align}
\TrueExternalFieldI = \tbf{B} \tbf{a}^{(i)},\label{eq_sparse_lasso_first}
\end{align}
where
\begin{align}
    \tbf{B}  \defn \begin{bmatrix} \phi^{\star}, -2 \Phi^{\star}_1, \cdots, -2 \Phi^{\star}_{p_v} \end{bmatrix}\in  \Reals^{p\times (p_v+1)} \qtext{and} \tbf{a}^{(i)} \defn \begin{bmatrix} 1  \\ \Delta \svbv^{(i)} \end{bmatrix} \in \Reals^{(p_v+1) \times 1}.  \label{eq_appl_b_defn}
\end{align}
While we do not know the matrix $\tbf{B}$, we can produce an estimate $\what{\tbf{B}}$ using $\what{\phi}$ and $\what{\Phi}$ such that, with probability at least $1-\delta$,
\begin{align}
\matnorm{\what{\tbf{B}} \!-\! \tbf{B}}_{2,\infty} & \leq \varepsilon_1 \qtext{whenever} n  \geq \frac{ce^{c'\bGM} \log\frac{p}{\sqrt{\delta}}}{\varepsilon_1^2}. \label{eq_b_estimate_guarantee}
\end{align}
This guarantee follows directly from \cref{eq_theta_estimate_guarantee} and the definition of $\tbf{B}$ in \cref{eq_appl_b_defn}. Then, we can write
\begin{align}
\TrueExternalFieldI = \what{\tbf{B}} \wtil{\tbf{a}}^{(i)} \qtext{where} \wtil{\tbf{a}}^{(i)} \defn  \tbf{a}^{(i)} + \zeta, \label{eq_sparse_lasso_third}
\end{align}
for some error term $\zeta$. 
Conditioned on the event
\cref{eq_b_estimate_guarantee}, $\zeta$ can be controlled in following manner
\begin{align}
\stwonorm{\what{\tbf{B}} \zeta} \sequal{\cref{eq_sparse_lasso_third}}  \stwonorm{\TrueExternalFieldI - \what{\tbf{B}} \tbf{a}^{(i)}} & \sequal{\cref{eq_sparse_lasso_first}} \stwonorm{\tbf{B} \tbf{a}^{(i)} - \what{\tbf{B}} \tbf{a}^{(i)}} \\
& \sless{(a)} \opnorm{\tbf{B} - \what{\tbf{B}}} \stwonorm{\tbf{a}^{(i)}} \\
& \sless{(b)} \bigparenth{\sqrt{p} \matnorm{\tbf{B} - \what{\tbf{B}}}_{2,\infty}} \cdot  \bigparenth{\sqrt{p_v+1} \sinfnorm{\tbf{a}^{(i)}}} \sless{(c)} \aGM \varepsilon_1\sqrt{(p_v+1)p},
\label{eq_bounded_zeta} 
\end{align}
where $(a)$ follows from sub-multiplicativity of induced matrix norms, $(b)$ follows from standard matrix norm inequalities, and $(c)$ follows from \cref{eq_b_estimate_guarantee} and because the measurement errors are bounded by $\aGM$.

Then, performing an analysis similar to one in \cref{sec_proof_thm_node_parameters_recovery} while using the bound on $n$ in \cref{eq_theta_estimate_guarantee} instead of the one in \cref{eq:matrix_guarantee}, and using \cref{exam:lc_dense}, we obtain estimates $\EstimatedExternalFieldI[1], \cdots, \EstimatedExternalFieldI[n/2]$ such that (see \cref{cor_params}\cref{item:lc_dense} for reference), with probability at least $1 - \delta$, we have
\begin{align}
\max_{i\in[n/2]}
\mathrm{MSE}(\EstimatedExternalFieldI, \TrueExternalFieldI)
&\leq \max\Bigbraces{\varepsilon_1^2, \dfrac{ce^{c'\bGM} \bigparenth{p_v + \log \normalparenth{\log \frac{np}{\delta}}}}{p}}, 
\label{eq_mse_application}
\end{align}
whenever $n \geq ce^{c'\bGM} \varepsilon_1^{-2} \bigparenth{\log \frac{\sqrt{n} p}{\sqrt{\delta}} + p_v}$.

\paragraph{Recover the measurement error} We condition on the event \cref{eq_mse_application} happening and note that the above estimate $\EstimatedExternalFieldI$ of the unit-level parameter $\TrueExternalFieldI$ is of the form $\EstimatedExternalFieldI  = \what{\tbf{B}} \what{\tbf{a}}^{(i)}$ for $i \in [n/2]$. We declare $\what{\tbf{a}}^{(i)}$ as our estimate of the measurement error for unit $i \in [n/2]$ and prove the corresponding guarantee below.\\

\noindent  Fix any $i \in [n/2]$. From \cref{eq_sparse_lasso_third} and triangle inequality, we find that
\begin{align}
\stwonorm{\TrueExternalFieldI - \EstimatedExternalFieldI} & = \stwonorm{\what{\tbf{B}} \tbf{a}^{(i)} + \what{\tbf{B}} \zeta \!-\! \what{\tbf{B}} \what{\tbf{a}}^{(i)} } \geq \stwonorm{\what{\tbf{B}} \tbf{a}^{(i)} - \what{\tbf{B}} \what{\tbf{a}}^{(i)}} \!-\! \stwonorm{\what{\tbf{B}} \zeta}.\label{eq_lower_bound_theta_error}
\end{align}
Then, doing standard algebra with \cref{eq_lower_bound_theta_error} yields that
\begin{align}
\mathrm{MSE}(\EstimatedExternalFieldI, \TrueExternalFieldI) + \frac{\stwonorm{\what{\tbf{B}} \zeta}^2}{p} & \geq \frac{\stwonorm{\what{\tbf{B}} \tbf{a}^{(i)} - \what{\tbf{B}} \what{\tbf{a}}^{(i)}}^2}{2p} = \frac{(\tbf{a}^{(i)} - \what{\tbf{a}}^{(i)})\tp \what{\tbf{B}}\tp \what{\tbf{B}} (\tbf{a}^{(i)} - \what{\tbf{a}}^{(i)})}{2p}.\label{eq_a_bound_0}
\end{align}
Combining \cref{eq_bounded_zeta,eq_mse_application,eq_a_bound_0} with the choice $\varepsilon_1 = \kappa \varepsilon_2/\aGM \sqrt{p_v + 1}$, we have
\begin{align}
\frac{(\tbf{a}^{(i)} \!-\! \what{\tbf{a}}^{(i)})\tp \what{\tbf{B}}\tp \!\what{\tbf{B}} (\tbf{a}^{(i)} \!-\! \what{\tbf{a}}^{(i)})}{2p} \!\leq\! \max\Bigbraces{\frac{\varepsilon_2^2\kappa^2}{\aGM^2(p_v+1)}\!, \dfrac{ce^{c'\bGM} \bigparenth{p_v \!+\! \log \normalparenth{\log \frac{np}{\delta}}}}{p}} \!+\! \varepsilon_2^2 \kappa^2,  
\label{eq_a_bound_2}
\end{align}
uniformly for all $i \in [n/2]$, with probability at least $1 - \delta$, whenever $n \geq ce^{c'\bGM} \kappa^{-2} \varepsilon_2^{-2} (p_v\!+\!1) \bigparenth{\log \frac{\sqrt{n} p}{\sqrt{\delta}} + p_v}$. Next, we claim that the eigenvalues of $\what{\tbf{B}}\tp \what{\tbf{B}}$ can be lower bounded by $\kappa p / 2$ whenever $\varepsilon_2 \leq \sqrt{p/(p_v+1)}/8$. Taking this claim as given at the moment, we continue our proof. We have
\begin{align}
\frac{\kappa} {4}\stwonorm{\tbf{a}^{(i)} - \what{\tbf{a}}^{(i)}}^2 \leq  \frac{(\tbf{a}^{(i)} - \what{\tbf{a}}^{(i)})\tp \what{\tbf{B}}\tp \what{\tbf{B}} (\tbf{a}^{(i)} - \what{\tbf{a}}^{(i)})}{2p} \qtext{whenever} \varepsilon_2 \leq  \frac{1}{8} \sqrt{\frac{p}{p_v+1}}, \label{eq_a_bound_1}
\end{align}
uniformly for all $i \in [n/2]$. Combining \cref{eq_a_bound_2,eq_a_bound_1} completes the proof.\\

\noindent It remains to show that the eigenvalues of $\what{\tbf{B}}\tp \what{\tbf{B}}$ can be lower bounded by $\kappa p / 2$ conditioned on \cref{eq_theta_estimate_guarantee}. For any matrix $\tbf{M}$, let $\lambda_{\max}(\tbf{M})$ and $\lambda_{\min}(\tbf{M})$ denote the largest and the smallest eigenvalues of $\tbf{M}$, respectively. Then from Weyl's inequality \citep[Theorem. 8.2]{bhatia2007perturbation}, we have
\begin{align}
\lambda_{\min}(\what{\tbf{B}}\tp \what{\tbf{B}}) \geq \lambda_{\min}({\tbf{B}}\tp {\tbf{B}}) - \lambda_{\max}( {\tbf{B}}\tp {\tbf{B}} - \what{\tbf{B}}\tp \what{\tbf{B}} ) \sgreat{(a)} \kappa p - \lambda_{\max}( {\tbf{B}}\tp {\tbf{B}} - \what{\tbf{B}}\tp \what{\tbf{B}} ),
\end{align}
where $(a)$ follows from the assumption on the eigenvalues of ${\tbf{B}}\tp {\tbf{B}}$. 
Now, it suffices to upper bound $\lambda_{\max}( {\tbf{B}}\tp {\tbf{B}} - \what{\tbf{B}}\tp \what{\tbf{B}} )$ by $\kappa p/ 2$. We have 
\begin{align}
\bigabs{ \lambda_{\max}({\tbf{B}}\tp {\tbf{B}} - \what{\tbf{B}}\tp \what{\tbf{B}}) } & \sequal{(a)} \opnorm{ {\tbf{B}}\tp {\tbf{B}} - \what{\tbf{B}}\tp \what{\tbf{B}} } \\
&  \sless{(b)} (p_v+1) \maxmatnorm{ {\tbf{B}}\tp {\tbf{B}} - \what{\tbf{B}}\tp \what{\tbf{B}} } \\
& \sless{(c)} (p_v+1) \Bigparenth{\maxmatnorm{ {\tbf{B}}\tp \bigparenth{\tbf{B} - \what{\tbf{B}}}} + \maxmatnorm{ {\bigparenth{\tbf{B} - \what{\tbf{B}}}\tp\what{\tbf{B}} } }} \\& \sless{(d)} (p_v+1) \bigparenth{\matnorm{ \tbf{B}\tp}_{2,\infty} + \matnorm{ \what{\tbf{B}}\tp}_{2,\infty}}\matnorm{ \bigparenth{\tbf{B} - \what{\tbf{B}}}\tp}_{2,\infty}  \\
& \sless{(e)} (p_v+1) (2\aGM\sqrt{p} + 2\aGM\sqrt{p}) \cdot \varepsilon_1 \sless{(f)} 4 \kappa \varepsilon_2 \sqrt{p_v+1}\sqrt{p} \sless{(g)} \frac{\kappa p}{2}, 
\end{align}
where $(a)$ follows because ${\tbf{B}}\tp {\tbf{B}} - \what{\tbf{B}}\tp \what{\tbf{B}} $ is symmetric, $(b)$ follows from because $\opnorm{\tbf{M}} \leq \fronorm{\tbf{M}} \leq d \maxmatnorm{\tbf{M}}$ for any square matrix $\tbf{M} \in \Reals^{d \times d}$, $(c)$ follows from the triangle inequality, $(d)$ follows by Cauchy–Schwarz inequality, $(e)$ follows because $\maxmatnorm{\what{\tbf{B}}} \leq 2\aGM$, $\maxmatnorm{{\tbf{B}}} \leq 2\aGM$ (because of the assumptions in \cref{subsec_theo_gua}), and from \cref{eq_theta_estimate_guarantee,eq_appl_b_defn}, $(f)$ follows from the choice of $\varepsilon_1$, and $(g)$ follows whenever $\varepsilon_2 \leq \frac{1}{8} \sqrt{\frac{p}{p_v+1}}$.

\section{Logarithmic Sobolev inequality and tail bounds} 
\label{section_lsi_tail_bounds}
In this section, we present two results which may be of independent interest. First, we show that a random vector supported on a compact set satisfies the logarithmic Sobolev inequality (to be defined) if it satisfies the Dobrushin's uniqueness condition (to be defined). This result is a generalization of the result in \cite{Marton2015} for discrete random vectors to continuous random vectors supported on a compact set. Next, we show that if a random vector satisfies the logarithmic Sobolev inequality, then any arbitrary function of the random vector concentrates around its mean. This result is a generalization of the result in \cite{DaganDDA2021}  for discrete random vectors to continuous random vectors.\\

\noindent Throughout this section, we consider a $p$-dimensional random vector $\rvbx$ supported on $\cX^p$ with distribution $f_{\rvbx}$ where $p \geq 1$. We start by defining the logarithmic Sobolev inequality (LSI). We use the convention $0\log 0=0$.

\begin{definition}[{Logarithmic Sobolev inequality}]\label{def_lsi}
	A random vector $\rvbx$ satisfies the logarithmic Sobolev inequality with constant $\sigma^2 > 0$ (abbreviated as $\LSI{\rvbx}{\sigma^2}$) if
	\begin{align}
	\Ent{\rvbx}{q^2} \leq \sigma^2 \Expectation_{\rvbx}\Bigbrackets{\twonorm{\nabla_{\rvbx} q(\rvbx)}^2} \qtext{for all} q : \cX^p \to \Reals, \label{eq_LSI_definition}
	\end{align}
	where $\Ent{\rvbx}{g}\!\defn\! \Expectation_{\rvbx}[g(\rvbx) \log g(\rvbx)] \!-\!\Expectation_{\rvbx}[g(\rvbx)] \log \Expectation_{\rvbx}[g(\rvbx)]$ denotes the entropy of the function $g\! :\! \cX^p \!\to\! \real_{+}$.
\end{definition}
\vspace{2mm}
\noindent  
Next, we state the Dobrushin's uniqueness condition. For any distributions $f$ and $g$, let $\TV{f}{g}$ denote the total variation distance between $f$ and $g$.
\begin{definition}[{Dobrushin's uniqueness condition}]\label{def_dobrushin_condition}
	A random vector $\rvbx$ satisfies the Dobrushin's uniqueness condition with coupling matrix $\ParameterMatrix \in \Reals_+^{p \times p}$ if
	$\opnorm{\ParameterMatrix} < 1$, and
	for every $t \in [p], u \in [p] \!\setminus\! \{t\}$, and $\svbx_{-t}, \tsvbx_{-t} \in \cX^{p-1}$ differing only in the $u^{th}$ coordinate,
	\begin{align}
	\TV{f_{\rvx_t | \rvbx_{-t} = \svbx_{-t}}}{f_{\rvx_t | \rvbx_{-t} = \tsvbx_{-t}}} \leq \ParameterTU[tu]. \label{eq_dob_tv_bound}
	\end{align}
\end{definition}
\noindent We note that the Dobrushin's uniqueness condition, as originally stated (see \cite{Marton2015}) for Ising model, also requires $\ParameterTU[tt] = 0$ for all $t \in [p]$. This condition makes sense for Ising model where $\rvx_t^2 = 1$ for all $t \in [p]$. However, this is not true for continuous random vectors necessitating a need for modification in the condition.\\

\vspace{2mm}
\noindent From hereon, we let $\cX^p$ be compact unless otherwise specified. Moreover, we define
\begin{align}
\label{eq:smin}   
f_{\min} \defeq \min_{t \in [p], \svbx \in \cX^p} f_{\rvx_t | \rvbx_{-t}}(x_t | \svbx_{-t}).
\end{align}
\noindent Now, we provide the first main result of this section with a proof in \cref{subsec_proof_lsi}.

\newcommand{\lsiresultname}{Logarithmic Sobolev inequality}
\begin{proposition}[{\lsiresultname}]\label{thm_LSI_main}
	If a random vector $\rvbx$ with $f_{\min}>0$ (see \cref{eq:smin}) satisfies (a) the Dobrushin's uniqueness condition (\cref{def_dobrushin_condition}) with coupling matrix $\ParameterMatrix \in \Reals_+^{p \times p}$, and (b) $\rvx_t | \rvbx_{-t}$ satisfies $\LSI{\rvx_t | \rvbx_{-t} = \svbx_{-t}}{\sigma^2}$ for all $t \in [p]$ and $\svbx_{-t} \in \cX^{p-1}$ (see \cref{def_lsi}), then it satisfies 
	$\mathrm{LSI}_{\rvbx}(2\sigma^2/(f_{\min}(1-\opnorm{\ParameterMatrix})^2))$.
\end{proposition}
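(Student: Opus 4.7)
\noindent The plan is to follow the road map sketched in Figure~\ref{fig:proof_sketch}, adapting Marton's discrete-variable argument \citep{Marton2015} to the continuous setting. The proof proceeds in three stages: (i) establish an approximate tensorization of entropy for $\rvbx$; (ii) apply the single-coordinate LSI hypothesis to each conditional entropy; (iii) combine these to obtain the global LSI constant $2\sigma^2/(f_{\min}(1-\opnorm{\ParameterMatrix})^2)$.

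\textbf{Stage 1 (approximate tensorization).} The first step is a Marton-type inequality of the form
\begin{align}
\mathrm{Ent}_{\rvbx}(q^2) \;\leq\; \frac{2}{f_{\min}(1-\opnorm{\ParameterMatrix})^2}\,\sum_{t\in[p]} \Expectation_{\rvbx_{-t}}\bigbrackets{\mathrm{Ent}_{\rvx_t|\rvbx_{-t}}(q^2)}.
\end{align}
For discrete $\rvbx$, Marton derives this by iteratively coupling the joint distribution to a product of its conditionals, using Pinsker's inequality one way and a reverse-Pinsker inequality the other way, with the Dobrushin contraction matrix $\ParameterMatrix$ controlling how much mass gets transported at each step. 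I would mirror that argument: introduce a sequence of measures interpolating between an arbitrary $g$ with $\Expectation_{\rvbx}[g]=\Expectation_{\rvbx}[q^2]$ and $f_{\rvbx}$ via successive resampling of single coordinates from $f_{\rvx_t|\rvbx_{-t}}$, bound the per-step KL drop by conditional entropies, and then use Dobrushin's coupling inequality (together with $\opnorm{\ParameterMatrix}<1$) to close a geometric sum, producing the factor $1/(1-\opnorm{\ParameterMatrix})^2$. Continuity enters only in replacing sums over atoms by integrals against densities.

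\textbf{Stage 2 (single-coordinate LSI).} Given the tensorization bound, I would apply hypothesis (b): for each $t$ and each $\svbx_{-t}$,
\begin{align}
\mathrm{Ent}_{\rvx_t|\rvbx_{-t}=\svbx_{-t}}(q^2) \;\leq\; \sigma^2\,\Expectation_{\rvx_t|\rvbx_{-t}=\svbx_{-t}}\bigbrackets{(\partial_{x_t} q(\rvbx))^2}.
\end{align}
Summing over $t$ and taking expectation over $\rvbx_{-t}$ turns the right-hand side into $\sigma^2\,\Expectation_{\rvbx}[\twonorm{\nabla_{\rvbx} q(\rvbx)}^2]$. Plugging this into the tensorization inequality yields exactly $\mathrm{LSI}_{\rvbx}(2\sigma^2/(f_{\min}(1-\opnorm{\ParameterMatrix})^2))$, as desired.

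\textbf{Main obstacle.} The hard part is Stage 1, specifically the reverse-Pinsker step. In the discrete case, Marton controls $\mathrm{KL}(f\|g)$ from above by $\TV{f}{g}^2$ divided by the minimum mass of $g$; for continuous densities one needs the analogous inequality $\KLD{f}{g} \lesssim \TV{f}{g}^2 / g_{\min}$, which is precisely where the assumption $f_{\min}>0$ (i.e., uniformly bounded-below conditional densities on the compact support $\cX^p$) becomes essential, giving the $1/f_{\min}$ factor in the final constant. Once this continuous reverse-Pinsker inequality is in hand (as provided by Lemma~\ref{lemma_reverse_pinsker} in the paper's roadmap), the coupling argument carries over verbatim from \citet{Marton2015} after replacing discrete conditional probabilities by conditional densities and sums by integrals; compactness of $\cX^p$ ensures all the relevant integrals are finite and that the interpolating measures remain absolutely continuous.
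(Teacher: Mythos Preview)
Your proposal is correct and matches the paper's approach: the paper also adapts Marton's coupling argument (routed through the $W_2$ distance and a Gibbs-sampler contraction, Lemmas~\ref{lemma_tenorization_kld} and~\ref{lemma_dobrushin_implies_tensorization}) together with the continuous reverse-Pinsker inequality (Lemma~\ref{lemma_reverse_pinsker}) to obtain the approximate tensorization of entropy, and then applies the single-coordinate LSI exactly as in your Stage~2. The only refinement is that the paper makes the $W_2$ metric the explicit intermediate quantity and proves the KL/entropy tensorization by induction on the dimension $p$ rather than by a direct ``per-step KL drop'' telescoping, but the skeleton and all key ingredients you list are the same.
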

\noindent Next, we define the notion of pseudo derivative and pseudo Hessian that come in handy in our proofs for providing upper bounds on the norm of the derivative and the Hessian.

\begin{definition}[{Pseudo derivative and Hessian}]\label{def_pseudo_der_hes}
	For a function $q : \cX^p \to \Reals$, the functions $\tnabla q : \cX^p \to \Reals^{p_1}$ and $\tnabla^2 q : \cX^p \to \Reals^{p_1 \times p_2}$  ($p_1,p_2 \geq 1$) are, respectively, called a pseudo derivative and a pseudo Hessian for $q$ if for all $\svby \in \cX^p$ and $\rho \in \Reals^{p_1 \times 1}$, we have
	\begin{align}
	\stwonorm{\tnabla q(\svby)} \geq  \stwonorm{\nabla q(\svby)}
	\qtext{and}
	\stwonorm{\rho\tp \tnabla^2 q(\svby)} \geq  \stwonorm{\nabla \bigbrackets{\rho\tp \tnabla q(\svby)}}. 
	\label{eq:pseudo_Hessian}
	\end{align}
\end{definition}

\noindent Finally, we provide the second main result of this section with a proof in \cref{subsec_proof_main_concentration}.

\newcommand{\mainconcresultname}{Tail bounds for arbitrary functions under LSI}
\begin{proposition}[{\mainconcresultname}]\label{thm_main_concentration}
	Given a random vector $\rvbx$ satisfying $\LSI{\rvbx}{\sigma^2}$, any function $q :\cX^p \to \Reals$ with a pseudo derivative $\tnabla q$ and pseudo Hessian $\tnabla^2 q$ (see \cref{def_pseudo_der_hes}), $\rvbx$ satisfies a tail bound, namely for any fixed $\varepsilon > 0$, we have
	\begin{align}
	\Probability\Bigbrackets{\bigabs{q_c(\rvbx)} \!\geq\! \varepsilon} \!\leq\! \exp\biggparenth{\! \frac{-c}{\sigma^4} \min \Bigparenth{\frac{\varepsilon^2}{ \Expectation\bigbrackets{\stwonorm{\tnabla q(\rvbx)}}^2 \!+\! \max\limits_{\svbx \in \cX^p} \fronorm{\tnabla^2 q(\svbx)}^2}, \frac{\varepsilon}{\max\limits_{\svbx \in \cX^p} \opnorm{\tnabla^2 q(\svbx)}}}},
	\end{align}
	where $q_c(\rvbx) = q(\rvbx) - \Expectation\bigbrackets{q(\rvbx)}$ and $c$ is a universal constant.
\end{proposition}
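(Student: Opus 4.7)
\noindent\textit{Proof proposal for \cref{thm_main_concentration}.}
The plan is to run the classical Herbst entropy-method argument, but apply the LSI twice in a bootstrapping fashion so as to produce a Bernstein-type bound (a sub-Gaussian tail with variance proxy $\sigma^4(\Expectation[\stwonorm{\tnabla q(\rvbx)}]^2+\fronorm{\tnabla^2 q}^2)$ and a sub-exponential tail with scale $\sigma^2 \opnorm{\tnabla^2 q}$). Without loss of generality we may work with $q_c$ (since shifts do not affect gradients) and try to control $\phi(\lambda)\defn\log\Expectation[e^{\lambda q_c(\rvbx)}]$ uniformly in $\lambda>0$ small. The tail bound then follows from the standard Chernoff inequality $\Probability[q_c(\rvbx)\geq \varepsilon]\leq e^{-\lambda\varepsilon+\phi(\lambda)}$ optimized over $\lambda$, and an identical argument applied to $-q$ handles the lower tail.

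The first step is to apply the Gross form of the LSI to $h=e^{\lambda q_c/2}$. Using $\stwonorm{\nabla h}\leq \tfrac{\lambda}{2}\stwonorm{\tnabla q}\, h$ (which is valid by the pseudo-derivative property in \cref{def_pseudo_der_hes}), the LSI~\cref{eq_LSI_definition} yields the modified log-Sobolev inequality $\Ent{\rvbx}{e^{\lambda q_c}}\leq \tfrac{\lambda^2\sigma^2}{4}\Expectation[\stwonorm{\tnabla q(\rvbx)}^2\,e^{\lambda q_c}]$. Translating this into Herbst's differential inequality $\lambda\phi'(\lambda)-\phi(\lambda)\leq \tfrac{\lambda^2\sigma^2}{4}\Expectation_{\mu_\lambda}[\stwonorm{\tnabla q(\rvbx)}^2]$, where $\mu_\lambda$ is the tilted measure $d\mu_\lambda\propto e^{\lambda q_c}df_{\rvbx}$, reduces the problem to uniformly controlling $\Expectation_{\mu_\lambda}[\stwonorm{\tnabla q(\rvbx)}^2]$ for all small $\lambda$.

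The second step is to exploit the pseudo-Hessian to upgrade this control. Viewing $G(\svbx)\defn\stwonorm{\tnabla q(\svbx)}$ as a real-valued function on $\cX^p$, the pseudo-Hessian bound implies $\stwonorm{\nabla G(\svbx)}\leq \opnorm{\tnabla^2 q(\svbx)}\leq L\defn\max_{\svbx\in\cX^p}\opnorm{\tnabla^2 q(\svbx)}$ (via $G=\sup_{\snorm{\rho}=1}\rho^\top\tnabla q$ and attainment at a maximizing $\rho^*$). Hence $G$ is $L$-Lipschitz, and a standard single application of Herbst under LSI shows that $G(\rvbx)$ is sub-Gaussian with variance proxy $\sigma^2 L^2$ around $\Expectation[G(\rvbx)]$. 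In particular, $\Expectation[G(\rvbx)^2]\leq \Expectation[G]^2+\sigma^2 L^2$, and under the tilted measure $\mu_\lambda$ a change-of-measure argument (or equivalently, computing $\Expectation[G^2 e^{\lambda q_c}]$ via Cauchy--Schwarz followed by the tail bound on $G$) yields $\Expectation_{\mu_\lambda}[G^2]\leq 2\Expectation[G]^2+c\sigma^2\bigparenth{\fronorm{\tnabla^2 q}^2+L^2\phi(\lambda)/\lambda}$, with $\fronorm{\tnabla^2 q}^2$ appearing because $\Expectation[G^2]-\Expectation[G]^2=\Variance(G)\leq \sigma^2\Expectation[\stwonorm{\nabla G}^2]\leq \sigma^2\fronorm{\tnabla^2 q}^2$ by one more invocation of LSI.

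Plugging this back into the differential inequality and integrating (or equivalently, dividing by $\lambda^2$ and using $\phi(0)=0,\ \phi'(0)=0$) yields, for all $\lambda$ with $c\sigma^2\lambda L\leq 1$, the bound $\phi(\lambda)\leq c\sigma^2\lambda^2\bigparenth{\Expectation[\stwonorm{\tnabla q(\rvbx)}]^2+\fronorm{\tnabla^2 q}^2}$. Optimizing $-\lambda\varepsilon+\phi(\lambda)$ over $\lambda\in[0,1/(c\sigma^2 L)]$ gives the quadratic regime $\exp(-c\varepsilon^2/(\sigma^4(\Expectation[\stwonorm{\tnabla q(\rvbx)}]^2+\fronorm{\tnabla^2 q}^2)))$ when $\varepsilon$ is moderate and the linear regime $\exp(-c\varepsilon/(\sigma^2 L))$ when $\varepsilon$ is large, matching the two cases in the stated minimum. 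The main obstacle will be the second step: controlling $\Expectation_{\mu_\lambda}[\stwonorm{\tnabla q}^2]$ under the tilted measure cleanly in terms of $\Expectation[\stwonorm{\tnabla q}]^2$, $\fronorm{\tnabla^2 q}^2$, and $\opnorm{\tnabla^2 q}$; a naive Cauchy--Schwarz loses constants, so we will need to either use the explicit concentration tail of $G$ to integrate by parts, or equivalently invoke the known fact that LSI transfers (with a controlled constant) to exponentially-tilted measures whose density has bounded log-Lipschitz norm.
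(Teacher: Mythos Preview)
Your Herbst/entropy-method route is genuinely different from the paper's proof. The paper never touches the MGF $\phi(\lambda)$; it works entirely with $L_p$ moments. First it invokes the Aida--Stroock inequality $\smoment{q_c}{p}\leq \sigma\sqrt{2p}\,\smoment{\stwonorm{\nabla q}}{p}$, and then bounds $\smoment{\stwonorm{\tnabla q}}{p}$ by a Gaussian-randomization trick: introduce independent standard Gaussians $\rvb{g},\rvb{g}'$, use $\stwonorm{\tnabla q(\svbx)}\leq 2p^{-1/2}\bigparenth{\Expectation_{\rvb{g}}[(\rvb{g}^\top\tnabla q(\svbx))^p]}^{1/p}$, apply Aida--Stroock once more to $\rvb{g}^\top\tnabla q$ as a function of $\rvbx$, and finish with Hanson--Wright on the bilinear form $\rvb{g}^\top\tnabla^2 q(\svbx)\,\rvb{g}'$. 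Markov's inequality at a carefully chosen even $p$ then gives the tail bound. In particular, the Frobenius norm enters naturally from Hanson--Wright, not from Poincar\'e on $G=\stwonorm{\tnabla q}$ as you suggest; your Poincar\'e bound on $\Variance(G)$ actually yields $\sigma^2\opnorm{\tnabla^2 q}^2$, not $\sigma^2\fronorm{\tnabla^2 q}^2$ (harmless since $\opnorm{\cdot}\leq\fronorm{\cdot}$, but your explanation of where $\fronorm{\cdot}$ comes from is off).

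The substantive gap in your outline is exactly the step you flag: controlling $\Expectation_{\mu_\lambda}[G^2]$ under the tilted measure. The self-referential inequality you write down is not justified, and the ``known fact'' that LSI transfers with controlled constant to exponential tilts of bounded log-Lipschitz density is not standard---Holley--Stroock gives only $e^{\lambda\,\mathrm{osc}(q)}$, which is useless here, and Bakry--\'Emery-type persistence requires convexity of the tilt, which you do not have. Moreover, the tilt $e^{\lambda q_c}$ is by the very function you are trying to concentrate, so its Lipschitz constant is $\sup G$, not something you control a priori. The moment method sidesteps all of this: by staying in $L_p$ you never tilt the measure. If you want to keep the two-applications-of-LSI flavor without Gaussian randomization, the cleaner fix is to run the moment argument directly---Aida--Stroock on $q$, then Aida--Stroock on the $L$-Lipschitz function $G$---which gives $\smoment{q_c}{p}\leq c\sigma\sqrt{p}\,\Expectation[G]+c'\sigma^2 pL$ and, after Markov at a tuned $p$, already suffices for the stated bound.
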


\subsection{Proof of \cref{thm_LSI_main}: \lsiresultname}
\label{subsec_proof_lsi}
We start by defining the notion of $W_2$ distance \citep{Marton2015} which is useful in the proof. We note that $W_2$ distance is a metric on the space of probability measures and satisfies triangle inequality.

\begin{definition}\cite[{$W_2$ distance}]{Marton2015}\label{def_w2_distance}
	For random vectors $\rvbx$ and $\rvby$ supported on $\cX^p$ with distributions $f$ and $g$, respectively, the $W_2$ distance is given by
	$ W_2^2(g_{\rvby}, f_{\rvbx}) \defeq \inf_{\pi} \sump \Bigbrackets{\Probability_{\pi}(\rvx_t \neq \rvy_t)}^2$,
	where the infimum is taken over all couplings $\pi(\rvbx, \rvby)$ such that $\pi(\rvbx) = f(\rvbx)$ and $\pi(\rvby) = g(\rvby)$.
\end{definition}
\noindent Given \cref{def_w2_distance}, our next lemma states that if appropriate $W_2$ distances are bounded, then the KL divergence (denoted by $\KLD{\cdot}{\cdot}$) and the entropy approximately tensorize. We provide a proof in \cref{subsec_proof_lemma_tenorization_kld}.

\newcommand{\approxtensorofKLresultname}{Approximate tensorization of KL divergence and entropy}
\begin{lemma}[{\approxtensorofKLresultname}]\label{lemma_tenorization_kld}
	Given random vectors $\rvbx$ and $\rvby$ supported on $\cX^p$ with distributions $f$ and $g$, respectively, such that $f_{\min} > 0$ (see \cref{eq:smin}), if for all subsets $\set \subseteq [p]$ (with $\setC \defn [p] \setminus \set$) and all $\svby_{\setC} \in \cX^{p - |\set|}$,
	\begin{align}
	W_2^2\bigparenth{g_{\rvby_{\set} | \rvby_{\setC} = \svby_{\setC}}, f_{\rvbx_{\set} | \rvbx_{\setC} = \svby_{\setC}}} \!\leq\! C \! \sumset \Expectation\Bigbrackets{\!
 \lVert g_{\rvy_t | \rvby_{-t} = \svby_{-t}} \!\!-\!\! f_{\rvx_t | \rvbx_{-t} = \svby_{-t}} \rVert_{\mathsf{TV}}^2 \Big| \rvby_{\setC} \!=\! \svby_{\setC}\!},
	\label{eq_w2_distance_bounded_assumption}
	\end{align}
	almost surely for some constant $C \geq 1$, then 
	\begin{align}
	\KLD{g_{\rvby}}{f_{\rvbx}} &\leq \frac{2C}{f_{\min}} \sump \Expectation\bigbrackets{\KLD{g_{\rvy_t | \rvby_{-t} = \svby_{-t}}}{f_{\rvx_t | \rvbx_{-t} = \svby_{-t}}}},
	\qtext{and} 
	\label{eq_tensorization_kld}\\
	\Ent{\rvbx}{q} &\leq \frac{2C}{f_{\min}} \sump \Expectation_{\rvbx_{-t}}\bigbrackets{\Ent{\rvx_t | \rvbx_{-t}}{q}}
	\qtext{for any function $q : \cX^p \to \real_{+}$.} \label{eq_tensorization_entropy}
	\end{align}
\end{lemma}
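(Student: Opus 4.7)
The plan is to first establish the KL-divergence bound \eqref{eq_tensorization_kld} and then deduce the entropy bound \eqref{eq_tensorization_entropy} from it by an exponential-tilting reduction. The overall approach follows Marton's transportation-cost technique for tensorizing relative entropy, adapted from the discrete setting to continuous densities on the compact cube $\cX^p$; the compactness together with $f_{\min} > 0$ supplies the reverse Pinsker-type inequality needed in this continuous adaptation.

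For \eqref{eq_tensorization_kld}, my strategy is to couple $\rvbx \sim f$ and $\rvby \sim g$ using an optimal $W_2^2$-coupling, and to decompose $\KLD{g_{\rvby}}{f_{\rvbx}}$ via the chain rule along coordinates. The chain rule yields an expression of the form $\sum_t \Expectation_g\bigbrackets{\KLD{g_{\rvy_t|\rvby_{1:t-1}}}{f_{\rvx_t|\rvbx_{1:t-1}}}}$; the goal is to replace each ordered-conditioning term by a symmetric conditional KL $\Expectation\bigbrackets{\KLD{g_{\rvy_t|\rvby_{-t}}}{f_{\rvx_t|\rvbx_{-t}}}}$ at an affordable cost. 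I would invoke a reverse Pinsker inequality --- for densities $p,q$ on a compact set with $p \geq p_{\min}$, $\KLD{q}{p} \leq (C/p_{\min}) \cdot \mathrm{TV}(p,q)^2$ --- to bound each conditional KL by a squared TV distance (with $1/f_{\min}$ as the prefactor), and then apply the hypothesis \eqref{eq_w2_distance_bounded_assumption} iteratively over suitable subsets $S$ to regroup these squared TV contributions into the symmetric $\rvby_{-t}$ form. Summing over $t$ then produces \eqref{eq_tensorization_kld} with the advertised constant $2C/f_{\min}$.

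For \eqref{eq_tensorization_entropy}, I would reduce to the KL-tensorization via exponential tilting. Given a nonnegative function $q : \cX^p \to \real_+$, define the tilted density $g_{\rvby}(\svby) \defn q(\svby) f_{\rvbx}(\svby) / \Expectation_{\rvbx}[q(\rvbx)]$. A direct computation yields the identities $\Ent{\rvbx}{q} = \Expectation_{\rvbx}[q(\rvbx)] \cdot \KLD{g_{\rvby}}{f_{\rvbx}}$ and, conditionally, $\Expectation_{\rvbx_{-t}}\bigbrackets{\Ent{\rvx_t|\rvbx_{-t}}{q}} = \Expectation_{\rvbx}[q(\rvbx)] \cdot \Expectation\bigbrackets{\KLD{g_{\rvy_t|\rvby_{-t}}}{f_{\rvx_t|\rvbx_{-t}}}}$. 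Substituting $g$ into \eqref{eq_tensorization_kld} and cancelling the common factor $\Expectation_{\rvbx}[q(\rvbx)]$ recovers \eqref{eq_tensorization_entropy}.

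The main obstacle will be the regrouping/coupling step in the KL proof: the chain rule produces conditional KL terms with ordering-dependent conditioning $\rvby_{1:t-1}$, whereas both the hypothesis and the desired conclusion involve symmetric conditioning on $\rvby_{-t}$. Bridging these two forms demands careful iterated couplings --- one per choice of subset $S$ in \eqref{eq_w2_distance_bounded_assumption} --- that must be compatible across $t$. Relative to Marton's discrete treatment, the continuous case further requires that all Radon-Nikodym derivatives involved stay bounded below uniformly over $\svby \in \cX^p$; this is precisely ensured by the compactness of $\cX^p$ together with $f_{\min} > 0$, which is where these two structural assumptions of the lemma are used.
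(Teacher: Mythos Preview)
Your plan for \eqref{eq_tensorization_entropy} is exactly right and matches the paper's: define $g$ as the $q$-tilt of $f$, compute $\KLD{g}{f}=\Ent{\rvbx}{q}/\Expectation_{\rvbx}[q]$ and the analogous conditional identity, and substitute into \eqref{eq_tensorization_kld}.

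For \eqref{eq_tensorization_kld}, however, you correctly name the obstacle but do not resolve it, and the paper's argument is structurally different from what you describe. You propose the \emph{ordered} chain rule, producing terms $\KLD{g_{\rvy_t|\rvby_{1:t-1}}}{f_{\rvx_t|\rvbx_{1:t-1}}}$, and then hope to convert the conditioning on $\rvby_{1:t-1}$ to conditioning on $\rvby_{-t}$ by ``careful iterated couplings.'' No mechanism for this conversion is given, and it is genuinely unclear how the hypothesis \eqref{eq_w2_distance_bounded_assumption}---which bounds $W_2$ of block conditionals, not ordered-conditional KL---would effect it.

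The paper avoids this difficulty by induction on the dimension $p$, using an \emph{averaged} chain rule:
\[
\KLD{g_{\rvby}}{f_{\rvbx}} = \frac{1}{p}\sump \KLD{g_{\rvy_t}}{f_{\rvx_t}} + \frac{1}{p}\sump \Expectation\bigbrackets{\KLD{g_{\rvby_{-t}|\rvy_t}}{f_{\rvbx_{-t}|\rvx_t}}}.
\]
The first sum is handled by reverse Pinsker on the marginals (which is valid since $f_{\min}>0$ forces $\min_{x_t} f_{\rvx_t}(x_t)\geq f_{\min}$), then the marginal TV's are bounded via the optimal $W_2$ coupling and the hypothesis with $S=[p]$. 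The second sum consists of $(p-1)$-dimensional KL's, to which the inductive hypothesis applies (the condition \eqref{eq_w2_distance_bounded_assumption} is inherited by all conditionals). Summing gives the TV form of \eqref{eq_tensorization_kld} with factor $4C/f_{\min}$, and Pinsker converts TV back to KL. The missing idea in your proposal is precisely this induction-plus-averaging device, which sidesteps the ordered-to-symmetric conversion entirely.
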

\noindent Next, we claim that if the random vector $\rvbx$ satisfies Dobrushin's uniqueness condition, then the condition \cref{eq_w2_distance_bounded_assumption} of \cref{lemma_tenorization_kld} is naturally satisfied. We provide a proof in \cref{subsec_proof_lemma_dobrushin_implies_tensorization}.

\newcommand{\dobimpliesapproxtensorresultname}{Dobrushin's uniqueness implies approximate tensorization}

\begin{lemma}[{\dobimpliesapproxtensorresultname}]\label{lemma_dobrushin_implies_tensorization}
	Given random vectors $\rvbx$ and $\rvby$ supported on $\cX^p$ with distributions $f$ and $g$, respectively, if $\rvbx$ satisfies Dobrushin's uniqueness condition (see \cref{def_dobrushin_condition}) with coupling matrix $\ParameterMatrix \in \Reals^{p \times p}$, then for all subsets $\set \subseteq [p]$ (with $\setC \defn [p] \setminus \set$) and all $\svby_{\setC} \in \cX^{p - |\set|}$,
\begin{align}
 W_2^2\bigparenth{g_{\rvby_{\set} | \rvby_{\setC} = \svby_{\setC}}, f_{\rvbx_{\set} | \rvbx_{\setC} = \svby_{\setC}}} \!\leq\! C \! \sumset \Expectation\Bigbrackets{\!
 \lVert g_{\rvy_t | \rvby_{-t} = \svby_{-t}} \!\!-\!\! f_{\rvx_t | \rvbx_{-t} = \svby_{-t}} \rVert_{\mathsf{TV}}^2 \Big| \rvby_{\setC} \!=\! \svby_{\setC}\!}, 
 \label{eq_w2_distance_bounded}
\end{align}
almost surely where $C = {\bigparenth{1\!-\!\opnorm{\ParameterMatrix}}^2}$.
\end{lemma}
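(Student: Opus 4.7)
The plan is to construct a coupling $\pi$ of $f$ and $g$ whose per-coordinate disagreement is controlled by a maximal coupling, and then use Dobrushin's condition to contract this disagreement through a linear system that is inverted via the Neumann series. First, I would reduce to the unconditional case $\set = [p]$: Dobrushin's uniqueness is hereditary, since the conditional distribution $f_{\cdot \mid \rvbx_{\setC} = \svby_{\setC}}$ satisfies Dobrushin with coupling matrix $\ParameterMatrix_{\set, \set}$ (the principal submatrix of $\ParameterMatrix$ indexed by $\set$), whose operator norm is at most $\opnorm{\ParameterMatrix}$. For $t \in \set$, the full conditional $f_{\rvx_t \mid \rvbx_{-t} = \svbv}$ with $\svbv_{\setC} = \svby_{\setC}$ coincides with the conditional of the restricted system at $\svbv_{\set \setminus \{t\}}$, and similarly for $g$, so the TV terms on the right-hand side of the lemma are unchanged. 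It therefore suffices to show that for any $f$ on $\cX^p$ satisfying Dobrushin with coupling matrix $\ParameterMatrix$ and any $g$ on $\cX^p$,
\begin{align*}
W_2^2(g, f) \;\leq\; (1-\opnorm{\ParameterMatrix})^{-2} \sum_{t \in [p]} \Expectation_g\Bigbrackets{\TV{f_{\rvx_t \mid \rvbx_{-t} = \svby_{-t}}}{g_{\rvy_t \mid \rvby_{-t} = \svby_{-t}}}^2}.
\end{align*}

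By standard maximal-coupling constructions, there exists a joint $\pi$ with marginals $f$ and $g$ such that
\begin{align*}
\Probability_\pi(\rvx_t \neq \rvy_t \mid \rvbx_{-t}, \rvby_{-t}) \;=\; \TV{f_{\rvx_t \mid \rvbx_{-t}}}{g_{\rvy_t \mid \rvby_{-t}}} \quad \text{almost surely, for every } t \in [p].
\end{align*}
By the triangle inequality through the intermediate conditional $f_{\rvx_t \mid \rvbx_{-t} = \rvby_{-t}}$, this TV is bounded by $\TV{f_{\rvx_t \mid \rvbx_{-t}}}{f_{\rvx_t \mid \rvby_{-t}}} + \TV{f_{\rvx_t \mid \rvby_{-t}}}{g_{\rvy_t \mid \rvby_{-t}}}$. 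A telescoping argument that changes one coordinate of $\rvbx_{-t}$ at a time toward $\rvby_{-t}$, applying the Dobrushin TV bound \cref{eq_dob_tv_bound} at each step, controls the first summand by $\sum_{u \neq t} \ParameterTU[tu] \Indicator(\rvx_u \neq \rvy_u)$. Taking unconditional expectations under $\pi$ and using that the $\rvby$-marginal is $g$ yields, with $p_t \defn \Probability_\pi(\rvx_t \neq \rvy_t)$ and $\Delta_t \defn \Expectation_g\bigbrackets{\TV{f_{\rvx_t \mid \rvbx_{-t}}}{g_{\rvy_t \mid \rvby_{-t}}}}$, the linear inequality
\begin{align*}
p_t \;\leq\; \sum_{u \neq t} \ParameterTU[tu] p_u + \Delta_t.
\end{align*}

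Writing this componentwise as $(I - \ParameterMatrix)\vec p \leq \vec \Delta$ and using that $\ParameterMatrix$ has nonnegative entries with $\opnorm{\ParameterMatrix} < 1$, the Neumann series $(I - \ParameterMatrix)^{-1} = \sum_{k \geq 0} \ParameterMatrix^k$ converges entrywise to a nonnegative matrix of operator norm at most $(1-\opnorm{\ParameterMatrix})^{-1}$. Multiplying the inequality by this inverse and taking $\ell_2$ norms gives $\stwonorm{\vec p} \leq (1-\opnorm{\ParameterMatrix})^{-1} \stwonorm{\vec \Delta}$. Squaring, applying Jensen's inequality in the form $\Delta_t^2 \leq \Expectation_g[\TV{f_{\rvx_t \mid \rvbx_{-t}}}{g_{\rvy_t \mid \rvby_{-t}}}^2]$, and using $W_2^2(g, f) \leq \sum_t p_t^2 = \stwonorm{\vec p}^2$ for the constructed coupling completes the argument with the claimed constant $C = (1-\opnorm{\ParameterMatrix})^{-2}$.

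The main technical obstacle is the telescoping Dobrushin step in the continuous setting: one must walk from $\rvbx_{-t}$ to $\rvby_{-t}$ through a sequence of intermediate configurations that differ in exactly one coordinate each, collecting $\ParameterTU[tu]$ contributions only at positions of genuine disagreement. This reduces to iterated single-coordinate applications of \cref{eq_dob_tv_bound}, but requires a measurable enumeration of the (random) disagreement set. Measurability of the underlying maximal couplings on the compact Polish space $\cX^p$ follows from standard measurable-selection theorems, which I would invoke without further elaboration.
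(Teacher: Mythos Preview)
Your reduction to $\set = [p]$ and your Neumann-series endgame are both fine, and in fact the linear system $p_t \leq \sum_{u} \ParameterTU[tu] p_u + \Delta_t$ together with $(I-\ParameterMatrix)^{-1} = \sum_{k \geq 0} \ParameterMatrix^k$ is morally the same mechanism as the paper's Gibbs-sampler contraction. The genuine problem is the coupling you invoke to get there.

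You assert that ``by standard maximal-coupling constructions'' there is a single coupling $\pi$ of $f$ and $g$ with $\Probability_\pi(\rvx_t \neq \rvy_t \mid \rvbx_{-t}, \rvby_{-t}) = \TV{f_{\rvx_t \mid \rvbx_{-t}}}{g_{\rvy_t \mid \rvby_{-t}}}$ almost surely \emph{for every} $t$. This is not standard and is generally false. For that equality to even be well-posed as a maximal coupling, the conditional law of $\rvx_t$ under $\pi$ given $(\rvbx_{-t}, \rvby_{-t})$ would have to equal $f_{\rvx_t \mid \rvbx_{-t}}$, i.e., $\rvx_t \perp \rvby_{-t} \mid \rvbx_{-t}$ under $\pi$, and symmetrically for $\rvy_t$; imposing this simultaneously for all $t$ while keeping the marginals equal to $f$ and $g$ is an overdetermined system of constraints. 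A maximal coupling gives you the property at one coordinate for one fixed conditioning, not all coordinates at once. Without this coupling, the inequality $p_t \leq \sum_{u \neq t} \ParameterTU[tu] p_u + \Delta_t$ is unproven, and the rest of the argument has nothing to stand on.

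The paper closes this gap by a Gibbs-sampler argument rather than a one-shot coupling: it writes $W_2(g,f) \leq W_2(g, g\Gamma) + W_2(g\Gamma, f\Gamma)$, bounds the first term by the one-site TV quantities via a single-coordinate maximal coupling (only one coordinate is refreshed at a time, so the simultaneity issue never arises), and shows the Gibbs kernel $\Gamma$ contracts $W_2$ with rate $1 - (1-\opnorm{\ParameterMatrix})/p$ using Dobrushin's condition. Rearranging gives exactly your target inequality. If you want to keep your Neumann-series framing, you could instead cite or reprove Dobrushin's comparison theorem, whose proof also builds the coupling iteratively (or via a fixed point), not in one shot.
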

\noindent 
Now to prove \cref{thm_LSI_main}, applying \cref{lemma_tenorization_kld,lemma_dobrushin_implies_tensorization} for an arbitrary function $f : \cX^p \to \Reals$, we find that
\begin{align}
\Ent{\rvbx}{q^2} & \leq \frac{2}{f_{\min}\bigparenth{1-\opnorm{\ParameterMatrix}}^2} \sump \Expectation_{\rvbx_{-t}}\Bigbrackets{\Ent{\rvx_t | \rvbx_{-t}}{q^2}} \\
& \sless{(a)} \frac{2\sigma^2}{f_{\min}\bigparenth{1-\opnorm{\ParameterMatrix}}^2} \sump \Expectation_{\rvbx_{-t}}\Bigbrackets{ \Expectation_{\rvx_t | \rvbx_{-t}}\Bigbrackets{\twonorm{\nabla_{\rvx_t} q(\rvx_t; \rvbx_{-t})}^2}} \\
& \sequal{(b)} \frac{2\sigma^2}{f_{\min}\bigparenth{1-\opnorm{\ParameterMatrix}}^2} \Expectation_{\rvbx_{-t}}\Bigbrackets{ \Expectation_{\rvx_t | \rvbx_{-t}}\Bigbrackets{ \sump \twonorm{\nabla_{\rvx_t} q(\rvx_t; \rvbx_{-t})}^2 } } \\
& \sequal{(c)} \frac{2\sigma^2}{f_{\min}\bigparenth{1-\opnorm{\ParameterMatrix}}^2} \Expectation_{\rvbx}\Bigbrackets{\twonorm{\nabla_{\rvbx} q(\rvbx)}^2},
\end{align}
where $(a)$ follows because $\rvx_t | \rvbx_{-t}$ satisfies $\LSI{\rvx_t | \rvbx_{-t} = \svbx_{-t}}{\sigma^2}$ for all $t \in [p]$ and $\svbx_{-t} \in \cX^{p-1}$, $(b)$ follows by the linearity of expectation and $(b)$ follows by the law of total expectation. The claim follows.

\subsubsection{Proof of \cref{lemma_tenorization_kld}: \approxtensorofKLresultname}
\label{subsec_proof_lemma_tenorization_kld}
We start by establishing a reverse-Pinsker style inequality for distributions with compact support to bound their KL divergence by their total variation distance. We provide a proof at the end.

\begin{lemma}[{Reverse-Pinsker inequality}]\label{lemma_reverse_pinsker}
	For any distributions $f$ and $g$ supported on $\cX \subset \Reals$ such that $\min_{x \in \cX} f(x) > 0$, we have $\KLD{g}{f} \leq \frac{4}{\min_{x \in \cX} f(x)} \TV{g}{f}^2.$
\end{lemma}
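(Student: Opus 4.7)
The plan is to prove the claim by chaining three inequalities:
\begin{align*}
\KLD{g}{f} \leq \chi^2(g\|f) \leq \frac{1}{f_{\min}} \int (g-f)^2\, dx \leq \frac{4}{f_{\min}} \TV{g}{f}^2.
\end{align*}
The first step is the classical chi-squared upper bound on KL: applying $\log t \leq t - 1$ with $t = g(x)/f(x)$ and integrating against $g$ gives
\begin{align*}
\KLD{g}{f} = \int g \log(g/f)\, dx \leq \int g\bigparenth{g/f - 1}\, dx = \int g^2/f\, dx - 1 = \int (g-f)^2/f\, dx,
\end{align*}
where the last equality uses $\int g\, dx = \int f\, dx = 1$. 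The second step is immediate from the hypothesis $f(x) \geq f_{\min}$ on $\cX$, which allows pulling $f_{\min}$ out of the denominator.

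The third and main step is to bound $\int (g-f)^2\, dx \leq 4\,\TV{g}{f}^2$. I would decompose the signed function $g - f = h_+ - h_-$ into its positive and negative parts. These have disjoint supports, so $\int (g-f)^2\, dx = \int h_+^2\, dx + \int h_-^2\, dx$, while the probability constraints $\int g\, dx = \int f\, dx = 1$ yield the matching $L^1$ masses $\int h_+\, dx = \int h_-\, dx = \TV{g}{f}$. The desired inequality then reduces to the comparison
\begin{align*}
\int h_+^2\, dx + \int h_-^2\, dx \leq \Bigparenth{\int h_+\, dx + \int h_-\, dx}^2 = 4\,\TV{g}{f}^2,
\end{align*}
i.e., $\|h_+\|_2^2 + \|h_-\|_2^2 \leq (\|h_+\|_1 + \|h_-\|_1)^2$, which is the analog of the elementary identity $\sum_i |x_i|^2 \leq \bigparenth{\sum_i |x_i|}^2$ applied to the pieces $h_\pm$.

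I expect this final step to be the main technical obstacle, since the bound $\|\phi\|_{L^2}^2 \leq \|\phi\|_{L^1}^2$ is not universal for nonnegative integrable functions on a continuum. The argument must leverage the structural constraints at hand: $h_+$ and $h_-$ arise from the difference of two probability densities with $f$ uniformly bounded below by $f_{\min}$ (which in particular forces $f_{\min}\,|\cX| \leq 1$), and this interplay between total mass $1$ and the lower bound $f_{\min}$ is what constrains the profile of the excess mass $h_\pm$ enough to validate the $L^2$-to-$L^1^2$ comparison with constant $1$. Once this step is in hand, chaining the three inequalities immediately yields the claimed bound $\KLD{g}{f} \leq \tfrac{4}{f_{\min}} \TV{g}{f}^2$.
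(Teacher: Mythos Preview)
Your chain of inequalities is exactly the one the paper uses: bound $\KLD{g}{f}$ by $\chi^2(g\|f)=\int (g-f)^2/f$, pull $f_{\min}$ out of the denominator, and then try to pass from $\int (g-f)^2$ to $\bigl(\int |g-f|\bigr)^2 = 4\,\TV{g}{f}^2$. The paper justifies that last step in one line as ``the order of norms on Euclidean space'', i.e.\ $\sum_i h_i^2 \le \bigl(\sum_i |h_i|\bigr)^2$; you instead flag it as the crux and hope the constraints ($f\ge f_{\min}$, $\int f=\int g=1$) force $\|h_\pm\|_{L^2}^2 \le \|h_\pm\|_{L^1}^2$.

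That hope does not pan out, and this is a genuine gap. Take $\cX=[0,1]$, $f\equiv 1$ (so $f_{\min}=1$), and for small $c>0$ and $\delta\in(0,1)$ let $g=1+c$ on $[0,\delta]$ and $g=1-\tfrac{c\delta}{1-\delta}$ on $(\delta,1]$. Then $\TV{g}{f}=c\delta$ while
\[
\int (g-f)^2 \;=\; c^2\delta + \frac{c^2\delta^2}{1-\delta} \;=\; \frac{c^2\delta}{1-\delta},
\]
so the desired inequality $\int (g-f)^2 \le 4\,\TV{g}{f}^2$ becomes $\tfrac{1}{1-\delta}\le 4\delta$, which fails for every $\delta\ne \tfrac12$. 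In particular, as $\delta\to 0$ the ratio blows up, and the same computation shows $\KLD{g}{f}\approx \tfrac{c^2\delta}{2(1-\delta)}$ for small $c$, so even the \emph{conclusion} $\KLD{g}{f}\le \tfrac{4}{f_{\min}}\TV{g}{f}^2$ fails by an arbitrarily large factor in this continuous example. The ``structural constraints'' you invoke ($f_{\min}|\cX|\le 1$, etc.) are all satisfied here; they do not rescue the $L^2$-to-$L^1$ comparison. The paper's one-line justification is the counting-measure inequality $\ell^2\hookrightarrow \ell^1$, which is correct for discrete $\cX$ but not for Lebesgue measure on an interval --- your instinct to worry about this step was right, and the argument as written (both yours and the paper's) does not close in the continuous setting.
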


\noindent Given \cref{lemma_reverse_pinsker}, we proceed to prove \cref{lemma_tenorization_kld}. 

\paragraph{Proof of bound~\cref{eq_tensorization_kld}}
To prove \cref{eq_tensorization_kld}, we show that the following inequality holds using the technique of mathematical induction on $p$:
\begin{align}
\KLD{g_{\rvby}}{f_{\rvbx}} \leq \frac{4C}{f_{\min}} \sump \Expectation\Bigbrackets{\TV{g_{\rvy_t | \rvby_{-t} = \svby_{-t}}}{f_{\rvx_t | \rvbx_{-t} = \svby_{-t}}}^2 }. \label{eq_tensor_kl_tv}
\end{align}
Then, \cref{eq_tensorization_kld} follows by using Pinsker's inequality to bound the right hand side of \cref{eq_tensor_kl_tv}.

\paragraph{Base case: $p = 1$} For the base case, we need to establish that the claim holds for all distributions supported on $\cX$ that satisfy the required conditions. In other words, we need to show that
\begin{align}
\KLD{g_{\rvy}}{f_{\rvx}} \leq \frac{4C}{f_{\min}} \TV{g_{\rvy}}{f_{\rvx}}^2 \qtext{for every} t \in [p],
\end{align}
for all random variables $\rvx$ and $\rvy$ supported on $\cX$ such that $f_{\min} = \min_{x \in \cX} f_{\rvx}(x) > 0$.  This follows from \cref{lemma_reverse_pinsker} by observing that $C \geq 1$.

\paragraph{Inductive step} Now, we assume that the claim holds for all  distributions supported on $ \cX^{p-1}$ that satisfy the required conditions, and establish it for distributions supported on $\cX^{p}$. From the chain rule of KL divergence, we have
\begin{align}
\KLD{g_{\rvby}}{f_{\rvbx}} = \KLD{g_{\rvy_t}}{f_{\rvx_t}} + \Expectation \bigbrackets{ \KLD{g_{\rvby_{-t} | \rvy_t}}{f_{\rvbx_{-t} | \rvx_t}}  } \qtext{for every} t \in [p]. \label{eq_kl_chain_rule}
\end{align}
Taking an average over all $t \in [p]$, we have
\begin{align}
\KLD{g_{\rvby}}{f_{\rvbx}} = \frac{1}{p} \sump \KLD{g_{\rvy_t}}{f_{\rvx_t}} + \frac{1}{p} \sump \Expectation \bigbrackets{ \KLD{g_{\rvby_{-t} | \rvy_t}}{f_{\rvbx_{-t} | \rvx_t}}  }. \label{eq_avg_kl_chain_rule}
\end{align}
Now, we bound the first term in \cref{eq_avg_kl_chain_rule}. Let $\pi^*$ be the coupling between $\rvbx$ and $\rvby$ that achieves $W_2(g_{\rvby}, f_{\rvbx})$ i.e.,\footnote{The minimum is achieved by using arguments similar to the ones used to show that the Wasserstein distance attains its minimum \cite[Chapter 4]{villani2009optimal}.}
\begin{align}
\pi^* = \argmin_{\pi: \pi(\rvbx) = f(\rvbx), \pi(\rvby) = g(\rvby)} \sump \Bigbrackets{\Probability_{\pi}(\rvx_t \neq \rvy_t)}^2. \label{eq_opt_coupling}
\end{align}
Then, we have
\begin{align}
\frac{1}{p} \sump \KLD{g_{\rvy_t}}{f_{\rvx_t}} & \sless{(a)} \frac{1}{p} \sump \frac{4}{f_{\min}} \TV{g_{\rvy_t}}{f_{\rvx_t}}^2  \\
& \sless{(b)} \frac{4}{pf_{\min}} \sump \Bigbrackets{\Probability_{\pi^*}(\rvx_t \neq \rvy_t)}^2\\
& \sequal{(c)} \frac{4}{pf_{\min}} W_2^2(g_{\rvby}, f_{\rvbx}) \\
& \sless{\cref{eq_w2_distance_bounded_assumption}} \frac{4C}{pf_{\min}} \sump \Expectation\Bigbrackets{\TV{g_{\rvy_t | \rvby_{-t} = \svby_{-t}}}{f_{\rvx_t | \rvbx_{-t} = \svby_{-t}}}^2 }, \label{eq_avg_kl_bound_2}
\end{align}
where $(a)$ follows from \cref{lemma_reverse_pinsker} because lower bound on conditional implies lower bound on marginals, i.e., $\min_{t \in [p], x_t \in \cX} f_{\rvx_t}(x_t) \!=\! \min_{t \in [p], x_t \in \cX} \int_{\svbx_{-t} \in \cX^{p-1}} f_{\rvx_t | \rvbx_{-t}}(x_t | \svbx_{-t}) f_{\rvbx_{-t}}(\svbx_{-t}) d\svbx_{-t}$ $> f_{\min}$, $(b)$ follows from the connections of total variation distance to optimal transportation cost, i.e., $\TV{g_\rvy}{f_{\rvx}} = \inf_{\pi: \pi(\rvx) = f(\rvx), \pi(\rvy) = g(\rvy)} \Probability_{\pi}(\rvx \neq \rvy)$, and $(c)$ follows from \cref{def_w2_distance,eq_opt_coupling}.\\

\noindent Next, we bound the second term in \cref{eq_avg_kl_chain_rule}. We have
\begin{align}
& \frac{1}{p} \sump \Expectation \bigbrackets{ \KLD{g_{\rvby_{-t} | \rvy_t}}{f_{\rvbx_{-t} | \rvx_t}}} \\
& \sless{(a)} \frac{1}{p} \sump \Expectation \biggbrackets{\frac{4C}{f_{\min}} \sum_{u \in [p] \setminus \{t\} }   \Expectation\Bigbrackets{\TV{g_{\rvy_u | \rvby_{-u} = \svby_{-u}}}{f_{\rvx_u | \rvbx_{-u} = \svby_{-u}}}^2  \Big| \rvy_t = y_t}}\\
& \sequal{(b)} \frac{4C}{pf_{\min}} \sump \sum_{u \in [p] \setminus \{t\}} \Expectation\Bigbrackets{\TV{g_{\rvy_u | \rvby_{-u} = \svby_{-u}}}{f_{\rvx_u | \rvbx_{-u} = \svby_{-u}}}^2 } \\
& = \frac{4C(p-1)}{pf_{\min}} \sum_{u \in [p]} \Expectation\Bigbrackets{\TV{g_{\rvy_u | \rvby_{-u} = \svby_{-u}}}{f_{\rvx_u | \rvbx_{-u} = \svby_{-u}}}^2 }, \label{eq_avg_kl_bound_1}
\end{align}
where $(a)$ follows from the inductive hypothesis and $(b)$ follows from the law of total expectation. Then, \cref{eq_tensor_kl_tv} follows by putting \cref{eq_avg_kl_bound_1,eq_avg_kl_bound_2,eq_avg_kl_chain_rule} together.\\

\paragraph{Proof of bound~\cref{eq_tensorization_entropy}}
To prove \cref{eq_tensorization_entropy}, we note that \cref{eq_tensorization_kld} holds for any random vector $\rvby$ supported on $\cX^p$.
Consider $\rvby$ to be such that $q(\rvbx) / \Expectation_{\rvbx}[q(\rvbx)]$ is the Radon-Nikodym derivative of $g_{\rvby}$ with respect to $f_{\rvbx}$. For any $\cA^p \subseteq \cX^p$, we have
\begin{align}
\int_{\rvby \in \cA^p} g_{\rvby} d\rvby = \int_{\rvbx \in \cA^p} \frac{q(\rvbx)}{ \Expectation_{\rvbx}[q(\rvbx)]} f_{\rvbx} d\rvbx.
\end{align}
Integrating out $\rvy_t$ and $\rvx_t$ for $t \in [p]$, we have
\begin{align}
\int_{\rvby_{-t} \in \cA^{p-1}} g_{\rvby_{-t}} d\rvby_{-t} = \int_{\rvbx_{-t} \in \cA^{p-1}} \frac{\Expectation_{\rvx_t | \rvbx_{-t}}\bigbrackets{q(\rvbx)}}{\Expectation_{\rvbx}\bigbrackets{q(\rvbx)}} f_{\rvbx_{-t}} d\rvbx_{-t},
\end{align}
implying
\begin{align}
\frac{dg_{\rvby_{-t}}}{df_{\rvbx_{-t}}} =\frac{\Expectation_{\rvx_t | \rvbx_{-t}}\bigbrackets{q(\rvbx)}}{\Expectation_{\rvbx}\bigbrackets{q(\rvbx)}} \qtext{and} \frac{dg_{\rvy_t | \rvby_{-t}}}{df_{\rvx_t | \rvbx_{-t}}} = \frac{q(\rvbx)}{\Expectation_{\rvx_t | \rvbx_{-t}}\bigbrackets{q(\rvbx)}} \label{eq_radon_niko_marginal} \qtext{for all} t \in [p].
\end{align}
We have
\begin{align}
\KLD{g_{\rvby}}{f_{\rvbx}} & \sequal{(a)} \Expectation_{\rvbx} \biggbrackets{\frac{dg_{\rvby}}{df_{\rvbx}} \log \frac{dg_{\rvby}}{df_{\rvbx}}}\\
& \sequal{(b)} \Expectation_{\rvbx} \biggbrackets{\frac{q(\rvbx)}{\Expectation_{\rvbx}\bigbrackets{q(\rvbx)}} \log \frac{q(\rvbx)}{\Expectation_{\rvbx}\bigbrackets{q(\rvbx)}}} \\
& = \frac{1}{\Expectation_{\rvbx}\bigbrackets{q(\rvbx)}} \Bigparenth{\Expectation_{\rvbx} \bigbrackets{q(\rvbx) \log q(\rvbx)} - \Expectation_{\rvbx} \bigbrackets{q(\rvbx)} \log \Expectation_{\rvbx} \bigbrackets{q(\rvbx)}} = \frac{\Ent{\rvbx}{q}}{\Expectation_{\rvbx}\bigbrackets{q(\rvbx)}}, \label{eq_kl_entropy_mapping}
\end{align}
where $(a)$ follows from the definition of KL divergence and $(b)$ follows from the choice of $\rvby$. Similarly, for every $t \in [p]$, we have
\begin{align}
& \Expectation_{\rvby_{-t}}\Bigbrackets{\KLD{g_{\rvy_t | \rvby_{-t} = \svby_{-t}}}{f_{\rvx_t | \rvbx_{-t} = \svby_{-t}}}} \\
& \sequal{(a)} \Expectation_{\rvby_{-t}}\biggbrackets{ \Expectation_{\rvy_t | \rvby_{-t}} \biggbrackets{\log \frac{dg_{\rvy_t | \rvby_{-t}}}{df_{\rvx_t | \rvbx_{-t}}} }} \\
& \sequal{(b)} \Expectation_{\rvby} \biggbrackets{\log \frac{dg_{\rvy_t | \rvby_{-t}}}{df_{\rvx_t | \rvbx_{-t}}} }\\
& \sequal{(c)} \Expectation_{\rvbx} \biggbrackets{\frac{dg_{\rvby}}{df_{\rvbx}} \log \frac{dg_{\rvy_t | \rvby_{-t}}}{df_{\rvx_t | \rvbx_{-t}}} } \\
& \sequal{(d)} \Expectation_{\rvbx} \biggbrackets{\frac{q(\rvbx)}{\Expectation_{\rvbx}\bigbrackets{q(\rvbx)}} \log \frac{q(\rvbx)}{\Expectation_{\rvx_t | \rvbx_{-t}}\bigbrackets{q(\rvbx)}}} \\
& \sequal{(e)} \frac{\Expectation_{\rvbx_{-t}}\bigbrackets{\Expectation_{\rvx_t | \rvbx_{-t}} \bigbrackets{q(\rvbx) \log q(\rvbx)} - \Expectation_{\rvx_t | \rvbx_{-t}} \bigbrackets{q(\rvbx) \log \Expectation_{\rvx_t | \rvbx_{-t}}\bigbrackets{q(\rvbx)}} }}{\Expectation_{\rvbx}\bigbrackets{q(\rvbx)}} \\
& \sequal{(f)} \frac{\Expectation_{\rvbx_{-t}}\bigbrackets{\Ent{\rvx_t | \rvbx_{-t}}{q}}}{\Expectation\bigbrackets{q(\rvbx)}}, \label{eq_kl_entropy_conditional_mapping}
\end{align}
where $(a)$ follows from the definition of KL divergence, $(b)$ follows from the law of total expectation, $(c)$ follows from the definition of Radon-Nikodym derivative, $(d)$ follows from the choice of $\rvby$ and \cref{eq_radon_niko_marginal}, $(e)$ follows from the law of total expectation, $(f)$ follows from the definition of entropy. Then, \cref{eq_tensorization_entropy} follows by putting \cref{eq_tensorization_kld,eq_kl_entropy_mapping,eq_kl_entropy_conditional_mapping} together.

\paragraph{Proof of \cref{lemma_reverse_pinsker}: {Reverse-Pinsker inequality}}
\label{subsubsec_proof_reverse_pinsker}
Using the facts (a) $\log a \geq 1 - \frac{1}{a} $ for all $a>0$, and (b) $\min_{x \in \cX} f(x)>0$, we find that
\begin{align}
\log \frac{f(x)}{g(x)} \geq 1 - \frac{g(x)}{f(x)} \qtext{for every} x \in \cX. \label{eq_rp_1}
\end{align}
Multiplying both sides of \cref{eq_rp_1} by $g(x) \geq 0$ and rearranging terms yields that
\begin{align}
g(x) \log  \frac{g(x)}{f(x)} \leq  \frac{g^2(x)}{f(x)} - g(x) \qtext{for every} x \in \cX. \label{eq_rp_2}
\end{align}
Now, we have
\begin{align}
\KLD{g}{f}  = \int_{x \in \cX}\!\!  g(x) \log  \frac{g(x)}{f(x)} dx & \sless{\cref{eq_rp_2}} \int_{x \in \cX} \biggparenth{\frac{g^2(x)}{f(x)} - g(x)} dx 
\\
& \sequal{(a)} \int_{x \in \cX} \frac{\bigparenth{g(x) -f(x)}^2}{f(x)}dx \\
& \leq \frac{1}{\min_{x \in \cX} f(x)} \int_{x \in \cX} \bigparenth{g(x) -f(x)}^2 dx\\
& \sless{(b)} \frac{1}{\min_{x \in \cX} f(x)} \Bigparenth{\int_{x \in \cX} \bigabs{g(x) -f(x)} dx}^2\\
& \sequal{(c)} \frac{1}{\min_{x \in \cX} f(x)} \Bigparenth{2\TV{g}{f}}^2 \\
& =\frac{4}{\min_{x \in \cX} f(x)} \TV{g}{f}^2,
\end{align}
where $(a)$ follows by simple manipulations, $(b)$ follows by using the order of norms on Euclidean space, and $(c)$ follows by the definition of the total variation distance.

\subsubsection{Proof of \cref{lemma_dobrushin_implies_tensorization}: \dobimpliesapproxtensorresultname}
\label{subsec_proof_lemma_dobrushin_implies_tensorization}

We start by defining the notion of Gibbs sampler which is useful in the proof. 

\begin{definition}\cite[{Gibbs Sampler}]{Marton2015}\label{def_gibbs_sampler}
	For a random vector $\rvbx$ with distribution $f$, define the Markov kernels and the Gibbs sampler as follows
	\begin{align}
	\Gamma_t(\svbx | \svbx') \defn \Indicator\normalparenth{\svbx_{-t} = \svbx'_{-t}} f_{\rvx_t | \rvbx_{-t}}(x_t | \svbx'_{-t}) 
	\qtext{and}
	\Gamma(\svbx | \svbx') \defn p\inv \sump \Gamma_t(\svbx | \svbx'),
	\label{eq_gibbs_sampler}
	\end{align}
	for all $t\in[p]$ and $x, x' \in \cX^p$.
	That is, the kernel $\Gamma_t$ leaves all but the $t^{th}$ coordinate unchanged, and updates the $t^{th}$ coordinate according to $f_{\rvx_t | \rvbx_{-t}}$, and the sampler
	$\Gamma$ selects an index $t \in [p]$ at random, and applies $\Gamma_t$. Further, for a random vector $\rvby$ with distribution $g$ supported on $\cX^p$, we also define
	\begin{align}
	g_{\rvby} \Gamma_t(\svby) & \defn \int g_{\rvby}(\svby') \Gamma_t(\svby | \svby') d\svby' \stext{for} t \in [p], \stext{and} \\
    g_{\rvby} \Gamma(\svby) & \defn \int g_{\rvby}(\svby') \Gamma(\svby | \svby') d\svby'
	\qtext{for all} \svby \in \cX^p.
	\label{eq_gibbs_sampler_expander}
	\end{align}
\end{definition}
\noindent We now proceed to prove \cref{lemma_dobrushin_implies_tensorization} and split it in two cases: (i) $\set = [p]$, and (ii) $\set \subset [p]$. 

\paragraph{Case~(i) ($\set=[p]$)} Let $\Gamma$ be the Gibbs sampler associated with the distribution $f$. Then,
\begin{align}
W_2\bigparenth{g_{\rvby_{\set} | \rvby_{\setC}}, f_{\rvbx_{\set} | \rvbx_{\setC}}} & = W_2(g_{\rvby}, f_{\rvbx}) \sless{(a)} W_2(g_{\rvby}, g_{\rvby} \Gamma) + W_2(g_{\rvby} \Gamma, f_{\rvbx}), \label{eq_triangle_w2}
\end{align}
where $(a)$ follows from the triangle inequality. We claim that
\begin{align}
W_2(g_{\rvby}, g_{\rvby} \Gamma) &\leq \frac{1}{p} \sqrt{\sump \Expectation_{\rvby_{-t}} \Bigbrackets{\TV{g_{\rvy_t | \rvby_{-t} = \svby_{-t}}}{f_{\rvx_t | \rvbx_{-t} = \svby_{-t}}}^2}}, \label{eq_coupling_bound_1} \qtext{and} \\
W_2(g_{\rvby} \Gamma, f_{\rvbx}) &\leq \biggparenth{1 - \frac{(1 - \opnorm{\ParameterMatrix})}{p}} W_2(g_{\rvby}, f_{\rvbx}). \label{eq_coupling_bound_2}
\end{align}
Putting \cref{eq_triangle_w2,eq_coupling_bound_1,eq_coupling_bound_2} together, we have
\begin{align}
W_2(g_{\rvby}, f_{\rvbx}) & \leq \frac{1}{p} \sqrt{\sump \Expectation_{\rvby_{-t}} \Bigbrackets{\TV{g_{\rvy_t | \rvby_{-t} = \svby_{-t}}}{f_{\rvx_t | \rvbx_{-t} = \svby_{-t}}}^2}} \\
& \qquad\qquad\qquad\qquad + \biggparenth{1 - \frac{(1 - \opnorm{\ParameterMatrix})}{p}} W_2(g_{\rvby}, f_{\rvbx}). \label{eq_w2_triangle_final}
\end{align}
Rearranging \cref{eq_w2_triangle_final} results in
\cref{eq_w2_distance_bounded} for $S = [p]$ as desired. It remains to prove our earlier claims~\cref{eq_coupling_bound_1,eq_coupling_bound_2} which we now do one-by-one.

\paragraph{Proof of bound~\cref{eq_coupling_bound_1} on $ W_2(g_{\rvby}, g_{\rvby} \Gamma)$} To bound $W_2(g_{\rvby}, g_{\rvby} \Gamma)$, we construct a random vector $\rvby^{\Gamma}$ such that it is coupled with the random vector $\rvby$.
We select an index $b \in [p]$ at random, and define
\begin{align}
y_v^{\Gamma} \defn y_v  \qtext{for all} v \in [p] \setminus \{b\}.
\end{align}
Then, given $b$ and $\rvby_{-b} = \svby_{-b}$, we define the joint distribution of $(\rvy_b, \rvy_b^{\Gamma})$ to be the maximal coupling of $g_{\rvy_b | \rvby_{-b} = \svby_{-b}}$ and $f_{\rvx_b | \rvbx_{-b} = \svby_{-b}}$ that achieves $\TV{g_{\rvy_b | \rvby_{-b} = \svby_{-b}}}{f_{\rvx_b | \rvbx_{-b}=\svby_{-b}}}$. It is easy to see that the marginal distribution of $\rvby$ is $g_{\rvby}$ and the marginal distribution of $\rvby^{\Gamma}$ is $g_{\rvby} \Gamma$ (see \cref{def_gibbs_sampler}). Then, we have
\begin{align}
W_2^2(g_{\rvby}, g_{\rvby} \Gamma) 
& \sless{(a)} \sump \biggbrackets{\Probability(b = t) \Probability(\rvy_t \neq \rvy_t^{\Gamma} | b = t) + \Probability(b \neq t) \Probability(\rvy_t \neq \rvy_t^{\Gamma} | b \neq t)}^2\\
& \sequal{(b)} \sump \biggbrackets{\frac{1}{p} \Probability(\rvy_t \neq \rvy_t^{\Gamma} | b = t)}^2\\
& \sequal{(c)} \frac{1}{p^2} \sump \biggbrackets{ \int\limits_{\svby_{-t} \in \cX^{p-1}}\Probability(\rvy_t \neq \rvy_t^{\Gamma} | b = t, \rvby_{-t} = \svby_{-t}) g_{\rvby_{-t} | b= t}(\svby_{-t} | b = t) d\svby_{-t}}^2\\
& \sequal{(d)} \frac{1}{p^2} \sump \biggbrackets{ \int\limits_{\svby_{-t} \in \cX^{p-1}}\TV{g_{\rvy_t | \rvby_{-t} = \svby_{-t}}}{f_{\rvx_t | \rvbx_{-t} = \svby_{-t}}} g_{\rvby_{-t}}(\svby_{-t}) d\svby_{-t}}^2\\
& = \frac{1}{p^2} \sump \biggbrackets{\Expectation_{\rvby_{-t}} \Bigbrackets{\TV{g_{\rvy_t | \rvby_{-t} = \svby_{-t}}}{f_{\rvx_t | \rvbx_{-t} = \svby_{-t}}}}}^2, \label{eq_coupling_bound_1_interim}
\end{align}
where $(a)$ follows from \cref{def_w2_distance} and the Bayes rule, $(b)$ follows because $\Probability(b = t) = \frac{1}{p}$ and $\Probability(\rvy_t \neq \rvy_t^{\Gamma} | b \neq t) = 0$, $(c)$ follows by the law of total probability, and $(d)$ follows because $g_{\rvby_{-t} | b= t}(\svby_{-t} | b = t) = g_{\rvby_{-t}}(\svby_{-t})$ and by the construction of the coupling between $\rvby$ and $\rvby^{\Gamma}$. Then, \cref{eq_coupling_bound_1} follows by using Jensen's inequality in \cref{eq_coupling_bound_1_interim}.

\paragraph{Proof of bound~\cref{eq_coupling_bound_2} on  $W_2(g_{\rvby} \Gamma, f_{\rvbx})$}
We first show that  $f_{\rvbx}$ is an invariant measure for $\Gamma$, i.e., $f_{\rvbx} = f_{\rvbx} \Gamma$, implying $W_2(g_{\rvby} \Gamma, f_{\rvbx}) = W_2(g_{\rvby} \Gamma, f_{\rvbx} \Gamma)$, and then $\Gamma$ is a contraction with respect to the $W_2$ distance with rate $1 - \frac{(1 - \opnorm{\ParameterMatrix})}{p}$, i.e., $W_2(g_{\rvby} \Gamma, f_{\rvbx} \Gamma) \leq \Bigparenth{1 - \frac{(1 - \opnorm{\ParameterMatrix})}{p}} W_2(g_{\rvby}, f_{\rvbx})$, implying \cref{eq_coupling_bound_2}.

\paragraph{Proof of $f_{\rvbx}$ being an invariant measure for $\Gamma$} We have
\begin{align}
f_{\rvbx} \Gamma (\svbx)  & \sequal{\cref{eq_gibbs_sampler_expander}} \int_{\svbx' \in \cX^p} f_{\rvbx}(\svbx') \Gamma(\svbx | \svbx') d\svbx' \\
& \sequal{\cref{eq_gibbs_sampler}} \int_{\svbx' \in \cX^p} f_{\rvbx}(\svbx') \biggparenth{\frac{1}{p} \sump \Gamma_t(\svbx | \svbx')} d\svbx' \\
& \sequal{\cref{eq_gibbs_sampler}} \frac{1}{p} \sump \int_{\svbx' \in \cX^p} f_{\rvbx}(\svbx') \Indicator\normalparenth{\svbx_{-t}  = \svbx'_{-t}} f_{\rvx_t | \rvbx_{-t}}(x_t | \svbx'_{-t}) d\svbx' \\
& = \frac{1}{p} \sump f_{\rvx_t | \rvbx_{-t}}(x_t | \svbx_{-t}) \int_{x'_t \in \cX} f_{\rvbx}(\svbx_{-t}, x'_t) dx'_t \\
& = \frac{1}{p} \sump f_{\rvx_t | \rvbx_{-t}}(x_t | \svbx_{-t}) f_{\rvbx_{-t}}(\svbx_{-t}) = f_{\rvbx}(\svbx).
\end{align}

\paragraph{Proof of $\Gamma$ being a contraction w.r.t the $W_2$ distance} Let $\pi^*$ be the coupling between $\rvbx$ and $\rvby$ that achieves $W_2(g_{\rvby}, f_{\rvbx})$ i.e.,\footnote{The minimum is achieved by using arguments similar to the ones used to show that the Wasserstein distance attains its minimum \cite[Chapter 4]{villani2009optimal}.}
\begin{align}
\pi^* = \argmin_{\pi: \pi(\rvbx) = f(\rvbx), \pi(\rvby) = g(\rvby)} \sqrt{\sump \Bigbrackets{\Probability_{\pi}(\rvx_t \neq \rvy_t)}^2}. \label{eq_opt_coupling_2}
\end{align}
We construct random variables $\rvbx'$ and $\rvby'$ as well as a coupling $\pi'$ between them such that the marginal distribution of $\rvbx'$ is $f_{\rvbx} \Gamma$ and the marginal distribution of $\rvby'$ is $g_{\rvby} \Gamma$. We start by selecting an index $b \in [p]$ at random, and defining
\begin{align}
y_v' \defn y_v \qtext{and} x_v' \defn x_v \qtext{for all} v \neq b. \label{eq_coupling_contraction}
\end{align}
Then, given $b$, $\rvby_{-b}' = \svby_{-b}$, and $\rvbx_{-b}' = \svbx_{-b}$, we define the joint distribution of $(\rvy_b', \rvx_b')$ to be the maximal coupling of $f_{\rvx_b | \rvbx_{-b}}(\cdot | \svby_{-b})$ and $f_{\rvx_b | \rvbx_{-b}}(\cdot | \svbx_{-b})$ that achieves $\TV{f_{\rvx_b | \rvbx_{-b} = \svby_{-b}}}{f_{\rvx_b | \rvbx_{-b} = \svbx_{-b}}}$. \\

\noindent Now, for every $t \in [p]$, we bound $\Probability_{\pi'}(\rvy_t' \neq \rvx_t')$ in terms of $\Probability_{\pi^*}(\rvy_t \neq \rvx_t)$. To that end, we have
\begin{align}
\Probability_{\pi'}(\rvy_t' \neq \rvx_t') & \sequal{(a)} \Probability(b = t) \Probability_{\pi'}(\rvy_t' \neq \rvx_t' | b = t) + \Probability(b \neq t) \Probability_{\pi'}(\rvy_t' \neq \rvx_t' | b \neq t) \\
& \sequal{(b)} 
\frac{1}{p} \Probability_{\pi'}(\rvy_t' \neq \rvx_t' | b = t) + \Bigparenth{1-\frac{1}{p}} \Probability_{\pi^*}(\rvy_t \neq \rvx_t), \label{eq_bayes_rule_step_1}
\end{align}
where $(a)$ follows from the Bayes rule and $(b)$ follows because $\Probability(b = t) = \frac{1}{p}$ and \cref{eq_coupling_contraction}. Focusing on $\Probability_{\pi'}(\rvy_t' \neq \rvx_t' | b = t)$ and using the law of total probability, we have
\begin{align}
& \Probability_{\pi'}(\rvy_t' \neq \rvx_t' | b = t) \\
& =   \int\limits_{\svby_{-t}, \svbx_{-t} \in \cX^{p-1}}   \Probability_{\pi'}(\rvy_t' \neq \rvx_t' | b \!=\! t, \rvby_{-t}' \!=\! \svby_{-t}, \rvbx_{-t}' \!=\! \svbx_{-t}) \pi'_{\rvby_{-t}', \rvbx_{-t}' | b= t}(\svby_{-t}, \svbx_{-t} | b \!=\! t) d\svby_{-t} d\svbx_{-t} \\
& \sequal{(a)}   \int\limits_{\svby_{-t}, \svbx_{-t} \in \cX^{p-1}}   \TV{f_{\rvx_t | \rvbx_{-t} = \svby_{-t}}}{f_{\rvx_t | \rvbx_{-t} = \svbx_{-t}}} \pi^*_{\rvby_{-t}, \rvbx_{-t}}(\svby_{-t}, \svbx_{-t}) d\svby_{-t} d\svbx_{-t} \\
&  =  \Expectation_{\pi^*_{\rvby_{-t}, \rvbx_{-t}}} \Bigbrackets{\TV{f_{\rvx_t | \rvbx_{-t} = \svby_{-t}}}{f_{\rvx_t | \rvbx_{-t} = \svbx_{-t}}} } \label{eq_ltp_0}
\end{align}
where $(a)$ follows by the construction of the coupling between $\rvby'$ and $\rvbx'$. Now, using the triangle inequality in \cref{eq_ltp_0}, we have
\begin{align}
\Probability_{\pi'}(\rvy_t' \neq \rvx_t' | b = t) & \leq \Expectation_{\pi^*_{\rvby_{-t}, \rvbx_{-t}}} \Bigbrackets{ \sum_{u \in [p] \setminus \{t\}} \!\!\!  \Indicator(r_v \!=\! s_v \!=\! y_v \forall v \!<\! u) \Indicator(r_v \!=\! s_v \!=\! x_v \forall v \!>\! u) ~~ \times \\
& \qquad \qquad \qquad \qquad \Indicator(r_u \!=\! y_u, x_u \!=\! s_u)  \TV{f_{\rvx_t | \rvbx_{-t} = \boldsymbol{r}_{-t}}}{f_{\rvx_t | \rvbx_{-t} = \boldsymbol{s}_{-t}}}} \\
& \sless{\cref{eq_dob_tv_bound}}  \Expectation_{\pi^*_{\rvby_{-t}, \rvbx_{-t}}} \Bigbrackets{ \sum_{u \in [p] \setminus \{t\}} \!\!\!\!\! \ParameterTU[tu] \Indicator(\rvy_u \neq \rvx_u)} = \sum_{u \in [p] \setminus \{t\}} \!\!\!\!\! \ParameterTU[tu] \Probability_{\pi^*}(\rvy_u \neq \rvx_u). \label{eq_bayes_rule_step_2}
\end{align}
Putting together \cref{eq_bayes_rule_step_1,eq_bayes_rule_step_2}, we have
\begin{align}
\Probability_{\pi'}(\rvy_t' \neq \rvx_t') \leq  
\frac{1}{p} \sum_{u \in [p] \setminus \{t\}} \ParameterTU[tu] \Probability_{\pi^*}(\rvy_u \neq \rvx_u) + \Bigparenth{1-\frac{1}{p}} \Probability_{\pi^*}(\rvy_t \neq \rvx_t). \label{eq_prob_coupled_bound}
\end{align}
Next, we use \cref{eq_prob_coupled_bound} to show contraction of $\Gamma$. To that end, we define $\diag \in \Reals^{p \times p}$ to be the matrix with diagonal same as $\ParameterMatrix$ and all non-diagonal entries equal to zeros. Then, we have
\begin{align}
W^2_2(g_{\rvby} \Gamma, f_{\rvbx} \Gamma)  & \sless{(a)} \sump \Bigbrackets{\Probability_{\pi'}(\rvy_t' \neq \rvx_t')}^2  \\
& \sless{\cref{eq_prob_coupled_bound}} \sump \biggbrackets{\frac{1}{p} \sum_{u \in [p] \setminus \{t\}} \ParameterTU[tu] \Probability_{\pi^*}(\rvy_u \neq \rvx_u) + \Bigparenth{1-\frac{1}{p}} \Probability_{\pi^*}(\rvy_t \neq \rvx_t)}^2\\
& \sless{(b)} \bopnorm{\Bigparenth{1-\frac{1}{p}} I + \frac{1}{p} \Bigparenth{\ParameterMatrix - \diag} }^2 \sump \Bigbrackets{\Probability_{\pi^*}(\rvy_t \neq \rvx_t)}^2 \\
& \sequal{(c)} \bopnorm{\Bigparenth{1-\frac{1}{p}} I + \frac{1}{p} \Bigparenth{\ParameterMatrix - \diag}}^2 W^2_2(g_{\rvby}, f_{\rvbx})\\
& \sless{(d)} \biggparenth{\Bigparenth{1-\frac{1}{p}} + \frac{1}{p} \opnorm{\ParameterMatrix - \diag}}^2 W^2_2(g_{\rvby}, f_{\rvbx}) \\
& \sless{(e)} \biggparenth{\Bigparenth{1-\frac{1}{p}} + \frac{1}{p} \opnorm{\ParameterMatrix}}^2 W^2_2(g_{\rvby}, f_{\rvbx}) ,\label{eq_coupling_bound_2_interim}
\end{align}
where $(a)$ follows from \cref{def_w2_distance}, $(b)$ follows by some linear algebraic manipulations, $(c)$ follows from \cref{def_w2_distance} and \cref{eq_opt_coupling_2},  $(d)$ follows from the triangle inequality, and $(e)$ follows because $\opnorm{\tbf{M}_1} \leq \opnorm{\tbf{M}_2}$ for any matrices $\tbf{M}_1$ and $\tbf{M}_2$ such that $0 \leq \tbf{M}_1 \leq \tbf{M}_2$ (component-wise). Then, contraction of $\Gamma$ follows by taking square root on both sides of \cref{eq_coupling_bound_2_interim}.

\paragraph{Case~(ii) ($\set\subset[p]$)} We can directly verify that the matrix $\ParameterMatrix_{\set} \defn \braces{\ParameterTU[tu]}_{t,u \in \set}$ is such that $\opnorm{\ParameterMatrix_{\set}} \leq \opnorm{\ParameterMatrix}$ This is true because the operator norm of any sub-matrix is no more than the operator norm of the matrix. Further, we note that for any $\svby_{\setC} \in \cX^{p-\normalabs{\set}}$, the random vector $\rvbx_{\set} | \rvbx_{\setC} = \svby_{\setC}$ with distribution $f_{\rvbx_{\set} | \rvbx_{\setC} = \svby_{\setC}}$ satisfies the Dobrushin's uniqueness condition (\cref{def_dobrushin_condition}) with coupling matrix $\ParameterMatrix_{\set}$. Then, by performing an analysis similar to the one above, we have
\begin{align}
W_2\bigparenth{g_{\rvby_{\set} | \rvby_{\setC}}, f_{\rvbx_{\set} | \rvbx_{\setC}}} & \leq \frac{1}{\bigparenth{1-\opnorm{\ParameterMatrix_{\set}}}} \sqrt{\sumset \Expectation\Bigbrackets{\TV{g_{\rvy_t | \rvby_{-t} = \svby_{-t}}}{f_{\rvx_t | \rvbx_{-t} = \svby_{-t}}}^2  \Big| \rvby_{\setC} = \svby_{\setC}}} \\
& \sless{(a)} \frac{1}{\bigparenth{1-\opnorm{\ParameterMatrix}}} \sqrt{\sumset \Expectation\Bigbrackets{\TV{g_{\rvy_t | \rvby_{-t} = \svby_{-t}}}{f_{\rvx_t | \rvbx_{-t} = \svby_{-t}}}^2  \Big| \rvby_{\setC} = \svby_{\setC}}},
\end{align}
where $(a)$ follows because $\frac{1}{\normalparenth{1-\opnorm{\ParameterMatrix_{\set}}}} \leq \frac{1}{\normalparenth{1-\opnorm{\ParameterMatrix}}}$. This completes the proof.

\subsection{Proof of \cref{thm_main_concentration}: \mainconcresultname}
\label{subsec_proof_main_concentration}
Fix a function $q : \cX^p \to \Reals$. Fix any pseudo derivative $\tnabla q$ for $q$ and any pseudo Hessian $\tnabla^2 q$ for $q$. To prove \cref{thm_main_concentration}, we bound the $p$-th moment of $q(\rvbx) - \Expectation\bigbrackets{q(\rvbx)}$ by certain norms of $\tnabla^2 q$ and $\Expectation_{\rvbx}\bigbrackets{\tnabla q(\rvbx)}$. To that end, first, we claim that in order to control the $p$-th moment of $q(\rvbx) - \Expectation\bigbrackets{q(\rvbx)}$, it is sufficient to control the $p$-th moment of $\twonorm{\nabla q(\rvbx)}$. Then, using \cref{eq:pseudo_Hessian}, we note that the $p$-th moment of $\twonorm{\nabla q(\rvbx)}$ is bounded by the $p$-th moment of $\stwonorm{\tnabla q(\rvbx)}$. Next, we claim that the $p$-th moment of $\stwonorm{\tnabla q(\rvbx)}$ is bounded by a linear combination of appropriate norms of $\tnabla^2 q$ and $\Expectation_{\rvbx}\bigbrackets{\tnabla q(\rvbx)}$. We formalize the claims below and divide the proof across \cref{subsec_proof_lemma_bounded_p_moment} and \cref{subsec_proof_eq_p_moment_of_2_norm_gradient}.

\begin{lemma}[{Bounded $p$-th moments of $q(\rvbx) - \Expectation\bigbrackets{q(\rvbx)}$ and $\stwonorm{\tnabla q(\rvbx)}$}]\label{lemma_bounded_p_moment}
	If a random vector $\rvbx$ satisfies $\LSI{\rvbx}{\sigma^2}$, then for any arbitrary function $q :\cX^p \to \Reals$,
	\begin{align}
	\moment{q(\rvbx) - \Expectation\bigbrackets{q(\rvbx)}}{p} \leq \sigma \sqrt{2p} \moment{\twonorm{\nabla q(\rvbx)}}{p} \qtext{for any $p \geq 2$.} \label{eq_moment_controlled_by_gradient}
	\end{align}
	Further, for any pseudo derivative $\tnabla q(\svbx)$ and any pseudo Hessian $\tnabla^2 q(\svbx)$ for $q$, and even $p \geq 2$,
	\begin{align}
	\smoment{\stwonorm{\tnabla q(\rvbx)}}{p} \!\!\leq\!  2c \sigma \bigparenth{\!\max_{\svbx \in \cX^p} \fronorm{\tnabla^2 q(\svbx)} \!+\! \sqrt{p}  \max_{\svbx \in \cX^p} \opnorm{\tnabla^2 q(\svbx)}} \!+\! 4 \stwonorm{\Expectation_{\rvbx}\bigbrackets{\tnabla q(\rvbx)}},
	\label{eq_p_moment_of_2_norm_gradient} 
	\end{align}
	where $c \geq 0$ is a universal constant.
\end{lemma}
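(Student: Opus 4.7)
The plan is to treat the two bounds in sequence, deriving \eqref{eq_p_moment_of_2_norm_gradient} by applying \eqref{eq_moment_controlled_by_gradient} to the auxiliary function $h(\svbx) \defn \stwonorm{\tnabla q(\svbx)}$.

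For \eqref{eq_moment_controlled_by_gradient}, I would follow the classical Aida--Masuda--Shigekawa / Herbst route: apply $\LSI{\rvbx}{\sigma^2}$ (after a standard smoothing of $q - \Expectation q$ near its zero set) to $g = |q - \Expectation q|^{p/2}$. Because $\stwonorm{\nabla g}^2 = (p/2)^2 |q - \Expectation q|^{p-2}\stwonorm{\nabla q}^2$, the LSI bound \eqref{eq_LSI_definition} becomes
\[
\Ent{\rvbx}{|q - \Expectation q|^p} \;\leq\; \frac{\sigma^2 p^2}{4}\, \Expectation\bigbrackets{|q - \Expectation q|^{p-2}\stwonorm{\nabla q}^2}.
\]
Applying H\"older's inequality with conjugate exponents $p/(p-2)$ and $p/2$ to the right-hand side produces a self-bounding recursion of the form $F(p) \lesssim (\sigma^2 p/2)\,F(p)^{(p-2)/p}\,\smoment{\stwonorm{\nabla q}}{p}^2$ for $F(p)\defn\smoment{q - \Expectation q}{p}^p$. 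Rearranging and using a base case at $p=2$ obtained from the Poincar\'e inequality (implied by LSI with constant $\sigma^2/2$) yields \eqref{eq_moment_controlled_by_gradient}.

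For \eqref{eq_p_moment_of_2_norm_gradient}, the plan is to apply the first part to $h$ and split via Minkowski: $\smoment{h(\rvbx)}{p} \leq \Expectation h(\rvbx) + \smoment{h(\rvbx) - \Expectation h(\rvbx)}{p} \leq \Expectation h(\rvbx) + \sigma \sqrt{2p}\, \smoment{\stwonorm{\nabla h(\rvbx)}}{p}$. The two remaining ingredients are pointwise and in-mean bounds:
\begin{enumerate}
\item[(i)] \emph{Pointwise gradient bound:} write $M(\svbx)$ for the Jacobian of the pseudo derivative, i.e.\ $M_{ij}(\svbx) = \partial_j \tnabla_i q(\svbx)$. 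Then on $\{h > 0\}$, $\nabla h = M^\top \tnabla q / h$, so $\stwonorm{\nabla h(\svbx)} \leq \opnorm{M(\svbx)}$; by the defining property \eqref{eq:pseudo_Hessian} of the pseudo Hessian, $\opnorm{M(\svbx)} = \sup_{\stwonorm{\rho}=1}\stwonorm{\nabla(\rho^\top \tnabla q(\svbx))} \leq \sup_{\stwonorm{\rho}=1}\stwonorm{\rho^\top \tnabla^2 q(\svbx)} = \opnorm{\tnabla^2 q(\svbx)}$.
\item[(ii)] \emph{Expectation bound:} by Jensen, $\Expectation h \leq \sqrt{\Expectation h^2} = \sqrt{\stwonorm{\Expectation \tnabla q}^2 + \sum_i \Variance(\tnabla_i q)}$. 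The Poincar\'e inequality with constant $\sigma^2/2$ combined with the pseudo Hessian property $\stwonorm{\nabla \tnabla_i q} \leq \stwonorm{e_i^\top \tnabla^2 q}$ yields $\sum_i \Variance(\tnabla_i q) \leq (\sigma^2/2)\max_{\svbx}\fronorm{\tnabla^2 q(\svbx)}^2$, and hence $\Expectation h \leq \stwonorm{\Expectation \tnabla q} + (\sigma/\sqrt{2})\max_{\svbx}\fronorm{\tnabla^2 q(\svbx)}$ by $\sqrt{a+b}\leq \sqrt{a}+\sqrt{b}$.
\end{enumerate}
Plugging (i) into $\smoment{\stwonorm{\nabla h(\rvbx)}}{p} \leq \max_{\svbx}\opnorm{\tnabla^2 q(\svbx)}$ and (ii) into $\Expectation h$ gives \eqref{eq_p_moment_of_2_norm_gradient} after absorbing absolute constants into $c$.

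The main obstacle is the first bound \eqref{eq_moment_controlled_by_gradient}: even though it is classical, the differential inequality step is where care is required, both in justifying the differentiability of $|q - \Expectation q|^{p/2}$ (needing a regularization $(\,(q - \Expectation q)^2 + \varepsilon\,)^{p/4}$ and a limit $\varepsilon\downarrow 0$) and in quantitatively integrating the resulting recursion to obtain the exact constant $\sigma\sqrt{2p}$. The same regularization trick is needed for $h$ in Part~2 to handle non-smoothness on $\{\tnabla q = 0\}$, replacing $h$ with $h_\varepsilon = \sqrt{\stwonorm{\tnabla q}^2 + \varepsilon}$; all bounds above pass through $h_\varepsilon$ uniformly in $\varepsilon$ and extend by Fatou's lemma as $\varepsilon \downarrow 0$.
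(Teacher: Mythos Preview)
Your argument for \eqref{eq_moment_controlled_by_gradient} is essentially the paper's: the paper cites the Aida--Stroock inequality $\smoment{q-\Expectation q}{p}^2 \leq \smoment{q-\Expectation q}{2}^2 + 2\sigma^2(p-2)\smoment{\stwonorm{\nabla q}}{p}^2$ directly and closes with Poincar\'e at $p=2$, whereas you re-derive that inequality via the LSI-to-$|q-\Expectation q|^{p/2}$ route, but these are the same argument. (A minor point: the paper uses the Poincar\'e constant $\sigma^2$, not $\sigma^2/2$; this affects no conclusions.)

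Your argument for \eqref{eq_p_moment_of_2_norm_gradient}, however, is genuinely different and more elementary than the paper's. The paper does not apply \eqref{eq_moment_controlled_by_gradient} to $h=\stwonorm{\tnabla q}$ directly; instead it introduces an independent standard Gaussian $\rvb{g}$, uses the moment comparison $\stwonorm{\tnabla q(\svbx)} \leq \tfrac{2}{\sqrt{p}}\bigparenth{\Expectation_{\rvb{g}}[(\tnabla q(\svbx)^\top \rvb{g})^p]}^{1/p}$ to linearize, applies \eqref{eq_moment_controlled_by_gradient} to the scalar $h_{\svb g}(\rvbx)=\tnabla q(\rvbx)^\top \svb g$ conditionally on $\svb g$, invokes the pseudo-Hessian bound on $\nabla h_{\svb g}$, introduces a second Gaussian $\rvb{g}'$, and finishes with Hanson--Wright for the bilinear form $\rvb{g}^\top \tnabla^2 q(\rvbx)\rvb{g}'$. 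Your route---apply \eqref{eq_moment_controlled_by_gradient} to $h$ itself, bound $\stwonorm{\nabla h}\leq \opnorm{\tnabla^2 q}$ pointwise via the Jacobian and \eqref{eq:pseudo_Hessian}, and handle $\Expectation h$ by Poincar\'e on each coordinate $\tnabla_i q$---avoids both Gaussian randomizations and Hanson--Wright entirely, yields slightly sharper constants, and does not require $p$ to be even. The paper's detour buys nothing here; its real payoff is in the downstream proof of \cref{thm_main_concentration}, where the Gaussian linearization idea recurs, but for the lemma as stated your approach is cleaner.
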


\noindent Given these lemmas, we proceed to prove \cref{thm_main_concentration}. We let $q_c(\rvbx) = q(\rvbx) - \Expectation\bigbrackets{q(\rvbx)}$. Combining \cref{eq_moment_controlled_by_gradient,eq_p_moment_of_2_norm_gradient} for any even $p \geq 2$, there exists a universal constant $c'$ such that
\begin{align}
\moment{q_c(\rvbx)}{p} \!\leq \! c' \sigma^2  \Bigparenth{\! \sqrt{p}  \max_{\svbx \in \cX^p} \fronorm{\tnabla^2 q(\svbx)} \!+\! p \max_{\svbx \in \cX^p} \opnorm{\tnabla^2 q(\svbx)} \!+\! \sqrt{p}  \stwonorm{\Expectation_{\rvbx}\bigbrackets{\tnabla q(\rvbx)}}}. \label{eq_lemma_bounded_moments_combined}
\end{align}
Now, we complete the proof by using \cref{eq_lemma_bounded_moments_combined} along with Markov's inequality for a specific choice of $p$. For any even $p \geq 2$, we have 
\begin{align}
& \Probability\Bigbrackets{\bigabs{q_c(\rvbx)} > ec' \sigma^2  \Bigparenth{\sqrt{p}  \max_{\svbx \in \cX^p} \fronorm{\tnabla^2 q(\svbx)} + p \max_{\svbx \in \cX^p} \opnorm{\tnabla^2 q(\svbx)} + \sqrt{p}  \stwonorm{\Expectation_{\rvbx}\bigbrackets{\tnabla q(\rvbx)}}}} \\
& = \Probability\Bigbrackets{\bigabs{q_c(\rvbx)}^p \!>\! \bigparenth{ec' \sigma^2}^p  \bigparenth{\! \sqrt{p}  \max_{\svbx \in \cX^p} \fronorm{\tnabla^2 q(\svbx)} \!+\! p \max_{\svbx \in \cX^p} \opnorm{\tnabla^2 q(\svbx)} \!+\!\! \sqrt{p}  \stwonorm{\Expectation_{\rvbx}\bigbrackets{\tnabla q(\rvbx)}}}^p} \\
& \sless{(a)}  \frac{\Expectation{\bigbrackets{\bigabs{q_c(\rvbx)}^p}}}{\bigparenth{ec' \sigma^2}^p  \bigparenth{\sqrt{p}  \max_{\svbx \in \cX^p} \fronorm{\tnabla^2 q(\svbx)} + p \max_{\svbx \in \cX^p} \opnorm{\tnabla^2 q(\svbx)} + \sqrt{p}  \stwonorm{\Expectation_{\rvbx}\bigbrackets{\tnabla q(\rvbx)}}}^p}\\
& \sless{\cref{eq_lemma_bounded_moments_combined}} e^{-p},
\end{align}
where $(a)$ follows from Markov's inequality. The proof is complete by choosing an appropriate universal constant $c''$, and and performing basic algebraic manipulations after letting
\begin{align}
p = \frac{1}{c''\sigma^2}\min \Bigparenth{\dfrac{\varepsilon^2}{ \Expectation\bigbrackets{\stwonorm{\tnabla q(\rvbx)}}^2 + \max\limits_{\svbx \in \cX^p} \fronorm{\tnabla^2 q(\svbx)}^2}, \dfrac{\varepsilon}{\max\limits_{\svbx \in \cX^p} \opnorm{\tnabla^2 q(\svbx)}}}.
\end{align}
We note that an even $p \geq 2$ can be ensured by choosing appropriate $c''$.

\subsubsection{Proof of \cref{lemma_bounded_p_moment}\cref{eq_moment_controlled_by_gradient}: {Bounded $p$-th moment of $q(\rvbx) - \Expectation\bigbrackets{q(\rvbx)}$}}
\label{subsec_proof_lemma_bounded_p_moment}
Fix any $p \geq 2$. We start by using the following result from \cite[Theorem 3.4]{AidaS1994} since $\rvbx$ satisfies $\LSI{\rvbx}{\sigma^2}$:
\begin{align}
\moment{q(\rvbx) - \Expectation\bigbrackets{q(\rvbx)}}{p}^2 \leq & \moment{q(\rvbx) - \Expectation\bigbrackets{q(\rvbx)}}{2}^2 + 2\sigma^2 (p-2)   \moment{\twonorm{\nabla q(\rvbx)}}{p}^2. \label{eq_aida_stroock}
\end{align}
Then, we bound the first term in \cref{eq_aida_stroock} by using the fact that logarithmic Sobolev inequality implies Poincare inequality with the same constant:
\begin{align}
\moment{q(\rvbx) - \Expectation\bigbrackets{q(\rvbx)}}{2}^2 = \Variance(q(\rvbx)) \leq \sigma^2  \Expectation_{\rvbx}\Bigbrackets{\twonorm{\nabla q(\rvbx)}^2}. \label{eq_poincare}
\end{align}
Putting together \cref{eq_aida_stroock,eq_poincare}, we have
\begin{align}
\moment{q(\rvbx) - \Expectation\bigbrackets{q(\rvbx)}}{p}^2 & \leq  \sigma^2 \Expectation_{\rvbx}\Bigbrackets{\twonorm{\nabla q(\rvbx)}^2} + 2\sigma^2 (p-2)   \moment{\twonorm{\nabla q(\rvbx)}}{p}^2 \\
& \sless{(a)}  \sigma^2  \Bigparenth{\Expectation_{\rvbx}\Bigbrackets{\twonorm{\nabla q(\rvbx)}^p}}^{2/p} + 2\sigma^2 (p-2)   \moment{\twonorm{\nabla q(\rvbx)}}{p}^2 \\
& \sequal{(b)} \sigma^2  \moment{\twonorm{\nabla q(\rvbx)}}{p}^2 + 2\sigma^2 (p-2)   \moment{\twonorm{\nabla q(\rvbx)}}{p}^2 \\
& \leq 2\sigma^2 p \moment{\twonorm{\nabla q(\rvbx)}}{p}^2, \label{eq_aida_stroock_simplified}
\end{align}
where $(a)$ follows by Jensen's inequality and $(b)$ follows by the definition of $p$-th moment. Taking square root on both sides of \cref{eq_aida_stroock_simplified} completes the proof.

\subsubsection{Proof of \cref{lemma_bounded_p_moment}\cref{eq_p_moment_of_2_norm_gradient}: {Bounded $p$-th moment of $\stwonorm{\tnabla q(\rvbx)}$}}
\label{subsec_proof_eq_p_moment_of_2_norm_gradient}
Fix any even $p \geq 2$. Fix any pseudo derivative $\tnabla q$ and any pseudo Hessian $\tnabla^2 q$. We start by obtaining a convenient bound on $\stwonorm{\tnabla q(\svbx)}$ for every $\svbx \in \cX^p$ and then proceed to bound the $p$-th moment of $\stwonorm{\tnabla q(\rvbx)}$.\\

\noindent Consider a $p$-dimensional standard normal random vector $\rvb{g}$ independent of $\rvbx$. For a given $\rvbx = \svbx \in \cX^p$, the random variable $\frac{\tnabla q(\svbx)\tp \rvb{g}}{\stwonorm{\tnabla q(\svbx)}}$ is a standard normal random variable. 
Then, for every $\svbx \in \cX^p$, we have
\begin{align}
\moment{\frac{\tnabla q(\svbx)\tp \rvb{g}}{\stwonorm{\tnabla q(\svbx)}}}{p} \sequal{(a)} \biggparenth{\Expectation_{\rvb{g} | \rvbx = \svbx }\biggbrackets{\biggparenth{\frac{\tnabla q(\svbx)\tp \rvb{g}}{\stwonorm{\tnabla q(\svbx)}}}^p}}^{1/p} \sgreat{(b)} \frac{\sqrt{p}}{2}, \label{eq_normal_application}
\end{align}
where $(a)$ follows from the definition of $p$-th moment, and $(b)$ follows since $\moment{\rv{g}}{p} \geq \frac{\sqrt{p}}{2}$ for any standard normal random variable $\rv{g}$ and even $p \geq 2$. Rearranging \cref{eq_normal_application}, we have
\begin{align}
\stwonorm{\tnabla q(\svbx)} \leq \frac{2}{\sqrt{p}} \Bigparenth{\Expectation_{\rvb{g}| \rvbx = \svbx}\Bigbrackets{\bigparenth{\tnabla q(\svbx)\tp \rvb{g}}^p}}^{1/p}. \label{eq_bound_fixed_gradient}
\end{align}
Now, we proceed to bound the $p$-th moment of $\stwonorm{\tnabla q(\rvbx)}$ as follows
\begin{align}
\smoment{\stwonorm{\tnabla q(\rvbx)}}{p}  & \sequal{(a)} \Bigparenth{\Expectation_{\rvbx} \bigbrackets{\stwonorm{\tnabla q(\rvbx)}^p}}^{1/p} \\
& \sless{\cref{eq_bound_fixed_gradient}} \frac{2}{\sqrt{p}} \Bigparenth{\Expectation_{\rvbx, \rvb{g}}\Bigbrackets{\bigparenth{\tnabla q(\rvbx)\tp \rvb{g}}^p}}^{1/p} \\
& \sequal{(b)} \frac{2}{\sqrt{p}} \moment{\tnabla q(\rvbx)\tp \rvb{g}}{p} \\
& \sless{(c)} \frac{2}{\sqrt{p}} \Bigparenth{\!\moment{\tnabla q(\rvbx)\tp \rvb{g} \!-\! \Expectation_{\rvbx}\bigbrackets{\tnabla q(\rvbx)\tp \rvb{g}}}{p} \!+\! \moment{\Expectation_{\rvbx}\bigbrackets{\tnabla q(\rvbx)\tp \rvb{g}}}{p}\!}, \label{eq_bound_gradient_minkowski}
\end{align}
where $(a)$ and $(b)$ follow from the definition of $p$-th moment and $(c)$ follows by Minkowski's inequality. We claim that
\begin{align}
\moment{\tnabla q(\rvbx)\!\tp \! \rvb{g} \!-\! \Expectation_{\rvbx}\bigbrackets{\tnabla q(\rvbx)\!\tp \! \rvb{g}}}{p} & \!\!\!\!\! \leq \! c \sigma \Bigparenth{\!\! \sqrt{p} \max_{\svbx \in \cX^p} \fronorm{\tnabla^2 \! q(\svbx)} \!+\! p \max_{\svbx \in \cX^p} \opnorm{\tnabla^2 q(\svbx)\!}\!}, \text{ \&} \!\label{eq_bound_first_term_minkowski}\\
\moment{\Expectation_{\rvbx}\bigbrackets{\tnabla q(\rvbx)\tp \rvb{g}}}{p} & \!\!\!\!\! \leq \! 2\sqrt{p} \twonorm{\Expectation_{\rvbx}\bigbrackets{\tnabla q(\rvbx)}}, \label{eq_bound_second_term_minkowski}
\end{align}
where $c \geq 0$ is a universal constant.
Putting together \cref{eq_bound_gradient_minkowski,eq_bound_first_term_minkowski,eq_bound_second_term_minkowski} completes the proof. It remains to prove our claims \cref{eq_bound_first_term_minkowski,eq_bound_second_term_minkowski} which we now do one-by-one.

\paragraph{Proof of bound \cref{eq_bound_first_term_minkowski}} To start, we bound $\bigparenth{\Expectation_{\rvbx | \rvb{g} = \svb{g}} \bigbrackets{\bigparenth{\tnabla q(\rvbx)\tp \svb{g} - \Expectation_{\rvbx| \rvb{g} = \svb{g}}\bigbrackets{\tnabla q(\rvbx)\tp \svb{g}}}^p}}^{1/p}$ for every $\rvb{g} = \svb{g}$, and then proceed to bound $\smoment{\tnabla q(\rvbx)\tp \rvb{g} - \Expectation_{\rvbx}\bigbrackets{\tnabla q(\rvbx)\tp \rvb{g}}}{p}$.\\

\noindent To that end, we define $h_{\svb{g}}(\rvbx) \defn \tnabla q(\rvbx)\tp \svb{g} - \Expectation_{\rvbx| \rvb{g} = \svb{g}}\bigbrackets{\tnabla q(\rvbx)\tp \svb{g}}$ and observe that $\Expectation_{\rvbx| \rvb{g} = \svb{g}}\bigbrackets{h_{\svb{g}}(\rvbx)}$ $ = 0$. Now, applying \cref{lemma_bounded_p_moment}~\cref{eq_moment_controlled_by_gradient} to $h_{\svb{g}}(\cdot)$, we have
\begin{align}
\moment{h_{\svb{g}}(\rvbx)}{p} 
\leq \sigma \sqrt{2p} \Bigparenth{\Expectation_{\rvbx| \rvb{g} = \svb{g}} \Bigbrackets{\twonorm{\nabla h_{\svb{g}}(\rvbx)}^p}}^{1/p} & \sless{(a)} \sigma \sqrt{2p} \Bigparenth{\Expectation_{\rvbx| \rvb{g} = \svb{g}} \Bigbrackets{\twonorm{\nabla \bigbrackets{\svb{g} \tp \tnabla q(\rvbx)}}^p}}^{1/p} \\
& \sless{\cref{eq:pseudo_Hessian}} \sigma \sqrt{2p} \Bigparenth{\Expectation_{\rvbx| \rvb{g} = \svb{g}} \Bigbrackets{\twonorm{\svb{g}\tp \tnabla^2 q(\rvbx)}^p}}^{1/p}, \label{eq_first_term_minkowski_0}
\end{align}
where $(a)$ follows from the definition of $h_{\svb{g}}(\rvbx)$. Now, to obtain a bound on the RHS of \cref{eq_first_term_minkowski_0}, we further fix $\rvbx = \svbx$. Then, we let $\rvb{g}'$ be another $p$-dimensional standard normal vector and apply an inequality similar to \cref{eq_bound_fixed_gradient} to $\svb{g}\tp \tnabla^2 q(\svbx)$ obtaining
\begin{align}
\twonorm{\svb{g}\tp \tnabla^2 q(\svbx)} \leq \frac{2}{\sqrt{p}} \Bigparenth{\Expectation_{\rvb{g}'| \rvbx = \svbx, \rvb{g} = \svb{g}}\Bigbrackets{\Bigparenth{\svb{g}\tp \tnabla^2 q(\svbx) \rvb{g}'}^p}}^{1/p}, \label{eq_first_term_minkowski_1}
\end{align}
which implies
\begin{align}
\Bigparenth{\Expectation_{\rvbx | \rvb{g} = \svb{g}} \Bigbrackets{\twonorm{\svb{g}\tp \tnabla^2 q(\rvbx)}^p}}^{1/p} \leq \frac{2}{\sqrt{p}} \Bigparenth{\Expectation_{\rvbx, \rvb{g}' |\rvb{g} = \svb{g}}\Bigbrackets{\Bigparenth{\nabla \svb{g}\tp \tnabla^2 q(\rvbx) \rvb{g}'}^p}}^{1/p}. \label{eq_first_term_minkowski_2}
\end{align}
Putting together \cref{eq_first_term_minkowski_0,eq_first_term_minkowski_2}, and using the definition of $h_{\svb{g}}(\rvbx)$, we have
\begin{align}
\Expectation_{\rvbx|\rvb{g} = \svb{g}} \Bigbrackets{\Bigparenth{\tnabla q(\rvbx)\tp \svb{g} \!-\! \Expectation_{\rvbx|\rvb{g} = \svb{g}}\Bigbrackets{\tnabla q(\rvbx)\tp \svb{g}}}^p}
\!\! \leq (2\sqrt{2}\sigma)^p \Expectation_{\rvbx, \rvb{g}'|\rvb{g} = \svb{g}}\Bigbrackets{\Bigparenth{\svb{g}\!\tp \tnabla^2 q(\rvbx) \rvb{g}'}^p}. \label{eq_bound_first_term_fixed_g}
\end{align}
Now, we proceed to bound $\smoment{\tnabla q(\rvbx)\tp \rvb{g} - \Expectation_{\rvbx}\bigbrackets{\tnabla q(\rvbx)\tp \rvb{g}}}{p}$ as follows
\begin{align}
\moment{\tnabla q(\rvbx)\tp \rvb{g} - \Expectation_{\rvbx}\Bigbrackets{\tnabla q(\rvbx)\tp \rvb{g}}}{p} & \sequal{(a)} \Bigparenth{\Expectation_{\rvbx, \rvb{g}} \Bigbrackets{\Bigparenth{\tnabla q(\rvbx)\tp \rvb{g} - \Expectation_{\rvbx}\Bigbrackets{\tnabla q(\rvbx)\tp \rvb{g}}}^p}}^{1/p} \\
& \sless{\cref{eq_bound_first_term_fixed_g}} 2\sqrt{2}\sigma \Bigparenth{\Expectation_{\rvb{g}, \rvbx, \rvb{g}'}\Bigbrackets{\Bigparenth{\rvb{g}\tp \tnabla^2 q(\rvbx) \rvb{g}'}^p}}^{1/p}, \label{eq_bound_first_term_interim}
\end{align}
where $(a)$ follows from the definition of $p$-th moment. Finally, to bound the RHS of \cref{eq_bound_first_term_interim}, we fix $\rvbx = \svbx$ and bound the $p$-th norm of the quadratic form $\rvb{g}\tp \tnabla^2 q(\svbx) \rvb{g}'$ by the Hanson-Wright inequality resulting in
\begin{align}
\Bigparenth{\Expectation_{\svb{g},  \rvb{g}' | \rvbx = \svbx}\Bigbrackets{\Bigparenth{\rvb{g}\tp \tnabla^2 q(\svbx) \rvb{g}'}^p}}^{1/p} & \leq c \Bigparenth{\sqrt{p} \fronorm{\tnabla^2 q(\svbx)} + p  \opnorm{\tnabla^2 q(\svbx)}}\\
& \leq c \Bigparenth{\sqrt{p} \max_{\svbx \in \cX^p} \fronorm{\tnabla^2 q(\svbx)} + p \max_{\svbx \in \cX^p} \opnorm{\tnabla^2 q(\svbx)}}, \label{eq_hanson_wright}
\end{align}
where $c \geq 0$ is a universal constant. Then, \cref{eq_bound_first_term_minkowski} follows by putting together \cref{eq_bound_first_term_interim,eq_hanson_wright}.

\paragraph{Proof of bound \cref{eq_bound_second_term_minkowski}} By linearity of expectation, we have
\begin{align}
\smoment{\Expectation_{\rvbx}\bigbrackets{\tnabla q(\rvbx)\tp \rvb{g}}}{p} = \smoment{\bigparenth{\Expectation_{\rvbx}\bigbrackets{\tnabla q(\rvbx)}}\tp \rvb{g}}{p}. \label{eq_bound_second_term_minkowski_0}
\end{align}
We note that the random variable $\dfrac{\normalparenth{\Expectation_{\rvbx}\normalbrackets{\tnabla q(\rvbx)}}\tp \rvb{g}}{\stwonorm{\Expectation_{\rvbx}\normalbrackets{\tnabla q(\rvbx)}}}$ is a standard normal random variable. Therefore,
\begin{align}
\moment{\frac{\bigparenth{\Expectation_{\rvbx}\bigbrackets{\tnabla q(\rvbx)}}\tp \rvb{g}}{\stwonorm{\Expectation_{\rvbx}\bigbrackets{\tnabla q(\rvbx)}}}}{p} \sequal{(a)} \biggparenth{\Expectation_{\rvb{g}}\biggbrackets{\biggparenth{\frac{\bigparenth{\Expectation_{\rvbx}\bigbrackets{\tnabla q(\rvbx)}}\tp \rvb{g}}{\stwonorm{\Expectation_{\rvbx}\bigbrackets{\tnabla q(\rvbx)}}}}^p}}^{1/p} \sless{(b)} 2\sqrt{p}, \label{eq_normal_application_2}
\end{align}
where $(a)$ follows from the definition of $p$-th moment, and $(b)$ follows since $\moment{\rv{g}}{p} \leq 2\sqrt{p}$ for any standard normal variable $\rv{g}$. Then, \cref{eq_bound_second_term_minkowski} follows by using \cref{eq_normal_application_2} in \cref{eq_bound_second_term_minkowski_0}.

\section{Identifying weakly dependent random variables}
\label{sec_conditioning_trick}
In \cref{section_lsi_tail_bounds}, we derived (in \cref{thm_LSI_main}) that a random vector (supported on a compact set) satisfies the logarithmic Sobolev inequality if it satisfies the Dobrushin's uniqueness condition (in \cref{def_dobrushin_condition}). Further, we also derived (\cref{thm_main_concentration}) tail bounds for a random vector satisfying the logarithmic Sobolev inequality. Combining the two, we see that in order to use the tail bound, the random vector needs to satisfy the Dobrushin's uniqueness condition, i.e, the elements of the random vector should be weakly dependent. In this section, we show that any random vector (outside Dobrushin's regime) that is a $\dGM$-Sparse Graphical Model (to be defined) can be reduced to satisfy the Dobrushin's uniqueness condition. In particular, we show that
by conditioning on a subset of the random vector, the unconditioned subset of the random vector (in the conditional distribution) are only weakly dependent. We exploit this trick in \cref{lemma_concentration_psi} and \cref{lemma_concentration_bar_psi} to enable application of the tail bound in  \cref{section_lsi_tail_bounds}. The result below is a generalization of the result in \cite{DaganDDA2021} for discrete random vectors to continuous random vectors.\\

\noindent We start by defining the notion of $\dGM$-Sparse Graphical Model. 
\begin{definition}[{$\dGM$-Sparse Graphical Model}]\label{def:tau_sgm}
	A pair of random vectors $\braces{\rvbx, \rvbz}$ supported on $\cX^p \times \cZ^{p_z}$ is a $\dGM$-Sparse Graphical Model for model-parameters 
	$\dGM \defn (\aGM, \eGM, \xmax, \ParameterMatrix)$
	and denoted by $\tSGM$ if $\cX = \{-\xmax, \xmax\}$, and
	\begin{enumerate}
		\item for any realization $\svbz \in \cZ^{p_z}$, the conditional probability distribution of $\rvbx$ given $\rvbz=\svbz$ is given by $\JointDistfun$ in \cref{eq_conditional_distribution_vay} for a vector $\ExternalField(\svbz) \in \real^p$ depending on $\svbz$ and a symmetric matrix $\ParameterMatrix \in \real^{p\times p} $ (independent of $\svbz$),
		\item $\max\braces{\max_{\svbz \in \cZ^{p_z}} \infnorm{\ExternalField(\svbz)}, \maxmatnorm{\ParameterMatrix}} \leq \aGM$, and
		\item $\infmatnorm{\ParameterMatrix} \leq \eGM$.
	\end{enumerate}
\end{definition}

\noindent Now, we provide the main result of this section. 
\newcommand{\conditioningtrickresultname}{Identifying weakly dependent random variables}
\begin{proposition}[{\conditioningtrickresultname}]\label{lemma_conditioning_trick}
	Given a pair of random vectors $\braces{\rvbx, \rvbz}$ supported on $\cX^p \times \cZ^{p_z}$ that is a $\tSGM$ (\cref{def:tau_sgm}) with $\dGM \defn (\aGM, \eGM, \xmax, \ParameterMatrix)$, and a scalar $\lambda \in (0, \eGM]$, there exists $\numindsets \defn 32  \eGM^2 \log 4p / \lambda^2$ subsets $\sets \subseteq [p]$ that satisfy the following properties:
	\begin{enumerate}[label=(\alph*)]
		\item\label{item:cardinality_independence_set} For any $t \in [p]$, we have $\sum_{u=1}^{\numindsets} \Indicator(t \in \setU) = \ceils{{\lambda \numindsets }/({8\eGM})}$.
		\item \label{item:conditional_sgm_independence_set} For any $u \in [\numindsets]$, 
		\begin{enumerate}[label=(\roman*)]
			\item \label{item:conditional_sgm} the pair of random vectors $\braces{\rvbx_{\setU}, (\rvbx_{-\setU}, \rvbz)}$ correspond to a $\tSGM[1]$ with $\dGM_1 \defn (\aGM+2\xmax \eGM, \lambda, \xmax, \ParameterMatrix_{\setU})$  where $\ParameterMatrix_{\setU} \defn \braces{\ParameterTU[tv]}_{t,v \in \setU}$, and
			\item \label{item:conditional_sgm_dob} the random vector $\rvbx_{\setU}$ conditioned on $(\rvbx_{-\setU}, \rvbz)$ satisfies the Dobrushin's uniqueness condition (\cref{def_dobrushin_condition}) with coupling matrix $2\sqrt{2} \xmax^2 |\ParameterMatrix_{\setU}|$ whenever $\lambda \in \Big(0, \frac{1}{2\sqrt{2}\xmax^2}\Big]$ with $\opnorm{|\ParameterMatrix_{\setU}|} \leq \lambda$.
		\end{enumerate}
	\end{enumerate}
\end{proposition}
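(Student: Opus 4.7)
The plan is to adapt the conditioning trick of \citet[Lemma~2]{DaganDDA2021} from binary Ising models to the $\tSGM$ setting (which by \cref{def:tau_sgm} still has $\cX = \{-\xmax, \xmax\}$, so single-coordinate conditional laws remain two-point). The $\numindsets$ subsets are constructed randomly and all three claims are established by the probabilistic method. Concretely, let $k \defeq \lceil \lambda \numindsets/(8\eGM) \rceil$; independently across $t \in [p]$, draw $T_t \subseteq [\numindsets]$ uniformly at random among size-$k$ subsets and set $\setU \defeq \{t \in [p] : u \in T_t\}$. Property~(a) then holds deterministically, since $\sum_{u=1}^{\numindsets} \Indicator(t \in \setU) = |T_t| = k$ for every $t$.

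For property~(b)(i), splitting the exponent in~\eqref{eq_conditional_distribution_vay} across indices in and outside $\setU$ shows that the conditional density of $\rvbx_{\setU}$ given $\rvbx_{-\setU} = \svbx_{-\setU}$ and $\rvbz = \svbz$ is proportional to $\exp(\vartheta\tp \svbx_{\setU} + \svbx_{\setU}\tp \ParameterMatrix_{\setU} \svbx_{\setU})$, with induced field $\vartheta_t = \ExternalFieldt(\svbz) + 2\sum_{v \notin \setU} \ParameterTU[tv] x_v$. The bounds $\infnorm{\ExternalField(\svbz)} \leq \aGM$, $\infmatnorm{\ParameterMatrix} \leq \eGM$, and $\sinfnorm{\svbx_{-\setU}} \leq \xmax$ yield $\infnorm{\vartheta} \leq \aGM + 2\xmax\eGM$, matching the claimed $\tSGM[1]$ parameters once the bound $\infmatnorm{\ParameterMatrix_{\setU}} \leq \lambda$ from the concentration step is in place. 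For property~(b)(ii), the two-point law of $\rvx_t$ on $\{\pm\xmax\}$ has logit $2\xmax \ExternalFieldt(\svbz) + 4\xmax \sum_v \ParameterTU[tv] x_v$, so flipping a single $x_u$ shifts the logit by $8\xmax^2 \ParameterTU[tu]$. A standard KL bound for two Bernoullis with log-odds difference $\delta$ combined with Pinsker's inequality yields the TV bound $2\sqrt{2}\xmax^2 |\ParameterTU[tu]|$, giving coupling matrix $2\sqrt{2}\xmax^2 |\ParameterMatrix_{\setU}|$. Dobrushin's uniqueness $\opnorm{2\sqrt{2}\xmax^2 |\ParameterMatrix_{\setU}|} < 1$ then follows whenever $\lambda \leq 1/(2\sqrt{2}\xmax^2)$ via $\opnorm{|\ParameterMatrix_{\setU}|} \leq \infmatnorm{|\ParameterMatrix_{\setU}|} \leq \lambda$ (by symmetry and nonnegativity of $|\ParameterMatrix_{\setU}|$).

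The \emph{main obstacle} is bounding $\infmatnorm{|\ParameterMatrix_{\setU}|}$ for all $u \in [\numindsets]$ simultaneously with positive probability. For fixed $t \in [p]$ and $u \in [\numindsets]$, the row sum $\sum_{v \in \setU \setminus \{t\}} |\ParameterTU[tv]|$ equals $\sum_{v \neq t} |\ParameterTU[tv]| \, \Indicator(u \in T_v)$, whose summands are \emph{independent across $v$} (because the $T_v$'s are drawn independently), have common inclusion probability $k/\numindsets \leq \lambda/(8\eGM)$, are individually bounded by $\aGM$, and have total weight at most $\eGM$. The expected row sum is therefore at most $\lambda/8$, and a Bernstein/Chernoff-type concentration bound combined with the choice $\numindsets = 32\eGM^2 \log(4p)/\lambda^2$ makes the probability that the row sum exceeds $\lambda$ small enough that a union bound over $(t,u) \in [p] \times [\numindsets]$ keeps the total failure probability strictly below one, completing the existence argument. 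The key subtlety is that pure i.i.d.\ Bernoulli inclusion for $\setU$ would yield clean concentration but violate the exact count in~(a); sampling the dual sets $T_t$ of fixed size $k$ instead enforces $|T_t| = k$ (giving~(a) automatically) while preserving the independence across $t$ needed for the concentration.
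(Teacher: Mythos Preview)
Your derivations for (b)(i) and (b)(ii)—the form of the induced conditional density, the bound $\sinfnorm{\vartheta}\le\aGM+2\xmax\eGM$, and the Dobrushin TV estimate via the two-point logit and Pinsker—are correct and essentially match the paper's proof. The paper, however, does not reconstruct the combinatorial existence of the subsets: it invokes \citet[Lemma~12]{DaganDDA2021} as a black box (noting only that the zero-diagonal hypothesis there can be dropped), and reads off both part~(a) and the row-sum bound $\sum_{v\in\setU}|\ParameterTU[tv]|\le\lambda$ directly from that citation.

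Your attempted reconstruction has a genuine gap at the concentration step. You correctly observe that the summands $|\ParameterTU[tv]|\,\Indicator(u\in T_v)$ are independent across $v$, but the tail probability of the row sum does \emph{not} decay with $\numindsets$ or $p$. Consider the admissible instance in which row $t$ of $\ParameterMatrix$ has a single nonzero off-diagonal entry $|\ParameterTU[tv_0]|=\eGM$: then the row sum within $\setU$ equals $\eGM\,\Indicator(u\in T_{v_0})$, which exceeds $\lambda$ with probability exactly $k/\numindsets\approx\lambda/(8\eGM)$, a constant independent of $p$ and $\numindsets$. A union bound over $(t,u)\in[p]\times[\numindsets]$ then gives total failure probability of order $p\log p$, not below~$1$. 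Your choice of $\numindsets$ fixes the inclusion probability $k/\numindsets$ at the right scale, but no Bernstein or Chernoff bound can push a single-event tail below the probability that one heavy coordinate lands in $\setU$. Whatever proves \citet[Lemma~12]{DaganDDA2021} must do more than concentration-plus-union-bound; either cite the lemma as the paper does, or supply that finer argument.
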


\begin{proof}[Proof of \lowercase{\Cref{lemma_conditioning_trick}}: \conditioningtrickresultname]
	~~ We prove each part one-by-one using a generalization of \citet[Lemma. 12]{DaganDDA2021}. 
	
	Recall \citet[Lemma. 12]{DaganDDA2021}: Let $A\in \Reals^{p \times p}$ be a matrix with zeros on the diagonal and $\infmatnorm{A} \leq 1$. Let $0 < \eta < 1$. Then, there exists subsets $\barsets \subseteq [p]$ with $\wbar{\numindsets} \defn  32 \log 4p / \eta^2$ such that 
	\begin{enumerate}[label=(\alph*)]
		\item\label{dagan_a} For any $t \in [p]$, we have $\sum_{u=1}^{\wbar{\numindsets}} \Indicator(t \in \barsetU) = \ceils{{\eta \wbar{\numindsets} }/{8}}$, and
		\item For any $u \in [\wbar{\numindsets}]$ and $t \in \barsetU$, $\sum_{v \in \barsetU} |A_{tv}| \leq \eta$.
	\end{enumerate}
	\noindent We claim that \citet[Lemma. 12]{DaganDDA2021} holds even when $A$ does not have zeros on the diagonal. The proof is exactly the same as the proof of \citet[Lemma. 12]{DaganDDA2021}.
	
	\paragraph{Proof of part~\cref{item:cardinality_independence_set}}
	From \cref{def:tau_sgm}, for any realization $\svbz \in \cZ^{p_z}$, the conditional probability distribution of $\rvbx$ given $\rvbz=\svbz$ is given by $\JointDistfun$ in \cref{eq_conditional_distribution_vay} where $\ExternalField(\svbz) \in \real^p$ is a vector and $\ParameterMatrix \in \real^{p\times p} $ is a symmetric matrix with $\infmatnorm{\ParameterMatrix} \leq \eGM$. Consider the matrix $A \defn \frac{1}{\eGM} \ParameterMatrix$. Since $\infmatnorm{A} \leq 1$, we can apply the generalization of \citet[Lemma. 12]{DaganDDA2021} on $A$ with $\eta = \frac{\lambda}{\eGM}$. Then part~\cref{item:cardinality_independence_set} follows directly from
	\citet[Lemma. 12.1]{DaganDDA2021}. 

	\paragraph{Proof of part~\cref{item:conditional_sgm_independence_set}\cref{item:conditional_sgm}}
	To prove this part, consider the distribution of $\rvbx_{\setU}$ conditioned on $\rvbx_{-\setU} = \svbx_{-\setU}$ and $\rvbz = \svbz$ for any $u \in [\numindsets]$, i.e., $f_{\rvbx_{\setU} | \rvbx_{-\setU},\rvbz} (\svbx_{\setU} | \svbx_{-\setU}, \svbz; \ExternalField(\svbz), \ParameterMatrix) \defn f (\svbx_{\setU} | \svbx_{-\setU}, \svbz; \ExternalField(\svbz), \ParameterMatrix)$ as follows
	\begin{align}
f (\svbx_{\setU} | \svbx_{-\setU}, \svbz; \ExternalField(\svbz), \ParameterMatrix) \propto \exp \biggparenth{ \! \sum_{t \in \setU} \!\!\Bigparenth{\!\ExternalFieldt(\svbz) \!+\! 2\!\sum_{v \notin \setU} \!\!\ParameterTU[tv] x_{v}} x_t \!+\! \sum_{t \in \setU} \!\! \sum_{~ v \in \setU} \!\ParameterTU[tv] x_t x_v}. 	
 \label{eq_conditional_dist_xIj}
	\end{align}
	We can re-parameterize $f(\svbx_{\setU} | \svbx_{-\setU}, \svbz; \ExternalField(\svbz), \ParameterMatrix)$ in \cref{eq_conditional_dist_xIj} as follows
	\begin{align}
	&f_{\rvbx_{\setU} | \rvbx_{-\setU},\rvbz} (\svbx_{\setU} | \svbx_{-\setU}, \svbz; \ConditioningField(\svbz, \svbx_{-\setU}), \ConditioningMatrix)  \propto  \exp \Bigparenth{ \normalbrackets{\ConditioningField(\svbz, \svbx_{-\setU})}\tp \svbx_{\setU} + \svbx_{\setU}\tp\ConditioningMatrix \svbx_{\setU}}
 \end{align}
 where
  \begin{align}
	&\ConditioningField(\svbz, \svbx_{-\setU}) \in \Reals^{|\setU| \times 1},
	\stext{with} \ConditioningFieldU(\svbz, \svbx_{-\setU}) \defn \ExternalFieldt(\svbz) + 2\sum_{k \notin \setU} \ParameterTU[tv] x_{k} 
	\stext{for} t \in \setU, \stext{and} \label{eq_parameter_mapping_0}\\
	&\ConditioningMatrix = \ConditioningMatrix\tp  \in \Reals^{|\setU| \times |\setU|}  
	\stext{with}
	\ConditioningParameter[tv] \defn \ParameterTU[tv],
	\stext{for all} t, v \in \setU.
	\label{eq_parameter_mapping}
	\end{align}
	Now, to show that the random vector $\rvbx_{\setU}$ conditioned on $\rvbx_{-\setU}$ and $\rvbz$ corresponds to an $\tSGM[1]$ with $\dGM_1 \defn (\aGM+2\xmax \eGM, \lambda,\xmax, \ParameterMatrix_{\setU})$, it suffices to establish that
	\begin{align}
	\label{eq:suff_steps_sgm}
	\max\braces{\max_{\svbz \in \cZ^{p_z}} \infnorm{\ConditioningField(\svbz, \svbx_{-\setU})}, \maxmatnorm{\ConditioningMatrix}} \sless{(i)} \aGM+2\xmax \eGM \qtext{and} \infmatnorm{\ConditioningMatrix} \sless{(ii)} \lambda.
	\end{align}
	To establish~(i) in \cref{eq:suff_steps_sgm}, we note that
	\begin{align}
	\maxmatnorm{\ConditioningMatrix} & \sless{\cref{eq_parameter_mapping}} \maxmatnorm{\ParameterMatrix} \sless{(a)}  \aGM \qtext{and} \label{eq:phitheta_bound_0}\\
	\infnorm{\ConditioningField(\svbz, \svbx_{-\setU})} & \sless{(b)} \infnorm{\ExternalField(\svbz)} + 2\max_{t \in \setU}\sonenorm{\ParameterRowt} \infnorm{\svbx} \sless{(c)} \infnorm{\ExternalField(\svbz)} + 2\xmax \infmatnorm{\ParameterMatrix} \\
 & \sless{(d)} \aGM+2\xmax \eGM, \label{eq:phitheta_bound_1}
	\end{align}
	where $(a)$ and $(d)$ follow from \cref{def:tau_sgm}, $(b)$ follows from \cref{eq_parameter_mapping_0} and the triangle inequality, and $(c)$ follows from the definition of $\infmatnorm{\cdot}$ and \cref{def:tau_sgm}. Then, from \cref{eq:phitheta_bound_0} and \cref{eq:phitheta_bound_1}, we have
	\begin{align}
	\max\braces{\max_{\svbz \in \cZ^{p_z}} \infnorm{\ConditioningField(\svbz, \svbx_{-\setU})}, \maxmatnorm{\ConditioningMatrix}} \leq \aGM+2\xmax \eGM,
	\end{align}
	as claimed. Next, to establish~(ii) in \cref{eq:suff_steps_sgm}, we again apply  the generalization of \citet[Lemma. 12]{DaganDDA2021} on the matrix $A = \frac{1}{\eGM} \ParameterMatrix$ with $\eta = \frac{\lambda}{\eGM}$. Then, we have
	\begin{align}
	\sum_{v \in \setU} \biggabs{\frac{\ParameterTU[tv]}{\eGM}} \leq \frac{\lambda}{\eGM} 
	\qtext{for all $t \in \setU$, $u \in [\numindsets]$.}
	\label{eq_lemma12_bound}
	\end{align}
	Therefore, we have
	\begin{align}
	\infmatnorm{\ConditioningMatrix} = \max_{t \in \setU} \Bigparenth{\sum_{v \in \setU} \bigabs{\ConditioningParameter}} \sequal{\cref{eq_parameter_mapping}} \max_{t \in \setU} \Bigparenth{\sum_{v \in \setU} \bigabs{\ParameterTU[tv]}} \sless{\cref{eq_lemma12_bound}} \lambda, \label{eq_bound_Upsilon_inf_norm}
	\end{align}   
	as desired. The proof for this part is now complete.
	
	\paragraph{Proof of part~\cref{item:conditional_sgm_independence_set}\cref{item:conditional_sgm_dob}}
	We start by noting that the operator norm of a symmetric matrix is bounded by the infinity norm of the matrix. Then, from the analysis in part~\cref{item:conditional_sgm_independence_set}~\cref{item:conditional_sgm}, for any $u \in \setU$, we have
	\begin{align}
	\opnorm{|\ParameterMatrix_{\setU}|} \leq \infmatnorm{|\ParameterMatrix_{\setU}|} \sequal{\cref{eq_parameter_mapping}} \infmatnorm{|\ConditioningMatrix|} \sless{\cref{eq_bound_Upsilon_inf_norm}} \lambda.
	\end{align}
	Therefore, $\infmatnorm{2\sqrt{2}\xmax^2 |\ParameterMatrix_{\setU}|} \leq 1$ whenever $\lambda \leq 1/2\sqrt{2}\xmax^2$. It remains to show %
	that for every $u \in [\numindsets]$, $t \in \setU, v \in \setU \!\setminus\! \{t\}$, $\rvbz = \svbz$, and $\svbx_{-t}, \tsvbx_{-t} \in \cX^{p-1}$ differing only in the $v^{th}$ coordinate,
	\begin{align}
	\TV{f_{\rvx_t | \rvbx_{-t} = \svbx_{-t}, \rvbz = \svbz}}{f_{\rvx_t | \rvbx_{-t} = \tsvbx_{-t}, \rvbz = \svbz}} \leq 2\sqrt{2} \xmax^2 |\ParameterTU[tv]|.
	\end{align}
	To that end, fix any $u \in [\numindsets]$, any $t \in \setU$, any $v \in \setU \!\setminus\! \{t\}$, any $\rvbz = \svbz$, and any $\svbx_{-t}, \tsvbx_{-t} \in \cX^{p-1}$ differing only in the $v^{th}$ coordinate. We have
	\begin{align}
	\TV{f_{\rvx_t | \rvbx_{-t} = \svbx_{-t}, \rvbz = \svbz}}{f_{\rvx_t | \rvbx_{-t} = \tsvbx_{-t}, \rvbz = \svbz}}^2 & \sless{(a)} \frac{1}{2}\KLD{f_{\rvx_t | \rvbx_{-t} = \svbx_{-t}, \rvbz = \svbz}}{f_{\rvx_t | \rvbx_{-t} = \tsvbx_{-t}, \rvbz = \svbz}}\\
	& \sequal{(b)} \frac{1}{2} (2\ParameterTU[tv] x_v - 2 \ParameterTU[tv] \tx_v)^2 \xmax^2 \sless{(c)} 8\xmax^4 \ParameterTU[tv]^2,
	\end{align}
	where $(a)$ follows from Pinsker's inequality, $(b)$ follows by (i) applying 
	\cite[Theorem 1]{BusaFSZ2019} to the exponential family parameterized as per $f_{\rvx_t | \rvbx_{-t}, \rvbz}$ in \cref{eq_conditional_dist}, (ii) noting that $f_{\rvx_t | \rvbx_{-t} = \svbx_{-t}, \rvbz =\svbz} \propto \exp\bigparenth{ \normalbrackets{\ExternalFieldt(\svbz) + 2\ParameterRowttt\tp \svbx_{-t}} x_t + \ParameterTU[tt]\cx_t}$ and $f_{\rvx_t | \rvbx_{-t} = \tsvbx_{-t}, \rvbz =\svbz} \propto \exp\bigparenth{ \normalbrackets{\ExternalFieldt(\svbz) + 2\ParameterRowttt\tp \tsvbx_{-t}} x_t + \ParameterTU[tt]\cx_t}$ where $\cx_t \defn x_t^2 - \xmax^2/3$, and (iii) noting that the Hessian of the log partition function for any regular exponential family is the covariance matrix of the associated sufficient statistic which is bounded by $\xmax^2$ when $\cX = \{-\xmax, \xmax\}$, and $(c)$ follows because $x_v, \tx_v \in \{-\xmax, \xmax\}$. This completes the proof.
\end{proof}

\section{Supporting concentration results}
In this section, we provide a corollary of \cref{thm_main_concentration} that is used to prove the concentration results in \cref{lemma_concentration_psi} and \cref{lemma_concentration_bar_psi}. To show any concentration result for the random vector $\rvbx$ conditioned on $\rvbz$ via \cref{thm_main_concentration}, we need $\rvbx | \rvbz$ to satisfy the logarithmic Sobolev inequality (defined in \cref{eq_LSI_definition}). From \cref{thm_LSI_main}, for this to be true, we need the random vector $\rvx_t$ conditioned on $(\rvbx_{-t}, \rvbz)$ to satisfy the logarithmic Sobolev inequality for all $t \in [p]$. In the result below, we show this holds with a proof in \cref{proof_lsi_one_dim}. We define a $\tau \defeq (\aGM, \eGM, \xmax, \ParameterMatrix)$-dependent constant:
\begin{align}
\cthree \defn  \exp{(\xmax(\aGM+ 2\eGM \xmax))}. \label{eq_constants_3_cont}
\end{align}

\newcommand{\lsionedimresultname}{Logarithmic Sobolev inequality for $\rvx_t | \rvbx_{-t}, \rvbz$}
\begin{lemma}[{\lsionedimresultname}]
	\label{lemma_lsi_one_dim}
	Given a pair of random vectors $\braces{\rvbx, \rvbz}$ supported on $\cX^p \times \cZ^{p_z}$ that is a $\tSGM$ (\cref{def:tau_sgm}) with $\dGM \defn (\aGM, \eGM, \xmax, \ParameterMatrix)$, $\rvx_t | \rvbx_{-t}, \rvbz$ satisfies $\mathrm{LSI}_{\rvx_t | \rvbx_{-t} = \svbx_{-t} , \rvbz = \svbz}\Big(\frac{8\xmax^2}{\pi^2} \cthree[2]\Big)$ for all $t \in [p]$, $\svbx_{-t} \in \cX^{p-1}$, and $\svbz \in \cZ^{p_z}$.

\end{lemma}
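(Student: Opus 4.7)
The plan is to view the conditional density of $\rvx_t$ given $\rvbx_{-t}=\svbx_{-t}$, $\rvbz=\svbz$ as a uniformly bounded multiplicative perturbation of the uniform measure on the compact interval $[-\xmax,\xmax]$, and then combine two classical ingredients: an explicit one-dimensional LSI for the uniform distribution on a bounded interval, and the Holley--Stroock perturbation principle, which inflates the LSI constant by the exponential of the oscillation of the perturbing potential.

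First I would rewrite \cref{eq_conditional_dist} as $f_{\rvx_t|\rvbx_{-t},\rvbz}(x_t|\svbx_{-t},\svbz) = Z^{-1}\exp(V(x_t))\,u(x_t)$, where $u(\cdot) = (2\xmax)^{-1}\Indicator\{\cdot \in [-\xmax,\xmax]\}$ is the uniform density on $[-\xmax,\xmax]$ and
\[
V(x_t) \defn \bigbrackets{\ExternalFieldt(\svbz) + 2\ParameterRowttt\tp \svbx_{-t}} x_t + \ParameterTU[tt]\bigparenth{x_t^2 - \xmax^2/3}.
\]
A standard one-dimensional Wirtinger-type argument on an interval (a Sturm--Liouville eigenfunction computation) shows that the uniform measure $u$ satisfies $\mathrm{LSI}_{u}(c\,\xmax^2/\pi^2)$ for a small universal constant $c$.

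Next I would uniformly bound the oscillation $\mathrm{osc}(V) \defn \sup_{x_t\in[-\xmax,\xmax]} V(x_t) - \inf_{x_t\in[-\xmax,\xmax]} V(x_t)$ using the three inputs from \cref{def:tau_sgm}: $|\ExternalFieldt(\svbz)|\leq\aGM$, $|\ParameterRowttt\tp \svbx_{-t}|\leq \infmatnorm{\ParameterMatrix}\,\sinfnorm{\svbx_{-t}}\leq \eGM\xmax$, and $|\ParameterTU[tt]|\leq \maxmatnorm{\ParameterMatrix} \leq \aGM$. The linear part contributes at most $2\xmax(\aGM+2\eGM\xmax)$ to the oscillation on $[-\xmax,\xmax]$, which equals $\log \cthree[2]$ exactly, and the centered quadratic part $\ParameterTU[tt](x_t^2-\xmax^2/3)$ contributes at most $\aGM\xmax^2$. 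Hence $\exp(\mathrm{osc}(V)) \lesssim \cthree[2]$ up to a constant factor that is absorbed into the prefactor $8$ in the final statement.

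Finally I would invoke the Holley--Stroock perturbation lemma: if $\nu$ satisfies $\mathrm{LSI}(\sigma^2)$ and $d\mu = e^{-U}d\nu/Z_{\mu}$, then $\mu$ satisfies $\mathrm{LSI}(\sigma^2 e^{\mathrm{osc}(U)})$. Applying this with $\nu=u$, $U=-V$, and the oscillation bound above yields $\mathrm{LSI}_{\rvx_t|\rvbx_{-t}=\svbx_{-t},\rvbz=\svbz}\bigparenth{(8\xmax^2/\pi^2)\cthree[2]}$ after absorbing both the universal constant from the uniform LSI and the quadratic-oscillation correction into the prefactor. Because each estimate above was uniform over $(\svbx_{-t},\svbz)$ and $t\in[p]$, the conclusion holds uniformly as stated. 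The main obstacle will be pinning down the \emph{sharp} LSI constant for the uniform measure on an interval (as opposed to the well-known Poincar\'e constant $(L/\pi)^2$ from Wirtinger): the LSI extremizer on a bounded interval is not a trigonometric eigenfunction, and obtaining the numerical $8/\pi^2$ requires either a careful variational/Markov semigroup argument or an explicit comparison estimate. Once this base constant is in hand, the rest is elementary bookkeeping on a quadratic polynomial over a compact interval plus a direct citation of Holley--Stroock.
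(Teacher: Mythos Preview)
Your approach is exactly the paper's: uniform LSI on $[-\xmax,\xmax]$ plus Holley--Stroock. Two points of clarification. First, the sharp constant $8\xmax^2/\pi^2$ for the uniform measure is not something you need to derive by hand; the paper simply cites \citet[Corollary~2.4]{ghang2014sharp}. Second, your bookkeeping on the quadratic piece is slightly off: the extra factor $e^{\aGM\xmax^2}$ you produce is \emph{not} universal and cannot be ``absorbed into the prefactor $8$.'' The fix is that $|\ParameterTU[tt]|$ should be controlled via the row-$\ell_1$ budget rather than the entrywise bound. Writing the oscillation as
\[
2\xmax\bigparenth{\aGM + 2\sonenorm{\ParameterRowttt}\xmax} + |\ParameterTU[tt]|\,\xmax^2
\;\le\; 2\aGM\xmax + 4\bigparenth{\sonenorm{\ParameterRowttt}+|\ParameterTU[tt]|}\xmax^2
\;=\; 2\aGM\xmax + 4\sonenorm{\ParameterRowt}\xmax^2
\;\le\; 2\aGM\xmax + 4\eGM\xmax^2,
\]
you get $\mathrm{osc}(V)\le\log\cthree[2]$ exactly, with no leftover factor. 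After this correction the argument yields the stated constant $\tfrac{8\xmax^2}{\pi^2}\cthree[2]$ on the nose.
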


\noindent Now, we state the desired corollary of \cref{thm_main_concentration} with a proof in \cref{proof_of_coro}. The corollary makes use of some $\tau \defeq (\aGM, \eGM, \xmax, \ParameterMatrix)$-dependent constants:
\begin{align}
\cfour \defn 1 + \aGM \xmax + 4\xmax^2 \eGM \qtext{and} \cfive \defn \frac{32\xmax^3\cthree[4]}{\pi^2}.\label{eq_constants_4_cont}
\end{align}
\newcommand{\suppconcresultname}{Supporting concentration bounds}
\begin{corollary}[{\suppconcresultname}]\label{coro}
	Suppose a pair of random vectors $\braces{\rvbx, \rvbz}$ supported on $\cX^p \times \cZ^{p_z}$ corresponds to a $\tSGM$ (\cref{def:tau_sgm}) with $\dGM \defn (\aGM, \eGM, \xmax, \ParameterMatrix)$, and $\rvbx$ conditioned on $\rvbz$ satisfies the Dobrushin's uniqueness condition (\cref{def_dobrushin_condition}) with coupling matrix $\bParameterMatrix$. For any $\ExternalField, \bExternalField \in \ParameterSet_{\ExternalField}$ and $\ParameterMatrix \in \ParameterSet_{\ParameterMatrix}$, define the functions $q_1$ and $q_2$ as
	\begin{align}
	q_1(\rvbx) \defeq \sump \normalparenth{\omt \rvx_t}^2
	\qtext{and}
	q_2(\rvbx) \defeq \sump \omt \rvx_t \exp\Bigparenth{-\normalbrackets{\ExternalFieldt + 2 \ParameterRowttt\tp \rvbx_{-t}} \rvx_t - \ParameterTU[tt] \crx_t},
	\end{align}
	where $\om =  \bExternalField - \ExternalField$ and $\crx_t \defn \rvx_t^2 - \xmax^2/3$.
	Then, for any $\varepsilon > 0$
	\begin{align}
	\Probability\Bigbrackets{\bigabs{q_i(\rvbx) - \Expectation\bigbrackets{q_i(\rvbx) \big| \rvbz}} & \geq \varepsilon \Big| \rvbz} \leq \exp\biggparenth{ \dfrac{-c\bigparenth{1-\opnorm{\bParameterMatrix}}^4\varepsilon^2}{c_i \stwonorm{\om}^2}}
	\qtext{for} i = 1, 2, \label{eq_coro_combined}
	\end{align}
	where $c$ is a universal constant, $c_1 \defn 16 \aGM^2\xmax^2 \cfive[2]$, and $c_2 \defn \cthree[2] \cfour[2] \cfive[2]$ with $\cthree$ defined in \cref{eq_constants_3_cont} and $\cfour$ and $\cfive$ defined in \cref{eq_constants_4_cont}.
\end{corollary}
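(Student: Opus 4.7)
The plan is to chain together three ingredients already established in the excerpt: (i) the one-dimensional log-Sobolev inequality for each conditional $f_{\rvx_t|\rvbx_{-t},\rvbz}$ from \cref{lemma_lsi_one_dim}; (ii) the lifting of coordinate-wise LSI to a joint LSI for $\rvbx\,|\,\rvbz$ via the Dobrushin hypothesis, provided by \cref{thm_LSI_main}; and (iii) the tail bound for arbitrary functions of a vector satisfying LSI, provided by \cref{thm_main_concentration}. The two concentration inequalities in \cref{eq_coro_combined} will then follow by computing explicit pseudo-derivatives and pseudo-Hessians for $q_1$ and $q_2$ and plugging the resulting norms into \cref{thm_main_concentration}.

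First I would establish a joint LSI constant for $\rvbx\,|\,\rvbz$. Applying \cref{lemma_lsi_one_dim} gives $\sigma_0^2 = 8\xmax^2\cthree[2]/\pi^2$ for every conditional. Next, using the explicit form of $f_{\rvx_t|\rvbx_{-t},\rvbz}$ in \cref{eq_conditional_dist} together with $|\ExternalFieldt|\leq\aGM$ and $\sonenorm{\ParameterRowt}\leq\eGM$, I would lower bound the conditional density uniformly by $f_{\min}\geq 1/(2\xmax\cthree[2])$, exactly as in the proof of \cref{prop_lower_bound_variance}. Combined with the Dobrushin hypothesis with coupling matrix $\bParameterMatrix$, \cref{thm_LSI_main} then yields $\LSI{\rvbx\mid\rvbz}{\sigma^2}$ with $\sigma^2\leq 32\xmax^3\cthree[4]/(\pi^2(1-\opnorm{\bParameterMatrix})^2)=\cfive/(1-\opnorm{\bParameterMatrix})^2$, so $1/\sigma^4 \geq (1-\opnorm{\bParameterMatrix})^4/\cfive[2]$, which is the source of the $(1-\opnorm{\bParameterMatrix})^4$ factor in the exponent.

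Second, I would compute suitable pseudo-derivatives. For $q_1$, the true derivative $\nabla q_1 = (2\omt^2 \rvx_t)_t$ and Hessian $\nabla^2 q_1 = 2\,\mathrm{diag}(\omt^2)$ are already ``nice'', so they can serve as their own pseudo versions. Using $|\omt|\leq 2\aGM$ and $|\rvx_t|\leq\xmax$ gives $\Expectation[\stwonorm{\nabla q_1}]\leq 4\aGM\xmax\stwonorm{\om}$, $\opnorm{\nabla^2 q_1}\leq 8\aGM^2$, and $\fronorm{\nabla^2 q_1}\leq 4\aGM\stwonorm{\om}$, so the relevant numerator in \cref{thm_main_concentration} is at most a constant times $\aGM^2\xmax^2\stwonorm{\om}^2$, matching $c_1$ after absorbing $\cfive[2]$. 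For $q_2$, writing $q_2 = \sum_t \omt \rvx_t E_t$ with $|E_t|\leq\cthree$, differentiating once gives a diagonal contribution bounded by $\cthree\cfour$ (using $|\theta_t|\leq\aGM$, $|\ParameterRowttt\tp\rvbx_{-t}|\leq\xmax\eGM$, $|\ParameterTU[tt]\rvx_t|\leq\aGM\xmax$) and an off-diagonal contribution controlled by $2\xmax^2\cthree|\ParameterTU|$. Building pseudo-derivative/Hessian entries that dominate these terms in absolute value and invoking the row-sum bound $\sonenorm{\ParameterRowt}\leq\eGM\leq\cfour$ then yields $\stwonorm{\tnabla q_2}\lesssim\cthree\cfour\stwonorm{\om}$ and analogous bounds on $\opnorm{\tnabla^2 q_2}$, $\fronorm{\tnabla^2 q_2}$, which is what $c_2=\cthree[2]\cfour[2]\cfive[2]$ encodes.

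The hard part will be constructing the pseudo-Hessian of $q_2$ so that its operator and Frobenius norms do not scale with $p$. The second derivatives of $q_2$ in $\rvbx$ produce a sum over off-diagonal terms indexed by the interactions $\ParameterTU$, and a naive Frobenius bound would blow up. The key is to exploit the row sparsity $\infmatnorm{\ParameterMatrix}\leq\eGM$ and symmetry of $\ParameterMatrix$ to dominate the pseudo-Hessian in absolute value by $\cthree\cfour$ times a matrix whose every row has $\ell_1$ norm bounded by $\cfour$, so that both $\opnorm{\cdot}$ and $\fronorm{\cdot}$ remain $O(\cthree\cfour)$. Once these uniform bounds are in hand, plugging into \cref{thm_main_concentration} and dropping the linear-in-$\varepsilon$ branch of the min (since the quadratic branch dominates in the regime of interest) produces precisely the stated exponent $-c(1-\opnorm{\bParameterMatrix})^4\varepsilon^2/(c_i\stwonorm{\om}^2)$, after taking the conditional form and noting that the LSI and all derivative bounds hold pointwise in $\rvbz$.
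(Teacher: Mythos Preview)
Your overall structure—establish the joint LSI constant $\sigma^2=\cfive/(1-\opnorm{\bParameterMatrix})^2$ via \cref{lemma_lsi_one_dim}, the lower bound $f_{\min}\geq 1/(2\xmax\cthree[2])$, and \cref{thm_LSI_main}, then feed into \cref{thm_main_concentration}—is exactly what the paper does, and your derivation of $\sigma^2$ matches theirs.

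Where you diverge is in the choice of pseudo-derivative and pseudo-Hessian. You propose to use the actual gradient (or an entry-wise dominating version) and then bound the Hessian norms. The paper instead exploits the definition in \cref{def_pseudo_der_hes} more aggressively: since $\|\nabla q_i(\svbx)\|$ is bounded \emph{uniformly} over $\svbx\in\cX^p$ (by $4\aGM\xmax\stwonorm{\om}$ for $q_1$ and by $\cthree\cfour\stwonorm{\om}$ for $q_2$, the latter via $\opnorm{H}\leq\sqrt{\onematnorm{H}\infmatnorm{H}}$ with row/column sums controlled by $\eGM$), the paper takes $\tnabla q_i$ to be the corresponding \emph{constant scalar}. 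Then, because the pseudo-Hessian is defined relative to $\tnabla q$ rather than $\nabla q$, one gets $\nabla[\rho^\top\tnabla q_i]=0$ and can legitimately set $\tnabla^2 q_i=0$. The Frobenius and operator-norm terms in \cref{thm_main_concentration} vanish, and the bound is purely sub-Gaussian for all $\varepsilon>0$ with no second branch to worry about.

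Your route, by contrast, produces a nonzero pseudo-Hessian and hence a $\min(\varepsilon^2/A,\;\varepsilon/B)$ in the exponent. Your proposal to ``drop the linear-in-$\varepsilon$ branch'' is where the gap sits: the corollary is stated for \emph{all} $\varepsilon>0$, and for $\varepsilon$ beyond the crossover $A/B$ the sub-exponential branch is strictly weaker than the claimed sub-Gaussian bound. For $q_1$ this can be rescued post hoc because $|q_1-\Expectation[q_1]|\leq\xmax^2\stwonorm{\om}^2$ deterministically and this cap coincides (up to constants) with the crossover. For $q_2$, however, the deterministic bound $|q_2|\leq\xmax\cthree\sonenorm{\om}$ can be of order $\sqrt{p}\,\stwonorm{\om}$, which need not kick in before the crossover, and your sketch of the Frobenius bound (``row $\ell_1$ norm bounded by $\cfour$ implies $\fronorm{\cdot}=O(\cthree\cfour)$'') is not correct as stated—row-$\ell_1$ control bounds $\opnorm{\cdot}$ but not $\fronorm{\cdot}$ without a $\sqrt{p}$ factor. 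So the $q_2$ case in your plan is genuinely incomplete. The paper's constant-pseudo-derivative trick sidesteps all of this in one line; I recommend adopting it.
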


\subsection{Proof of \cref{lemma_lsi_one_dim}: \lsionedimresultname}
\label{proof_lsi_one_dim}
Let $\rvu$ be the uniform distribution on $\cX$. Then, $\rvu$ satisfies $\mathrm{LSI}_{\rvu}\Big(\frac{8\xmax^2}{\pi^2}\Big)$ (see \citet[Corollary. 2.4]{ghang2014sharp}). Then, using the Holley-Stroock perturbation principle (see \citet[Page. 31]{HS1986}, \citet[Lemma. 1.2]{Ledoux2001}), for every $t\in [p]$, $\svbx_{-t} \in \cX^{p-1}$, and $\svbz \in \cZ^{p_z}$, $\rvx_t | \rvbx_{-t} = \svbx_{-t}, \rvbz = \svbz$ satisfies the logarithmic Sobolev inequality with a 
constant bounded by $$\frac{8\xmax^2 \exp(\sup_{x_t \in \cX} \psi(x_t; \svbx_{-t}, \svbz) - \inf_{x_t \in \cX} \psi(x_t; \svbx_{-t}, \svbz))}{\pi^2},$$ where $\psi(x_t; \svbx_{-t}, \svbz) \defn - \normalbrackets{\ExternalFieldt(\svbz) + 2\ParameterRowttt\tp \svbx_{-t}} x_t - \ParameterTU[tt] \cx_t$ where $\cx_t = x_t^2 - \xmax^2/3$. We have
\begin{align}
\exp(\sup_{x_t \in \cX} \psi(x_t; \svbx_{-t}, \svbz) \!-\! \inf_{x_t \in \cX} \psi(x_t; \svbx_{-t}, \svbz))
& \sless{(a)} \!   \exp\bigparenth{ 2\bigabs{\ExternalFieldt(\svbz)\! +\! 2\ParameterRowttt\tp \svbx_{-t}} \xmax \!+\! \ParameterTU[tt] \xmax^2}\\ &\sless{(b)} \exp\big((2\aGM + 4\eGM\xmax)\xmax\big) \sequal{\cref{eq_constants_3_cont}} \cthree[2],
\end{align}
where $(a)$ follows from \cref{def:tau_sgm} and $(b)$ follows by using \cref{def:tau_sgm} along with triangle inequality and Cauchy–Schwarz inequality.

\subsection{Proof of \cref{coro}: \suppconcresultname}
\label{proof_of_coro}
To apply \cref{thm_main_concentration} to the random vector $\rvbx$ conditioned on $\rvbz$, we need $\rvbx | \rvbz$ to satisfy the logarithmic Sobolev inequality. From \cref{thm_LSI_main}, this is true if (i) $f_{\min} = \min_{t \in [p], \svbx \in \cX^p, \svbz \in \cX^{p_z}}$ $f_{\rvx_t | \rvbx_{-t}, \rvbz}(x_t | \svbx_{-t}, \svbz) > 0$ (see \cref{eq:smin}), (ii) $\rvbx | \rvbz$ satisfies the Dobrushin’s uniqueness condition, and (iii) $\rvx_t | \rvbx_{-t}, \rvbz$ satisfies the logarithmic Sobolev inequality for all $t \in [p]$. By assumption,  $\rvbx | \rvbz$ satisfies the Dobrushin’s uniqueness condition with coupling matrix $\bParameterMatrix$. From \cref{lemma_lsi_one_dim}, $\rvx_t | \rvbx_{-t}, \rvbz$ satisfies $\mathrm{LSI}_{\rvx_t | \rvbx_{-t} = \svbx_{-t} , \rvbz = \svbz}\Big(\frac{8\xmax^2\cthree[2]}{\pi^2}\Big)$. It remains to show that $f_{\min} > 0$. Consider any $t \in [p]$, any $\svbx \in \cX^p$, and any $\svbz \in \cX^{p_z}$. Let $\cx_t = x_t^2 - \xmax^2/3$. We have
\begin{align}
f_{\rvx_t | \rvbx_{-t}, \rvbz}(x_t | \svbx_{-t}, \svbz) 
& \sequal{(a)}  \frac{\exp\Bigparenth{\normalbrackets{\ExternalFieldt(\svbz) + 2\ParameterRowttt\tp \svbx_{-t} } x_t  + \ParameterTU[tt] \cx_t}}{
	\int_{\cX}
	\exp\Bigparenth{\normalbrackets{\ExternalFieldt(\svbz) + 2\ParameterRowttt\tp \svbx_{-t}} x_t + \ParameterTU[tt] \cx_t} d x_t} \\
& \sgreat{(b)}  \frac{\exp\Bigparenth{-\normalabs{\ExternalFieldt(\svbz) + 2\ParameterRowttt\tp \svbx_{-t}} \xmax - \ParameterTU[tt] \xmax^2}}{
	\int_{\cX}
	\exp\Bigparenth{\normalabs{\ExternalFieldt(\svbz) + 2\ParameterRowttt\tp \svbx_{-t}} \xmax + \ParameterTU[tt] \xmax^2} d x_t}\\
& \sgreat{(c)}  \frac{\exp\Bigparenth{-\bigparenth{\normalabs{\ExternalField(\svbz)} + 2\sonenorm{\ParameterRowttt} \sinfnorm{\svbx}} \xmax - \ParameterTU[tt] \xmax^2}}{
	\int_{\cX}
	\exp\Bigparenth{\bigparenth{\normalabs{\ExternalField(\svbz)} + 2\sonenorm{\ParameterRowttt} \sinfnorm{\svbx}} \xmax + \ParameterTU[tt] \xmax^2} d x_t}\\
& \sgreat{(d)} \frac{\exp\Bigparenth{-\normalparenth{\aGM + 2\eGM \xmax} \xmax}}{
	\int_{\cX}
	\exp\Bigparenth{\normalparenth{\aGM + 2\eGM \xmax} \xmax} d x_t} \sequal{(e)} \frac{1}{2\xmax \cthree[2]},
\end{align}
where $(a)$ follows from \cref{eq_conditional_dist}, $(b)$ and $(d)$ follow from \cref{def:tau_sgm}, $(c)$ follows by triangle inequality and Cauchy–Schwarz inequality, and $(e)$ follows because $\int_{\cX} dx_t = 2\xmax$. Therefore, $f_{\min} = \frac{1}{2\xmax \cthree[2]}$. Putting (i), (ii), and (iii) together, and using \cref{thm_LSI_main}, we see that $\rvbx | \rvbz$ satisfies $\mathrm{LSI}_{\rvbx}\Bigparenth{\frac{\cfive}{\normalparenth{1-\opnorm{\bParameterMatrix}}^2}}$ where $\cfive$ was defined in \cref{eq_constants_4_cont}. 

Now, we apply \cref{thm_main_concentration} to $q_1$ and $q_2$ one-by-one. The general strategy is to choose appropriate pseudo derivatives and pseudo Hessians for both $q_1$ and $q_2$, and evaluate the corresponding terms appearing in \cref{thm_main_concentration}.

\paragraph{Concentration for $q_1$}
Fix any $\svbx \in \cX^p$. We start by decomposing $q_1(\svbx)$ as follows
\begin{align}
q_1(\svbx) = \bom \tp r(\svbx), \label{eq_decomposition_cont_quad}
\end{align}
where $\bom \defn (\omt[1]^2, \cdots, \omt[p]^2)$ and $r(\svbx) \defn (r_1(\svbx), \cdots, r_p(\svbx))$ with $r_t(\svbx) = x_t^2$ for every $t \in [p]$. Next, we define $H : \cX^p \to \Reals^{p \times p}$ such that 
\begin{align}
H_{tu}(\svbx) = \frac{dr_u(\svbx)}{dx_t}  \qtext{for every $t,u \in [p]$.} \label{eq_H_matrix_cont_quad}
\end{align}

\paragraph{Pseudo derivative}
We bound the $\ell_2$ norm of the gradient of $q_1(\svbx)$ as follows
\begin{align}
\twonorm{\nabla q_1(\svbx)}^2 = \sump \Bigparenth{\frac{d q_1(\svbx)}{dx_t}}^2 & \sequal{\cref{eq_decomposition_cont_quad}} \sump \Bigparenth{\frac{\bom \tp d r(\svbx)}{dx_t}}^2 \\
& \sequal{\cref{eq_H_matrix_cont_quad}} \twonorm{H(\svbx) \bom}^2 \\
& \sless{(a)} \opnorm{H(\svbx)}^2 \twonorm{\bom}^2  \sless{(b)} \onematnorm{H(\svbx)}  \infmatnorm{H(\svbx)} \twonorm{\bom}^2, \label{eq:pseudo_derivative_cont_quad}
\end{align}
where $(a)$ follows because induced matrix norms are submultiplicative and $(b)$ follows because the matrix operator norm is bounded by square root of the product of matrix one norm and matrix infinity norm. Now, we claim that the one norm and the infinity norm of $H(\svbx)$ are bounded as follows
\begin{align}
\max\braces{\max_{\svbx \in \cX^p} \onematnorm{H(\svbx)}, \max_{\svbx \in \cX^p} \infmatnorm{H(\svbx)}} \leq 2\xmax. \label{eq_H_one_inf_bound_cont_quad}
\end{align}
Taking this claim as given at the moment, we continue with our proof. Combining \cref{eq:pseudo_derivative_cont_quad,eq_H_one_inf_bound_cont_quad}, we have 
\begin{align}
\max_{\svbx \in \cX^p} \twonorm{\nabla q_1(\svbx)}^2 \leq  4\xmax^2 \twonorm{\bom}^2 = 4\xmax^2 \!\sump\! \omt^4 \leq 4\xmax^2 \maxp[u] \omt[u]^2 \sump\! \omt^2 \!\sless{(a)}\! 16 \xmax^2 \aGM^2 \twonorm{\om}^2,
\end{align}
where $(a)$ follows because $\om \in 2\ParameterSet_{\ExternalField}$. Therefore, we choose the pseudo derivative (see \cref{def_pseudo_der_hes}) as follows
\begin{align}
\tnabla q_1(\svbx) = 4\xmax \aGM \twonorm{\om}. \label{eq_chosen_pseudo_der_cont_quad}
\end{align}

\paragraph{Pseudo Hessian}
Fix any $\rho \in \Reals$. We bound $\stwonorm{\nabla(\rho\tp \tnabla  q_1(\svbx))}^2$ (see \cref{def_pseudo_der_hes}) as follows
\begin{align}
\stwonorm{\nabla(\rho\tp \tnabla  q_1(\svbx))}^2 = \sump[u] \Bigparenth{\frac{d \rho\tp \tnabla  q_1(\svbx)}{dx_u}}^2 \sequal{\cref{eq_chosen_pseudo_der_cont_quad}} 0.
\end{align}
Therefore, we choose the pseudo Hessian (see \cref{def_pseudo_der_hes}) as follows
\begin{align}
\tnabla^2 q_1(\svbx) = 0. \label{eq_chosen_pseudo_Hess_cont_quad}
\end{align}
The concentration result in \cref{eq_coro_combined} for $q_1$ follows by applying \cref{thm_main_concentration} with the pseudo discrete derivative defined in \cref{eq_chosen_pseudo_der_cont_quad} and the pseudo discrete Hessian defined in \cref{eq_chosen_pseudo_Hess_cont_quad}.\\

\noindent It remains to show that the one-norm and the infinity-norm of $H(\svbx)$ are bounded as in \cref{eq_H_one_inf_bound_cont_quad}.
\paragraph{Bounds on the one-norm and the infinity-norm of $H(\svbx)$} We have 
\begin{align}\label{eq_matrix_H_quad}
H_{tu}(\svbx) = 
\begin{cases}
2x_t \qtext{if} t = u, \\
0 \qtext{otherwise.}
\end{cases}
\end{align}
Therefore,
\begin{align}
\onematnorm{H(\svbx)} 
& = \max_{u \in [p]} \sum_{t \in [p]} \normalabs{H_{tu}(\svbx)} \sless{\cref{eq_matrix_H_quad}} \max_{u \in [p]} 2 \normalabs{x_u} \sless{(a)} 2\xmax \qtext{and}\\
\infmatnorm{H(\svbx)} 
& = \max_{t \in [p]} \sum_{u \in [p]} |H_{tu}(\svbx)|  \sless{\cref{eq_matrix_H_quad}} \max_{t \in [p]} 2 \normalabs{x_t} \sless{(a)} 2\xmax,
\end{align}
where $(a)$ follows from \cref{def:tau_sgm}.

\paragraph{Concentration for $q_2$}
Fix any $\svbx \in \cX^p$. We start by decomposing $q_2(\svbx)$ as follows
\begin{align}
q_2(\svbx) = \om\tp r(\svbx), \label{eq_decomposition_cont}
\end{align}
where $r(\svbx) \defn (r_1(\svbx), \cdots, r_p(\svbx))$ with $r_t(\svbx) = x_t \exp\bigparenth{-\normalbrackets{\ExternalFieldt + 2 \ParameterRowttt\tp \svbx_{-t}} x_t - \ParameterTU[tt] \cx_t}$ for every $t \in [p]$. Next, we define $H : \cX^p \to \Reals^{p \times p}$ such that 
\begin{align}
H_{tu}(\svbx) = \frac{dr_u(\svbx)}{dx_t}  \qtext{for every $t,u \in [p]$.} \label{eq_H_matrix_cont}
\end{align}

\paragraph{Pseudo derivative}
We bound the $\ell_2$ norm of the gradient of $q_2(\svbx)$ as follows
\begin{align}
\twonorm{\nabla q_2(\svbx)}^2  = \sump \Bigparenth{\frac{d q_2(\svbx)}{dx_t}}^2 & \sequal{\cref{eq_decomposition_cont}} \sump \Bigparenth{\frac{\om\tp d r(\svbx)}{dx_t}}^2 \\ & \sequal{\cref{eq_H_matrix_cont}} \twonorm{H(\svbx) \om}^2 \\
& \sless{(a)} \opnorm{H(\svbx)}^2 \twonorm{\om}^2 \sless{(b)} \onematnorm{H(\svbx)}  \infmatnorm{H(\svbx)} \twonorm{\om}^2, \label{eq:pseudo_derivative_cont}
\end{align}
where $(a)$ follows because induced matrix norms are submultiplicative and $(b)$ follows because the matrix operator norm is bounded by square root of the product of matrix one norm and matrix infinity norm. Now, we claim that the one norm and the infinity norm of $H(\svbx)$ are bounded as follows
\begin{align}
\max\braces{\max_{\svbx \in \cX^p} \onematnorm{H(\svbx)}, \max_{\svbx \in \cX^p} \infmatnorm{H(\svbx)}} \leq \cthree \cfour. \label{eq_H_one_inf_bound_cont}
\end{align}
where $\cthree$ and $\cfour$ were defined in \cref{eq_constants_3_cont} and \cref{eq_constants_4_cont} respectively. Taking this claim as given at the moment, we continue with our proof. Combining \cref{eq:pseudo_derivative_cont,eq_H_one_inf_bound_cont}, we have 
\begin{align}
\max_{\svbx \in \cX^p} \twonorm{\nabla q_2(\svbx)}^2 \leq \cthree[2] \cfour[2] \twonorm{\om}^2.
\end{align}
Therefore, we choose the pseudo derivative (see \cref{def_pseudo_der_hes}) as follows
\begin{align}
\tnabla q_2(\svbx) = \cthree \cfour \twonorm{\om}. \label{eq_chosen_pseudo_der_cont}
\end{align}

\paragraph{Pseudo Hessian}
Fix any $\rho \in \Reals$. We bound $\stwonorm{\nabla(\rho\tp \tnabla  q_2(\svbx))}^2$ (see \cref{def_pseudo_der_hes}) as follows
\begin{align}
\stwonorm{\nabla(\rho\tp \tnabla  q_2(\svbx))}^2 = \sump[u] \Bigparenth{\frac{d \rho\tp \tnabla  q_2(\svbx)}{dx_u}}^2 \sequal{\cref{eq_chosen_pseudo_der_cont}} 0.
\end{align}
Therefore, we choose the pseudo Hessian (see \cref{def_pseudo_der_hes}) as follows
\begin{align}
\tnabla^2 q_2(\svbx) = 0. \label{eq_chosen_pseudo_Hess_cont}
\end{align}
The concentration result in \cref{eq_coro_combined} for $q_1$ follows by applying \cref{thm_main_concentration} with the pseudo discrete derivative defined in \cref{eq_chosen_pseudo_der_cont} and the pseudo discrete Hessian defined in \cref{eq_chosen_pseudo_Hess_cont}.\\

\noindent It remains to show that the one-norm and the infinity-norm of $H(\svbx)$ are bounded as in \cref{eq_H_one_inf_bound_cont}.
\paragraph{Bounds on the one-norm and the infinity-norm of $H$} We have 
\begin{align}\label{eq_matrix_H}
H_{tu}(\svbx) = 
\begin{cases}
\bigbrackets{1 - \normalbrackets{\ExternalFieldt[u] + 2\ParameterRowt[u]\tp \svbx} x_u}  \exp\bigparenth{-\normalbrackets{\ExternalFieldt[u] + 2\ParameterRowttt[u]\tp \svbx_{-u}} x_u - \ParameterTU[uu] \cx_u}  \qtext{if} t = u, \\
-2\ParameterTU[tu] x_u^2 \exp\bigparenth{-\normalbrackets{\ExternalFieldt[u] + 2\ParameterRowttt[u]\tp \svbx_{-u}} x_u - \ParameterTU[uu] \cx_u} \qquad\qquad\qquad \qtext{otherwise.}
\end{cases}
\end{align}
Therefore,
\begin{align}
\onematnorm{H(\svbx)} 
& = \max_{u \in [p]} \sum_{t \in [p]} \normalabs{H_{tu}(\svbx)} \\
& \sequal{\cref{eq_matrix_H}} \max_{u \in [p]} \bigabs{1 \!-\! \normalbrackets{\ExternalFieldt[u] \!+\! 2\ParameterRowt[u]\tp \svbx} x_u} \exp\bigparenth{-\normalbrackets{\ExternalFieldt[u] + 2\ParameterRowttt[u]\tp \svbx_{-u}} x_u - \ParameterTU[uu] \cx_u}  \\
& \qquad \qquad + 2\max_{u \in [p]} x_u^2  \exp\bigparenth{-\normalbrackets{\ExternalFieldt[u] + 2\ParameterRowttt[u]\tp \svbx_{-u}} x_u - \ParameterTU[uu] \cx_u}  \sum_{t \neq u}  \normalabs{\ParameterTU[tu]}\\
& \sless{(a)} (1 + \aGM \xmax + 4\xmax^2 \eGM) \exp{(\xmax(\aGM+ 2\eGM \xmax))} \sequal{(b)}  \cthree \cfour,
\end{align}
where $(a)$ follows from \cref{def:tau_sgm} along with triangle inequality and Cauchy–Schwarz inequality and $(b)$ follows from \cref{eq_constants_3_cont,eq_constants_4_cont}. Similarly, we have
\begin{align}
\infmatnorm{H(\svbx)} 
& = \max_{t \in [p]} \sum_{u \in [p]} |H_{tu}(\svbx)|  \\
& \sequal{\cref{eq_matrix_H}} \max_{t \in [p]} \bigabs{1 \!-\! \normalbrackets{\ExternalFieldt \!+\! 2\ParameterRowt\tp \svbx} x_t} \exp\bigparenth{-\normalbrackets{\ExternalFieldt[t] + 2\ParameterRowttt[t]\tp \svbx_{-t}} x_t - \ParameterTU[tt] \cx_t} \\
& \qquad \qquad + 2\max_{t \in [p]} \sum_{u \neq t}  \normalabs{\ParameterTU[tu]} x_u^2  \exp\bigparenth{-\normalbrackets{\ExternalFieldt[u] + 2\ParameterRowttt[u]\tp \svbx_{-u}} x_u - \ParameterTU[uu] \cx_u} \\
& \sless{(a)} (1 + \aGM \xmax + 4\xmax^2 \eGM) \exp{(\xmax(\aGM+ 2\eGM \xmax))} \sequal{(b)} \cthree \cfour,
\end{align}
where $(a)$ follows from \cref{def:tau_sgm} along with triangle inequality and Cauchy–Schwarz inequality and $(b)$ follows from \cref{eq_constants_3_cont,eq_constants_4_cont}.
\bibliographystyle{abbrvnat}
\bibliography{main}

\end{document}